\documentclass{article}
\usepackage{PRIMEarxiv}
\usepackage[utf8]{inputenc}
\usepackage[T1]{fontenc}
\usepackage{booktabs}
\usepackage{amsfonts}
\usepackage{amssymb}
\usepackage{amsthm}
\usepackage{mathtools}
\usepackage{amsmath,amssymb,amsthm,mathtools,bm,mathrsfs}
\usepackage{booktabs,tabularx,array,graphicx,xcolor,microtype}
\usepackage{natbib}
\usepackage[unicode,hidelinks]{hyperref}
\usepackage{placeins,float}
\usepackage[capitalise,noabbrev]{cleveref}
\usepackage{tabularx}
\usepackage{array}
\usepackage{graphicx}
\usepackage{float}
\usepackage{placeins}
\usepackage{xcolor}
\usepackage{natbib}
\usepackage{hyperref}
\usepackage{cleveref}
\usepackage{url}
\usepackage{microtype}
\newcolumntype{P}[1]{>{\raggedright\arraybackslash}p{#1}}
\newcolumntype{Y}{>{\centering\arraybackslash}X}
\newtheorem{theorem}{Theorem}[section]
\newtheorem{proposition}[theorem]{Proposition}
\newtheorem{corollary}[theorem]{Corollary}

\newtheorem{assumption}[theorem]{Assumption}
\newtheorem{statement}[theorem]{Statement}
\newtheorem{conjecture}[theorem]{Conjecture}
\theoremstyle{remark}
\newtheorem{remark}[theorem]{Remark}

\usepackage{mathtools}
\usepackage{amsmath}
\newcommand{\Vstar}{V_\star}

\definecolor{brown}{RGB}{115,65,0}

\title{Does Acceleration Accelerate Statistics?\\ Exact Finite-Budget Calibration of Averaged Momentum Methods}
\author{Anonymous authors}
\date{}
\newcommand{\R}{\mathbb{R}}
\newcommand{\E}{\mathbb{E}}
\newcommand{\Pp}{\mathbb{P}}
\newcommand{\Cov}{\operatorname{Cov}}
\newcommand{\Var}{\operatorname{Var}}
\newcommand{\tr}{\operatorname{tr}}
\newcommand{\diag}{\operatorname{diag}}
\newcommand{\spec}{\operatorname{spec}}

\newcommand{\cN}{\mathcal{N}}
\newcommand{\cF}{\mathcal{F}}
\newcommand{\cL}{\mathcal{L}}

\newcommand{\Op}{O_{\mathbb P}}

\newcommand{\norm}[1]{\left\lVert #1\right\rVert}
\newcommand{\abs}[1]{\left\lvert #1\right\rvert}
\newcommand{\inner}[2]{\left\langle #1,#2\right\rangle}
\newcommand{\ind}{\mathbf{1}}
\newcommand{\trans}{\mathsf{T}}

\newcommand{\Wass}{d_{\mathrm W}}

\setcitestyle{authoryear,round}
\definecolor{conferenceblue}{HTML}{0057B8}
\hypersetup{
  colorlinks=true,
  linkcolor=black,
  citecolor=conferenceblue,
  urlcolor=conferenceblue,
  filecolor=conferenceblue
}

\newcommand{\dd}[1]{{\color{black}#1}}

\title{When Momentum Helps Polyak--Ruppert Averaging}
\author{
  Alexandra Suvorikova \\
  Independent researcher
  \And
 Darina Dvinskikh \\
  HSE University \\
  \texttt{dmdvinskikh@hse.ru} \\
  \And
   Yuriy Dorn\\
  MSU AI Institute\\
  \And
   Alexander Gasnikov\\
  Innopolis University\\
}
\date{}
\begin{document}

\maketitle

\begin{abstract}
Polyak--Ruppert averaging gives SGD and momentum methods
the same asymptotic covariance.
We characterize how fast finite runs reach it
and how far they can be parallelized.
For strongly convex quadratics with additive noise,
we prove an online--offline bridge:
the empirical risk minimizer on the samples of a tail average
is one Newton step from the tail-averaged query point of any online procedure.
For SGD, heavy ball, and Nesterov's method with constant momentum,
this step is a boundary term at the two ends of the tail
and splits the gap to the offline estimator
into center, covariance, and shape errors.
Necessary and sufficient conditions show that,
on the least curved direction, the center error sets the pace:
SGD needs $m\gg\kappa^2$ rounds but momentum only $m\gg\kappa$,
where $\kappa$ is the condition number,
so at sample size $N$ the critical batch size
grows from $\Theta(\sqrt{N}/\kappa)$ to $\Theta(\sqrt{N/\kappa})$.
The covariance error separates the momentum methods:
heavy ball has relative variance excess $\Theta(\kappa^{3/2}/m)$ along the stiffest direction,
whereas Nesterov has relative error $O(\sqrt{\kappa}/m)$ in every direction.
The shape error is method-independent by a Berry--Esseen bound
uniform in $\kappa$ and batch size,
so confidence intervals are valid when the other two errors vanish.
We further examine how the conclusions change under heavy-tailed or state-dependent noise and clipping, and establish a first-order comparison with ERM for nonlinear Nesterov averaging under explicit conditions.
\end{abstract}

\section{Introduction}
\label{sec:introduction}

Averaging connects stochastic optimization with statistical estimation.
Under suitable conditions, an averaged trajectory has an asymptotically
Gaussian error with the Polyak--Ruppert covariance, providing a basis for
confidence sets as well as point estimates
\citep{polyak1992acceleration}. Yet a limiting law does not specify how
many updates are necessary before it accurately describes the output of an
algorithm. The distinction matters when observations are
processed in large batches: at a fixed sample budget, larger batches
leave fewer updates for the algorithm to approach its asymptotic regime.
This makes the transient behavior of the method central to statistical
inference. Momentum can shorten the persistence of initialization,
suggesting that it could make the limiting approximation accurate
earlier. However, it also changes how gradient noise accumulates along
the trajectory, so faster decay of initialization does not
guarantee this benefit. We therefore ask:
\emph{Does momentum reduce the number of updates needed for the
asymptotic distribution to accurately describe the averaged output?}

Answering this question requires separating three properties of the
output: its initialization bias should be negligible on the statistical
scale, its covariance should match the target covariance, and its
centered, standardized law should be close to Gaussian.

The question builds on several established lines of work. Analyses of averaged linear stochastic approximation characterize
asymptotic covariance and nonasymptotic concentration
\citep{mou2020finegrained}.Gaussian approximation and bootstrap methods
provide finite-sample inference for averaged stochastic approximation and
SGD \citep{samsonov2024gaussian,sheshukova2026gaussian}; in particular,
approximating the finite-sample distribution can be more accurate than
using the limiting covariance. Complementary analyses quantify the
trade-off between acceleration and noise amplification
\citep{mohammadi2021robustness}, including the effect of averaging on
heavy-ball dynamics \citep{samuelson2024averaging}.
\emph{Our focus is on when the initialization bias becomes negligible
relative to statistical fluctuations and the covariance of the averaged
output matches its asymptotic prediction up to a small relative error.}

We study constant-parameter SGD, heavy ball (HB), and Nesterov's
accelerated gradient method (NAG) on strongly convex quadratics with
Hessian $H$, extreme eigenvalues $\mu,L$, and condition number
$\kappa=L/\mu$. The additive oracle noise is independent and centered,
with covariance $\Sigma$. From $N=mb$ observations in $m$ batches of
size $b$, each method returns the average $\widehat x$ of its query
points. We derive the pathwise representation
\[
\widehat x-x_\star
=\underbrace{-H^{-1}\bar\xi}_{\widehat x^{\rm ERM}-x_\star}+R,
\]
where $\bar\xi$ is the sample noise mean, $\widehat x^{\rm ERM}$ is
the empirical minimizer on the same observations, and $R$ is an
explicit method-dependent boundary term involving only trajectory
endpoints. The resulting finite-window laws separate initialization
bias, covariance corrections, and Gaussian approximation error.
\paragraph{{Contribution.}}
Our main contributions are as follows.

\begin{itemize}
\item \textbf{Bias and variance thresholds.}
For full averages from a common deterministic start, we give necessary
and sufficient conditions for negligible bias on the statistical scale
and vanishing relative variance error in the least-curved eigendirection.
Let $v_\mu$ be a unit eigenvector of $H$ corresponding to $\mu$, and set $\theta:=\frac{\mu\,v_\mu^\top(x_0-x_\star)}{\sigma_\mu}$, $\sigma_\mu^2:=v_\mu^\top\Sigma v_\mu>0$.
These conditions yield the
following thresholds (Section~\ref{sec:main-calibration}).
\end{itemize}
\begin{table}[ht]
\begin{center}
{\color{black}
\caption{Calibration thresholds on the least-curved direction for full averages from a deterministic start.}
\label{tab:tab}
}
\small
\begin{tabularx}{\linewidth}{@{}P{0.39\linewidth}Y Y Y@{}}
\toprule
Criterion in the $\mu$ direction & SGD & HB & NAG\\
\midrule
Vanishing relative variance error &
$m\gg\kappa$ &
$m\gg\sqrt\kappa$ &
$m\gg\sqrt\kappa$\\
Negligible normalized bias and relative variance error,
$b=1$, fixed $\theta\ne0$ &
$m\gg\kappa^2$ &
$m\gg\kappa$ &
$m\gg\kappa$\\
Initialization-induced batch scale at fixed $N$, up to constants &
$\dfrac{\sqrt N}{|\theta|\kappa}$ &
$\dfrac{\sqrt N}{|\theta|\sqrt\kappa}$ &
$\dfrac{\sqrt N}{|\theta|\sqrt\kappa}$\\
\bottomrule
\end{tabularx}

\end{center}
\end{table}
\begin{itemize}
\item \textbf{Uniform Gaussian approximation.}
Under a finite third-moment assumption, we prove an $O(N^{-1/2})$
Kolmogorov bound for the least-curved projection after centering
and scaling, uniformly in $\kappa$ and $b$. Approximation by the
Polyak--Ruppert law additionally incurs an initialization term and a variance correction of order \(\kappa/m\) for SGD and \(\sqrt{\kappa}/m\) for HB and NAG. This decomposition quantifies coverage errors and identifies which terms acceleration improves (Section~\ref{sec:main-calibration}).

\item \textbf{A spectral separation between HB and NAG.}
For the stated HB tuning, the relative variance
in the most-curved direction is
$1+\kappa^{3/2}(1+o(1))/(8m)$ as $\kappa\to\infty$ and
$m/\sqrt{\kappa}\to\infty$; calibration there requires
$m\gg\kappa^{3/2}$. For the stated NAG tuning, the relative variance
error is at most $30\sqrt{\kappa}/m$ uniformly over Hessian
eigendirections, with a matching-order correction in the least-curved
direction (Section~\ref{sec:main-acceleration}).

\item \textbf{Extensions to heavy tails and nonlinear objectives.}
We derive conditions under which the normalized averaged output
converges to the same stable law as the ERM
in the scalar model, and show that finite variance alone does not guarantee uniformly valid sub-Gaussian confidence bounds centered at the ERM or at the averaged algorithmic output. For smooth strongly
convex objectives with additive noise and a Lipschitz Hessian, a
trajectory-dependent remainder extends the online--offline comparison.
Under explicit burn-in and batch-growth conditions, tail-averaged NAG
is first-order equivalent to the ERM at the same total observation budget.
(Section~\ref{sec:main-extensions}).
\end{itemize}

The sharp thresholds concern full averages from a cold start under the
specified parameter policies. Burn-in and tail averaging can change
these thresholds; the NAG constants refer to averages of query points.
The resulting comparison therefore makes the output rule, initialization,
and spectral direction explicit when assessing the statistical benefit
of acceleration.

\paragraph{Organization.}
\Cref{sec:setup} specifies the model, algorithms, output, and same-data
ERM benchmark, while \cref{sec:main-calibration,sec:main-acceleration}
develop the finite-budget calibration criteria and spectral comparison.
\Cref{sec:main-extensions} treats heavy-tailed noise and nonlinear
objectives, \cref{sec:numerical-summary} reports the experiments, and the
appendix contains proofs, auxiliary results, and extended experimental
details.

\section{Problem Statement}
\label{sec:setup}
\dd{This section specifies the common model, algorithms, output, and same-data ERM benchmark.}

We study averaged stochastic first-order methods on
\begin{equation}
 f(x):=\tfrac12\inner{x-x_\star}{H(x-x_\star)},
 \label{eq:quadratic}
\end{equation}
where $H=H^\trans\succ0$, $0<\mu\le L$,
$\mu I\preceq H\preceq LI$, and $\kappa:=L/\mu$.
In comparisons on the extreme eigendirections, $\mu$ and $L$ denote
the smallest and largest eigenvalues of the original Hessian.
The oracle returns $G(x,\zeta)=H(x-x_\star)+\xi(\zeta)$.

\begin{assumption}[Additive noise]
\label{ass:additive-noise}
Unless stated otherwise, all copies of $\xi:=\xi(\zeta)$ are independent, with $\E \xi = 0$ and finite covariance $\Sigma$.
\end{assumption}

\paragraph{Batching and methods.}
At round $k$, a batch of $b\ge1$ samples gives
$\eta_{k+1}^{(b)}:=b^{-1}\sum_{j=1}^b\xi_{k+1,j}$,
with covariance $\Sigma/b$.
The method queries the oracle at a point $s_k$ and receives the
batch-averaged answer $g_k:=b^{-1}\sum_{j=1}^bG(s_k,\zeta_{k+1,j})$.
The two-step family computes
\[
 s_k=x_k+\gamma(x_k-x_{k-1}),~~
 x_{k+1}=x_k+\beta(x_k-x_{k-1})-\alpha g_k,
\]%
with constant $\alpha>0$, $\beta\in[0,1)$, and $\gamma\ge0$.
SGD has $(\beta,\gamma)=(0,0)$, HB has $\gamma=0$, and NAG has
$\gamma=\beta$. The parameter policies used in the comparison are
given in \cref{prop:cold-bias}.
Writing $e_k:=x_k-x_\star$ and $u_k:=s_k-x_\star$, so that
$g_k=Hu_k+\eta_{k+1}^{(b)}$, we get the error recursion,
\begin{align}
 u_k&=e_k+\gamma(e_k-e_{k-1}),                  \label{eq:query}\\
 e_{k+1}&=e_k+\beta(e_k-e_{k-1})-\alpha\bigl(Hu_k+\eta_{k+1}^{(b)}\bigr).
 \label{eq:two-step}
\end{align}
Unless stated otherwise, all methods use the same deterministic
start with $e_{-1}=e_0$, and parameters and averaging windows are
fixed before observing the noise.
We write $\cF_k:=\sigma(\xi_{i,j}:i\le k,\ 1\le j\le b)$;
thus $u_k$ is $\cF_k$-measurable and $\eta^{(b)}_{k+1}$ is
independent of $\cF_k$.

{
\color{black}
\paragraph{Output and comparison.}
We define the output by averaging the points $s_k$ at which
the stochastic gradients are evaluated. For $m\ge1$ and $t\ge0$,
the resulting estimator of $x_\star$ is
\[
 \widehat x_{t,m}:=\frac1m\sum_{k=t}^{t+m-1}s_k,
 ~~
 \widehat x_{t,m}-x_\star
 =\bar u_{t,m}:=\frac1m\sum_{k=t}^{t+m-1}u_k.
\]
Here $t=0$ gives a full average from a cold start, while $t>0$
gives tail averaging after $t$ burn-in rounds.
For SGD and HB, $s_k=x_k$, so this output coincides with the
iterate average. For NAG, it averages the extrapolated query
points and satisfies
\[
 \widehat x_{t,m}
 =\frac1m\sum_{k=t}^{t+m-1}x_k
  +\frac{\gamma}{m}(x_{t+m-1}-x_{t-1}).
\]
All finite-time NAG constants below refer to the query-point
average $\widehat x_{t,m}$.
}

\paragraph{Offline reference.}
Let $N:=mb$.
The offline reference is the minimizer
$\widehat x^{\rm ERM}_{t,m}$ of the cumulative loss
$\tfrac1N\sum_{k=t}^{t+m-1}\sum_{j=1}^b\ell(x,\xi_{k+1,j})$, with $\ell(x,\xi):=f(x)+\langle\xi,x\rangle$,
\[
 \widehat x^{\rm ERM}_{t,m}-x_\star
 =-\frac{H^{-1}}m\sum_{k=t}^{t+m-1}\eta_{k+1}^{(b)}.
\]
Its covariance is $V_\star/N$, where
$V_\star:=H^{-1}\Sigma H^{-1}$ is the Polyak--Ruppert (PR)
covariance.
The last batch $\eta^{(b)}_{t+m}$ is included in this ERM
and in $N$, although it does not affect $\widehat x_{t,m}$.
This benchmark uses the empirical risk, whereas the online methods
only see batch-averaged gradients at their query points.

We call a method stable if, in the absence of noise, its iterates
$x_k$ converge to $x_\star$ for every initial point; for the stated parameter range this holds
if and only if $\alpha\lambda_{\max}(H)(1+2\gamma)<2(1+\beta)$.
Appendix~\ref{appx:schur_stability} provides the details.

\subsection{Averaged SGD: bounds and risk}
\label{sec:same-sgd-new}

Consider SGD with $0<\alpha\le1/L$, a deterministic start satisfying
$\|x_0-x_\star\|\le R_0$, and the averaged output
$\widehat x:=\widehat x_{0,m}=m^{-1}\sum_{k=0}^{m-1}x_k$.
Let $N=mb$, $\sigma^2:=\tr\Sigma$, and
$\mathcal R_{m,b}:=\mathbb E\|\widehat x-x_\star\|^2$.
A descent argument and quadratic contraction, respectively, yield
\begin{subequations}\label{eq:sgd-main-bounds}
\begin{align}
 \mathcal R_{m,b}
 &\le U^{\rm da}_{m,b}:=
 \frac{R_0^2}{\alpha\mu m}
 +\frac{\alpha\sigma^2}{\mu b},
 \label{eq:sgd-da}\\
 \mathcal R_{m,b}
 &\le U^{\rm quc}_{m,b}:=
 \frac{R_0^2}{\alpha^2\mu^2m^2}
 +\frac{\sigma^2}{\mu^2N}.
 \label{eq:sgd-quc}
\end{align}
\end{subequations}
Both inequalities concern the same estimator. At fixed $\alpha$ and $b$,
the first leaves a noise term independent of $m$, whereas the second
captures its $1/N$ decay.

Both bounds summarize the problem through scalar parameters and therefore
lose information about the relative orientation of $H$, $\Sigma$, and
$e_0$. For example, fixing $H$ and $e_0$ and placing the same noise
variance along a $\mu$-eigendirection instead of an $L$-eigendirection
leaves the bias and both bounds unchanged. Nevertheless, the ratio of
the resulting variance contributions tends to $\kappa^2$ as
$m\to\infty$ at fixed $\kappa$
(\cref{prop:geometry-pair-new}).

The MSE, its second-order expansion, and a refined finite-window
contraction bound are given in Appendix~\ref{app:same-sgd-new}.
The refined bound equals the supremum of the risk over the class
specified by the scalar parameters (\cref{prop:sharp-scalar-new}).
Thus, worst-case sharpness is compatible with substantial
differences between individual problems sharing those parameters.

\paragraph{From risk to the output distribution.}

For either upper bound $U$, Markov's inequality gives a valid confidence
ball with radius $\sqrt{U/\delta}$ at level $1-\delta$.
However, agreement with the ERM risk does not ensure agreement of the
mean and covariance separately, since
\[
\mathcal R_{m,b}
=\|\mathbb E\widehat x-x_\star\|^2
+\tr\operatorname{Cov}(\widehat x).
\]
Squared bias can compensate for a variance deficit, even when the
relative risk equals one (\cref{sec:main-calibration}).
We therefore analyze the finite-window laws of the averaged outputs,
separating initialization bias, covariance error, and Gaussian
approximation error.

\section{Bias, variance, and Gaussian approximation}
\label{sec:main-calibration}

 \dd{The exact online--offline bridge reduces finite-window calibration to the analysis of a scalar response sequence. We first obtain sharp
conditions for center and variance calibration on the least-curved
direction, then separate these effects from Gaussian approximation error.}

For every trajectory of \cref{eq:two-step} and all $t\ge0$, $m\ge1$, \cref{thm:exact,prop:observable-bridge} ensure
\begin{align}
 &\widehat x_{t,m}-\widehat x^{\rm ERM}_{t,m}
 =\frac{H^{-1}\Delta_{t,m}}{\alpha m},\\
  &\Delta_{t,m}:=e_t-e_{t+m}+\beta(e_{t+m-1}-e_{t-1})
 \label{eq:main-decomposition}
\end{align}
 Hence, for i.i.d.\ noise and
every member of the family, $\sqrt N\,\bar u_{t,m}\Rightarrow\cN(0,V_\star)$
whenever $\sqrt{b/m}\,\Delta_{t,m}\to0$ in probability (\cref{thm:exact}). Under constant noise, every stable member would settle at the
same point $x_\star-H^{-1}\eta$; the endpoint term carries everything that
depends on how a method gets there. Proofs and further results are in
\cref{sec:statistical-information,sec:applicability,sec:improvement}.

\paragraph{The least-curved direction.}
Let $v_\mu$ be a unit eigenvector of $H$ for the eigenvalue $\mu$, and set
\[
 \sigma_\mu^2:=v_\mu^\trans\Sigma v_\mu,~
 \theta:=\frac{\mu\,v_\mu^\trans e_0}{\sigma_\mu},~
 Z_m:=\frac{\mu\sqrt N}{\sigma_\mu}\,v_\mu^\trans\bar u_{0,m}.
\]
Since $v_\mu^\trans H=\mu v_\mu^\trans$, the projection $v_\mu^\trans e_k$
follows the scalar recursion with curvature $\mu$, driven by independent
noises of variance $\sigma_\mu^2$. Let $r_j$ be the noiseless response of the oracle to a unit start:
$r_j:=u^\circ_j$, where $e^\circ_{-1}=e^\circ_0=1$ and
\[
 u^\circ_j=e^\circ_j+\gamma(e^\circ_j-e^\circ_{j-1}),\quad
 e^\circ_{j+1}=e^\circ_j+\beta(e^\circ_j-e^\circ_{j-1})-\alpha\mu\,u^\circ_j .
\]
By linearity, a start $v_\mu^\trans e_0$ yields the
noiseless oracle points $(v_\mu^\trans e_0)\,r_j$. Set
\[
 S_m:=\sum_{j<m}r_j,\qquad Q_m:=\sum_{j<m}r_j^2,\qquad
 B:=\sum_{j\ge0}r_j.
\]
By \cref{thm:cold-exact}, applied to the projected recursion, for all
$m,b\ge1$,
\[
 v_\mu^\trans\bar u_{0,m}=\frac{(v_\mu^\trans e_0)\,S_m}{m}
 -\frac1{\mu mb}\sum_{s=1}^{m}\sum_{\ell=1}^{b}(1-r_{m-s})\,
 v_\mu^\trans\xi_{s,\ell};
\]
multiplying by $\mu\sqrt N/\sigma_\mu$ and using $N=mb$ gives
\begin{equation}
 Z_m=\theta\sqrt{\frac bm}\,S_m
 -\frac1{\sqrt N}\sum_{s=1}^{m}\sum_{\ell=1}^{b}
 (1-r_{m-s})\,\frac{v_\mu^\trans\xi_{s,\ell}}{\sigma_\mu}.
 \label{eq:main-exact-law}
\end{equation}
The method enters only through $r_j$: through $S_m$ as the memory of the
start, and through the weights $1-r_{m-s}$ as the unfinished response to
late noise.
 
We compare SGD with $\alpha=1/L$; HB with $\alpha=4/(\sqrt L+\sqrt\mu)^2$
and $\beta=((\sqrt\kappa-1)/(\sqrt\kappa+1))^2$; and NAG with $\alpha=1/L$
and $\beta=\gamma=(\sqrt\kappa-1)/(\sqrt\kappa+1)$. For these policies
$0\le r_{j+1}\le r_j\le1$, and the memory $B$ equals $\kappa$,
$\sqrt\kappa$, and $2\kappa/(\sqrt\kappa+1)$, respectively
(\cref{prop:cold-bias}).
 
\begin{statement}
\label{thm:main-calibration}
For each policy, set $ h_m:=\E Z_m=\theta\sqrt{b/m}\, S_m$, $v_m:=\Var(Z_m)=1-\frac{2S_m-Q_m}{m}.$
Along any sequence with $m\to\infty$, allowing $\kappa\ge1$,
$b\ge1$, and $\theta$ to vary,
\begin{align*}
 v_m\to1
 &\iff \frac Bm\to0,\\
 (h_m,v_m)\to(0,1)
 &\iff \frac Bm\to0\quad\text{and}\quad
       |\theta|B\sqrt{\frac bm}\to0.
\end{align*}
\end{statement}
\cref{thm:uniform-calibration} presents the result in detail. Matching the PR risk is not
enough: the relative risk equals $v_m+h_m^2$, and
\begin{align*}
&\frac{\E|v_\mu^\trans\bar u_{0,m}|^2}{\sigma_\mu^2/(\mu^2N)}
 =v_m+h_m^2,
\\
 &\theta_m^2=\frac{2S_m-Q_m}{bS_m^2}
 \ \Longrightarrow\ v_m+h_m^2=1,
\end{align*}
also along sequences with $B/m$ bounded away from zero, where
$v_m\not\to1$.
 
\begin{statement}[Critical batch]
\label{cor:main-critical-batch}
Assume that
$v_\mu^\trans\xi_{s,\ell}/\sigma_\mu$ are i.i.d.\ with
zero mean and unit variance. Their distribution remains fixed along the
asymptotic sequence, even when $\kappa$, $b$, and $\theta$ vary. If $N\to\infty$, $ \frac{bB}{N}\to0$, and $\frac{\theta Bb}{\sqrt N}\to h\in\R$,
then $Z_m\Rightarrow\cN(h,1)$. The $(1-\delta)$ interval for
$v_\mu^\trans x_\star$ centered at $v_\mu^\trans\widehat x_{0,m}$ with
half-width $z_{1-\delta/2}\sigma_\mu/(\mu\sqrt N)$ has limiting coverage
$\Phi(z_{1-\delta/2}-h)-\Phi(-z_{1-\delta/2}-h)$.
\end{statement}
\cref{cor:shifted-clt} presents the result in detail.
For $\theta\ne0$ and $m\gg B$, the scale
$b_{\rm crit}(N):=\sqrt N/(|\theta|B)$ marks the batch size
at which the initialization bias of the averaged estimator
becomes comparable to the ERM standard error along $v_\mu$.
At a fixed budget $N=mb$, larger batches leave fewer iterations
to reduce this bias, while the ERM standard error remains unchanged.
Calibration therefore requires both $b\ll b_{\rm crit}(N)$,
so that the bias is negligible, and $b\ll N/B$,
so that the variance matches its PR limit. At $b=1$ and fixed $\theta\ne0$, the onset is $m\gg\kappa^2$
for SGD and $m\gg\kappa$ for HB and NAG. Smaller noise at a fixed start
increases $|\theta|$ and therefore delays calibrated inference. Since
$\theta^{-2}=v_\mu^\trans\Sigma v_\mu/(v_\mu^\trans\nabla f(x_0))^2$ is a
directional gradient noise scale at the initial point, $b_{\rm crit}$ is the
geometric mean of the observation budget and this noise scale, divided by
$B$; in the scalar model it is the noise scale of
\citet{mccandlish2018empirical} (\cref{rem:critical-batch}).
 
\begin{statement}[Gaussian shape versus PR calibration]
\label{thm:main-shape}
Let the noises be i.i.d.\ with
$\nu_3:=\E|v_\mu^\trans\xi|^3/\sigma_\mu^3<\infty$. There is a universal
constant $C$ such that, for each of the three policies and all
$\kappa\ge1$, $b\ge1$, $m\ge2$,
\[
 d_{\rm K}\Bigl(\cL\Bigl(\frac{Z_m-h_m}{\sqrt{v_m}}\Bigr),\cN(0,1)\Bigr)
 \le\frac{C\nu_3}{\sqrt N},
\]
and, if $m\ge4B$,
\begin{align}
 d_{\rm K}\bigl(\cL(Z_m),\cN(0,1)\bigr)
 \le \underbrace{\frac{C\nu_3}{\sqrt N}}_{\text{Gauss. approx.}}+\underbrace{2\phi(1)\frac Bm}_{\text{variance correction}}
 +\underbrace{\frac{|h_m|}{\sqrt{2\pi}}}_{\text{initialization}},
 \label{eq:main-three-errors}
\end{align}
where $\phi$ is the standard normal density.
\end{statement}
 
This is \cref{thm:sharp-normal} together with \cref{eq:three-errors}; the
variance correction $B/m$ is sharp even for Gaussian noise
(\cref{prop:sharp-normal-lower}). The coverage of the interval in
Proposition~\ref{cor:main-critical-batch} is within twice the right-hand side of
\cref{eq:main-three-errors} of $1-\delta$. Gaussian shape is thus uniform in
$\kappa$ and $b$; the center and the variance are the bottlenecks. These
intervals use known curvature and noise scale; multivariate balls and a
paid certificate for unknown $H$ and $\Sigma$ are in
\cref{sec:confidence-sets}.

\section{Acceleration gains and their limits}
\label{sec:main-acceleration}
 \dd{We now compare what acceleration changes across the Hessian spectrum.
Although HB and NAG have the same order of memory on the least-curved
direction, their covariance transients can differ sharply on stiff
directions; we also record how burn-in changes the comparison.}

Substituting $B$ into Statements~\ref{thm:main-calibration}, \ref{cor:main-critical-batch}
compares the policies on the least-curved direction, for the same
full-average output and cold start (see Tab.~\ref{tab:tab}).

Acceleration shortens the memory from $\kappa$ to order $\sqrt\kappa$ and
enlarges the admissible batch by the same factor; it does not change the
$N^{-1/2}$ term in Statement~\ref{thm:main-shape}.
 
For the whole spectrum, let $(\lambda,v_\lambda)$ be an eigenpair of $H$
with $\sigma_\lambda^2:=v_\lambda^\trans\Sigma v_\lambda>0$, and define the
relative variance
\[
 v_m(\lambda):=\frac{N\lambda^2}{\sigma_\lambda^2}
 \Var\bigl(v_\lambda^\trans\bar u_{0,m}\bigr),
\]
so that $v_m(\mu)=v_m$.
 
\begin{statement}[Heavy-ball covariance inflation]
\label{thm:main-hb-stiff}
For HB with the parameters above and $\sigma_L^2>0$,
\[
 v_m(L)=1+\frac{\kappa^{3/2}}{8m}\{1+o(1)\}
 \quad\text{as }\kappa\to\infty,\ m/\sqrt\kappa\to\infty,
\]
whereas SGD with $\alpha=1/L$ has $v_m(L)=1-1/m$ . In this regime,
covariance calibration of HB on this direction is equivalent to
$m/\kappa^{3/2}\to\infty$.
\end{statement}
\cref{thm:stiff} provides the detail.
For example, at $m=\kappa$ and $b=1$ the stiff-mode HB variance ratio
diverges as $\sqrt\kappa/8$, while the least-curved HB mode is already
close to its PR variance. A Frobenius error weighted by
$H^{-1}\Sigma H^{-1}$ can hide this, because the stiff-mode PR variance is
smaller by a factor $\kappa^2$ when the gradient-noise variances are equal.
General noise-amplification tradeoffs
\citep{mohammadi2021robustness,mohammadi2024tradeoffs} and the sign of
averaged HB modal corrections \citep[Theorem 3]{samuelson2024averaging} are
known; the precise averaged-output formula and its relation to the onset
criteria are the point of this counterexample.
 
\begin{statement}[Uniform spectral calibration of NAG]
\label{thm:main-nag}
For NAG with the parameters above and every $m,b\ge1$,
\[
 \sup_{\lambda:\,\sigma_\lambda^2>0}|v_m(\lambda)-1|
 \le\frac{30\sqrt\kappa}{m},
\]
and $1-v_m(\mu)\sim\frac{11}{4}\sqrt\kappa/m$ as $\kappa\to\infty$ and
$m/\sqrt\kappa\to\infty$.
\end{statement}
\cref{thm:nag-spectrum} provides the result in detail.

\begingroup
\color{black}
\paragraph{Arbitrary directions with noncommuting noise.}
The result admits a matrix extension. If $\Sigma\succ0$ and
$\chi_\Sigma=\lambda_{\max}(\Sigma)/\lambda_{\min}(\Sigma)$, then
for the same NAG output
\[
 \begin{aligned}
 \sup_{c\ne0}\left|
 \frac{N\,\Var(c^\trans\widehat x_{0,m})}{c^\trans V_\star c}-1
 \right|
 \le(18\chi_\Sigma+6\sqrt{\chi_\Sigma})
                  \frac{\sqrt\kappa}{m}.
 \end{aligned}
\]
In particular, the error is
$O(\sqrt\kappa/m)$ in every direction when the noise condition
number remains bounded. \Cref{thm:nag-all-directions} gives the sharper
comparison with any positive definite matrix commuting with $H$,
and \cref{rem:nag-geometry-necessary} shows why unrestricted
noise geometry cannot admit a universal constant.
\par
\endgroup

If $H$ and $\Sigma$ commute, the same bound holds in every direction with
positive PR variance. All these thresholds concern full averages from a
cold start.
\Cref{sec:outputs-erm} compares full averaging, tail averaging, and
switching to SGD at a common budget $N'$, keeping the random state after
burn-in. Burn-in can improve the cold-start threshold, so the threshold is
not intrinsic to momentum alone, and there is no automatic ranking between
continuing accelerated tail averaging and switching to SGD.

\begingroup
\color{black}
\paragraph{What burn-in changes.}
For a deterministic zero-velocity start $x_{-1}=x_0$, the retained
average has mean
\begin{equation}
 \E\bigl[v_\mu^\trans(\widehat x_{t,m}-x_\star)\bigr]
 =\frac{v_\mu^\trans e_0}{m}
   \sum_{j=t}^{t+m-1}r_j.
 \label{eq:main-tail-mean}
\end{equation}
Full averaging uses the prefix $S_m$; tail averaging instead uses the
decaying tail of the initialization response. This explains how burn-in
can reduce the cold-start bias. However, the state at round $t$ is
random: its covariance should be propagated through the retained window
and added to the contribution of the subsequent noises, as in
\cref{eq:tail-exact}. Counting all observations gives
$N'=(t+m)b$, so calibration to the retained-window covariance
$V_\star/(mb)$ need not imply efficiency relative to the all-sample ERM,
whose covariance is $V_\star/N'$. The full covariance formulas and the
comparison with switching to SGD remain in \cref{sec:outputs-erm}.
\par
\endgroup

\section{Extensions: heavy tails and nonlinearity}
\label{sec:main-extensions}

\dd{We next examine how the calibration picture changes beyond
finite-variance additive quadratics. We treat heavy-tailed noise first and
then extend the online--offline comparison to smooth strongly convex
nonlinear objectives.}
 
\subsection{Heavy tails}
\label{sec:main-heavy-tails}
In the quadratic model, the ERM error is 
$-H^{-1}\bar\eta^{(b)}_{t,m}$, so every deviation bound for the empirical
noise mean applies verbatim to the offline estimator, and by
\cref{eq:main-decomposition} the online output adds only the endpoint term
(\cref{rem:tails-sls}). Heavy tails thus act on the statistical term
identically online and offline; the two differ in the robustifications they
admit.
 
\begin{center}
\small
\begin{tabularx}{\linewidth}{@{}p{0.27\linewidth}X@{}}
\toprule
Condition on the noise & Available result\\
\midrule
Finite third moment & Uniform normal approximation
(\cref{thm:sharp-normal}).\\
Finite variance only & CLT with centering and variance
(\cref{cor:calibrated-clt}); no sub-Gaussian radius uniformly over the
class (\cref{prop:catoni-lower}).\\
Strictly stable noise & stable law and calibration criteria
(\cref{prop:stable-calibration}).\\
Stable domain of attraction & Stable limit with the correct normalizer,
under a boundary condition (\cref{prop:stable-limit}).\\
\bottomrule
\end{tabularx}
\end{center}
 
\begin{statement}[High-confidence limit]
\label{prop:main-catoni}
In the scalar model with curvature $\mu$ and centered i.i.d.\ noise of
variance $\sigma^2$, let $N=mb$. For every $N$ and $\delta\in(0,1/8]$ there
is a symmetric noise law with variance $\sigma^2$ such that
$\Pp\{|\widehat x^{\rm ERM}-x_\star|\ge\sigma/(\mu\sqrt{2N\delta})\}\ge\delta$.
For the full averages of the three policies started at $e_0=0$ with
$m\ge4B$, and every $\delta\in(0,1/16]$, there is such a law with
$\Pp\{|\bar u_{0,m}|\ge\sigma/(4\mu\sqrt{N\delta})\}\ge\delta$.
\end{statement}
 \cref{prop:catoni-lower} provides the detail.
Hence no ball of sub-Gaussian radius centered at the ERM or at the averaged
output is valid uniformly over the finite-variance class. A one-pass
estimator that clips full gradients at a fixed query does not have this
obstruction: with known $H$ and moment bounds, its radius depends on
$\delta$ only through $\sqrt{\log(2d/\delta)}$
(\cref{prop:robust-fixed-query}). Along a moving trajectory, clipping
yields a CLT under checkable conditions on the clipping bias, the boundary,
and the conditional covariance (\cref{thm:clipped-transfer}); clipping,
however, leaves the stochastically linear class, and its offline
counterpart is a Huber-type $M$- or $Z$-estimator rather than a clipped
empirical risk (\cref{rem:clipping-sls}).
 
\begin{statement}[Calibration under strictly stable noise]
\label{prop:main-stable}
In the scalar model, let the noises be i.i.d.\ symmetric $\alpha_0$-stable
with $1<\alpha_0<2$ and scale $\sigma_{\alpha_0}$, let $Z_{\alpha_0}$ be
standard symmetric $\alpha_0$-stable, and put
$\theta_{\alpha_0}=\mu e_0/\sigma_{\alpha_0}$. For any two-step parameters
and all $m,b\ge1$,
\begin{align*}
 &\frac{\mu N^{1-1/\alpha_0}}{\sigma_{\alpha_0}}\,\bar u_{0,m}
 \overset d=h_m^{(\alpha_0)}
 +\Bigl(\frac1m\sum_{j<m}|1-r_j|^{\alpha_0}\Bigr)^{1/\alpha_0}Z_{\alpha_0},
 \\
&h_m^{(\alpha_0)}:=\theta_{\alpha_0}S_m\,b^{1-1/\alpha_0}m^{-1/\alpha_0}.
\end{align*}
For the three policies, along any sequence with $m\to\infty$, allowing
$\kappa$, $b$, and $\theta_{\alpha_0}$ to vary, the scale factor tends to one
if and only if $B/m\to0$, and the output is calibrated if and only if, in
addition, $|\theta_{\alpha_0}|B\,b^{1-1/\alpha_0}m^{-1/\alpha_0}\to0$.
\end{statement}
 \cref{prop:stable-calibration} provides the details.

\begingroup
\color{black}
For the onset comparison below, keep $\theta_{\alpha_0}\ne0$ fixed,
as in the corresponding specialization of
\cref{prop:stable-calibration}.
\par
\endgroup
Consequently, at $b=1$ the onset is $m\gg\kappa^{\alpha_0}$ for SGD and
$m\gg\kappa^{\alpha_0/2}$ for HB and NAG, and the critical batch becomes
$N^{1/\alpha_0}/(|\theta_{\alpha_0}|B)$. Heavier tails lower the onset and
raise the critical batch, because the statistical scale $N^{1/\alpha_0-1}$
decays more slowly, although the accuracy is worse. Mean-square
guarantees carry no information in this regime, whereas the law
yields calibrated intervals from stable quantiles when $\alpha_0$ and
$\sigma_{\alpha_0}$ are known.
 
\subsection{Nonlinear objectives}
\label{sec:main-nonlinear}
Let $f\in C^2$ be $\mu$-strongly convex with an $M_2$-Lipschitz Hessian, and
let the oracle be $G(x,\zeta)=\nabla f(x)+\xi(\zeta)$. For any online
procedure that queries $s_t,\dots,s_{t+m-1}$ and returns their average
$\widehat x_{t,m}$, with $\bar g$ the average of the observed gradients and
$\widehat H=\nabla^2f(\widehat x^{\rm ERM}_{t,m})$, pathwise
\[
 \bigl\|\widehat x_{t,m}-\widehat H^{-1}\bar g
        -\widehat x^{\rm ERM}_{t,m}\bigr\|
 \le\frac{M_2}{2\mu m}\sum_{k=t}^{t+m-1}
 \|s_k-\widehat x^{\rm ERM}_{t,m}\|^2
\]
(\cref{prop:trajectory-term}). For $M_2=0$ the right-hand side vanishes,
which is the quadratic bridge behind \cref{eq:main-decomposition}. Momentum
and step policies enter the gap only through the spread of the queries,
which for constant parameters is governed by the noise amplification of the
method \citep{mohammadi2021robustness}; the right-hand side is the
online--offline counterpart of the stationary bias of constant-step SGD
\citep{dieuleveut2020bridging}.
 
\begin{statement}[First-order ERM equivalence for averaged NAG]
\label{thm:main-nonlinear-nag}
Assume, globally, $\mu I\preceq\nabla^2f\preceq LI$ with $0<\mu<L$ and an
$M_2$-Lipschitz Hessian, and let the noises be i.i.d.\ centered with
covariance $\Sigma$ and $0<\sigma^2=\tr\Sigma<\infty$. Fix $f$, the noise
law, the dimension, and $x_0$. Run NAG with the parameters above from
$x_{-1}=x_0$ for $t+m$ rounds, where $ t=\Bigl\lceil\sqrt\kappa\,
   \log\Bigl(1+\frac{bL\mathcal V_0}{\sigma^2\sqrt\kappa}\Bigr)\Bigr\rceil,$ $\mathcal V_0=f(x_0)-f(x_\star)+\frac\mu2\|x_0-x_\star\|^2$.
Let $N'=(t+m)b$, let $\widehat x^{\rm ERM}_{N'}$ be the ERM on all $N'$
observations, and let $H=\nabla^2f(x_\star)$. Along any sequence with
$m\to\infty$, $m/b\to0$, and $\log b=o(m)$,
\begin{align*}
 \sqrt{N'}\bigl(\widehat x_{t,m}-\widehat x^{\rm ERM}_{N'}\bigr)
 &\longrightarrow0\quad\text{in }L^1,\\
 \sqrt{N'}\bigl(\widehat x_{t,m}-x_\star\bigr)
 &\Rightarrow\cN(0,H^{-1}\Sigma H^{-1}).
\end{align*}
\end{statement}

\begingroup
\color{black}
\paragraph{The three costs of nonlinear averaging.}
For the burn-in, $m\ge\sqrt\kappa$ and $t\le m$, the explicit
bound in \cref{eq:nag-closed-full-budget} simplifies to
\begin{equation}
\begin{aligned}
 \frac{\mu\sqrt{N'}}{\sigma}
   \E\bigl\|\widehat x_{t,m}-\widehat x^{\rm ERM}_{N'}\bigr\| \lesssim
   \underbrace{\frac{\kappa^{1/4}}{\sqrt m}}_{\text{endpoint}}
   +\underbrace{\frac{M_2\sigma\sqrt\kappa}{\mu L}
      \sqrt{\frac mb}}_{\text{nonlinearity}}
   +\underbrace{\sqrt{\frac tm}}_{\text{burn-in cost}}.
\end{aligned}
\label{eq:main-nag-three-terms}
\end{equation}
The hidden constant is universal. The endpoint contribution becomes
small once $m\gg\sqrt\kappa$, exhibiting the accelerated memory scale.
The nonlinear contribution requires $m/b\to0$ for a fixed problem.
Finally, the last term pays for the observations omitted from the
average but included in the all-sample ERM. Since the prescribed
burn-in is $O(1+\log b)$ for a fixed problem, $\log b=o(m)$ makes this
cost vanish. These three terms explain the conditions of Statement
\ref{thm:main-nonlinear-nag}; the bound with constants is retained in
the appendix.
\par
\endgroup
This is \cref{cor:nonlinear-nag-erm}; \cref{thm:nonlinear-nag-closed} gives
an explicit nonasymptotic bound on
$\E\|\widehat x_{t,m}-\widehat x^{\rm ERM}_{N'}\|$. The class includes the
regularized Bernoulli logistic loss, for which the ERM is the usual
regularized empirical minimizer (\cref{app:nag-examples}). With
multiplicative noise, the local covariance at the solution no longer
determines the covariance accumulated along the trajectory: fixed-step SGD
already acquires a covariance correction (\cref{prop:multiplicative-scalar}),
and over oracles of arbitrary multiplicative intensity the risk of the
two-step family is not bounded by local quantities
(\cref{prop:local-insufficient-new}); a second-moment recursion certifies a
burn-in for SGD, HB, and NAG (\cref{prop:ms-operator-new}).
Diminishing-gain and increasing-momentum transfers, conditional on moment
bounds, and the comparison with Richardson--Romberg extrapolation are in
\cref{sec:step-policies,sec:rr-context}.

\section{Numerical Experiments}
\label{sec:numerical-summary}

\dd{We test the predicted center and covariance transitions in a controlled
finite-population study based on Census income data. Holding the observation
budget fixed, we compare finite-sample coverage predictions with Monte Carlo
coverage as the batch size varies.}

\paragraph{Setup.}
We treat the continuous-income UCI Adult reconstruction of
\citet{ding2021retiring} as a finite population of $49{,}531$ persons with
$d=44$ encoded covariates.  A $10^4$-person pilot fits the predictor used to
construct a prediction-powered, stochastically linear ridge oracle; subsequent
calls sample records uniformly with replacement.  At the selected ridge level,
$\kappa=1.5\times10^4$.  Using the policies of \cref{sec:setup}, we evaluate
nominal-$95\%$ PR intervals over $500$ replicates at $N=2^{16}$ and
\textcolor{black}{$b\in\{2^0,\ldots,2^{12}\}$.}  Predictions use finite-sample moments, not fitted
simulation outputs.  The pilot and population geometry are oracle resources;
\cref{app:experiments-census} gives the full protocol and other ridge levels.

\begingroup
\color{black}
Throughout the linear experiments, the output is the full query-point
average $\widehat x_{0,m}$, the observation budget is $N=mb$, and
$V_\star=H^{-1}\Sigma H^{-1}$, as in \cref{sec:setup}.
For a fixed contrast $c\in\R^d$ with $c^\trans V_\star c>0$, we infer
$c^\trans x_\star$: $c=v_\mu$ in panel (a), $c=v_L$ in panel (b),
and $c$ is the coordinate vector selecting the age coefficient in panel (c).
\par
\endgroup

\begin{figure}[h]
\centering
\includegraphics[width=0.6\columnwidth]{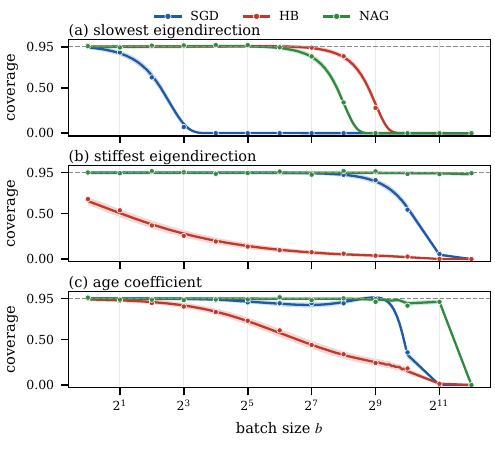}
\caption{Coverage of nominal $95\%$ intervals against the batch size
$b$ at the fixed budget $N=2^{16}$, with $\kappa=1.5\times10^4$. Panels (a) and
(b) show the slowest and the stiffest Hessian eigendirection, and panel (c) the
age coefficient. Curves are \textcolor{black}{Gaussian coverage predictions
based on finite-sample moments}, and markers are simulated
coverages. Bands show the $95\%$ range of the Monte Carlo estimate.}
\label{fig:census-coverage}
\end{figure}

\begingroup
\color{black}
\paragraph{Coverage at a Fixed Budget.}
\Cref{fig:census-coverage} shows how the coverage of the nominal $95\%$ PR
interval $ c^\trans\widehat x_{0,m}
 \pm1.96\left(\frac{c^\trans V_\star c}{N}\right)^{1/2}$
changes with the batch size when the budget $N=mb$ is fixed.
The half-width is $1.96$ times the asymptotic standard error of
$c^\trans\widehat x^{\rm ERM}_{0,m}$. Under a Gaussian approximation,
a negligible bias on this standard-error scale and a variance close
to $c^\trans V_\star c/N$ in relative terms are sufficient for coverage near $95\%$.
At a single confidence level, bias and scale errors can also compensate
(\cref{sec:confidence-level}). Larger batches leave fewer rounds to
remove the initialization bias and calibrate the covariance. A bias
causes an abrupt loss of coverage once it reaches the half-width,
and an excess covariance a gradual one.
\par
\endgroup

\paragraph{Panel (a): Slowest Eigendirection.}
{\color{black}
On the direction $c=v_\mu$, coverage is governed by the normalized
bias $h_m=\theta S_m b/\sqrt N$, with
$h_m\sim\theta B b/\sqrt N$ when $m/B\to\infty$ and $\theta\ne0$,
using the notation of Statement \ref{thm:main-calibration}.
}
Coverage falls below $90\%$ from $b=2$
for SGD, from $b=128$ for NAG, and from $b=256$ for HB. The ratios $64$ and
$128$ agree, up to the factor-two grid, with the predicted
$(\sqrt\kappa+1)/2\approx62$ and $\sqrt\kappa\approx124$. Momentum thus
enlarges the critical batch \textcolor{black}{$b_{\rm crit}(N)$ of
Statement \ref{cor:main-critical-batch}} from order
$\sqrt N/\kappa$ to order $\sqrt{N/\kappa}$\textcolor{black}{, for fixed nonzero $\theta$}.

\paragraph{Panel (b): Stiffest Eigendirection.}
{\color{black}
Here $c=v_L$, and coverage is governed by the relative variance
$v_m(L)$ after the start is forgotten. For HB,
Statement \ref{thm:main-hb-stiff} gives
$v_m(L)-1\sim\kappa^{3/2}/(8m)$ in its stated asymptotic regime.
At $b=1$, the relative variance is $v_m(L)=4.6$,
}
which implies coverage $2\Phi(1.96/\sqrt{4.6})-1\approx64\%$; the observed
coverage is $66\%$. Since the excess grows with $b=N/m$, the coverage of HB
decreases gradually. NAG, \textcolor{black}{for which
$|v_m(L)-1|\le30\sqrt\kappa/m$ by Statement \ref{thm:main-nag}},
remains nominal throughout.

\paragraph{Panel (c): Age Coefficient.}
For an ordinary regression coefficient both effects are present. HB loses
coverage first, from $b=8$, because of its covariance. SGD loses it from
$b=2^{10}$ because of its center. NAG retains coverage of at least $87\%$ up to
$b=2^{11}$.

\paragraph{Agreement with the Theory.}
In every panel the observed breakdown batch coincides with the predicted one,
and $110$ of the $117$ Monte Carlo estimates lie within their $95\%$ bands,
consistent with Monte Carlo variability. Over all $1026$ configurations,
predicted and observed coverage differ by a median of $0.7$ percentage points.
\begingroup
\color{black}
These observations are consistent with the center condition
(panel a; Statement \ref{thm:main-calibration}) and the spectral covariance
comparison (panel b; Statements \ref{thm:main-hb-stiff},\ref{thm:main-nag}).
\endgroup

\section{Conclusion}
\label{sec:discussion}

Our results give a unified finite-budget account of when averaged stochastic
methods become Polyak--Ruppert efficient. The online--offline bridge separates
common sampling noise from method-dependent endpoint dynamics, exposing
distinct scales for center, covariance, and Gaussian-shape calibration. It
explains why momentum permits larger batches and why HB and NAG can approach
the same limit through different covariance transients. The nonlinear NAG
extension shows that this ERM viewpoint persists beyond quadratics when burn-in
is charged. Acceleration therefore shortens the path to a usable statistical
limit; it does not improve that limit.

\bibliographystyle{plainnat}
\bibliography{references}

\appendix

\section{Setup Details and the Online--Offline Decomposition}
\label{app:setup-details}
\label{app:linear-proofs}

\dd{This section fixes the notation and state-space identities used
throughout the appendix. It then develops the exact online--offline
decomposition and clarifies the output, budget, and same-data ERM
conventions.}

\paragraph{Notation.}
The appendix uses the following equivalents of the main-text notation:
$y_k$ for the query point $s_k$;
$\widehat x^{\rm on}_{t,m,b}$ and $\widehat x^{\rm ERM}_{t,m,b}$ for
$\widehat x_{t,m}$ and $\widehat x^{\rm ERM}_{t,m}$;
$a_\lambda$ for the unit eigenvector $v_\lambda$;
$S_\infty$ (or $S_{\infty,{\rm alg}}$) for the memory
$B=\sum_{j\ge0}r_j$ (also written $B_{\rm alg}$); and
$N_{\rm win}=mb$, $N_{\rm tot}=(t+m)b$ for the budgets $N$ and $N'$.
The bound $U_{\rm en}$ of \cref{prop:same-sgd-new} coincides with
$U^{\rm da}_{m,b}$ in \cref{eq:sgd-da}, and $U_{\rm con}\le U^{\rm quc}_{m,b}$
is the sharper finite-window contraction bound.

\paragraph{Outputs and the same-data ERM.}
For a window $k=t,\ldots,t+m-1$, write
$\bar u_{t,m}:=m^{-1}\sum_{k=t}^{t+m-1}u_k$. The online estimator averages
the points at which gradients are evaluated:
\begin{equation}
 \widehat x^{\rm on}_{t,m,b}
 :=\frac1m\sum_{k=t}^{t+m-1}y_k
 =x_\star+\bar u_{t,m}.
 \label{eq:online-output}
\end{equation}
For any eigenvalue $\lambda$ and a chosen unit eigenvector $a_\lambda$ with
$Ha_\lambda=\lambda a_\lambda$, write
$\bar u_{t,m}^{(\lambda)}:=a_\lambda^\trans\bar u_{t,m}$ and
$\sigma_\lambda^2:=a_\lambda^\trans\Sigma a_\lambda$.
The same $N_{\rm win}=mb$ observations have average noise
$\bar\eta_{t,m}^{(b)}:=m^{-1}\sum_{k=t}^{t+m-1}\eta_{k+1}^{(b)}$ and define
\begin{equation}
\begin{aligned}
  \widehat x^{\rm ERM}_{t,m,b}
  &:=\arg\min_x\bigl\{f(x)
      +\inner{\bar\eta_{t,m}^{(b)}}{x-x_\star}\bigr\}=x_\star-H^{-1}\bar\eta_{t,m}^{(b)}.
\end{aligned}
\label{eq:window-erm}
\end{equation}
The main question is how quickly \cref{eq:online-output} approaches the
same-data benchmark \cref{eq:window-erm}. For SGD and HB, $y_k=x_k$; for NAG,
$y_k$ is the extrapolated query. Averaging NAG iterates instead changes an
endpoint term.

Let $\cF_k$ be the sigma-algebra generated by the samples observed through
round $k$. The query $u_k$ is $\cF_k$-measurable, whereas the next batch noise
$\eta_{k+1}^{(b)}$ is independent of $\cF_k$.

\paragraph{State-space representation.}
\label{appx:schur_stability}
Momentum makes \cref{eq:two-step} a second-order recursion. Substituting \cref{eq:query} into the update shows that
$e_{k+1}$ depends on $(e_k,e_{k-1})$ and the batch noise. Stack two
consecutive errors as
$\mathbf z_k=[e_k^\trans,\,e_{k-1}^\trans]^\trans\in\R^{2d}$; then
\begin{equation}
 \mathbf z_{k+1}=A\mathbf z_k+B_\alpha\eta_{k+1}^{(b)},
 \qquad u_k=\mathsf O_\gamma\mathbf z_k =(1+\gamma)e_k-\gamma e_{k-1}=u_k              \label{eq:ss}
\end{equation}
where
\[
 A:=\begin{bmatrix}
 (1+\beta)I_d-\alpha(1+\gamma)H &
 -\beta I_d+\alpha\gamma H\\
 I_d&0
 \end{bmatrix},
 \quad B_\alpha:=\begin{bmatrix}-\alpha I_d\\0\end{bmatrix},
 \quad
 \mathsf O_\gamma:=\bigl[(1+\gamma)I_d,\,-\gamma I_d\bigr],
\]
with $B_\alpha\in\R^{2d\times d}$ and
$\mathsf O_\gamma\in\R^{d\times2d}$. In particular,
$\mathsf O_0=[I_d,\,0]$ selects the current error. The powers $A^j$ propagate the
cold start, while $\mathsf O_\gamma A^jB_\alpha$ are the impulse-response
weights applied to past gradient noise.

We call the method \emph{stable} when $\rho(A)<1$, so $A^jz_0\to0$ and the
noise filter is summable. Since the blocks of $A$ are polynomials in $H$, the
dynamics decouple by Hessian eigen-direction. On an eigenvalue $\lambda$, the
characteristic polynomial is
$r^2-[1+\beta-\alpha\lambda(1+\gamma)]r
 +(\beta-\alpha\lambda\gamma)$. The Schur test gives the condition
\[
 \alpha\lambda_{\max}(H)(1+2\gamma)<2(1+\beta).
\]

The form also explains the common Polyak--Ruppert covariance.
Set the gradient-noise input identically equal to a constant vector
$g_{\rm in}$ in a stable recursion. At equilibrium,
$e_{k+1}=e_k=e_{k-1}=e$ and hence $u=e$; \cref{eq:two-step} reduces to
$He+g_{\rm in}=0$. Thus the equilibrium response is
$e=-H^{-1}g_{\rm in}$, independently of
$\alpha,\beta,\gamma$. In matrix form,
\begin{equation}
  \mathsf O_\gamma(I_{2d}-A)^{-1}B_\alpha=-H^{-1}.            \label{eq:dcgain}
\end{equation}

 \subsection{Online--Offline Decomposition and Finite-Round Laws}
\label{sec:statistical-information}

\begin{theorem}[boundary decomposition]
\label{thm:exact}
For every trajectory of \cref{eq:two-step}, every $t\ge0$, and every $m\ge1$,
\begin{equation}
 \bar u_{t,m}
 =-H^{-1}\bar\eta_{t,m}^{(b)}
 +\frac{1}{\alpha m}H^{-1}\Delta_{t,m},                          \label{eq:exact-decomp}
\end{equation}
where
\begin{align*}
   \bar\eta_{t,m}^{(b)}&=\frac1m\sum_{k=t}^{t+m-1}\eta_{k+1}^{(b)} = \frac1m\sum_{k=t}^{t+m-1}\left[\frac1b\sum_{j=1}^b\xi_{k+1,j}\right],\\
 \Delta_{t,m}&=e_t-e_{t+m}+\beta(e_{t+m-1}-e_{t-1}).  
\end{align*}
Fix $H,\alpha,\beta,\gamma$. Suppose the single-sample noises have a common i.i.d. law with mean zero and finite covariance $\Sigma$.  Let $m\to\infty$ and allow $b=b_m$ and $t=t_m$ to vary.  If
\begin{equation}
 \sqrt{b/m}\,\Delta_{t,m}\xrightarrow{\Pp}0,                    \label{eq:boundary-condition}
\end{equation}
then
\begin{equation}
 \sqrt{mb}\,\bar u_{t,m}\xrightarrow{\mathrm d} \cN(0,\Vstar),
 \qquad \Vstar=H^{-1}\Sigma H^{-1}.                           \label{eq:universal-clt}
\end{equation}
The same limit holds for the iterate average if, additionally,
$\sqrt{b/m}\,(e_{t+m-1}-e_{t-1})\xrightarrow{\Pp}0$.
\end{theorem}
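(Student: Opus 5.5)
The plan has two parts: a pathwise identity obtained by telescoping the error recursion, and a central limit argument that reduces the online output to the same-data ERM.

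\textbf{Step 1: summing the recursion.} Rewrite \cref{eq:two-step} as
\[
\alpha\bigl(Hu_k+\eta_{k+1}^{(b)}\bigr)=e_k-e_{k+1}+\beta(e_k-e_{k-1}).
\]
Sum this over $k=t,\dots,t+m-1$. The terms $e_k-e_{k+1}$ telescope to $e_t-e_{t+m}$. The terms $\beta(e_k-e_{k-1})$ telescope to $\beta(e_{t+m-1}-e_{t-1})$. Dividing by $\alpha m$ gives
\[
H\bar u_{t,m}+\bar\eta^{(b)}_{t,m}=\frac{\Delta_{t,m}}{\alpha m}.
\]
Applying $H^{-1}$ yields \cref{eq:exact-decomp}. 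This holds for every trajectory, because it uses nothing but the recursion. The query relation \cref{eq:query} is not needed, since $u_k$ enters only through $Hu_k$. For $t=0$ the convention $e_{-1}=e_0$ is used.

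\textbf{Step 2: the CLT.} Multiply the identity by $\sqrt{mb}$:
\[
\sqrt{mb}\,\bar u_{t,m}=-H^{-1}\sqrt{mb}\,\bar\eta^{(b)}_{t,m}+\alpha^{-1}H^{-1}\sqrt{b/m}\,\Delta_{t,m}.
\]
The first term is $-H^{-1}$ times the normalized sum of $N=mb$ i.i.d.\ single-sample noises. By the multivariate Lindeberg--L\'evy CLT (Cram\'er--Wold), applied to a triangular array whose rows are i.i.d.\ with a fixed law, it converges to $\cN(0,H^{-1}\Sigma H^{-1})$ as $N\to\infty$. This holds no matter how $b$ and $t$ vary: the law of the window noises does not depend on $t$, because the window starts at index $t+1$ and these noises are independent of $\cF_t$. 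Under \cref{eq:boundary-condition} the second term tends to $0$ in probability, and Slutsky's lemma gives \cref{eq:universal-clt}.

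\textbf{Step 3: the iterate average.} For the iterate average, use $u_k=e_k+\gamma(e_k-e_{k-1})$. This gives
\[
\frac1m\sum_k e_k=\bar u_{t,m}-\frac{\gamma}{m}(e_{t+m-1}-e_{t-1}),
\]
again by telescoping. Multiplying by $\sqrt{mb}$ turns the correction into $\gamma\sqrt{b/m}\,(e_{t+m-1}-e_{t-1})$, which vanishes in probability by the extra assumption. Slutsky's lemma applies once more.

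\textbf{Main obstacle.} The only delicate point is the CLT when $b=b_m$ varies, since the summands are batch means whose law changes with $m$. This is handled by rewriting the sum as $N^{-1/2}\sum\xi_{k+1,j}$ over the single samples themselves: a sum of $N\to\infty$ i.i.d.\ terms with a fixed law. The standard CLT then applies directly, with no Lindeberg verification needed.
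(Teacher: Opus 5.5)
Your proposal is correct and matches the paper's proof essentially step for step. Both telescope the rearranged recursion to obtain the pathwise identity. Both enumerate the $N=mb$ single-sample noises, so that the i.i.d.\ multivariate CLT applies however $b_m$ and $t_m$ vary, and then use Slutsky's theorem for the boundary term. Both handle the iterate average with the telescoped query identity $\bar u_{t,m}-\bar e_{t,m}=\frac{\gamma}{m}(e_{t+m-1}-e_{t-1})$. One minor remark: the identity holds for any value of $e_{-1}$, which simply appears inside $\Delta_{0,m}$, so the convention $e_{-1}=e_0$ is not actually needed.
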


\begin{proof}[Proof of \cref{thm:exact}]
Rearranging \cref{eq:two-step} gives
\[
 \alpha H u_k
 =e_k-e_{k+1}+\beta(e_k-e_{k-1})-\alpha\eta_{k+1}^{(b)}.
\]
Summing from $k=t$ to $t+m-1$ telescopes both finite differences:
\[
 \alpha H\sum_{k=t}^{t+m-1}u_k
 =e_t-e_{t+m}+\beta(e_{t+m-1}-e_{t-1})
  -\alpha\sum_{k=t}^{t+m-1}\eta_{k+1}^{(b)}.
\]
Dividing by $\alpha m$ and multiplying by $H^{-1}$ proves
\cref{eq:exact-decomp}.

For the limit, enumerate the $N=mb$ single-sample noises in the averaging
window.  The first term in \cref{eq:exact-decomp} satisfies
\[
 -\sqrt{mb}\,H^{-1}\bar\eta_{t,m}^{(b)}
 =-H^{-1}\frac1{\sqrt{mb}}
   \sum_{k=t}^{t+m-1}\sum_{j=1}^b\xi_{k+1,j}.
\]
The multivariate iid central limit theorem gives convergence to
$\cN(0,H^{-1}\Sigma H^{-1})$.  The normalized boundary equals
\[
 \frac{\sqrt{mb}}{\alpha m}H^{-1}\Delta_{t,m}
 =\frac1\alpha\sqrt{b/m}\,H^{-1}\Delta_{t,m},
\]
which is $o_{\Pp}(1)$ by \cref{eq:boundary-condition}.  Slutsky's theorem
proves \cref{eq:universal-clt}.

Finally, writing
$\bar e_{t,m}=m^{-1}\sum_{k=t}^{t+m-1}e_k$, the query definition gives
\begin{equation}
 \bar u_{t,m}-\bar e_{t,m}
 =\frac\gamma m\left(e_{t+m-1}-e_{t-1}\right).
 \label{eq:u-e-boundary}
\end{equation}
After multiplication by $\sqrt{mb}$, the additional endpoint condition in
the theorem makes the right-hand side $o_{\Pp}(1)$ and proves the assertion
for the iterate average.
\end{proof}

Condition \cref{eq:boundary-condition} also makes the role of burn-in explicit.
If $\norm{A^k}\le K\rho_A^k$ with $\rho_A<1$, then under a deterministic
start and mini-batch noise,
\begin{equation}
 \E\norm{\mathbf z_t}
 \lesssim \rho_A^t\norm{\mathbf z_0}+b^{-1/2}.                  \label{eq:state-bound}
\end{equation}
Consequently, it is sufficient that $m\to\infty$ and
$\sqrt{b/m}\,\rho_A^t\norm{\mathbf z_0}\to0$. A larger batch suppresses stochastic
state fluctuations, but it does not suppress a deterministic cold start;
without extra burn-in, aggressive batching can therefore destroy the CLT at
fixed total work.

\paragraph{The Same-Observation ERM.}
\label{sec:paired-erm}
The online and offline estimators in \cref{eq:online-output,eq:window-erm}
use the same $mb$ observations. Combining their definitions with
\cref{thm:exact} gives the pathwise identity
\begin{equation}
 \hat x^{\rm on}_{t,m,b}-\hat x^{\rm ERM}_{t,m,b}
 =\frac{H^{-1}\Delta_{t,m}}{\alpha m}.
 \label{eq:paired-erm}
\end{equation}

\begin{proposition}[A pathwise gradient bridge]
\label{prop:observable-bridge}
In the additive quadratic model, let an arbitrary online procedure query the
points $x_\star+u_k$, $t\le k\le t+m-1$, with $u_k$ being $\cF_k$-measurable,
and receive $g_k=Hu_k+\eta^{(b)}_{k+1}$. Put
$\bar u_{t,m}=m^{-1}\sum_{k=t}^{t+m-1}u_k$ and
$\bar g_{t,m}=m^{-1}\sum_{k=t}^{t+m-1}g_k$. Then, pathwise,
\begin{equation}
 \hat x^{\rm ERM}_{t,m,b}=x_\star+\bar u_{t,m}-H^{-1}\bar g_{t,m}.
 \label{eq:observable-bridge}
\end{equation}
For the two-step family, $\alpha m\,\bar g_{t,m}=\Delta_{t,m}$, so that
\cref{eq:observable-bridge} coincides with \cref{eq:paired-erm}.
\end{proposition}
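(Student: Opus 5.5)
The identity \cref{eq:observable-bridge} is pure algebra, so I will verify it pathwise with no probabilistic input. The steps are as follows.

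First, substitute the oracle model into the averaged gradient. Averaging $g_k=Hu_k+\eta^{(b)}_{k+1}$ over $k=t,\dots,t+m-1$ gives
\[
 \bar g_{t,m}=H\bar u_{t,m}+\bar\eta^{(b)}_{t,m}.
\]
Applying $H^{-1}$ (which exists since $H\succ0$) and rearranging,
\[
 \bar u_{t,m}-H^{-1}\bar g_{t,m}=-H^{-1}\bar\eta^{(b)}_{t,m}.
\]
By the closed form \cref{eq:window-erm}, this equals $\widehat x^{\rm ERM}_{t,m,b}-x_\star$. This proves \cref{eq:observable-bridge} for an arbitrary procedure. The measurability of $u_k$ plays no role in the identity itself; it only fixes the setting, namely that the noise $\eta^{(b)}_{k+1}$ is the one observed at query $k$, so the ERM is built from the same samples.

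Second, for the two-step family I will reuse the rearrangement in the proof of \cref{thm:exact}. From \cref{eq:two-step},
\[
 \alpha g_k=\alpha(Hu_k+\eta^{(b)}_{k+1})=e_k-e_{k+1}+\beta(e_k-e_{k-1}).
\]
Summing over the window telescopes both differences, which gives $\alpha m\,\bar g_{t,m}=\Delta_{t,m}$. Substituting this into \cref{eq:observable-bridge} yields
\[
 \widehat x^{\rm on}_{t,m,b}-\widehat x^{\rm ERM}_{t,m,b}=\bar u_{t,m}-\bigl(\bar u_{t,m}-H^{-1}\bar g_{t,m}\bigr)=\frac{H^{-1}\Delta_{t,m}}{\alpha m},
\]
which is exactly \cref{eq:paired-erm}.

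There is no real obstacle. The only points requiring care are index conventions. The window noises must be $\eta^{(b)}_{t+1},\dots,\eta^{(b)}_{t+m}$, consistent with the ERM definition. For $t=0$, the term $e_{-1}$ appearing in $\Delta_{t,m}$ must be read through the convention $e_{-1}=e_0$.
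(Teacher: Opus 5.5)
Your proof is correct and follows the paper's argument: average the oracle identity to get $\bar g_{t,m}=H\bar u_{t,m}+\bar\eta^{(b)}_{t,m}$, compare with the closed-form window ERM, and telescope the two-step recursion over $t\le k\le t+m-1$ to obtain $\alpha m\,\bar g_{t,m}=\Delta_{t,m}$. One small remark: the identity holds for whatever $e_{-1}$ the recursion uses, so the convention $e_{-1}=e_0$ is not actually needed at $t=0$.
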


If $\widehat L_{t,m}$ is the empirical risk on the window, then
$\nabla\widehat L_{t,m}(x_\star+\bar u_{t,m})
=H\bar u_{t,m}+\bar\eta_{t,m}^{(b)}=\bar g_{t,m}$. Hence
\cref{eq:observable-bridge} is one Newton step from the average.

Indeed, $\bar g_{t,m}=H\bar u_{t,m}+\bar\eta^{(b)}_{t,m}$, while
$\hat x^{\rm ERM}_{t,m,b}-x_\star=-H^{-1}\bar\eta^{(b)}_{t,m}$. For the two-step
family, summing \cref{eq:two-step} over $t\le k\le t+m-1$ gives
$e_{t+m}-e_t=\beta(e_{t+m-1}-e_{t-1})-\alpha\sum_kg_k$, that is,
$\alpha\sum_kg_k=\Delta_{t,m}$. Thus $\bar g_{t,m}$ is observable from the
queried trajectory, while the Newton correction $H^{-1}\bar g_{t,m}$ is
directly computable only with known or separately estimated curvature; see
\cref{rem:info-classes}.

\section{Finite-Budget Calibration Results}
\label{app:finite-budget-details}
\dd{This section provides the detailed finite-budget results underlying the
calibration claims in the main text. It separates initialization,
covariance, and Gaussian-shape effects and then compares their behavior
across SGD, HB, and NAG.}

\label{app:finite-budget-proofs}
\subsection{Sharp Calibration Thresholds and Critical Batching}
\label{sec:applicability}
\paragraph{Scalar Law and Risk.}
In the following scalar result, $H=\mu>0$, $e_{-1}=e_0$ is deterministic, and the noises are independent with mean zero and variance $\sigma^2>0$. Write $\theta=\mu e_0/\sigma$.

\begin{theorem}[cold-start distribution and risk]
\label{thm:cold-exact}
For any scalar two-step parameters, set
\[
 \iota=(1,1)^\trans,  ~ r_j=\mathsf O_\gamma A^j\iota, ~
 S_m=\sum_{j=0}^{m-1}r_j, ~ Q_m=\sum_{j=0}^{m-1}r_j^2.
\]
Here $r_j$ is the deterministic oracle-point response to a unit start.
For every $m,b\ge1$, without a stability assumption,
\begin{equation}
 \bar u_{0,m}
 =\frac{e_0S_m}{m}
 -\frac1{\mu mb}\sum_{s=1}^{m}\sum_{j=1}^{b}
       (1-r_{m-s})\xi_{s,j}.
 \label{eq:cold-exact-law}
\end{equation}
Consequently,
\begin{equation}
 \E\bar u_{0,m}^2
 =\frac{e_0^2S_m^2}{m^2}
  +\frac{\sigma^2}{b\mu^2m^2}(m-2S_m+Q_m).
 \label{eq:cold-exact-risk}
\end{equation}
In particular, for every signed $\theta=\mu e_0/\sigma$,
\begin{equation}
 R_{m,b}:=\frac{\E\bar u_{0,m}^2}{\sigma^2/(\mu^2mb)}
 =1+\frac{\theta^2bS_m^2-2S_m+Q_m}{m}.
 \label{eq:cold-relative-risk}
\end{equation}
\end{theorem}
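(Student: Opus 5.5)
The plan is to use linearity of the scalar recursion and split the trajectory into its noiseless cold-start part and its noise-driven part. In state-space form \cref{eq:ss} with $H=\mu$, the deterministic start $e_{-1}=e_0$ gives $\mathbf z_0=e_0\iota$. The state and the query then satisfy
\[
 \mathbf z_k=A^k e_0\iota+\sum_{s=1}^{k}A^{k-s}B_\alpha\eta_s^{(b)},
 \qquad
 u_k=e_0r_k-\alpha\sum_{s=1}^{k}\mathsf O_\gamma A^{k-s}e_1\,\eta^{(b)}_s,
\]
where $e_1=(1,0)^\trans$. Averaging over $0\le k\le m-1$ immediately produces the term $e_0S_m/m$. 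What remains is to identify the noise coefficient: the weight of $\eta_s^{(b)}$ in $\sum_{k<m}u_k$ must equal $-(1-r_{m-s})/\mu$. Here $\eta_m^{(b)}$ gets weight $0=1-r_0$, which is consistent with $r_0=\mathsf O_\gamma\iota=1$.

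To identify that weight without computing powers of $A$, I will apply the boundary identity of \cref{thm:exact} not to the random trajectory but to an auxiliary noiseless trajectory. The weight of $\eta_s$ is $-\alpha\sum_{j=0}^{m-1-s}w_j$ with $w_j=\mathsf O_\gamma A^je_1$, i.e.\ a partial sum of the impulse response to a unit velocity-type kick. Consider the deterministic run started at $\mathbf z=e_1$, meaning $e_0=1$, $e_{-1}=0$. Telescoping the rearranged update $\alpha\mu u_k=e_k-e_{k+1}+\beta(e_k-e_{k-1})$ as in the proof of \cref{thm:exact} gives $\alpha\mu\sum_{j<n}w_j$ in terms of endpoint values of that trajectory. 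It is cleaner still to relate the two responses directly. Since $e_1=\iota-(0,1)^\trans$, and the noise enters at step $s$ as the state increment $-\alpha\eta_s e_1$, I will use the simplest route: verify the claimed formula by induction on the window, or equivalently show
\[
 \alpha\sum_{j=0}^{n-1}w_j=\frac{1-r_n}{\mu}.
\]
For this, define $D_n:=\mu\alpha\sum_{j<n}w_j+r_n$. Then $D_0=r_0=1$, and it suffices to show $D_{n+1}=D_n$, i.e.\ $r_{n+1}-r_n=-\alpha\mu w_n$. This holds because $\iota$ is a fixed point of the noiseless recursion shifted by the input: $A\iota=\iota-\alpha\mu(1+\gamma-\gamma)e_1=\iota-\alpha\mu e_1$. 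Indeed the first row of $A\iota$ is $1+\beta-\alpha\mu(1+\gamma)-\beta+\alpha\mu\gamma=1-\alpha\mu$, and the second row is $1$. Hence $A^{n+1}\iota-A^n\iota=-\alpha\mu A^ne_1$, and applying $\mathsf O_\gamma$ gives the increment identity. This is the DC-gain identity \cref{eq:dcgain} in finite form, and no stability is needed. Summing over $k$ and regrouping by $s$ then yields \cref{eq:cold-exact-law}, after writing $\eta_s^{(b)}=b^{-1}\sum_j\xi_{s,j}$.

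For the risk, \cref{eq:cold-exact-law} is a deterministic term plus a centered sum of independent terms. So $\E\bar u_{0,m}^2$ equals the squared mean plus
\[
 \frac{\sigma^2}{\mu^2m^2b^2}\cdot b\sum_{s=1}^m(1-r_{m-s})^2 .
\]
Reindexing with $j=m-s\in\{0,\dots,m-1\}$ gives $\sum_j(1-2r_j+r_j^2)=m-2S_m+Q_m$, which is \cref{eq:cold-exact-risk}. Dividing by $\sigma^2/(\mu^2mb)$ and substituting $e_0=\theta\sigma/\mu$ gives \cref{eq:cold-relative-risk}.

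The only delicate step is the increment identity $r_{n+1}-r_n=-\alpha\mu\,\mathsf O_\gamma A^ne_1$, together with the bookkeeping that the input matrix is $B_\alpha=-\alpha e_1$ and the indexing of $\eta_s$ versus $r_{m-s}$. Everything else is routine summation.
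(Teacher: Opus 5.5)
Your proof is correct and follows the paper's argument: superposition from $\mathbf z_0=e_0\iota$, identification of the cumulative noise coefficient as $-(1-r_n)/\mu$ via the identity $(I-A)\iota=\alpha\mu(1,0)^\trans$, and then independence and centering for the risk. The only difference is cosmetic. The paper writes that coefficient as $\mathsf O_\gamma(I-A^n)(I-A)^{-1}B_\alpha$, which is legitimate without stability since $\det(I-A)=\alpha\mu>0$, whereas you telescope $A^j\iota-A^{j+1}\iota=\alpha\mu A^j(1,0)^\trans$ and never need to invert $I-A$.
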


\begin{proof}[Proof of \cref{thm:cold-exact}]
In the scalar model write $\mathbf z_k=(e_k,e_{k-1})^\trans$ and
$\iota=(1,1)^\trans$. Then
$\mathbf z_{k+1}=A\mathbf z_k+B_\alpha\eta_{k+1}^{(b)}$,
$u_k=\mathsf O_\gamma\mathbf z_k$, and
$r_j=\mathsf O_\gamma A^j\iota$. Directly from the matrices in
\cref{eq:ss},
\[
 (I-A)\iota=(\alpha\mu,0)^\trans,
 \qquad B_\alpha=(-\alpha,0)^\trans,
 \qquad (I-A)^{-1}B_\alpha=-\frac1\mu \iota.
\]
Superposition therefore gives $u_k=e_0r_k+\sum_{s=1}^k \mathsf O_\gamma A^{k-s}B_\alpha\eta_s^{(b)}.$
For $n\ge0$, the cumulative coefficient of one noise input is
\[
 \mathsf O_\gamma\sum_{j=0}^{n-1}A^jB_\alpha
 =\mathsf O_\gamma(I-A^n)(I-A)^{-1}B_\alpha
 =-\frac1\mu\{1-\mathsf O_\gamma A^n\iota\}
 =-\frac1\mu(1-r_n),
\]
where the empty sum at $n=0$ is zero and $r_0=1$.  Summing
$u_0,\ldots,u_{m-1}$, taking $n=m-s$, and expanding each batch average
proves \cref{eq:cold-exact-law}.  Its summands are independent and
centered, so squaring the deterministic part and summing their variances
gives \cref{eq:cold-exact-risk}; division by
$\sigma^2/(\mu^2mb)$ gives \cref{eq:cold-relative-risk}.
\end{proof}

\paragraph{Responses and Tails.}
\label{sec:batching}
Fixed-method asymptotics justify a claim non-uniformly in the condition
number: an unspecified mixing constant can grow with $\kappa$.  We therefore
replace the cold-start remainder by an identity and then allow
$\kappa$, the batch size, and the number of rounds to vary together.
For compactness in the displayed parameter formulas, set
$\varkappa:=\sqrt\kappa$.
\begin{table*}[ht!]
\caption{Standard Parameter Policies and Least-Curved-Mode Response Constants}
\label{tab:hard-mode-parameters}
\begin{center}
\small
\renewcommand{\arraystretch}{1.25}
\begin{tabular}{@{}llccc@{}}
\toprule
\textbf{Method} & \textbf{Parameters} & $\boldsymbol{\varrho}$ &
$\boldsymbol{c}$ & $\boldsymbol{S_\infty:=\sum_{j\ge0}r_j}$\\
\midrule
SGD & $\alpha=\frac{1}{L}$ & $1-\frac{1}{\kappa}$ & $0$ & $\kappa$\\
HB & $\alpha=\frac{4}{(\sqrt L+\sqrt\mu)^2}$,
     $\beta=\left(\frac{\varkappa-1}{\varkappa+1}\right)^2$
 & $\frac{\varkappa-1}{\varkappa+1}$ & $1-\varrho$ & $\varkappa$\\
NAG & $\alpha=\frac{1}{L}$,
      $\beta=\gamma=\frac{\varkappa-1}{\varkappa+1}$
 & $1-\frac{1}{\varkappa}$ & $\frac{1-\varrho}{2-\varrho}$
 & $\frac{2\varkappa^2}{\varkappa+1}$\\
\bottomrule
\end{tabular}
\end{center}
\end{table*}

\begin{proposition}[Least-curved-mode responses and tail]
\label{prop:cold-bias}
For the standard parameters in \cref{tab:hard-mode-parameters}, with
$\varkappa=\sqrt\kappa$, the least-curved-mode
response is $ r_j=(1+cj)\varrho^j,$ 
$0\le r_{j+1}\le r_j\le1.$
At $\varrho=0$, use $r_0=1$ and $r_j=0$ for $j\ge1$.
For HB, $\gamma=0$; for SGD, $\beta=\gamma=0$.
The sum and its tail are
\begin{align}
 S_\infty&=\frac{1-\beta}{\alpha\mu}
   =\frac1{1-\varrho}+\frac{c\varrho}{(1-\varrho)^2},
   \label{eq:hard-mode-sum}\\
 S_m&=S_\infty-\varrho^m\left(S_\infty+\frac{cm}{1-\varrho}\right).
 \label{eq:hard-mode-tail}
\end{align}
For all three methods, $1\le(1-\varrho)S_\infty\le2$.
\end{proposition}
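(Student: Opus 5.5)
The proof works one policy at a time in the scalar mode with $\lambda=\mu$. It uses the fact that $r_j=\mathsf O_\gamma A^j\iota$ solves the homogeneous two-step recursion
\[
 r_{j+2}=\bigl[1+\beta-\alpha\mu(1+\gamma)\bigr]r_{j+1}-(\beta-\alpha\mu\gamma)r_j ,
\]
whose characteristic polynomial is the one given in \cref{appx:schur_stability}. Each policy is tuned so that this polynomial either has a single root $\varrho$ or has a double root $\varrho$. The steps are as follows.

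\textbf{Step 1: the closed form $r_j=(1+cj)\varrho^j$.}
\begin{itemize}
\item SGD: $r_j=(1-\alpha\mu)^j=(1-1/\kappa)^j$, so $c=0$.
\item HB: the roots have product $\beta$ and sum $1+\beta-\alpha\mu$. With the stated $\alpha,\beta$ one checks that $1+\beta-\alpha\mu=2\sqrt\beta$, so $\varrho=\sqrt\beta=(\varkappa-1)/(\varkappa+1)$ is a double root.
\item NAG: $1+\beta-\alpha\mu(1+\beta)=(1+\beta)(1-1/\kappa)$ and $\beta-\alpha\mu\beta=\beta(1-1/\kappa)$. Using $\beta=(\varkappa-1)/(\varkappa+1)$, one verifies that the discriminant vanishes with double root $\varrho=1-1/\varkappa$. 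The key identity is $\beta(1-1/\kappa)=(1-1/\varkappa)^2$.
\end{itemize}
For a double root, $r_j=(a+cj)\varrho^j$. The initial conditions fix $a$ and $c$: $r_0=\mathsf O_\gamma\iota=1$ gives $a=1$, and $r_1=\mathsf O_\gamma A\iota=1-\alpha\mu(1+\gamma)$ gives $c=r_1/\varrho-1$. A short computation should return the table values $c=1-\varrho$ (HB) and $c=(1-\varrho)/(2-\varrho)$ (NAG). Checking these two constants is the step most likely to consume algebra. I would carry it out in terms of $\varkappa$ and reduce to polynomial identities.

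\textbf{Step 2: monotonicity and the bound $0\le r_{j+1}\le r_j\le1$.}
Since $0\le c\le 1-\varrho$ and $\varrho\in[0,1)$,
\[
 \frac{r_{j+1}}{r_j}=\varrho\,\frac{1+c(j+1)}{1+cj}\le\varrho\,(1+c)\le\varrho(2-\varrho)\le1 .
\]
Positivity is immediate, and $r_0=1$ gives the upper bound. The degenerate case $\varrho=0$ only arises for $\kappa=1$, where the stated convention applies.

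\textbf{Step 3: the sums $S_\infty$ and $S_m$.}
Use the standard series
\[
 \sum_j\varrho^j=\frac{1}{1-\varrho},\qquad \sum_j j\varrho^j=\frac{\varrho}{(1-\varrho)^2},
\]
which give the second expression in \cref{eq:hard-mode-sum}. The identity $S_\infty=(1-\beta)/(\alpha\mu)$ is obtained independently by summing the recursion $e^\circ_{j+1}-e^\circ_j=\beta(e^\circ_j-e^\circ_{j-1})-\alpha\mu r_j$ over all $j$. Stability forces $e^\circ_j\to0$, and $e^\circ_{-1}=e^\circ_0=1$, so the sum yields $-1=-\beta-\alpha\mu S_\infty$. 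This telescoping is the same one used in \cref{thm:exact}.

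For the tail, write $\sum_{j\ge m}(1+cj)\varrho^j=\varrho^m\sum_{i\ge0}(1+cm+ci)\varrho^i$. This equals $\varrho^m\bigl(S_\infty+cm/(1-\varrho)\bigr)$, which is \cref{eq:hard-mode-tail}.

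\textbf{Step 4: the bound $1\le(1-\varrho)S_\infty\le2$.}
From Step 3, $(1-\varrho)S_\infty=1+c\varrho/(1-\varrho)$. For SGD this equals $1$. For HB it equals $1+\varrho\le2$. For NAG it equals $1+\varrho/(2-\varrho)\le2$.

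As a consistency check, the resulting memories are $\kappa$ for SGD, $\varkappa$ for HB (since $(1-\beta)/(\alpha\mu)=\varkappa$), and $2\varkappa^2/(\varkappa+1)$ for NAG. These match \cref{tab:hard-mode-parameters}.
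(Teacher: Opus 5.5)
Your proposal is correct and follows essentially the same route as the paper: you read off $\varrho$ and $c$ from the (double) characteristic root and the initial data $r_0=1$, $r_1=1-\alpha\mu(1+\gamma)$, sum geometric series for $S_\infty$ and its tail, telescope the noiseless recursion to get $(1-\beta)/(\alpha\mu)$, and conclude from $(1-\varrho)S_\infty=1+c\varrho/(1-\varrho)$. The only cosmetic difference is monotonicity: you use the ratio bound $r_{j+1}/r_j\le\varrho(1+c)\le\varrho(2-\varrho)\le1$ via $c\le1-\varrho$, whereas the paper writes $r_j-r_{j+1}=\varrho^j\{(1-\varrho)(1+cj)-c\varrho\}$ and evaluates the bracket for each method.
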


\begin{proof}[Proof of \cref{prop:cold-bias}]
Put $\varkappa=\sqrt\kappa$.  Substitution in the scalar characteristic
polynomial on the $\mu$-mode gives the following responses:
\[
\begin{array}{c|c|c}
 &\varrho&c\\ \hline
\mathrm{SGD}&1-\varkappa^{-2}&0\\
\mathrm{HB}&(\varkappa-1)/(\varkappa+1)&1-\varrho\\
\mathrm{NAG}&1-\varkappa^{-1}&(1-\varrho)/(2-\varrho).
\end{array}
\]
The initial conditions $e_{-1}=e_0=1$ determine the coefficient of the
general repeated-root solution and give $r_j=(1+cj)\varrho^j$ in each case.
For $\varrho=0$ the asserted convention follows directly.  If $\varrho>0$, then
\[
 r_j-r_{j+1}=\varrho^j\{(1-\varrho)(1+cj)-c\varrho\}\ge0.
\]
Indeed, the braces equal $1-\varrho$ for SGD,
$(1-\varrho)^2(j+1)$ for HB, and
$(1-\varrho)^2(j+2)/(2-\varrho)$ for NAG.  Hence
$0\le r_{j+1}\le r_j\le r_0=1$.

The geometric-series identities yield
\begin{align*}
 \sum_{j\ge0}(1+cj)\varrho^j
 =\frac1{1-\varrho}+\frac{c\varrho}{(1-\varrho)^2}=:S_\infty,\quad
 S_\infty-S_m
 =\varrho^m\left(S_\infty+\frac{cm}{1-\varrho}\right),
\end{align*}
which proves \cref{eq:hard-mode-sum,eq:hard-mode-tail}.  Equivalently,
summing the deterministic recursion gives
$S_\infty=(1-\beta)/(\alpha\mu)$.  Substitution gives
$S_\infty=\kappa$, $\sqrt\kappa$, and
$2\kappa/(\sqrt\kappa+1)$ for SGD, HB, and NAG.  Finally, $ (1-\varrho)S_\infty=1+\frac{c\varrho}{1-\varrho},$
which lies in $[1,2]$ for all three displayed choices.
\end{proof}

Since $Q_m\le S_m\le S_\infty$, the risk gives
\begin{equation}
 1+\frac{\theta^2bS_\infty^2}{m}\left(\frac{S_m}{S_\infty}\right)^2-\frac{2S_\infty}{m}
 \le R_{m,b}\le 1+\frac{\theta^2bS_\infty^2}{m},
 \qquad
 0\le1-\frac{S_m}{S_\infty}\le (1+m(1-\varrho))\varrho^m.
 \label{eq:uniform-risk-bounds}
\end{equation}

\subsubsection{Mean, Variance, and Risk}
\label{sec:moment-calibration}

\begin{theorem}[Separate mean and covariance calibration]
\label{thm:uniform-calibration}
Choose any one of the three parameter policies above, or switch between
policies along a sequence with $m\to\infty$, integer $b\ge1$, arbitrary
$\kappa\ge1$, and arbitrary signed $\theta$.  Define the standardized
mean and variance by
\begin{equation}
 h_m=\theta\sqrt{b/m}\,S_m,\qquad
 v_m=\frac{bm\mu^2}{\sigma^2}\Var(\bar u_{0,m})=1-\frac{2S_m-Q_m}{m}.
 \label{eq:theta-calibration}
\end{equation}
Then
\begin{align}
 v_m\to1
 &\quad\Longleftrightarrow\quad S_\infty/m\to0,
 \label{eq:covariance-calibration-iff}\\
 (h_m,v_m)\to(0,1)
 &\quad\Longleftrightarrow\quad
 \begin{gathered}
 S_\infty/m\to0,\\[-2pt]
 |\theta|S_\infty\sqrt{b/m}\to0.
 \end{gathered}
 \label{eq:mean-covariance-calibration-iff}
\end{align}
If $S_\infty/m\to0$, then $h_m=\theta S_\infty\sqrt{b/m}\,(1+o(1))$ and
\begin{equation}
 R_{m,b}=v_m+\frac{\theta^2bS_\infty^2}{m}
                    (S_m/S_\infty)^2.
 \label{eq:general-risk-phase}
\end{equation}
In particular, if also
$\theta^2bS_\infty^2/m\to\Lambda_{\rm risk}\in[0,\infty]$, then
$R_{m,b}\to1+\Lambda_{\rm risk}$.
\end{theorem}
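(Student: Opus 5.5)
The proof reduces everything to elementary bounds on $S_m$ and $Q_m$, using the identities of \cref{thm:cold-exact} and the response structure of \cref{prop:cold-bias}. By \cref{eq:cold-exact-law}, the mean of $\bar u_{0,m}$ is $e_0S_m/m$, which after normalization gives $h_m=\theta\sqrt{b/m}\,S_m$. The variance is $\sigma^2(m-2S_m+Q_m)/(b\mu^2m^2)$, which gives $v_m=1-(2S_m-Q_m)/m$. The key facts are that $0\le r_{j+1}\le r_j\le1$ and $r_0=1$. Hence $Q_m\le S_m$, and therefore
\[
 \frac{S_m}{m}\le 1-v_m\le\frac{2S_m}{m},
\]
using $Q_m\ge0$ for the upper bound and $Q_m\le S_m$ for the lower. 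So $v_m\to1$ iff $S_m/m\to0$. It remains to show that $S_m/m\to0$ iff $S_\infty/m\to0$, uniformly over the three policies and arbitrary $\kappa$.

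\emph{Sufficiency} is immediate, since $S_m\le S_\infty$. \emph{Necessity} is the main step, because $\kappa$ may grow with $m$. I would use $r_j=(1+cj)\varrho^j\ge\varrho^j$ to get $S_m\ge(1-\varrho^m)/(1-\varrho)$. By \cref{prop:cold-bias}, $1/(1-\varrho)\ge S_\infty/2$. Suppose $S_\infty/m\not\to0$, and pass to a subsequence with $S_\infty\ge\epsilon m$. Then $1-\varrho\le 2/(\epsilon m)$, so $\varrho^m\le e^{-m(1-\varrho)}$ gives $1-\varrho^m\ge 1-e^{-m(1-\varrho)}$. Using $1-e^{-x}\ge x(1-e^{-a})/a$ for $x\le a:=2/\epsilon$ gives $S_m\ge m(1-e^{-a})/a$. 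Hence $S_m/m$ is bounded below along the subsequence, and $v_m\not\to1$. This proves \cref{eq:covariance-calibration-iff}. The same subsequence argument also handles policies that switch along the sequence, because only the uniform constants $r_0=1$, monotonicity, and $1\le(1-\varrho)S_\infty\le2$ were used.

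For \cref{eq:mean-covariance-calibration-iff}, assume $S_\infty/m\to0$. Then $m(1-\varrho)\ge m/S_\infty\to\infty$. The tail bound in \cref{eq:uniform-risk-bounds} gives $1-S_m/S_\infty\le(1+m(1-\varrho))\varrho^m\le(1+x)e^{-x}$ with $x=m(1-\varrho)\to\infty$, so $S_m/S_\infty\to1$. Consequently $h_m=\theta S_\infty\sqrt{b/m}(1+o(1))$, and $h_m\to0$ iff $|\theta|S_\infty\sqrt{b/m}\to0$. This gives both directions, since $v_m\to1$ already forces $S_\infty/m\to0$.

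Finally, \cref{eq:general-risk-phase} is the bias–variance split $R_{m,b}=v_m+h_m^2$, which follows from \cref{eq:cold-relative-risk}, rewritten with $h_m^2=\theta^2bS_\infty^2(S_m/S_\infty)^2/m$. Since $S_m/S_\infty\to1$ and $v_m\to1$, we get $R_{m,b}\to1+\Lambda_{\rm risk}$, including the case $\Lambda_{\rm risk}=\infty$.
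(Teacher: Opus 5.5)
Your proposal is correct and follows the paper's proof essentially step for step: the sandwich $S_m\le 2S_m-Q_m\le 2S_m$, the subsequence contradiction built on $r_j\ge\varrho^j$ and $1\le(1-\varrho)S_\infty\le2$, the tail bound $(1+m(1-\varrho))\varrho^m\to0$, and the split $R_{m,b}=v_m+h_m^2$. The only cosmetic difference is how you bound $S_m$ from below: you use the full geometric sum with a chord inequality for $1-e^{-x}$, whereas the paper uses the cruder $S_m/m\ge\varrho^{m-1}\ge e^{-4/\epsilon}$, which needs $1-\varrho\to0$. Both arguments work.
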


\begin{proof}[Proof of \cref{thm:uniform-calibration}]
Let $ \Delta_m^{\rm cov}:=2S_m-Q_m
 =\sum_{j=0}^{m-1}r_j(2-r_j).$ Since $0\le r_j\le1$,
\begin{equation}
 S_m\le \Delta_m^{\rm cov}\le2S_m,
 \qquad 0\le S_m\le S_\infty.                  \label{eq:proof-response-bounds}
\end{equation}
We first show that $S_m/m\to0$ if and only if $S_\infty/m\to0$.  One direction
is immediate from $S_m\le S_\infty$.  Conversely, suppose along a subsequence
$S_\infty/m\ge\epsilon>0$. With $g_\varrho=1-\varrho$,
\cref{prop:cold-bias} gives
$1\le g_\varrho S_\infty\le2$, hence
$g_\varrho m\le2/\epsilon$ and $g_\varrho\to0$. For all large
$m$, $(1-g_\varrho)^m\ge\exp(-2g_\varrho m)
\ge\exp(-4/\epsilon)$. Since
$r_j\ge \varrho^j$, it follows that $ \frac{S_m}{m}\ge \varrho^{m-1}\ge\exp(-4/\epsilon),$
a contradiction. Thus, using $v_m=1-\Delta_m^{\rm cov}/m$ and
\cref{eq:proof-response-bounds},
$v_m\to1$ if and only if $S_\infty/m\to0$.

If $S_\infty/m\to0$, then
$mg_\varrho\ge m/S_\infty\to\infty$ and the tail bound gives
\[
 0\le1-\frac{S_m}{S_\infty}
 \le(1+mg_\varrho)\varrho^m
 \le(1+mg_\varrho)e^{-mg_\varrho}\longrightarrow0.
 \label{eq:proof-tail-zero}
\]
Consequently
$h_m=\theta S_\infty\sqrt{b/m}\{1+o(1)\}$, and the joint center--covariance
criterion follows.  Finally,
\[
 R_{m,b}=v_m+h_m^2
 =v_m+\frac{\theta^2bS_\infty^2}{m}(S_m/S_\infty)^2,
\]
which proves \cref{eq:general-risk-phase} and all its stated limits.
\end{proof}

The two calibration requirements must be kept separate when $\theta=0$
or $\theta\to0$.  A risk ratio approaching one alone is insufficient for
arbitrary starts: taking
\[
 \theta_m^2=\frac{2S_m-Q_m}{bS_m^2}
\]
makes $R_{m,b}=1$, including along sequences with $S_\infty/m$ bounded
away from zero.
For fixed $\theta\ne0$, the mean condition implies $S_\infty/m\to0$ and the
two conditions reduce to $\theta^2bS_\infty^2/m\to0$.

\begin{theorem}[Risk equivalence for the special normalization $\theta=1$]
\label{thm:uniform-efficiency}
Choose any one of the three parameter policies above, or switch between
them along a sequence. Let $m\to\infty$, $b=b_m\ge1$ be an integer, and
$\kappa=\kappa_m\ge1$ be arbitrary. Under $e_0=\sigma/\mu$,
\begin{equation}
 R_{m,b}\longrightarrow1
 \quad\Longleftrightarrow\quad
 \frac{bS_\infty^2}{m}\longrightarrow0.
 \label{eq:uniform-efficiency-iff}
\end{equation}
More generally, if
$bS_\infty^2/m\to\Lambda_{\rm risk}\in[0,\infty)$, then
$R_{m,b}\to1+\Lambda_{\rm risk}$. If
$bS_\infty^2/m\to\infty$, then $R_{m,b}\to\infty$.
\end{theorem}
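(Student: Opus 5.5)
With $e_0=\sigma/\mu$ we have $\theta=1$. The plan is to read the statement off the exact relative risk \cref{eq:cold-relative-risk},
\[
 R_{m,b}=1+\frac{bS_m^2-2S_m+Q_m}{m}=v_m+\frac{bS_\infty^2}{m}\Bigl(\frac{S_m}{S_\infty}\Bigr)^2,
\]
and to control the two pieces separately using \cref{prop:cold-bias} and the proof of \cref{thm:uniform-calibration}. Two facts are needed throughout:
\begin{itemize}
\item $0\le(2S_m-Q_m)/m\le 2S_m/m\le 2S_\infty/m$, so $v_m\in[1-2S_\infty/m,\,1]$;
\item $S_m/S_\infty\to1$ whenever $S_\infty/m\to0$, which is the tail bound $(1+mg_\varrho)e^{-mg_\varrho}\to0$ from \cref{thm:uniform-calibration}.
\end{itemize}

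\emph{Sufficiency and the finite-limit case.} Suppose $bS_\infty^2/m\to\Lambda_{\rm risk}<\infty$. Since $S_\infty\ge1$ and $b\ge1$, we get $S_\infty/m\le bS_\infty^2/m$. This ratio stays bounded but need not tend to zero when $\Lambda_{\rm risk}>0$. The step to handle is $S_\infty/m=(bS_\infty^2/m)/(bS_\infty)$. If $bS_\infty\to\infty$, this tends to zero. Otherwise pass to a subsequence along which $bS_\infty$ is bounded. Then $S_\infty$ is bounded, so $S_\infty/m\le C/m\to0$ because $m\to\infty$. In either case $S_\infty/m\to0$.

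Consequently $v_m\to1$ and $S_m/S_\infty\to1$, and \cref{eq:general-risk-phase} gives $R_{m,b}\to1+\Lambda_{\rm risk}$. The case $\Lambda_{\rm risk}=0$ gives the "if" direction of \cref{eq:uniform-efficiency-iff}.

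\emph{Divergent case.} Suppose $bS_\infty^2/m\to\infty$. I split along subsequences.
\begin{itemize}
\item If $S_\infty/m\to0$, then $v_m\to1$ and $S_m/S_\infty\to1$, and the second term diverges.
\item If $S_\infty/m\ge\epsilon>0$, the argument in the proof of \cref{thm:uniform-calibration} gives $S_m/m\ge e^{-4/\epsilon}$. Then
\[
 R_{m,b}\ge 1-\frac{2S_m}{m}+\frac{bS_m^2}{m}\ge 1+S_m\Bigl(\frac{bS_m}{m}-\frac2m\Bigr).
\]
Here $S_m\ge m e^{-4/\epsilon}\to\infty$ and $bS_m/m\ge e^{-4/\epsilon}$, so $R_{m,b}\to\infty$.
\end{itemize}
Since every subsequence has a further subsequence in one of these two cases, $R_{m,b}\to\infty$.

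\emph{Necessity.} Suppose $R_{m,b}\to1$ but $bS_\infty^2/m\not\to0$. Extract a subsequence along which $bS_\infty^2/m\to\Lambda\in(0,\infty]$. The two previous parts give $R_{m,b}\to1+\Lambda\ne1$, a contradiction.

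The main obstacle is the subsequence bookkeeping when $S_\infty/m$ is not small. There the variance deficit $-2S_m/m$ could in principle cancel the bias term, which is exactly the phenomenon exhibited by the tuned $\theta_m$ in the preceding remark. The point to verify is that with $\theta=1$ fixed the bias $bS_m^2/m$ dominates $2S_m/m$ as soon as $S_m\gg1$. This rests on the lower bound $S_m/m\ge e^{-4/\epsilon}$ inherited from \cref{thm:uniform-calibration}.
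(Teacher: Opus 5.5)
Your proof is correct and follows essentially the same route as the paper. Both use the exact identity $R_{m,b}=v_m+(bS_\infty^2/m)(S_m/S_\infty)^2$, the tail bound $S_m/S_\infty\to1$ when $S_\infty/m\to0$, and the lower bound on $S_m/m$ when $S_\infty/m$ stays away from zero. The differences are only organizational: the paper handles the divergent case without subsequences via the uniform bound $S_m\ge c_{\min}\min\{S_\infty,m\}$, and it proves necessity directly rather than deducing it from the two limit statements. Your explicit check that $S_\infty/m\to0$ when $\Lambda_{\rm risk}>0$ supplies a step the paper only asserts; it also follows in one line from $S_\infty/m\le\sqrt{bS_\infty^2/m}\,/\sqrt m$.
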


\begin{proof}[Proof of \cref{thm:uniform-efficiency}]
For $\theta=1$,
\[
 R_{m,b}-1=\frac{bS_m^2-\Delta_m^{\rm cov}}{m}.
\]
If $bS_\infty^2/m\to0$, then, because $b\ge1$ and $S_\infty\ge1$,
$S_\infty/m\le S_\infty^2/m\to0$.  Hence \cref{eq:proof-tail-zero} and
\cref{eq:proof-response-bounds} imply $R_{m,b}\to1$.

Conversely, suppose $R_{m,b}\to1$.  If $S_\infty/m$ failed to vanish, the
argument in the preceding proof would give $S_m\ge c_{\rm sub}m$ along a
subsequence for some $c_{\rm sub}>0$. Since
$\Delta_m^{\rm cov}\le2m$ and $b\ge1$,
this would
imply $R_{m,b}-1\ge c_{\rm sub}^2m-2\to\infty$, a contradiction. Thus
$S_\infty/m\to0$, $S_m/S_\infty\to1$, and
$\Delta_m^{\rm cov}/m\to0$. Therefore
\[
 \frac{bS_\infty^2}{m}
 =\frac{bS_m^2}{m}(S_\infty/S_m)^2\longrightarrow0.
\]
If $bS_\infty^2/m\to\Lambda_{\rm risk}<\infty$, then 
$S_\infty/m\to0$ and the same calculation gives
$R_{m,b}\to1+\Lambda_{\rm risk}$. If
$bS_\infty^2/m\to\infty$, use $g_\varrho=1-\varrho$ and
$1\le g_\varrho S_\infty\le2$ to obtain
\[
 S_m\ge\sum_{j=0}^{m-1}\varrho^j
 =\frac{1-\varrho^m}{g_\varrho}
 \ge\frac{S_\infty}{2}\bigl(1-e^{-m/S_\infty}\bigr)
 \ge c_{\min}\min\{S_\infty,m\},
 \qquad c_{\min}:=\frac{1-e^{-1}}2.
\]
If $S_\infty\le m$, then
$bS_m^2/m\ge c_{\min}^2bS_\infty^2/m\to\infty$; if $S_\infty>m$,
then $bS_m^2/m\ge c_{\min}^2bm\to\infty$. Since
$\Delta_m^{\rm cov}/m\le2$, in both
cases $R_{m,b}\to\infty$ along the full sequence.
\end{proof}

\subsubsection{Batch Size and Persistent Bias}
\label{sec:critical-batching}

\begin{corollary}[The critical batch produces a shifted Gaussian]
\label{cor:shifted-clt}
Consider $\xi_{s,j}/\sigma$.  If $S_\infty/m\to0$, $\theta S_\infty\sqrt{b/m}\to\mathfrak m\in\mathbb R,$
then
\begin{equation}
 \frac{\mu\sqrt{mb}}{\sigma}\bar u_{0,m}
 \ \xrightarrow{\mathrm d}\ \cN(\mathfrak m,1).
 \label{eq:shifted-clt}
\end{equation}
For Gaussian noise, the finite-sample law is 
\begin{equation}
 \cN\left(\theta\sqrt{b/m}\,S_m,\;
                1-\frac{2S_m-Q_m}{m}\right).
 \label{eq:finite-gaussian-law}
\end{equation}
For a fixed $\delta\in(0,1)$, a $(1-\delta)$ interval for
$x_\star$, centered at the averaged oracle point with half-width
$\Phi^{-1}(1-\delta/2)\sigma/(\mu\sqrt{mb})$, has limiting coverage
\begin{equation}
 \Phi\bigl(\Phi^{-1}(1-\delta/2)-\mathfrak m\bigr)
 -\Phi\bigl(-\Phi^{-1}(1-\delta/2)-\mathfrak m\bigr).
 \label{eq:critical-coverage}
\end{equation}
\end{corollary}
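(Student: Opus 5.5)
Starting from the exact cold-start representation \cref{eq:cold-exact-law} of \cref{thm:cold-exact}, multiplied by $\mu\sqrt{mb}/\sigma$:
\[
 Z_m:=\frac{\mu\sqrt{mb}}{\sigma}\bar u_{0,m}
 =h_m-\frac1{\sqrt{mb}}\sum_{s=1}^{m}\sum_{j=1}^{b}w_{m,s}\,\frac{\xi_{s,j}}{\sigma},
 \qquad w_{m,s}:=1-r_{m-s}\in[0,1],
\]
with $h_m=\theta\sqrt{b/m}\,S_m$ deterministic. The plan is to treat the centred part as a triangular array of independent weighted sums and apply the Lindeberg--Feller CLT. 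Its variance is $v_m$ from \cref{eq:theta-calibration}.

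\textbf{Step 1 (variance and centre).} Because $S_\infty/m\to0$, \cref{eq:covariance-calibration-iff} of \cref{thm:uniform-calibration} gives $v_m\to1$. The same theorem gives $h_m=\theta S_\infty\sqrt{b/m}\,(1+o(1))$. Since $\theta S_\infty\sqrt{b/m}\to\mathfrak m$, this yields $h_m\to\mathfrak m$ whether or not $\mathfrak m=0$.

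\textbf{Step 2 (Lindeberg).} The normalized noises $\zeta_{s,j}:=\xi_{s,j}/\sigma$ are i.i.d. with a fixed law, mean $0$ and variance $1$. Each weight satisfies $|w_{m,s}|/\sqrt{mb}\le 1/\sqrt{mb}$, so for $\epsilon>0$ the Lindeberg sum is bounded by
\[
 \frac1{mb}\sum_{s,j}w_{m,s}^2\,\E\bigl[\zeta^2\ind\{|\zeta|>\epsilon\sqrt{mb}\}\bigr]\le \E\bigl[\zeta^2\ind\{|\zeta|>\epsilon\sqrt{mb}\}\bigr]\to0,
\]
by dominated convergence, since $mb\ge m\to\infty$. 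The centred part therefore converges in law to $\cN(0,1)$ after division by $\sqrt{v_m}$. Multiplying back by $\sqrt{v_m}\to1$ and adding $h_m\to\mathfrak m$ (Slutsky) gives \cref{eq:shifted-clt}. If the noise law were allowed to vary along the sequence, a uniform-integrability hypothesis on $\zeta^2$ would be needed instead; I will state the fixed-law assumption explicitly, as in the main-text Statement.

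\textbf{Step 3 (Gaussian case and coverage).} For Gaussian noise, $Z_m$ is a deterministic shift plus a linear combination of independent Gaussians. It is therefore exactly normal with mean $h_m$ and variance $v_m$, which is \cref{eq:finite-gaussian-law}. For coverage, write $z=\Phi^{-1}(1-\delta/2)$. The interval covers $x_\star$ exactly when $|Z_m|\le z$. The limit law $\cN(\mathfrak m,1)$ is continuous, so the portmanteau theorem gives
\[
 \Pp(|Z_m|\le z)\to\Phi(z-\mathfrak m)-\Phi(-z-\mathfrak m),
\]
which is \cref{eq:critical-coverage}.

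The only delicate point is Step 1: $S_m$ must be replaced by $S_\infty$ uniformly in $\kappa$. This needs the tail bound $(1+mg_\varrho)e^{-mg_\varrho}\to0$, already proved inside \cref{thm:uniform-calibration}.
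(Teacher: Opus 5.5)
Your proposal is correct and follows essentially the same route as the paper: the exact cold-start representation, then the Lindeberg--Feller theorem with weights in $[0,N^{-1/2}]$ and $v_m\to1$ from \cref{thm:uniform-calibration}, then $h_m\to\mathfrak m$ via $S_m/S_\infty\to1$, exact Gaussianity for Gaussian inputs, and continuity of the $\cN(\mathfrak m,1)$ limit for the coverage formula. You simply write out the Lindeberg bound and the fixed-law hypothesis, which the paper's proof leaves implicit.
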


\begin{proof}[Proof of \cref{cor:shifted-clt}]
By \cref{eq:cold-exact-law}, the normalized centered output is a
triangular weighted sum, with
weights $(1-r_j)/\sqrt N\in[0,N^{-1/2}]$.  Under $S_\infty/m\to0$, its variance
$v_m \rightarrow 1$ by \cref{thm:uniform-calibration}; moreover
$N=mb\to\infty$.  The Lindeberg condition follows from the common
finite-variance law and the vanishing maximal weight.  Hence the
Lindeberg--Feller theorem and
$h_m=\theta S_\infty\sqrt{b/m}\{1+o(1)\}\to\mathfrak m$ give
\cref{eq:shifted-clt}.  For Gaussian noise inputs, the weighted sum is
Gaussian at every finite $m,b$, which gives \cref{eq:finite-gaussian-law}.
Evaluating the limiting $\cN(\mathfrak m,1)$ cdf at the two endpoints of the interval proves \cref{eq:critical-coverage}.
\end{proof}

The coverage formula assumes the noise scale and curvature are
known, or consistently estimated in the normalization used. Coverage is
an even function of the signed shift $\mathfrak m$. With $\theta=1$, the
critical regime $bS_\infty^2/m\to\tau_{\rm crit}^2<\infty$ of
\cref{thm:uniform-efficiency}  implies $S_\infty/m\to0$ and
gives $\mathfrak m=\tau_{\rm crit}\ge0$.

The coverage formula assumes that the noise scale and curvature are
known, or that the estimated normalization divided by the true
normalization converges to one in probability. Set $N=mb$ and define
\[
 T_N=\frac{\mu\sqrt N}{\sigma}\bar u_{0,m},
 \qquad
 \delta_N=\E T_N
 =\theta S_m\sqrt{\frac bm}
 =\frac{\theta bS_m}{\sqrt N}.
\]
Thus $\delta_N$ is the standardized bias. Under the centered normal
approximation used in the coverage formula, if
$\delta_N\to\mathfrak m\in\mathbb R$ and the normalized variance tends
to one, then
\[
 T_N\ \xrightarrow{\mathrm d}\ \mathcal N(\mathfrak m,1).
\]
The limiting coverage of a symmetric two-sided normal confidence
interval depends only on $|\mathfrak m|$.

For the batch-size comparisons below, assume $S_\infty>0$.
Under the asymptotic estimates used in
\cref{thm:uniform-efficiency}, $S_m/S_\infty\to1$ when
$S_\infty/m\to0$. In particular, with $\theta=1$, the regime
$bS_\infty^2/m\to\tau_{\rm crit}^2<\infty$ implies
$S_\infty/m\to0$ and $\delta_N\to\tau_{\rm crit}\ge0$.

With $N=mb$, the asymptotic conditions for the normalized variance to
tend to one and for the standardized bias to vanish are, respectively,
\begin{equation}
 \frac{bS_\infty}{N}\to0,\qquad
 \frac{|\theta|S_\infty b}{\sqrt N}\to0.
 \label{eq:batch-ratio}
\end{equation}
Assume the first condition holds. For $\theta\ne0$, define
\begin{equation}
 b_{\rm crit}(N):=\frac{\sqrt N}{|\theta|S_\infty}
 \asymp
 \begin{cases}
  \displaystyle\frac{\sqrt N}{|\theta|\kappa},
    & \text{SGD},\\[6pt]
  \displaystyle\frac{\sqrt{N/\kappa}}{|\theta|},
    & \text{HB or NAG}.
 \end{cases}
 \label{eq:bcrit}
\end{equation}
Equivalently,
$b_{\rm crit}(N)=\sigma\sqrt N/(\mu|e_0|S_\infty)$.
If
\[
 \frac{b}{b_{\rm crit}(N)}\to c\in(0,\infty),
\]
then $|\delta_N|\to c$: the standardized bias does not vanish.
If, in addition, $\delta_N\to\mathfrak m$, the limiting normal
distribution has mean $\mathfrak m$, with $|\mathfrak m|=c$.
For fixed $\theta\ne0$, this limit is
$\mathfrak m=\operatorname{sgn}(\theta)c$.
Even when $\delta_N$ does not converge, $|\delta_N|\to c$ is sufficient
for the coverage of a symmetric two-sided interval to converge under
the same normal approximation.

Asymptotically correct nominal coverage requires a negligible
standardized bias, so the batch conditions are
\[
 b=o\bigl(b_{\rm crit}(N)\bigr),
 \qquad b=o(N/S_\infty).
\]
If $b_{\rm crit}(N)$ remains bounded, the first condition cannot hold
for positive integer batch sizes. When $\theta=0$, the bias is zero,
so only the variance condition $bS_\infty/N\to0$ remains.

At $b=1$ and fixed $\theta\ne0$, these conditions reduce to
$m/\kappa^2\to\infty$ for SGD and $m/\kappa\to\infty$ for the
accelerated parameter choices. For a general initial error, they are
\[
 \frac{S_\infty}{m}\to0,
 \qquad
 \frac{\theta^2bS_\infty^2}{m}\to0.
\]
Reducing $\sigma$ at fixed $e_0\ne0$ decreases the standard deviation
but increases the bias relative to that standard deviation. More
iterations are therefore needed for nominal coverage.
Both reducing the noise and increasing the batch size reduce the
variance of an individual update. At fixed observation budget $N$,
increasing the batch size also reduces the number of sequential
iterations, since $m=N/b$.

For fixed $m$ and $\theta S_m\ne0$, treating $b$ as a real variable
gives the following balance point between squared bias and variance:
\begin{equation}
 b_{\rm balance}(m)=\frac{m-2S_m+Q_m}{\theta^2S_m^2}.
 \label{eq:exact-batch-balance}
\end{equation}
It is positive when $m-2S_m+Q_m>0$. Under the same asymptotic
estimates, if $m/S_\infty\to\infty$, then
\[
 b_{\rm balance}(m)\sim\frac{m}{\theta^2S_\infty^2}.
\]
If $\theta S_m=0$, the bias is zero and there is no positive balance
point with a positive variance. If $b_{\rm balance}(m)<1$, the squared
bias already exceeds the variance at batch size one.

\begin{remark}[Critical batch sizes for inference and optimization]
\label{rem:critical-batch}
For $\theta\ne0$, the scalar model gives
\[
 \theta^{-2}
 =\frac{\sigma^2}{(\mu e_0)^2}
 =\frac{\tr\Sigma}{\|\nabla f(x_0)\|^2}.
\]
This is the simple gradient noise scale of
\citet{mccandlish2018empirical}, evaluated at the initial point.
With the definition in \cref{eq:bcrit},
\[
 b_{\rm crit}(N)=\frac{\sqrt{N\theta^{-2}}}{S_\infty}.
\]
Thus the batch scale for asymptotically correct coverage is the
geometric mean of the observation budget and the initial gradient
noise scale, divided by $S_\infty$.
For the accelerated parameter choices, $S_\infty\asymp\sqrt\kappa$,
whereas for SGD, $S_\infty\asymp\kappa$. At the same $N$ and $\theta$,
the accelerated batch scale is therefore larger by a factor of
order $\sqrt\kappa$.

In a related optimization setting, \citet{morwani2026compute}
derive spectral-radius lower bounds and interpret them as allowing
HB to maintain the compute efficiency of SGD over a larger range of
batch sizes. Their model has multiplicative noise, whereas the model
considered here has additive noise.
\end{remark}

\subsection{Gaussian Shape versus Polyak--Ruppert Calibration}

\label{sec:sharp-normal}
Gaussian shape can be accurate before the PR
covariance is accurate.  Throughout this part, the initial condition is
deterministic with zero initial velocity, and the parameters are the standard
SGD, HB, or NAG parameters of \cref{sec:batching}.  Noise samples are i.i.d.,
with variance $\sigma^2>0$ and
$\nu_3=\E|\xi/\sigma|^3<\infty$.

Write $S_\infty=S_{\infty,\rm alg}=\sum_{j\ge0}r_j$, $N=mb$, and
\begin{equation}
 Z_m=\frac{\mu\sqrt N}{\sigma}\bar u_{0,m},\qquad h_m=\E Z_m,\qquad
 w_j=1-r_j,\qquad
 v_m=\frac1m\sum_{j=0}^{m-1}w_j^2=1-\frac{2S_m-Q_m}{m}.
 \label{eq:sharp-definitions}
\end{equation}
Thus $0\le w_j\le1$, and $v_m>0$ when $m\ge2$.  The last oracle sample
has weight zero because the average ends at the query preceding that sample.

\begin{theorem}[Uniform finite-window normal approximation]
\label{thm:sharp-normal}
For all $\kappa\ge1$, $b\ge1$, and $m\ge2$, there is a universal
Berry--Esseen constant $C_{\rm BE}$ such that
\begin{align}
 \Wass\bigl(\cL(Z_m-h_m),\cN(0,v_m)\bigr)
 &\le \frac{3\nu_3}{\sqrt N},                         \label{eq:sharp-w-own}\\
 d_{\rm K}\left(\cL\left(\frac{Z_m-h_m}{\sqrt{v_m}}\right),
                    \cN(0,1)\right)
 &\le \frac{\sqrt{5}\,C_{\rm BE}\nu_3}{\sqrt N},      \label{eq:sharp-k-own}\\
 \Wass\left(\cL\left(\frac{Z_m-h_m}{\sqrt{v_m}}\right),
                    \cN(0,1)\right)
 &\le \frac{3\sqrt{5}\,\nu_3}{\sqrt N}.               \label{eq:sharp-w-standard}
\end{align}
In particular, these constants do not deteriorate when $\kappa$ grows, even if
$m/S_\infty\to0$.  If $m\ge4S_\infty$, then
\begin{align}
 \Wass\bigl(\cL(Z_m),\cN(h_m,1)\bigr)
 &\le \frac{3\nu_3}{\sqrt N}
        +2\sqrt{\frac2\pi}\frac {S_\infty}m,           \label{eq:sharp-w-pr}\\
 d_{\rm K}\bigl(\cL(Z_m),\cN(h_m,1)\bigr)
 &\le \frac{\sqrt{5}\,C_{\rm BE}\nu_3}{\sqrt N}
        +2\phi(1)\frac {S_\infty}m,                   \label{eq:sharp-k-pr}
\end{align}
where $\phi$ denotes the standard normal density.  Replacing $\cN(h_m,1)$ by
$\cN(0,1)$ adds at most $|h_m|$ in Wasserstein distance and
$|h_m|/\sqrt{2\pi}$ in Kolmogorov distance.
\end{theorem}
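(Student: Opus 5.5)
Our plan starts from the exact representation \cref{eq:cold-exact-law} of \cref{thm:cold-exact}. It writes the centered statistic as a weighted sum of independent single-sample noises:
\[
 Z_m-h_m=-\frac1{\sqrt N}\sum_{s=1}^{m}\sum_{\ell=1}^{b}w_{m-s}\,\frac{\xi_{s,\ell}}{\sigma},
 \qquad w_j=1-r_j\in[0,1].
\]
The bound $w_j\in[0,1]$ follows from \cref{prop:cold-bias}, since $0\le r_j\le1$. The variance of this sum is $v_m=m^{-1}\sum_{j<m}w_j^2$. Because $w_0=0$ and $w_j\ge0$, we get $v_m>0$ as soon as some $w_j>0$; for $m\ge2$ we have $w_1=1-r_1>0$.

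For the standardized Berry--Esseen and Wasserstein bounds, we apply the Berry--Esseen theorem for independent non-identically distributed summands. Its ratio is
\[
 \frac{\sum_{s,\ell}N^{-3/2}w_{m-s}^3\nu_3}{v_m^{3/2}}
 =\frac{\nu_3}{\sqrt N}\cdot\frac{m^{-1}\sum_j w_j^3}{v_m^{3/2}}
 \le\frac{\nu_3}{\sqrt N}\,v_m^{-1/2},
\]
where the last step uses $w_j^3\le w_j^2$. The main obstacle is a uniform lower bound $v_m\ge1/5$ for $m\ge2$, valid for all $\kappa$, since this is what removes any $\kappa$-dependence. We plan to get it from monotonicity: $r_j$ is nonincreasing, so $w_j$ is nondecreasing and $v_m\ge \frac{m-1}{m}w_1^2\ge \tfrac12 w_1^2$ when $w_1$ is large. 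When $w_1$ is small (large $\kappa$), this fails, so we instead use the explicit form $r_j=(1+cj)\varrho^j$. The aim is to show $m^{-1}\sum_{j<m}(1-r_j)^2\ge1/5$ uniformly over $\varrho\in[0,1)$, the three values of $c$, and $m\ge2$.

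That uniform bound might be false in the near-identity regime $\varrho\to1$ with $m$ fixed, where $w_j\approx0$. In that case we would weaken the claim: the Wasserstein bound \cref{eq:sharp-w-own} on the unstandardized scale $\cN(0,v_m)$ needs no lower bound on $v_m$. Indeed, the Wasserstein Berry--Esseen bound $\sum\E|X_i|^3/\sum\E X_i^2$ applied to $Y=(Z_m-h_m)/\sqrt{v_m}$ rescales to give
\[
 \Wass\le\frac{\nu_3}{\sqrt N}\,\frac{m^{-1}\sum_j w_j^3}{v_m}\le\frac{\nu_3}{\sqrt N},
\]
with the constant $3$ absorbing the Wasserstein Berry--Esseen constant. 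So \cref{eq:sharp-w-own} is robust. The standardized statements \cref{eq:sharp-k-own} and \cref{eq:sharp-w-standard} are where we must verify the factor $\sqrt5$, that is, $v_m\ge1/5$. We would check this from the exact form of $r_j$, for instance via $r_j\le1$ together with $\sum_j r_j^2$ versus $m$ at $m=2,3$, as the first attempt.

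For the PR comparisons when $m\ge4S_\infty$, we combine the previous bound with the distance between $\cN(h_m,v_m)$ and $\cN(h_m,1)$. From \cref{eq:proof-response-bounds}, $1-v_m=(2S_m-Q_m)/m\le2S_\infty/m\le1/2$. The Wasserstein distance between centered normals is $|\sqrt{v_m}-1|\sqrt{2/\pi}\le(1-v_m)\sqrt{2/\pi}$, which yields \cref{eq:sharp-w-pr}. For the Kolmogorov distance, we bound
\[
 \sup_x\bigl|\Phi(x/\sqrt{v_m})-\Phi(x)\bigr|\le\frac{1-\sqrt{v_m}}{\sqrt{v_m}}\,\phi(1)
\]
by the mean value theorem. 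Then $v_m\ge1/2$ gives at most $2\phi(1)S_\infty/m$ after using $1-\sqrt{v}\le1-v$ and adjusting constants; this is the step that uses the condition $m\ge4S_\infty$. Finally, shifting the mean by $h_m$ costs $|h_m|$ in Wasserstein distance and $|h_m|\,\sup\phi=|h_m|/\sqrt{2\pi}$ in Kolmogorov distance.
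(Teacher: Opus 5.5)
Your proof of \eqref{eq:sharp-w-own} is correct and matches the paper's, and so does your PR-comparison step. Your route to \eqref{eq:sharp-k-own} and \eqref{eq:sharp-w-standard}, however, has a genuine gap: it rests on the absolute bound $v_m\ge1/5$, and that bound is false.

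\textbf{Why $v_m\ge1/5$ fails.} For SGD with $m=2$ one has $w_0=0$ and $w_1=1-\varrho=1/\kappa$, so $v_2=1/(2\kappa^2)\to0$. More generally, under all three policies $r_j\to1$ for each fixed $j$ as $\kappa\to\infty$, so $v_m\to0$ at fixed $m$. Your estimate $\frac{\nu_3}{\sqrt N}v_m^{-1/2}$ then blows up exactly in the regime $m/S_\infty\to0$ that the theorem emphasizes. The loss occurs at the step $w_j^3\le w_j^2$, which replaces $\max_j w_j$ by $1$. The standardized Lyapunov ratio $\sum_j w_j^3/(\sum_j w_j^2)^{3/2}$ is invariant under $w\mapsto cw$, so an absolute lower bound on $v_m$ is the wrong target.

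\textbf{The missing idea: a scale-free comparison.} What you need is
\[
 \sum_{j<m}w_j^2\ \ge\ \frac m5\max_{j<m}w_j^2 .
\]
Combined with $\sum_j w_j^3\le(\max_j w_j)\sum_j w_j^2$, this bounds the ratio by $\sqrt{5/m}$, and by $\sqrt{5/N}$ once the $b$ repetitions of each weight are counted.

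Monotonicity of $w_j$ alone does not give this comparison. A nondecreasing sequence that jumps only at $j=m-1$ violates it. You need the structure of the increments $w_{i+1}-w_i=r_i-r_{i+1}$. By \cref{prop:cold-bias}, these are $\varrho^i$ times a nonnegative constant times $1$, $i+1$, or $i+2$ for SGD, HB, and NAG respectively. Dividing by $\binom{i+r-1}{r-1}$, with $r=1$ for SGD and $r=2$ for HB and NAG, gives a nonincreasing sequence. The argument then runs as follows:
\begin{enumerate}
\item Weighted prefix averages of a nonincreasing sequence dominate the full average.
\item Hence $w_j/w_{m-1}\ge\binom{j+r-1}{r}/\binom{m+r-2}{r}\ge(j/(m-1))^r$.
\item Summing gives $\sum_{j<m}(j/(m-1))^{2r}\ge m/(2r+1)\ge m/5$.
\item Finally, $w_{m-1}=\max_{j<m}w_j$ by monotonicity.
\end{enumerate}

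\textbf{What survives, and a constant fix.} Your argument does close when $m\ge4S_\infty$, since then $v_m\ge1/2$, so \eqref{eq:sharp-w-pr} and \eqref{eq:sharp-k-pr} are unaffected. To obtain the stated Kolmogorov constant $2\phi(1)$, use
\[
 \frac{1-\sqrt v}{\sqrt v}=\frac{1-v}{v+\sqrt v}\le 1-v \qquad (v\ge1/2).
\]
Going through $1-\sqrt v\le1-v$ together with $v^{-1/2}\le\sqrt2$ instead would cost an extra factor of $\sqrt2$.
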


\begin{proof}[Proof of \cref{thm:sharp-normal}]
Put $w_j=1-r_j$.  The centered statistic is
\[
 Z_m-h_m=-\frac1{\sqrt N}\sum_{j=0}^{m-1}\sum_{i=1}^b
 w_j\,\frac{\xi_{j,i}}\sigma,
 \qquad v_m=\frac1m\sum_{j=0}^{m-1}w_j^2.
\]
Let
$d_i^{\rm imp}=w_{i+1}-w_i=r_i-r_{i+1}$. For SGD,
$d_i^{\rm imp}$ is a constant multiple
of $\varrho^i$.  For HB and NAG it is respectively a constant multiple of
$(i+1)\varrho^i$ and $(i+2)\varrho^i$. Thus, with
$r_{\rm pole}=1$ for SGD and $r_{\rm pole}=2$
for HB and NAG,
\[
 \frac{d_i^{\rm imp}}
      {\binom{i+r_{\rm pole}-1}{r_{\rm pole}-1}}
 \quad\text{is nonincreasing in }i.
\]
Write
$\bar d_i=d_i^{\rm imp}/\binom{i+r_{\rm pole}-1}{r_{\rm pole}-1}$.
Because $\bar d_i$ is
nonincreasing, its weighted average over a prefix is at least its
weighted average over the full range, with weights
$\binom{i+r_{\rm pole}-1}{r_{\rm pole}-1}$. Cross-multiplication and the hockey-stick
identity therefore imply, for $0\le j\le m-1$,
\[
 \frac{w_j}{w_{m-1}}
 \ge\frac{\binom{j+r_{\rm pole}-1}{r_{\rm pole}}}
          {\binom{m+r_{\rm pole}-2}{r_{\rm pole}}}
 \ge\left(\frac{j}{m-1}\right)^{r_{\rm pole}}.
\]
Consequently, using the bound
$\sum_{j=0}^{m-1}(j/(m-1))^{2r_{\rm pole}}
\ge m/(2r_{\rm pole}+1)$,
\begin{equation}
 \sum_{j=0}^{m-1}w_j^2
 \ge \frac{m}{2r_{\rm pole}+1}\,w_{m-1}^2
 \ge\frac m5\max_{j<m}w_j^2.                  \label{eq:proof-weight-ratio}
\end{equation}

The independent-summand Berry--Esseen inequality now gives
\[
 d_{\rm K}\!\left(\cL\!\left(\frac{Z_m-h_m}{\sqrt{v_m}}\right),\cN(0,1)\right)
 \le \frac{C_{\rm BE}\nu_3}{\sqrt b}\frac{\sum_jw_j^3}{(\sum_jw_j^2)^{3/2}}
 \le\frac{\sqrt5\,C_{\rm BE}\nu_3}{\sqrt{mb}}.
\]
The Wasserstein version of the same inequality and
$\sum_jw_j^3\le(\max_jw_j)\sum_jw_j^2$, with
$\max_jw_j\le1$, yield
\[
 \Wass\bigl(\cL(Z_m-h_m),\cN(0,v_m)\bigr)
 \le\frac{3\nu_3}{\sqrt N},
\]
and standardization together with \cref{eq:proof-weight-ratio} gives
\cref{eq:sharp-w-standard}.

If $m\ge4S_\infty$, then
$0\le1-v_m=(2S_m-Q_m)/m\le2S_\infty/m\le1/2$.  Coupling centered normals by
the same standard Gaussian gives
\[
 \Wass(\cN(0,v_m),\cN(0,1))
 =\sqrt{2/\pi}\,(1-\sqrt{v_m})
 \le2\sqrt{2/\pi}\,S_\infty/m.
\]
For Kolmogorov distance, differentiation gives, for $v_m\ge1/2$,
\[
 d_{\rm K}(\cN(0,v_m),\cN(0,1))
 \le\phi(1)(1-v_m)\le2\phi(1)S_\infty/m.
\]
The triangle inequality proves \cref{eq:sharp-w-pr,eq:sharp-k-pr}.
Finally, translation by $h_m$ costs at most $|h_m|$ in Wasserstein
distance and $|h_m|/\sqrt{2\pi}$ in Kolmogorov distance.
\end{proof}

\begin{corollary}[Gaussian shape under arbitrary conditioning and parallelism]
\label{cor:calibrated-clt}
Replace the third-moment assumption by a common
standardized noise law of mean zero and variance one.  Along any sequence
with $m\ge2$, $b\ge1$, $N=mb\to\infty$, and arbitrary
$\kappa\ge1$, the three parameter policies satisfy
\[
 \frac{Z_m-h_m}{\sqrt{v_m}}\xrightarrow{\mathrm d}\cN(0,1).
\]
\end{corollary}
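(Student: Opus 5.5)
The plan is to apply the Lindeberg--Feller theorem to the centered statistic from \cref{eq:cold-exact-law}. Writing $\zeta_{s,i}:=\xi_{s,i}/\sigma$ and using the reindexing in the proof of \cref{thm:sharp-normal},
\[
 \frac{Z_m-h_m}{\sqrt{v_m}}=-\sum_{j=0}^{m-1}\sum_{i=1}^b c_{j}\,\zeta_{j,i},
 \qquad c_{j}:=\frac{w_j}{\sqrt{N v_m}},\quad w_j=1-r_j .
\]
This is a triangular array of independent centered summands with exactly unit total variance, because $\sum_{j,i}c_j^2=b\sum_j w_j^2/(N v_m)=1$. Here $v_m>0$ for $m\ge2$, as noted before \cref{thm:sharp-normal}. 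The only hypothesis still to verify is the Lindeberg condition, uniformly over the freely varying $\kappa$, $b$ and choice of policy.

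The key input is the weight-ratio bound \cref{eq:proof-weight-ratio}, which holds for each of the three policies and all $\kappa\ge1$, $m\ge2$: $\sum_{j<m} w_j^2\ge (m/5)\max_{j<m}w_j^2$. It gives the maximal weight
\[
 \max_j c_j^2=\frac{\max_j w_j^2}{b\sum_j w_j^2}\le\frac{5}{mb}=\frac5N\longrightarrow0,
\]
and this bound does not depend on $\kappa$ or on which policy is used at each index of the sequence. For the Lindeberg sum, fix $\varepsilon>0$. Since all $\zeta_{j,i}$ share a common law $\zeta$,
\[
 \sum_{j,i}c_j^2\,\E\bigl[\zeta^2\ind\{|c_j\zeta|>\varepsilon\}\bigr]
 \le\sum_{j,i}c_j^2\,\E\bigl[\zeta^2\ind\{|\zeta|>\varepsilon\sqrt{N/5}\}\bigr]
 =\E\bigl[\zeta^2\ind\{|\zeta|>\varepsilon\sqrt{N/5}\}\bigr].
\]
This tends to zero by dominated convergence, since $\E\zeta^2=1$ and $N\to\infty$. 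The Lindeberg--Feller theorem then gives the claimed convergence to $\cN(0,1)$. Because the bound depends only on $N$, the conclusion holds along arbitrary sequences, including those with $m/S_\infty\to0$ and those that switch policies; any subsequence argument is automatic.

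The main obstacle is precisely the uniformity of the max-weight bound in $\kappa$. A naive bound $\max_j w_j\le1$ together with $v_m\ge c$ fails, because $v_m$ can be as small as order $1/m$ when $S_\infty\gg m$. This is why the self-normalizing inequality \cref{eq:proof-weight-ratio} is essential. I would re-check that its derivation, the monotonicity of $\bar d_i$ combined with the hockey-stick identity, needs only $m\ge2$ and the explicit forms $r_j=(1+cj)\varrho^j$ from \cref{prop:cold-bias}, including the degenerate case $\varrho=0$. In that case $w_0=0$ and $w_j=1$ for $j\ge1$, so the ratio $\sum_j w_j^2/\max_j w_j^2=m-1\ge m/2$ holds directly.
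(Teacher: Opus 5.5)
Your proof is correct and is the paper's argument written out in full: you standardize exactly to unit total variance, use \cref{eq:proof-weight-ratio} to get $\max_j c_j^2\le 5/N$ uniformly in $\kappa$ and the policy, and verify Lindeberg through uniform integrability of the common law. (One small aside: $v_m$ can be much smaller than order $1/m$, e.g.\ $v_2=1/(2\kappa^2)$ for SGD, which only strengthens your point that the self-normalizing inequality is essential.)
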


\begin{proof}[Proof of \cref{cor:calibrated-clt}]
The same weighted triangular array has total variance one after exact
standardization.  From \cref{eq:proof-weight-ratio}, its largest
standardized coefficient is $O(N^{-1/2})$.  A common
law therefore implies Lindeberg's condition by uniform
integrability, and the triangular-array CLT proves the result without a
third-moment assumption.
\end{proof}

\subsection{Spectral Separation of Heavy Ball and Nesterov}
\label{sec:improvement}

Now the procedure changes. Momentum is one candidate, but what has to be
chosen is the combination of dynamics, steps, averaging and burn-in. 

\subsubsection{Comparing Methods on the Least-Curved Eigendirection}
\label{sec:soft-mode-comparison}
\begingroup
We first apply the criteria of \cref{sec:applicability} to the same
full-average output, zero-velocity cold start, and additive-noise model
on the eigendirection with curvature $\mu$. The parameter choices are
those of \cref{prop:cold-bias}.The comparison follows by substituting into \cref{thm:uniform-calibration,thm:uniform-efficiency},
\[
 S_{\infty,{\rm SGD}}=\kappa,\qquad
 S_{\infty,{\rm HB}}=\sqrt\kappa,\qquad
 S_{\infty,{\rm NAG}}=\frac{2\kappa}{\sqrt\kappa+1}.
\]

For arbitrary $\theta$, centered variance calibration requires
$m\gg S_\infty$.
Within this regime, negligible initial bias additionally requires
$m\gg\theta^2bS_\infty^2$. Thus the general comparison keeps two conditions,
\[
 m\gg S_\infty,\qquad m\gg\theta^2bS_\infty^2.
\]
Together they imply relative MSE tending to one. For fixed
$\theta\ne0$ the second condition implies the first; for $\theta=1$,
\cref{thm:uniform-efficiency} also proves necessity for risk equivalence.

The three method-dependent scales are summarized in the table of
\cref{sec:main-acceleration}.

At fixed observation budget $N=mb$, both constraints must still hold:
$b\ll N/S_\infty$ for covariance and
$b\ll\sqrt N/(|\theta|S_\infty)$ for a nonzero start. The second is absent
when $\theta=0$. A batch proportional to its critical scale can preserve
a nonzero normalized bias and hence a coverage deficit
(\cref{cor:shifted-clt}).

The distributional comparison has one more component.
\Cref{thm:sharp-normal} gives the same $O(N^{-1/2})$ bound for all
three methods after centering and variance normalization,
under a bounded standardized third moment. With the PR variance as
target, the extra error is of order $S_\infty/m$, and
\cref{prop:sharp-normal-lower} shows that this order is sharp even
for Gaussian noise. Momentum changes the center and covariance
transients, without improving the sample-size exponent of this
shape bound.
These conclusions concern the $\mu$ eigendirection and the stated
full averages. The next subsection tests the spectral extrapolation;
\cref{sec:outputs-erm} then changes the averaging window and charges
all discarded rounds to the budget.
\par
\endgroup

\subsubsection{Does the Gain Extend across the Spectrum?}
\label{sec:spectral-obstruction}
The criteria of \cref{thm:uniform-calibration,thm:uniform-efficiency}
describe the soft mode under the stated parameters and start. The following
results limit the transfer of this gain to the whole spectrum.
The least-curved mode does not control every relative directional inference
error. This section gives a counterexample.

\begin{theorem}[Heavy-ball covariance inflation on the stiff mode]
\label{thm:stiff}
Let $\varkappa=\sqrt\kappa>1$ and use the optimally tuned quadratic heavy-ball
parameters from \cref{sec:batching}. On the eigenmode $\lambda=L$, define
$\varrho=(\varkappa-1)/(\varkappa+1)$. Its zero-velocity deterministic residual is
\[
 r_j^{(L)}=(-\varrho)^j\{1+(1+\varrho)j\}.
\]
If $S_m^{(L)}=\sum_{j<m}r_j^{(L)}$ and
$Q_m^{(L)}=\sum_{j<m}(r_j^{(L)})^2$, then
\begin{equation}
 \frac{bmL^2}{\sigma_L^2}\Var(\bar u_{0,m}^{(L)})
 =1+\frac{Q_m^{(L)}-2S_m^{(L)}}{m},\qquad
 S_\infty^{(L)}=\frac1\varkappa,\qquad
 Q_\infty^{(L)}=\frac{\varkappa^4+2\varkappa^2+5}{8\varkappa}.
 \label{eq:stiff-Q}
\end{equation}
In a joint limit $\varkappa\to\infty$ and $m/\varkappa\to\infty$,
\begin{equation}
 \frac{bmL^2}{\sigma_L^2}\Var(\bar u_{0,m}^{(L)})
 =1+\frac{\varkappa^3}{8m}\{1+o(1)\}.
 \label{eq:stiff-asymptotic}
\end{equation}
Thus, in this regime, relative covariance efficiency is equivalent to
$m/\kappa^{3/2}\to\infty$. If
$m/\varkappa^3\to\tau_{\rm stiff}\in(0,\infty)$ and the noise
is Gaussian, the centered statistic
$L\sqrt{bm}(\bar u_{0,m}^{(L)}-\E\bar u_{0,m}^{(L)})/\sigma_L$
has limiting law $\cN(0,1+1/(8\tau_{\rm stiff}))$.
In contrast, SGD with $\alpha=1/L$ on this same mode has relative variance
$1-1/m$ .
\end{theorem}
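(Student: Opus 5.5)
The plan is to reduce everything to the scalar identity of \cref{thm:cold-exact}, applied to the projection $a_L^\trans e_k$. Since $a_L^\trans H=La_L^\trans$, the projected error follows the scalar two-step recursion with curvature $L$. Its noise inputs $a_L^\trans\xi_{s,j}$ are independent with variance $\sigma_L^2$. \Cref{thm:cold-exact} was proved without any stability or monotonicity assumption on $r_j$, so it applies verbatim with $\mu$ replaced by $L$. Its variance part gives
\[
 \frac{bmL^2}{\sigma_L^2}\Var(\bar u^{(L)}_{0,m})=\frac1m\sum_{j<m}(1-r^{(L)}_j)^2=1-\frac{2S^{(L)}_m-Q^{(L)}_m}{m},
\]
which is the first identity in \cref{eq:stiff-Q}.

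Next I will identify the response $r_j^{(L)}$. For HB we have $\gamma=0$, so $r_j=e^\circ_j$. The characteristic polynomial on the $L$-mode is $r^2-(1+\beta-\alpha L)r+\beta$. With $\alpha L=4\varkappa^2/(\varkappa+1)^2$ and $\beta=\varrho^2$, the linear coefficient is
\[
 1+\beta-\alpha L=\frac{(\varkappa+1)^2+(\varkappa-1)^2-4\varkappa^2}{(\varkappa+1)^2}=-2\varrho .
\]
Hence there is a double root at $-\varrho$ and $r_j=(1+Cj)(-\varrho)^j$. The condition $e_{-1}=1$ gives $(1-C)(-\varrho)^{-1}=1$, so $C=1+\varrho$.

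The sums then follow in two steps.
\begin{itemize}
\item[(i)] $S^{(L)}_\infty$ follows from the general identity $S_\infty=(1-\beta)/(\alpha\lambda)$, obtained by summing the noiseless recursion. This gives $(1-\varrho^2)(\varkappa+1)^2/(4\varkappa^2)=1/\varkappa$.
\item[(ii)] $Q^{(L)}_\infty=\sum_j(1+(1+\varrho)j)^2q^j$ with $q=\varrho^2$. I expand it using $\sum q^j=1/(1-q)$, $\sum jq^j=q/(1-q)^2$ and $\sum j^2q^j=q(1+q)/(1-q)^3$. I then substitute $\varrho=(\varkappa-1)/(\varkappa+1)$, so that $1-\varrho^2=4\varkappa/(\varkappa+1)^2$ and $1+\varrho=2\varkappa/(\varkappa+1)$. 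This rational simplification to $(\varkappa^4+2\varkappa^2+5)/(8\varkappa)$ is routine but is where I expect algebra slips; I would check it at $\varkappa=1$, where $\varrho=0$ and $Q_\infty=1$.
\end{itemize}

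For the asymptotics \cref{eq:stiff-asymptotic}, I write $Q_m-2S_m=Q_\infty-2S_\infty-(\text{tails})$. Here $S_\infty=1/\varkappa=o(\varkappa^3)$ and $Q_\infty=\frac{\varkappa^3}{8}(1+o(1))$. The tail of $S$ is bounded by $\sum_{j\ge m}(1+2j)\varrho^j$. The tail of $Q$ is bounded by $\sum_{j\ge m}(1+2j)^2\varrho^{2j}$. Using $\varrho^{2}\le e^{-4/(\varkappa+1)}$, the $Q$-tail is $O\bigl(\varkappa^3(1+m/\varkappa)^2e^{-4m/(\varkappa+1)}\bigr)$, and this is $o(\varkappa^3)$ precisely because $m/\varkappa\to\infty$. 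This uniform tail control in the joint limit is the main obstacle, and comparing the sum with a gamma-type integral should suffice. Dividing by $m$ gives $1+\varkappa^3(1+o(1))/(8m)$. Consequently the relative variance tends to one if and only if $\varkappa^3/m\to0$, that is, $m/\kappa^{3/2}\to\infty$.

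In the Gaussian case, the centered statistic is exactly Gaussian with this variance, as in \cref{eq:finite-gaussian-law}. If $m/\varkappa^3\to\tau_{\rm stiff}$, then $m/\varkappa\to\infty$ automatically, and the variance tends to $1+1/(8\tau_{\rm stiff})$. For SGD with $\alpha=1/L$, the $L$-mode root is $1-\alpha L=0$, so $r_0=1$ and $r_j=0$ for $j\ge1$. Hence $S_m=Q_m=1$ and the relative variance is $1-1/m$.
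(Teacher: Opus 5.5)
Your proposal is correct and follows essentially the same route as the paper. You apply the scalar law of \cref{thm:cold-exact} to the $L$-mode, identify the double root $-\varrho$ with coefficient $1+\varrho$, sum the geometric series for $S_\infty^{(L)}$ and $Q_\infty^{(L)}$, and show the tails are $o(\varkappa^3)$ when $m/\varkappa\to\infty$. Your explicit tail bound and the shortcut $S_\infty^{(L)}=(1-\beta)/(\alpha L)$ only make precise steps the paper states briefly. In the Gaussian claim, ``$m/\varkappa\to\infty$ automatically'' relies on $\varkappa\to\infty$ from the ambient regime, not on $m/\varkappa^3\to\tau_{\rm stiff}$ alone.
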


\begin{proof}[Proof of \cref{thm:stiff}]
On the $L$-mode of HB the characteristic polynomial has the repeated
root $-\varrho$, where $\varrho=(\varkappa-1)/(\varkappa+1)$. The zero-velocity initial conditions
give
\[
 r_j^{(L)}=(-\varrho)^j\{1+(1+\varrho)j\}.
\]
Using $\sum z^j=(1-z)^{-1}$,
$\sum jz^j=z(1-z)^{-2}$, and
$\sum j^2z^j=z(1+z)(1-z)^{-3}$ gives
\[
 S_\infty^{(L)}=\frac1\varkappa,
 \qquad
 Q_\infty^{(L)}=\frac{\varkappa^4+2\varkappa^2+5}{8\varkappa}.
\]
The scalar law yields
\[
 \frac{bmL^2}{\sigma_L^2}\Var(\bar u_{0,m}^{(L)})
 =\frac1m\sum_{j=0}^{m-1}(1-r_j^{(L)})^2
 =1+\frac{Q_m^{(L)}-2S_m^{(L)}}{m}.
\]
If $\varkappa\to\infty$ and $m/\varkappa\to\infty$, the geometric tails of
$S_m^{(L)}$ and $Q_m^{(L)}$ are negligible relative to $\varkappa^3$, while
$Q_\infty^{(L)}=\varkappa^3\{1+o(1)\}/8$ and
$S_\infty^{(L)}=1/\varkappa$. This proves
\cref{eq:stiff-asymptotic} and the equivalence with
$m/\kappa^{3/2}\to\infty$ in the stated regime.  Under Gaussian noise
the centered statistic is Gaussian for every $m,\varkappa$; if
$m/\varkappa^3\to\tau_{\rm stiff}$, its variance tends to
$1+1/(8\tau_{\rm stiff})$, proving the final claim.
For SGD on the $L$-mode, $r_0^{(L)}=1$ and $r_j^{(L)}=0$ for $j\ge1$, so the exact
relative variance is $(m-1)/m$.
\end{proof}

\paragraph{What This Rules Out.}
For example, take $m=\varkappa^2=\kappa$ and $b=1$. The stiff-mode HB variance
ratio diverges as $\varkappa/8$, whereas the SGD ratio tends to one.
The least-curved HB mode at this horizon already has centered variance
approaching its PR limit. Thus acceleration of the slow mode cannot imply
uniform relative directional covariance acceleration over the spectrum.
A Frobenius error weighted by $H^{-1}\Sigma H^{-1}$ can hide the stiff
mode, because its limiting variance is smaller by a factor $\kappa^2$
when the gradient-noise variances are equal. General noise-amplification
tradeoffs \citep{mohammadi2021robustness,mohammadi2024tradeoffs} and the sign
of averaged HB modal corrections \citep[Theorem 3]{samuelson2024averaging} are
known; the precise averaged-output formula and its relation to the onset
criteria above are the point of this counterexample.

\paragraph{Nesterov Covariance Calibration on Every Hessian Eigendirection}
The stiff-mode obstruction above is not shared by the NAG tuning.
The following upper bound complements it with a full-spectrum
comparison. 

\begin{theorem}[Uniform eigendirection bound for NAG]
\label{thm:nag-spectrum}
Let $\varkappa=\sqrt\kappa\ge1$, $\alpha=1/L$,
$\beta=\gamma=(\varkappa-1)/(\varkappa+1)$, and use deterministic zero-velocity
initialization. On any eigenvalue $\lambda\in[\mu,L]$, let the gradient-noise
variance be $\sigma_\lambda^2>0$ and define
\[
 v_m(\lambda)=\frac{bm\lambda^2}{\sigma_\lambda^2}
                 \Var(\bar u_{0,m}^{(\lambda)}).
\]
Then, for every $m,b\ge1$,
\begin{equation}
 \sup_{\lambda\in[\mu,L]}|v_m(\lambda)-1|
 \le \frac{30\sqrt\kappa}{m}.
 \label{eq:nag-spectrum-bound}
\end{equation}
For Gaussian noise this also bounds the centered Wasserstein distance to
the PR-normalized standard Gaussian by
$30\sqrt{2/\pi}\sqrt\kappa/m$ on every eigendirection.
The order $\sqrt\kappa/m$ is sharp: when $\kappa\to\infty$ and
$m/\sqrt\kappa\to\infty$, the slow mode obeys
\[
 1-v_m(\mu)\sim\frac{11}{4}\frac{\sqrt\kappa}{m}.
\]
Thus the worst eigendirection covariance defect for NAG is
$\Theta(\sqrt\kappa/m)$ in this regime. For optimally tuned HB the stiff
mode alone has asymptotic to $\kappa^{3/2}/(8m)$ in the same regime.
\end{theorem}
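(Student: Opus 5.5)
By \cref{thm:cold-exact} applied to the projected recursion on an eigenvalue $\lambda\in[\mu,L]$ (the projection $a_\lambda^\trans e_k$ is a scalar two-step recursion with curvature $\lambda$ and noise variance $\sigma_\lambda^2$), we have exactly
\[
 v_m(\lambda)=1-\frac{2S_m(\lambda)-Q_m(\lambda)}{m},\qquad
 S_m(\lambda)=\sum_{j<m}r_j(\lambda),\quad Q_m(\lambda)=\sum_{j<m}r_j(\lambda)^2,
\]
where $r_j(\lambda)=\mathsf O_\gamma A_\lambda^j\iota$ is the zero-velocity response on that mode. Hence
\[
 |v_m(\lambda)-1|\le\frac{2\sum_{j\ge0}|r_j(\lambda)|+\sup_j|r_j(\lambda)|\cdot\sum_{j\ge0}|r_j(\lambda)|}{m}.
\]
The whole bound therefore reduces to two uniform estimates over $\lambda\in[\mu,L]$: a bounded response, $\sup_j|r_j(\lambda)|\le C_1$, and a $\sqrt\kappa$-summable response, $\sum_j|r_j(\lambda)|\le C_2\varkappa$. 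With these, $|v_m-1|\le(2+C_1)C_2\varkappa/m$, and the constants must be tracked so that $(2+C_1)C_2\le30$. The Gaussian Wasserstein statement then follows as in the proof of \cref{thm:sharp-normal}: coupling $\cN(0,v)$ with $\cN(0,1)$ costs $\sqrt{2/\pi}\,|1-\sqrt v|\le\sqrt{2/\pi}\,|1-v|$.

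For the response estimates I would write $q:=1-\lambda/L\in[0,1-1/\kappa]$. The NAG characteristic polynomial on the mode is $z^2-(1+\beta)qz+\beta q$. A direct computation shows that the discriminant vanishes exactly at $q=4\beta/(1+\beta)^2=1-1/\kappa$, that is, at $\lambda=\mu$. So for every $\lambda\in(\mu,L]$ the roots are complex conjugate, $\rho e^{\pm i\phi}$, with
\[
 \rho=\sqrt{\beta q}\le\sqrt{\beta(1-1/\kappa)}=1-1/\varkappa,
\]
and at $\lambda=\mu$ the root is repeated, which is the case $r_j=(1+cj)\varrho^j$ of \cref{prop:cold-bias}. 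In the complex case, solving the initial conditions $e_{-1}=e_0=1$ and applying the output map $\mathsf O_\gamma$ gives
\[
 r_j=\rho^j\Bigl(a\cos j\phi+b'\,\frac{\sin j\phi}{\sin\phi}\Bigr),
\]
with explicit $a,b'$ depending on $q$. I would bound $|\sin j\phi/\sin\phi|\le\min\{j,1/\sin\phi\}$. The aim is to show $|a|\le C$ and $|b'|\lesssim\max\{1-\rho,\sin\phi\}$; the second bound is plausible because $b'$ is driven by the mismatch between the initial state and the mode's rotation. Given these, $\sum_j|r_j|\lesssim 1/(1-\rho)+|b'|\sum_j\rho^j\min\{j,1/\sin\phi\}$. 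The second sum is at most $|b'|\min\{(1-\rho)^{-2},\,(\sin\phi)^{-1}(1-\rho)^{-1}\}$, which is $\lesssim 1/(1-\rho)$. Since $1-\rho\ge1/\varkappa$, this gives $\sum_j|r_j|\lesssim\varkappa$, and $|r_j|$ is bounded by the same case analysis.

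The main obstacle is making the estimate on $b'$ uniform across the whole range of $q$. Near $\lambda=\mu$ we have $\phi\to0$ and $1-\rho\approx1/\varkappa$, so the bound must degenerate continuously into the repeated-root formula. Near $\lambda=L$ we have $q\to0$ and the roots shrink to zero. At intermediate $\lambda$, $\rho$ stays close to $\sqrt\beta\approx1-1/\varkappa$, so $\sum\rho^j$ is already of order $\varkappa$, and one must check that the oscillating factor does not add a second factor $1/(1-\rho)$. I would first try to prove $|b'|\le 2(1-\rho)+\sin\phi$ by direct algebra in $q$. If that fails, I would split the range into $q\le1/2$, where $\rho\le\sqrt{1/2}$ and everything is $O(1)$, and $q>1/2$, where I would expand in $1/\varkappa$ and absorb the constants into $30$.

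For sharpness at $\lambda=\mu$, I would use \cref{prop:cold-bias} with $\varrho=1-x$, $x=1/\varkappa$, and $c=x/(1+x)$. This gives $S_\infty=2\kappa/(\varkappa+1)\sim2\varkappa$ and, from the geometric sums already used in the proof of \cref{thm:stiff},
\[
 Q_\infty=\sum_j(1+cj)^2\varrho^{2j}\sim\frac1{2x}+\frac{2c}{4x^2}+\frac{2c^2}{8x^3}\sim\frac54\varkappa .
\]
When $m/\varkappa\to\infty$, the tails of $S_m$ and $Q_m$ vanish by \cref{eq:hard-mode-tail}. Hence $2S_m-Q_m\sim(4-\tfrac54)\varkappa=\tfrac{11}4\varkappa$, which gives $1-v_m(\mu)\sim\tfrac{11}{4}\sqrt\kappa/m$. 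Combined with the upper bound, this shows the worst-direction defect is $\Theta(\sqrt\kappa/m)$. The closing comparison with HB is quoted from \cref{thm:stiff}.
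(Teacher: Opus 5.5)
Your reduction to the scalar law, the root analysis (repeated root exactly at $\lambda=\mu$, $\rho\le1-1/\varkappa$), the Wasserstein step and the slow-mode asymptotics $2S_\infty-Q_\infty\sim\tfrac{11}{4}\varkappa$ all coincide with the paper's proof. The gap is the estimate that carries the theorem: a bound on the sine coefficient, uniform in $\lambda\in[\mu,L]$ and $\kappa\ge1$ and with explicit constants. You leave it open, and the target you propose to prove first is false. From $r_0=1$ you get $a=1$. Since $2\rho\cos\phi=(1+\beta)q$ is the sum of the roots,
\[
 \rho b'=r_1-\rho\cos\phi=r_1-\tfrac12(1+\beta)q=\tfrac12\Bigl(\tfrac{1+\beta}{\beta}\rho^2-2\beta\Bigr).
\]
As $\lambda\to L$ this tends to $-\beta$, while $\rho\to0$ and $\sin\phi\to1$. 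Hence $|b'|\to\infty$, and neither $|b'|\le2(1-\rho)+\sin\phi$ nor $|b'|\lesssim\max\{1-\rho,\sin\phi\}$ holds when $\kappa>1$. The cause is that you dropped the factor $\rho$ in $\sum_j j\rho^j=\rho/(1-\rho)^2$: what must be controlled is $\rho|b'|$, not $|b'|$. Your fallback does not close the gap either. Expanding in $1/\varkappa$ gives asymptotics, not a bound valid for every $\kappa\ge1$ and $m\ge1$ with constant $30$. Moreover, for $\varkappa$ near $1$ the momentum $\beta$ is small, so $\rho$ is small even when $q>1/2$.

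The missing step, which is the paper's key inequality (for $\varkappa>1$; the case $\varkappa=1$ is SGD), is
\[
 \rho|b'|=\tfrac12\Bigl|\tfrac{1+\beta}{\beta}\rho^2-2\beta\Bigr|\le2(1-\rho),\qquad 0\le\rho\le\varrho_\mu=1-\varkappa^{-1}.
\]
It reduces to two quadratics in $\rho$:
\begin{itemize}
\item The convex quadratic $\tfrac{1+\beta}{\beta}\rho^2-4\rho+4-2\beta$ has its vertex at $2\beta/(1+\beta)=\varrho_\mu$, where it equals $2(1-\beta)(2+\beta)/(1+\beta)\ge0$.
\item The concave quadratic $4-4\rho+2\beta-\tfrac{1+\beta}{\beta}\rho^2$ is nonnegative at both endpoints.
\end{itemize}
With this inequality the crude bound $|\sin j\phi|\le j|\sin\phi|$ suffices, with no $1/\sin\phi$ branch and no splitting of the range. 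It gives $|r_j|\le\rho^j+2(1-\rho)j\rho^{j-1}$. Hence $\sum_j|r_j|\le3/(1-\rho)\le3\varkappa$, and $\sup_j|r_j|\le3$ because $(1-\rho)j\rho^{j-1}\le1-\rho^j$. Your inequality then yields $|v_m-1|\le15\varkappa/m$; the paper instead bounds $\sum_j r_j^2\le18\varkappa$ and obtains $24\varkappa/m$. The degenerate point $\lambda=L$ ($\rho=0$), where the trigonometric form breaks down, is checked directly: $r_0=1$, $r_1=-\beta$, and $r_j=0$ for $j\ge2$.
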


\begin{proof}[Proof of \cref{thm:nag-spectrum}]
Fix an eigenvalue and put
$\lambda_{\rm rel}=\lambda/L$ and
$\beta_{\varkappa}=(\varkappa-1)/(\varkappa+1)$. For $\varkappa=1$
the claim reduces to the preceding SGD calculation, so assume $\varkappa>1$.
The deterministic NAG query response obeys
\[
 r_{j+2}^{(\lambda)}=\mathfrak a_\lambda r_{j+1}^{(\lambda)}
 -\mathfrak d_\lambda r_j^{(\lambda)},\qquad
 \mathfrak a_\lambda=(1+\beta_{\varkappa})(1-\lambda_{\rm rel}),\quad
 \mathfrak d_\lambda=\beta_{\varkappa}(1-\lambda_{\rm rel}),
\]
with $r_0^{(\lambda)}=1$ and
$r_1^{(\lambda)}=1-(1+\beta_{\varkappa})\lambda_{\rm rel}$.
At $\lambda_{\rm rel}=\varkappa^{-2}$ the roots coincide at
$\varrho_\mu=1-\varkappa^{-1}$. For $\lambda_{\rm rel}>\varkappa^{-2}$ they are
$\rho e^{\pm i\varphi}$, where
$\rho=\sqrt{\beta_{\varkappa}(1-\lambda_{\rm rel})}\le\varrho_\mu$.
The repeated-root case follows by
continuity.
If $\lambda_{\rm rel}=1$, then $r_0^{(\lambda)}=1$,
$r_1^{(\lambda)}=-\beta_{\varkappa}$, and
$r_j^{(\lambda)}=0$ for $j\ge2$; hence
$\sum_j|r_j^{(\lambda)}|\le2$ and
$\sum_j(r_j^{(\lambda)})^2\le2$, so the uniform estimates below
hold directly.  For their derivation, assume $\lambda_{\rm rel}<1$.
Writing the solution in trigonometric form and using
$|\sin(j\varphi)|\le j|\sin\varphi|$ gives
\begin{equation}
 |r_j^{(\lambda)}|\le\rho^j(1+c_\lambda j),
 \qquad
 c_\lambda\rho=\left|r_1^{(\lambda)}-\frac{\mathfrak a_\lambda}{2}\right|
 =\frac12\left|\frac{1+\beta_{\varkappa}}{\beta_{\varkappa}}\rho^2-2\beta_{\varkappa}\right|.
 \label{eq:nag-response-bound}
\end{equation}
 On $0\le\rho\le\varrho_\mu$, the convex quadratic
\[
 \frac{1+\beta_{\varkappa}}{\beta_{\varkappa}}\rho^2-4\rho+4-2\beta_{\varkappa}
\]
has its minimum at $\varrho_\mu$, where it equals
$2(1-\beta_{\varkappa})(\beta_{\varkappa}+2)/(1+\beta_{\varkappa})\ge0$. The concave quadratic
\[
 4-4\rho+2\beta_{\varkappa}-\frac{1+\beta_{\varkappa}}{\beta_{\varkappa}}\rho^2
\]
has its minimum over the interval at an endpoint; its values at $0$ and
$\varrho_\mu$ are $4+2\beta_{\varkappa}$ and
$2(1-\beta_{\varkappa})(2-\beta_{\varkappa})/(1+\beta_{\varkappa})$, respectively. Together these
two inequalities give $ c_\lambda\rho\le2(1-\rho).$
Define
$\mathcal S_m(\lambda):=\sum_{j=0}^{m-1}r_j^{(\lambda)}$ and
$\mathcal Q_m(\lambda):=\sum_{j=0}^{m-1}(r_j^{(\lambda)})^2$, with the subscript
$\infty$ denoting the corresponding infinite sums. Summing
\cref{eq:nag-response-bound} therefore gives
\begin{align*}
 \sum_{j\ge0}|r_j^{(\lambda)}|
 \le\frac1{1-\rho}+\frac{c_\lambda\rho}{(1-\rho)^2}
 \le\frac3{1-\rho}\le3\varkappa,\quad
 \sum_{j\ge0}(r_j^{(\lambda)})^2
 \le\frac2{1-\rho^2}
 +\frac{2c_\lambda^2\rho^2(1+\rho^2)}{(1-\rho^2)^3}
 \le18\varkappa.
\end{align*}
For every finite $m$, the scalar law thus yields
\[
 |v_m(\lambda)-1|
 =\frac{|\mathcal Q_m(\lambda)-2\mathcal S_m(\lambda)|}{m}
 \le\frac{\sum_j(r_j^{(\lambda)})^2+2\sum_j|r_j^{(\lambda)}|}{m}
 \le\frac{24\varkappa}{m}\le\frac{30\varkappa}{m}.
\]
For Gaussian noise,
$\Wass(\cN(0,v_m),\cN(0,1))
 =\sqrt{2/\pi}|\sqrt{v_m}-1|
 \le\sqrt{2/\pi}|v_m-1|$, which gives the displayed bound.

On the slow mode,
$\varrho_\mu=1-\varkappa^{-1}$, $c=(\varkappa+1)^{-1}$, and
$r_j^{(\mu)}=(1+cj)\varrho_\mu^j$. Hence
\[
 \mathcal S_\infty(\mu)=\frac{2\varkappa^2}{\varkappa+1},\qquad
 \mathcal Q_\infty(\mu)=\frac1{1-\varrho_\mu^2}
 +\frac{2c\varrho_\mu^2}{(1-\varrho_\mu^2)^2}
 +\frac{c^2\varrho_\mu^2(1+\varrho_\mu^2)}
        {(1-\varrho_\mu^2)^3}.
\]
As $\varkappa\to\infty$, $\mathcal S_\infty(\mu)\sim2\varkappa$,
$\mathcal Q_\infty(\mu)\sim5\varkappa/4$, and therefore
$2\mathcal S_\infty(\mu)-\mathcal Q_\infty(\mu)
\sim11\varkappa/4$. If $m/\varkappa\to\infty$,
the geometric tails are $o(\varkappa)$, so
$1-v_m(\mu)=\{2\mathcal S_m(\mu)-\mathcal Q_m(\mu)\}/m
\sim11\varkappa/(4m)$.
\end{proof}

\paragraph{Limits of the Full-Spectrum Statement.}
The theorem controls each Hessian eigendirection. If $H$ and $\Sigma$
commute, any directional relative variance is a convex combination of these
modal ratios, so the same bound holds in every direction with positive PR
variance. Without this commutation, the matrix formula still holds,
but a relative bound for arbitrary directions needs control of the noise
covariance geometry.

\section{Beyond Quadratics: First-Order ERM Equivalence for Averaged NAG}
\label{sec:local-global}

The quadratic theory identifies a method-dependent endpoint term as the gap
between an averaged online output and the ERM computed from the same data.
For smooth strongly convex objectives with additive noise, the corresponding
gap acquires a second-order trajectory remainder.  We first derive this
remainder for an arbitrary online procedure and then control it for
tail-averaged constant-parameter NAG, including the observations spent during
burn-in in the offline comparison.

\subsection{Online--Offline Bridge Beyond Quadratics}
\label{sec:nonlinear}

We first return to the bridge of \cref{prop:observable-bridge} for a
nonquadratic model that remains stochastically linear: the oracle is
$G(x,\zeta)=\nabla f(x)+\xi(\zeta)$, where $f\in C^2$ is $\mu$-strongly convex
and its Hessian is $M_2$-Lipschitz.

\begin{proposition}[Trajectory term of the online--offline gap]
\label{prop:trajectory-term}
Let an arbitrary online procedure query $s_t,\dots,s_{t+m-1}$, with $s_k$ being
$\cF_k$-measurable, and receive $g_k=\nabla f(s_k)+\eta^{(b)}_{k+1}$. Let
$\widehat x$ minimize the empirical risk
$f(x)+\inner{\bar\eta^{(b)}_{t,m}}{x}$ of the same $N=mb$ observations, and put
$H_{\widehat x}=\nabla^2f(\widehat x)$,
$\bar s=m^{-1}\sum_{k=t}^{t+m-1}s_k$ and
$\bar g=m^{-1}\sum_{k=t}^{t+m-1}g_k$. Then, pathwise,
\begin{gather}
 H_{\widehat x}(\bar s-\widehat x)=\bar g-\bar\rho,\qquad
 \bar\rho=\frac1m\sum_{k=t}^{t+m-1}
 \bigl[\nabla f(s_k)-\nabla f(\widehat x)-H_{\widehat x}(s_k-\widehat x)\bigr].
 \label{eq:trajectory-identity}\\
 \bigl\|\bar s-H_{\widehat x}^{-1}\bar g-\widehat x\bigr\|
 \le\frac{M_2}{2\mu}\,\frac1m\sum_{k=t}^{t+m-1}\|s_k-\widehat x\|^2.
 \label{eq:trajectory-bound}
\end{gather}
\end{proposition}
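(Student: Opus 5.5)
The plan is to derive the identity \cref{eq:trajectory-identity} purely algebraically from the first-order condition of the empirical risk, and then bound the remainder $\bar\rho$ with the Lipschitz-Hessian Taylor estimate together with strong convexity.

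First, the empirical risk $f(x)+\inner{\bar\eta^{(b)}_{t,m}}{x}$ is $\mu$-strongly convex and $C^2$. Its minimizer $\widehat x$ therefore exists, is unique, and satisfies $\nabla f(\widehat x)=-\bar\eta^{(b)}_{t,m}$. Averaging the observed gradients gives
\[
\bar g=\frac1m\sum_{k=t}^{t+m-1}\nabla f(s_k)+\bar\eta^{(b)}_{t,m}
=\frac1m\sum_{k=t}^{t+m-1}\bigl[\nabla f(s_k)-\nabla f(\widehat x)\bigr].
\]
Here the batch-noise average over the window is exactly $\bar\eta^{(b)}_{t,m}$, since $g_k$ uses $\eta^{(b)}_{k+1}$ for $k=t,\dots,t+m-1$. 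Next, add and subtract $H_{\widehat x}(s_k-\widehat x)$ inside each summand. Averaging the linear part gives $H_{\widehat x}(\bar s-\widehat x)$, and what remains is $\bar\rho$. Hence $\bar g=H_{\widehat x}(\bar s-\widehat x)+\bar\rho$, which is \cref{eq:trajectory-identity}. The identity is pathwise, and measurability of $s_k$ plays no role.

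For \cref{eq:trajectory-bound}, $\mu$-strong convexity gives $H_{\widehat x}\succeq\mu I$, so $H_{\widehat x}$ is invertible with $\|H_{\widehat x}^{-1}\|\le1/\mu$. Applying $H_{\widehat x}^{-1}$ to the identity gives
\[
\bar s-H_{\widehat x}^{-1}\bar g-\widehat x=-H_{\widehat x}^{-1}\bar\rho,
\]
so it remains to show $\|\bar\rho\|\le\frac{M_2}{2m}\sum_k\|s_k-\widehat x\|^2$. For each $k$, write
\[
\nabla f(s_k)-\nabla f(\widehat x)=\int_0^1\nabla^2 f\bigl(\widehat x+\tau(s_k-\widehat x)\bigr)(s_k-\widehat x)\,d\tau.
\]
Subtracting $H_{\widehat x}(s_k-\widehat x)$ and using $\|\nabla^2f(y)-\nabla^2f(\widehat x)\|\le M_2\|y-\widehat x\|$ bounds each summand by
\[
\int_0^1 M_2\tau\|s_k-\widehat x\|^2\,d\tau=\frac{M_2}{2}\|s_k-\widehat x\|^2.
\]
The triangle inequality over the average then finishes the bound.

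The only point requiring care is the bookkeeping: the noise indices in $\bar g$ must match those defining the window ERM, so that the noise cancels exactly. Once that is checked, the Taylor remainder estimate is standard. As a consistency check, for $M_2=0$ one has $\bar\rho=0$, and the result reduces to \cref{prop:observable-bridge}.
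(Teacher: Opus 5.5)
Your proof is correct and matches the paper's argument. The paper uses the same first-order condition $\nabla f(\widehat x)=-\bar\eta^{(b)}_{t,m}$ to cancel the noise in $\bar g$, bounds each bracket of $\bar\rho$ by $M_2\|s_k-\widehat x\|^2/2$, and uses $H_{\widehat x}\succeq\mu I$; you simply write out the integral-form Taylor remainder explicitly.
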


Indeed, $\nabla f(\widehat x)=-\bar\eta^{(b)}_{t,m}$, hence
$\bar g=m^{-1}\sum_k\{\nabla f(s_k)-\nabla f(\widehat x)\}$, which is
\cref{eq:trajectory-identity}; each bracket in $\bar\rho$ has norm at most
$M_2\|s_k-\widehat x\|^2/2$, and $H_{\widehat x}\succeq\mu I$.

In the quadratic model $M_2=0$ and \cref{prop:trajectory-term} reduces to
\cref{prop:observable-bridge}. For a nonquadratic model, the gap between the
online average and the offline estimator contains, besides the bridge correction
$H_{\widehat x}^{-1}\bar g$, a trajectory term of second order in the spread of the
queried points around the ERM. Beyond the bridge correction, momentum and step
policies enter the gap only through this spread. For constant parameters the
spread is governed by the stationary second moments of the queries, that is, by
the noise amplification of the method \citep{mohammadi2021robustness}, and the
trajectory term is the online--offline counterpart of the stationary bias of
constant-step SGD \citep{dieuleveut2020bridging}.
 {\Cref{thm:nonlinear-nag-closed} below supplies a quantitative
control for constant-gain NAG under global curvature assumptions.}

\begingroup
\subsection{Full-Budget ERM Equivalence for Averaged NAG}
\label{sec:nonlinear-nag-closed}
\label{app:nonlinear-nag-closed}

We now close the trajectory transfer for a nonlinear accelerated method.
The output is an average of the queried points, as in
\cref{prop:trajectory-term}. Both the burn-in and the averaging window use
the classical constant NAG parameters. The Lyapunov stability mechanism is
standard; see, for example, \citet[Section 5]{laborde2020lyapunov}.
Here it is combined with the trajectory identity and a comparison
with the ERM on \emph{all} observations. 

\begin{theorem}[Nonlinear NAG: verified moments and a full-budget ERM bound]
\label{thm:nonlinear-nag-closed}
Let $f\in C^2(\R^d)$ satisfy, globally,
\[
 0<\mu<L,\qquad \mu I\preceq\nabla^2 f(x)\preceq LI,\qquad
 \|\nabla^2 f(x)-\nabla^2 f(y)\|\le M_2\|x-y\|.
\]
Let $\xi_{k+1,j}$ be iid centered vectors with covariance $\Sigma$ and
$0<\sigma^2=\tr\Sigma<\infty$. For integers $b,m\ge1$ and $t\ge0$, set
$N=(t+m)b$ and, for $0\le k<t+m$, run from a deterministic
$x_{-1}=x_0$,
\begin{align}
 \upsilon&=\sqrt{\mu/L}=\kappa^{-1/2},\qquad
 \beta=\frac{1-\upsilon}{1+\upsilon},\qquad
 \eta_{k+1}^{(b)}=\frac1b\sum_{j=1}^b\xi_{k+1,j},\notag\\
 s_k&=x_k+\beta(x_k-x_{k-1}),\qquad
 x_{k+1}=s_k-\frac1L\nabla f(s_k)-\frac1L\eta_{k+1}^{(b)}.
 \label{eq:nag-closed-recursion}
\end{align}
Retain $\bar s_{t,m}=m^{-1}\sum_{k=t}^{t+m-1}s_k$.
For  $\ell(x,\xi)=f(x)+\langle\xi,x\rangle$, define the ERM
on the same $N$ observations by
\[
 \widehat x_N^{\rm ERM}=\arg\min_x\{f(x)+\langle\bar\eta_N,x\rangle\},\qquad
 \bar\eta_N=\frac1{t+m}\sum_{k=0}^{t+m-1}\eta_{k+1}^{(b)}.
\]
Write
\begin{equation}
 \psi_k=x_k+\frac{1-\upsilon}{\upsilon}(x_k-x_{k-1}),\qquad
 \mathcal V_k=f(x_k)-f(x_\star)+\frac\mu2\|\psi_k-x_\star\|^2.
 \label{eq:nag-closed-constants}
\end{equation}
Then the natural filtration satisfies
\begin{equation}
\begin{aligned}
 \E[\mathcal V_{k+1}\mid\mathcal F_k]
 &\le(1-\upsilon)\mathcal V_k+\frac{\sigma^2}{Lb},\qquad
 \E\mathcal V_k\le(1-\upsilon)^k\mathcal V_0+\frac{\sigma^2}{bL\upsilon},\\
 \E\|s_k-x_\star\|^2
 &\le\frac2\mu\left\{(1-\upsilon)^k\mathcal V_0+\frac{\sigma^2}{bL\upsilon}\right\}.
\end{aligned}
\label{eq:nag-closed-moments}
\end{equation}
If
$(1-\upsilon)^t\mathcal V_0\le\sigma^2/(bL\upsilon)$, then
\begin{equation}
 \frac{\mu\sqrt N}{\sigma}\,\E\|\bar s_{t,m}-\widehat x_N^{\rm ERM}\|
 \le\sqrt{1+\frac tm}\Biggl\{\frac{4\sqrt2}{(1+\upsilon)\sqrt{m\upsilon}}
 +\frac{M_2\sigma}{\mu L\upsilon}\sqrt{\frac mb}\left(2+\frac3{m\upsilon}\right)\Biggr\}
 +\sqrt{\frac tm}.
 \label{eq:nag-closed-full-budget}
\end{equation}
In particular, a sufficient burn-in is
\begin{equation}
 t=\left\lceil\frac1\upsilon
       \log\left(1+\frac{bL\upsilon\,\mathcal V_0}{\sigma^2}\right)\right\rceil.
                                                               \label{eq:nag-closed-burnin}
\end{equation}
Any known upper bound on $\mathcal V_0$ may replace it in this choice.
\end{theorem}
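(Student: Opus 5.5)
The argument has three parts. First comes a stochastic Lyapunov contraction for $\mathcal V_k$, which gives the moment bounds \cref{eq:nag-closed-moments}. Next, the trajectory identity of \cref{prop:trajectory-term} splits the retained average into an endpoint term and a second-order remainder. Finally, the all-sample ERM is compared with the window ERM to charge the burn-in.

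\emph{Step 1 (Lyapunov).} I follow the standard estimate-sequence argument for constant-momentum NAG (as in \citet[Section 5]{laborde2020lyapunov}), now with a noisy gradient. Write $x_{k+1}=s_k-\frac1L g_k$ with $g_k=\nabla f(s_k)+\eta^{(b)}_{k+1}$. The deterministic part goes as follows. $L$-smoothness gives $f(x_{k+1})\le f(s_k)-\frac1L\inner{\nabla f(s_k)}{g_k}+\frac1{2L}\|g_k\|^2$. Strong convexity at $s_k$ is applied against $x_k$ and $x_\star$ with weights $1-\upsilon$ and $\upsilon$. One checks that $\psi_{k+1}=(1-\upsilon)\psi_k+\upsilon s_k-\frac{1}{\mu}\cdot\frac{\upsilon}{1}\,g_k\cdot(\text{const})$ with the constant fixed by $\beta=(1-\upsilon)/(1+\upsilon)$. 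Together these give $\mathcal V_{k+1}\le(1-\upsilon)\mathcal V_k+\text{martingale term}$ when $\eta=0$. With noise, the cross terms are linear in $\eta_{k+1}^{(b)}$, which is independent of $\cF_k$ and centered, so they vanish in conditional expectation. The quadratic noise terms are $\frac1{2L}\E\|\eta\|^2$ from the descent step and $\frac{\mu}{2}\cdot\frac{\upsilon^2}{\mu^2}\cdot\frac{L}{L^2}\cdot(\ldots)\E\|\eta\|^2$ from $\psi$. Since $\upsilon^2L/\mu=1$, their sum is at most $\sigma^2/(Lb)$. Iterating gives the bound on $\E\mathcal V_k$. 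The bound on $s_k$ follows from $\mathcal V_k\ge\frac\mu2\|x_k-x_\star\|^2$ and $\frac\mu2\|\psi_k-x_\star\|^2$, because $s_k$ is a convex combination of $x_k$ and $\psi_k$: indeed $\beta\le(1-\upsilon)/\upsilon$, so $s_k-x_k$ is a fraction of $\psi_k-x_k$. Hence $\|s_k-x_\star\|^2\le\max\{\|x_k-x_\star\|^2,\|\psi_k-x_\star\|^2\}\le\frac2\mu\mathcal V_k$. I expect this algebraic verification of the one-step contraction, with the exact noise constant $\sigma^2/(Lb)$, to be the main obstacle. My first attempt would be to expand $\|\psi_{k+1}-x_\star\|^2$ directly in terms of $s_k$ and $\nabla f(s_k)$.

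\emph{Step 2 (window comparison).} Let $\widehat x_w$ be the window ERM on the $mb$ samples $\eta_{t+1},\dots,\eta_{t+m}$. \Cref{prop:trajectory-term} gives $\|\bar s-\widehat x_w\|\le\frac1\mu\|\bar g\|+\frac{M_2}{2\mu m}\sum_k\|s_k-\widehat x_w\|^2$. For NAG, $x_{k+1}-x_k=\beta(x_k-x_{k-1})-\frac1Lg_k$; summing as in \cref{prop:observable-bridge} gives $\frac{m}{L}\bar g=x_t-x_{t+m}+\beta(x_{t+m-1}-x_{t-1})$. Its $L^2$ norm is controlled by Step 1 at times $t-1,t,t+m-1,t+m$. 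Under the burn-in condition, each $\E\|x_k-x_\star\|^2\le\frac{4\sigma^2}{\mu bL\upsilon}$, so $\frac1\mu\E\|\bar g\|\lesssim\frac{L}{\mu m}\cdot\frac{\sigma}{\sqrt{\mu bL\upsilon}}$. After multiplying by $\mu\sqrt{mb}/\sigma$, this is of order $1/\sqrt{m\upsilon}$. The factor $4\sqrt2/(1+\upsilon)$ comes from bounding the weighted sum of four endpoints, using $1+\beta=2/(1+\upsilon)$ and the triangle inequality. For the remainder, use $\|s_k-\widehat x_w\|^2\le2\|s_k-x_\star\|^2+2\|\widehat x_w-x_\star\|^2$ and $\|\widehat x_w-x_\star\|\le\|\bar\eta_w\|/\mu$, so that $\E\|\widehat x_w-x_\star\|^2\le\sigma^2/(\mu^2mb)$. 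Step 1 then gives $\frac1m\sum_k\E\|s_k-x_\star\|^2\le\frac2\mu\bigl(\frac{\sigma^2}{bL\upsilon}+\frac{1}{m\upsilon}\cdot\frac{\sigma^2}{bL\upsilon}\bigr)$, where the tail geometric sum contributes the $3/(m\upsilon)$ correction. Multiplying by $\frac{M_2}{2\mu}\cdot\frac{\mu\sqrt{mb}}{\sigma}$ yields $\frac{M_2\sigma}{\mu L\upsilon}\sqrt{m/b}\,(2+3/(m\upsilon))$, with constants tracked loosely.

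\emph{Step 3 (full budget).} Rescale from $\sqrt{mb}$ to $\sqrt N=\sqrt{mb}\sqrt{1+t/m}$. Then bound $\|\widehat x_w-\widehat x^{\rm ERM}_N\|$. Strong monotonicity of $\nabla f$ gives $\|\widehat x_w-\widehat x_N\|\le\frac1\mu\|\bar\eta_w-\bar\eta_N\|$. Here $\bar\eta_w-\bar\eta_N=\frac{t}{t+m}(\bar\eta_w-\bar\eta_{\rm burn})$, whose second moment is $\frac{t^2}{(t+m)^2}\bigl(\frac{\sigma^2}{mb}+\frac{\sigma^2}{tb}\bigr)=\frac{t\sigma^2}{(t+m)mb}$. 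Normalizing by $\mu\sqrt N/\sigma$ gives exactly $\sqrt{t/m}$. The burn-in formula \cref{eq:nag-closed-burnin} follows from $(1-\upsilon)^t\le e^{-\upsilon t}$ together with $\log(1+x)\ge\log x$.
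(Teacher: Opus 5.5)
Your Step 1, the second-order remainder in Step 2, and Step 3 follow the paper's proof and are correct. The gap is the endpoint term of Step 2. If you bound $e_t-e_{t+m}+\beta(e_{t+m-1}-e_{t-1})$ by the triangle inequality, with $\E\|e_k\|\le 2\sigma/\sqrt{\mu bL\upsilon}$ for each endpoint, then after normalization you get
\[
 \frac{\mu\sqrt{mb}}{\sigma}\cdot\frac{L(2+2\beta)}{\mu m}\cdot\frac{2\sigma}{\sqrt{\mu bL\upsilon}}
 =\frac{8}{(1+\upsilon)\sqrt m}\cdot\frac{L}{\sqrt{\mu L\upsilon}}
 =\frac{8}{(1+\upsilon)\,\upsilon\sqrt{m\upsilon}}.
\]
Since $\mu=L\upsilon^2$, we have $L/\sqrt{\mu L\upsilon}=\upsilon^{-3/2}$, not $\upsilon^{-1/2}$. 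Your claim that this term is of order $1/\sqrt{m\upsilon}$ is therefore an arithmetic slip, and it costs a factor $\upsilon^{-1}=\sqrt\kappa$. The endpoint term becomes $\kappa^{3/4}/\sqrt m$ instead of $\kappa^{1/4}/\sqrt m$, so it would need $m\gg\kappa^{3/2}$ rather than the accelerated scale $m\gg\sqrt\kappa$.

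This is not slack in the constants. For a quadratic with noise along the least-curved eigendirection, the stationary $\E\|e_k\|^2$ really is of order $\sigma^2/(\mu bL\upsilon)$. So the four endpoints are individually that large, and only their cancellation makes $\bar g$ small. The factor $1+\beta=2/(1+\upsilon)$ you invoke is the wrong one; the stated constant carries $1-\beta=2\upsilon/(1+\upsilon)$.

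The missing idea is to group the endpoints as $(e_t-\beta e_{t-1})-(e_{t+m}-\beta e_{t+m-1})$. Then note that $x_k-\beta x_{k-1}=(1-\beta)x_k^{\rm corr}$, where $x_k^{\rm corr}=x_k+\frac{\beta}{1-\beta}(x_k-x_{k-1})=\frac12(x_k+\psi_k)$. The NAG update gives $x_{k+1}^{\rm corr}-x_k^{\rm corr}=-g_k/\{L(1-\beta)\}$, hence $\bar g_{t,m}=\frac{2L\upsilon}{(1+\upsilon)m}(x_t^{\rm corr}-x_{t+m}^{\rm corr})$.

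Strong convexity gives $\mathcal V_k\ge\frac\mu2(\|x_k-x_\star\|^2+\|\psi_k-x_\star\|^2)$. Use this sum, not the maximum you used in Step 1, since the maximum loses a factor $\sqrt2$ here. It yields $\E\|x_k^{\rm corr}-x_\star\|^2\le\E\mathcal V_k/\mu\le 2\sigma^2/(\mu bL\upsilon)$ for $k\ge t$ under the burn-in hypothesis. The two remaining endpoints then give exactly $4\sqrt2/\{(1+\upsilon)\sqrt{m\upsilon}\}$. The grouping also spares you the time $t-1$, which the burn-in hypothesis at time $t$ does not cover directly.
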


\begin{proof}[Proof of \cref{thm:nonlinear-nag-closed}]
All norms below are Euclidean, and matrix norms are their induced operator
norms. Let $\mathcal F_k$ contain the start and the first $k$ batches.
The next batch is independent of $\mathcal F_k$, with conditional mean
zero and conditional squared norm $\sigma^2/b$.

\paragraph{Global Stability with the Accelerated Parameters.}
The definitions in \cref{eq:nag-closed-recursion,eq:nag-closed-constants}
give the identities
\begin{equation}
 s_k=\frac{x_k+\upsilon\psi_k}{1+\upsilon},\qquad
 \psi_{k+1}=(1-\upsilon)\psi_k+\upsilon s_k-\frac \upsilon\mu
                    \{\nabla f(s_k)+\eta_{k+1}^{(b)}\}.        \label{eq:nag-closed-psi}
\end{equation}
First suppress the noise in one step, and denote the resulting states
by $x^+$ and $\psi^+$. With $g_s=\nabla f(s)$ and
$\omega=(1-\upsilon)\psi+\upsilon s$,
$L$-smoothness and $\upsilon^2/\mu=1/L$ imply
\begin{align*}
 f(x^+)-f(x_\star)+\frac\mu2\|\psi^+-x_\star\|^2
 &\le f(s)-f(x_\star)-\frac{\|g_s\|^2}{2L}
       +\frac\mu2\left\|\omega-x_\star-\frac \upsilon\mu g_s\right\|^2\\
 &=f(s)-f(x_\star)+\frac\mu2\|\omega-x_\star\|^2
       -\upsilon\langle \omega-x_\star,g_s\rangle.
\end{align*}
Use strong convexity at $s$, first with $x$ and then with $x_\star$,
with weights $1-\upsilon$ and $\upsilon$. The linear terms cancel because
$(1+\upsilon)s=x+\upsilon\psi$. Expanding the squared norm of
$\omega-x_\star$ yields
\begin{equation}
 f(x^+)-f(x_\star)+\frac\mu2\|\psi^+-x_\star\|^2
 \le(1-\upsilon)\mathcal V
   -\frac{\mu(1-\upsilon)}2
       \{\|x-s\|^2+\upsilon\|\psi-s\|^2\}
 \le(1-\upsilon)\mathcal V.                         \label{eq:nag-closed-deterministic}
\end{equation}
Restoring the noise changes $x^+$ by $-\eta_{k+1}^{(b)}/L$ and
$\psi^+$ by $-\upsilon\eta_{k+1}^{(b)}/\mu$. Conditional expectation eliminates
the linear terms. Smoothness bounds the first additional contribution
by $\sigma^2/(2Lb)$; the second is 
$\upsilon^2\sigma^2/(2\mu b)=\sigma^2/(2Lb)$. This proves the first inequality
in \cref{eq:nag-closed-moments}. Iteration gives
$\E\mathcal V_k\le(1-\upsilon)^k\mathcal V_0
 +\sigma^2/(bL\upsilon)$.

Strong convexity also gives, pathwise,
\[
 \mathcal V_k\ge\frac\mu2
     \{\|x_k-x_\star\|^2+\|\psi_k-x_\star\|^2\}.
\]
Since $s_k=(x_k+\upsilon\psi_k)/(1+\upsilon)$, it follows that
\[
 \E\|s_k-x_\star\|^2
 \le\frac2\mu\left\{(1-\upsilon)^k\mathcal V_0
            +\frac{\sigma^2}{bL\upsilon}\right\}.
\]
In particular,
\begin{equation}
 \frac1m\sum_{k=t}^{t+m-1}\E\|s_k-x_\star\|^2
 \le\frac2\mu\left\{
       \frac{(1-\upsilon)^t\mathcal V_0}{m\upsilon}
       +\frac{\sigma^2}{bL\upsilon}\right\}.
                                                               \label{eq:nag-closed-spread}
\end{equation}

\paragraph{Endpoint Control and the Retained-Window ERM.}
Set $\Delta x_k=x_k-x_{k-1}$ and use the corrected state
\[
 x_k^{\rm corr}=x_k+\frac{\beta}{1-\beta}\Delta x_k
 =\frac{x_k+\psi_k}{2}.
\]
Then
\[
 \E\|x_k^{\rm corr}-x_\star\|^2
 \le\frac1\mu\left\{(1-\upsilon)^k\mathcal V_0
             +\frac{\sigma^2}{bL\upsilon}\right\},
\]
and the recursion gives
\[
 x_{k+1}^{\rm corr}-x_k^{\rm corr}
 =-\frac{\nabla f(s_k)+\eta_{k+1}^{(b)}}{L(1-\beta)}.
\]
Consequently, for the observed average gradient on the retained window,
\begin{equation}
 \bar g_{t,m}=\frac1m\sum_{k=t}^{t+m-1}
                  \{\nabla f(s_k)+\eta_{k+1}^{(b)}\}
       =\frac{2L\upsilon}{(1+\upsilon)m}
          (x_t^{\rm corr}-x_{t+m}^{\rm corr}),
       \label{eq:nag-closed-endpoint}
\end{equation}
and
\[
 \E\|\bar g_{t,m}\|
 \le\frac{2L\upsilon}{(1+\upsilon)m\sqrt\mu}
 \Biggl[\sqrt{(1-\upsilon)^t\mathcal V_0+\frac{\sigma^2}{bL\upsilon}}
 +\sqrt{(1-\upsilon)^{t+m}\mathcal V_0+\frac{\sigma^2}{bL\upsilon}}\Biggr].
\]
Let $\bar\eta_{t,m}=m^{-1}\sum_{k=t}^{t+m-1}\eta_{k+1}^{(b)}$ and
$\widehat x_{t,m}^{\rm ERM}=\arg\min_x\{f(x)+\langle\bar\eta_{t,m},x\rangle\}$.
Strong monotonicity and independence of the noises imply
\begin{equation}
 \|\widehat x_{t,m}^{\rm ERM}-x_\star\|
 \le\frac{\|\bar\eta_{t,m}\|}{\mu},\qquad
 \E\|\widehat x_{t,m}^{\rm ERM}-x_\star\|^2
 \le\frac{\sigma^2}{\mu^2bm}.                         \label{eq:nag-closed-window-erm}
\end{equation}
Apply \cref{prop:trajectory-term} with this ERM. Since its Hessian is
bounded below by $\mu I$, and
$\|s_k-\widehat x_{t,m}^{\rm ERM}\|^2
 \le2\|s_k-x_\star\|^2+2\|\widehat x_{t,m}^{\rm ERM}-x_\star\|^2$, we obtain
\begin{align}
 \E\|\bar s_{t,m}-\widehat x_{t,m}^{\rm ERM}\|
 &\le\frac{2L\upsilon}{(1+\upsilon)\mu^{3/2}m}
 \Biggl[
 \sqrt{(1-\upsilon)^t\mathcal V_0
             +\frac{\sigma^2}{bL\upsilon}}
 +\sqrt{(1-\upsilon)^{t+m}\mathcal V_0
             +\frac{\sigma^2}{bL\upsilon}}
 \Biggr]\notag\\
 &\quad+\frac{2M_2}{\mu^2}
       \left\{\frac{(1-\upsilon)^t\mathcal V_0}{m\upsilon}
              +\frac{\sigma^2}{bL\upsilon}\right\}
       +\frac{M_2\sigma^2}{\mu^3bm}.                 \label{eq:nag-closed-window-bound}
\end{align}
If $(1-\upsilon)^t\mathcal V_0\le\sigma^2/(bL\upsilon)$, each
quantity under a square root is at most
$2\sigma^2/(bL\upsilon)$. Multiply
\cref{eq:nag-closed-window-bound} by $\mu\sqrt{mb}/\sigma$ and use
$\mu=L\upsilon^2$ to get
\begin{equation}
 \frac{\mu\sqrt{mb}}{\sigma}
     \E\|\bar s_{t,m}-\widehat x_{t,m}^{\rm ERM}\|
 \le\frac{4\sqrt2}{(1+\upsilon)\sqrt{m\upsilon}}
      +\frac{M_2\sigma}{\mu L\upsilon}\sqrt{\frac mb}
          \left(2+\frac3{m\upsilon}\right).
                                                               \label{eq:nag-closed-window-scaled}
\end{equation}

\paragraph{Charging the Burn-In Observations.}
The map $u\mapsto\arg\min_x\{f(x)+\langle u,x\rangle\}$ is
$1/\mu$-Lipschitz by strong monotonicity. Using the same noise batches for
both ERMs gives the second-moment identity
\[
 \E\|\bar\eta_{t,m}-\bar\eta_N\|^2
   =\frac{\sigma^2t}{bm(t+m)}.
\]
Thus
\begin{equation}
 \E\|\widehat x_{t,m}^{\rm ERM}-\widehat x_N^{\rm ERM}\|
 \le\frac{\sigma}{\mu}
       \sqrt{\frac{t}{bm(t+m)}}.                    \label{eq:nag-closed-budget-cost}
\end{equation}
For $t=0$, both differences are zero. Combining
\cref{eq:nag-closed-window-scaled,eq:nag-closed-budget-cost} and
$N=(t+m)b$ proves \cref{eq:nag-closed-full-budget}.
Finally, $(1-\upsilon)^t\le e^{-\upsilon t}$ proves that
\cref{eq:nag-closed-burnin} ensures
$(1-\upsilon)^t\mathcal V_0\le\sigma^2/(bL\upsilon)$.
The upper bound $\mathcal V_0\le(L+\mu)\|x_0-x_\star\|^2/2$
follows from smoothness and $\psi_0=x_0$.
\end{proof}

The three terms in \cref{eq:nag-closed-full-budget} have different roles.
The boundary term becomes small for $m\gg\sqrt\kappa$ after burn-in.
The term proportional to $M_2\sigma/(\mu L\upsilon)$ pays for nonlinear
trajectory spread.
The final $\sqrt{t/m}$ compares the retained-window ERM with the ERM using
also the $tb$ burn-in observations. Thus initialization is neither free nor
an assumed warm start. The burn-in has the accelerated contraction scale
$\upsilon^{-1}=\sqrt\kappa$.
A bound $\|x_0-x_\star\|\le R_{\rm init}$ gives the bound
$\mathcal V_0\le(L+\mu)R_{\rm init}^2/2$ in \cref{eq:nag-closed-burnin}.

\begin{corollary}[Nonlinear first-order equivalence on all observations]
\label{cor:nonlinear-nag-erm}
Fix $f$, the noise law, the dimension and $x_0$ under the assumptions of
\cref{thm:nonlinear-nag-closed}. Along any sequence with
\[
 m\longrightarrow\infty,\qquad \frac mb\longrightarrow0,\qquad
 \log b=o(m),
\]
choose $t$ by \cref{eq:nag-closed-burnin}. Then, with
$H=\nabla^2 f(x_\star)$ and $N=(t+m)b$,
\begin{equation}
 \sqrt N(\bar s_{t,m}-\widehat x_N^{\rm ERM})\xrightarrow{L_1}0,
 \qquad
 \sqrt N(\bar s_{t,m}-x_\star)\xrightarrow{\mathrm d}\cN(0,H^{-1}\Sigma H^{-1}).
\label{eq:nag-closed-clt}
\end{equation}
More, there is a remainder $R_{t,m,b}$ such that
\begin{align}
 \sqrt N(\bar s_{t,m}-x_\star)
 &=-\frac{H^{-1}}{\sqrt N}\sum_{k=0}^{t+m-1}\sum_{j=1}^b\xi_{k+1,j}+R_{t,m,b},\notag\\
 \E\|R_{t,m,b}\|
 &\le\frac{\sigma}{\mu}\sqrt{1+\frac tm}
 \left\{\frac{4\sqrt2}{(1+\upsilon)\sqrt{m\upsilon}}
   +\frac{M_2\sigma}{\mu L\upsilon}\sqrt{\frac mb}\left(2+\frac3{m\upsilon}\right)\right\}
 +\frac{\sigma}{\mu}\sqrt{\frac tm}+\frac{M_2\sigma^2}{2\mu^3\sqrt N}.
 \label{eq:nag-closed-expansion}
\end{align}
\end{corollary}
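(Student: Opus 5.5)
We will derive the corollary directly from \cref{thm:nonlinear-nag-closed}. The plan has three steps: fix the burn-in so the theorem applies, show the normalized bound \cref{eq:nag-closed-full-budget} tends to zero along the sequence, and then linearize the full-budget ERM around $x_\star$ to obtain both the CLT and the expansion \cref{eq:nag-closed-expansion}.

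\emph{Step 1 (applicability and size of the burn-in).} With $t$ chosen by \cref{eq:nag-closed-burnin}, the theorem's hypothesis $(1-\upsilon)^t\mathcal V_0\le\sigma^2/(bL\upsilon)$ holds, so \cref{eq:nag-closed-full-budget} is available. Since $f$, $x_0$ and the noise law are fixed, the quantities $\upsilon,\mathcal V_0,\sigma,M_2$ are constants. Hence
\[
 t\le 1+\upsilon^{-1}\log\bigl(1+Cb\bigr)=O(1+\log b),
\]
and the assumption $\log b=o(m)$ gives $t/m\to0$.

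\emph{Step 2 (each term vanishes).} Multiplying \cref{eq:nag-closed-full-budget} by $\sigma/\mu$ bounds $\sqrt N\,\E\|\bar s_{t,m}-\widehat x_N^{\rm ERM}\|$. The prefactor $\sqrt{1+t/m}$ tends to $1$. The endpoint term is $O((m\upsilon)^{-1/2})\to0$ because $m\to\infty$. The nonlinearity term is $O(\sqrt{m/b})\to0$ by the assumption $m/b\to0$. The burn-in term $\sqrt{t/m}$ vanishes by Step 1. This proves the $L^1$ statement in \cref{eq:nag-closed-clt}.

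\emph{Step 3 (linearizing the ERM).} The ERM satisfies $\nabla f(\widehat x_N^{\rm ERM})=-\bar\eta_N$. Taylor expansion with an $M_2$-Lipschitz Hessian gives
\[
 H(\widehat x_N^{\rm ERM}-x_\star)=-\bar\eta_N-\rho,\qquad \|\rho\|\le \tfrac{M_2}{2}\|\widehat x_N^{\rm ERM}-x_\star\|^2 .
\]
Using $\|H^{-1}\|\le 1/\mu$, $\|\widehat x_N^{\rm ERM}-x_\star\|\le\|\bar\eta_N\|/\mu$ and $\E\|\bar\eta_N\|^2=\sigma^2/N$, we get
\[
 \sqrt N\,\E\bigl\|\widehat x_N^{\rm ERM}-x_\star+H^{-1}\bar\eta_N\bigr\|\le \frac{M_2\sigma^2}{2\mu^3\sqrt N}.
\]
Define $R_{t,m,b}$ as $\sqrt N(\bar s_{t,m}-\widehat x_N^{\rm ERM})$ plus this linearization error. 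The triangle inequality then gives exactly \cref{eq:nag-closed-expansion}. The leading term $-H^{-1}N^{-1/2}\sum\xi$ is an i.i.d. sum of $N$ terms and converges to $\cN(0,H^{-1}\Sigma H^{-1})$ by the multivariate CLT. Since $\E\|R_{t,m,b}\|\to0$ by Step 2, Slutsky's theorem gives the distributional limit.

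The only delicate point is the burn-in cost. It requires the logarithmic dependence of $t$ on $b$ to be beaten by $m$, which is exactly what $\log b=o(m)$ provides. Everything else is bookkeeping on constants that stay fixed.
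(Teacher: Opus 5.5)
Your proposal is correct and matches the paper's proof essentially step for step. In both, the burn-in $t=O(1+\log b)$ makes every term of \cref{eq:nag-closed-full-budget} vanish, and strong monotonicity plus the Lipschitz-Hessian Taylor bound on $\widehat x_N^{\rm ERM}$ give the extra remainder $M_2\sigma^2/(2\mu^3\sqrt N)$ and, via the i.i.d.\ CLT and Slutsky, the Gaussian limit. The one detail you left implicit is that this extra term tends to zero because $N\ge m\to\infty$.
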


\begin{proof}[Proof of \cref{cor:nonlinear-nag-erm}]
For a fixed problem the prescribed burn-in is $O(1+\log b)$.
Hence $t/m\to0$, $m\upsilon\to\infty$, and
$\{M_2\sigma/(\mu L\upsilon)\}\sqrt{m/b}\to0$. Thus the right-hand
side of \cref{eq:nag-closed-full-budget} tends to zero. The first conclusion in
\cref{eq:nag-closed-clt} follows directly from
\cref{eq:nag-closed-full-budget}.

To identify the limiting law, strong monotonicity gives
$\|\widehat x_N^{\rm ERM}-x_\star\|\le\|\bar\eta_N\|/\mu$. Taylor expansion at
$x_\star$, using $H\succeq\mu I$ and the global Lipschitz Hessian, gives
\[
 \|\widehat x_N^{\rm ERM}-x_\star+H^{-1}\bar\eta_N\|
 \le\frac{M_2}{2\mu}\|\widehat x_N^{\rm ERM}-x_\star\|^2
 \le\frac{M_2}{2\mu^3}\|\bar\eta_N\|^2.
\]
Since $\E\|\bar\eta_N\|^2=\sigma^2/N$, this proves
\cref{eq:nag-closed-expansion}. The usual iid vector CLT needs only the
finite second moment assumed here, and completes the proof.
\end{proof}

For example, $b=m^2$ and \cref{eq:nag-closed-burnin} give an admissible
sequence, with $t=O(\log m)$ and $N\asymp m^3$.
\label{app:nag-examples}%
A statistical example is the regularized Bernoulli logistic loss
\begin{equation}
 \ell(x,Y)=\frac{\lambda_{\rm reg}}{2}x^2+\log(1+e^x)-Yx,\qquad
 Y\sim\operatorname{Bernoulli}(\pi_{\rm Ber}),\qquad
 \lambda_{\rm reg}>0,\quad 0<\pi_{\rm Ber}<1.
\label{eq:nag-closed-logistic}
\end{equation}
Here $f=\E\ell$, the additive gradient noise is $\xi=\pi_{\rm Ber}-Y$, and
$\mu=\lambda_{\rm reg}$, $L=\lambda_{\rm reg}+1/4$,
$M_2=1/(6\sqrt3)$ and
$\sigma^2=\pi_{\rm Ber}(1-\pi_{\rm Ber})$ are valid global constants.
Indeed, writing
$\pi_x=(1+e^{-x})^{-1}$ gives
$f''(x)=\lambda_{\rm reg}+\pi_x(1-\pi_x)$ and
$|f'''(x)|=\pi_x(1-\pi_x)|1-2\pi_x|\le1/(6\sqrt3)$.
Thus $\widehat x_N^{\rm ERM}$ is the usual regularized empirical minimizer for the
observed Bernoulli sample, and $x_\star$ is its population counterpart.

In arbitrary dimension the class also includes, for any positive-definite
matrix $H_0$,
\begin{equation}
 f(x)=\tfrac12x^\top H_0x+\varepsilon\sum_{i=1}^d(\sin x_i-x_i),\qquad
 0<\varepsilon<\lambda_{\min}(H_0).
\label{eq:nag-closed-example}
\end{equation}
Here $x_\star=0$,
$\mu=\lambda_{\min}(H_0)-\varepsilon$,
$L=\lambda_{\max}(H_0)+\varepsilon$, and $M_2=\varepsilon$ are valid global
constants. A complete proof is in
\cref{app:nonlinear-nag-closed}.

\par
\endgroup

\section{Additional Quadratic Identities and Risk Comparisons}
\label{app:additional-sgd}

Comparable guarantees refer to the same output of the procedure,
the same initialization, the same available information and the same total budget.
The main criteria are the mean squared error and the confidence ball:
\begin{equation}
 {\cal R}_N=\E\norm{\hat x_N-x_\star}^2,\qquad
 \Pp\{\norm{\hat x_N-x_\star}\le r_N(\delta)\}\ge1-\delta.
 \label{eq:common-ball}
\end{equation}
Individual directions and ellipsoids provide additional information,
but they do not replace the comparison in the chosen common metric.

We distinguish the following targets.  They use different normalizations and therefore favor different parameter policies.

\begin{table}[h]
\caption{Four Non-Equivalent Meanings of Convergence}
\label{tab:notions}
\begin{center}
\small
\begin{tabularx}{\linewidth}{@{}P{0.18\linewidth}X X@{}}
\toprule
\textbf{Target} & \textbf{Typical statement} &
\textbf{What it resolves / what it hides} \\
\midrule
Optimization transient & $\norm{x_k-x_\star}^2\le \rho^{2k}\norm{x_0-x_\star}^2+\cdots$ & Burn-in and conditioning; not the law of the output. \\
Finite-budget risk & $\E\norm{\hat x_N-x_\star}^2\le c_0/N+\varepsilon_N$ & Valid at every $N$; constants and directional uncertainty may be hidden. \\
Distributional limit & $\sqrt N(\hat x_N-x_\star)\xrightarrow{\mathrm d}\cN(0,V)$ & local geometry and inference; says little about how large $N$ must be. \\
Offline expansion & $\widehat x^{\rm ERM}_N-x_\star=-[\nabla^2 f(x_\star)]^{-1}\bar\eta+\text{remainder}$ & Law of the offline estimator on the same data, $\bar\eta$ being the empirical noise mean; says nothing about how an online method reaches it. \\
\bottomrule
\end{tabularx}
\end{center}
\end{table}
\subsection{One SGD Output: Two Nonasymptotic Guarantees and the Risk}
\label{app:same-sgd-new}
Fix $\beta=\gamma=0$, $0<\alpha\le1/L$, a deterministic
start $x_0$ with $\|x_0-x_\star\|\le R_0$, and the output
$\widehat x=m^{-1}\sum_{k=0}^{m-1}x_k$. Let
$N=mb$, and set
\[
 \sigma^2:=\tr\Sigma,\quad F_\alpha:=I-\alpha H,
 \quad \rho_\mu:=1-\alpha\mu,
 \quad \mathsf S_{\alpha,m}:=\sum_{j=0}^{m-1}F_\alpha^j,
 \quad \omega_m(\rho_\mu):=\sum_{j=0}^{m-1}(1-\rho_\mu^j)^2.
\]

\begin{proposition}[Comparable guarantees for the full average]
\label{prop:same-sgd-new}
In the additive quadratic model, we get
\begin{align}
 \mathcal R_{m,b}&=\|\mathsf S_{\alpha,m} e_0/m\|^2+\tr W_m,
 \label{eq:exact-sgd-new}\\
 W_m&:=\frac1{bm^2}\sum_{j=0}^{m-1}
       (I-F_\alpha^j)V_\star(I-F_\alpha^j).
 \label{eq:W-sgd-new}
\end{align}
It satisfies
\begin{align}
 \mathcal R_{m,b}&\le U_{\rm en}:=
 \frac{R_0^2}{\alpha\mu m}+\frac{\alpha\sigma^2}{\mu b},
 \label{eq:energy-sgd-new}\\
 \mathcal R_{m,b}&\le U_{\rm con}:=
 R_0^2\left(\frac{1-\rho_\mu^m}{\alpha\mu m}\right)^2+
 \frac{\sigma^2\omega_m(\rho_\mu)}{b\mu^2m^2}
 \le\frac{R_0^2}{\alpha^2\mu^2m^2}+\frac{\sigma^2}{\mu^2N}.
 \label{eq:contraction-sgd-new}
\end{align}
Bound \cref{eq:energy-sgd-new} also holds for a general smooth strongly convex
$f$ with the same $\mu,L$ and conditional noise variance
at most $\sigma^2/b$. For either $U\in\{U_{\rm en},U_{\rm con}\}$, the
radius $r=\sqrt{U/\delta}$ guarantees
$\Pp\{\|\widehat x-x_\star\|\le r\}\ge1-\delta$.
\end{proposition}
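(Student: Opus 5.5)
The plan is to write the SGD error in closed form, split the risk into squared bias plus trace of covariance, and then bound each part in two ways: one via the matrix series and one via a descent argument. With $\beta=\gamma=0$ the recursion \cref{eq:two-step} reads $e_{k+1}=F_\alpha e_k-\alpha\eta^{(b)}_{k+1}$. Unrolling it gives
\[
e_k=F_\alpha^k e_0-\alpha\sum_{s=1}^{k}F_\alpha^{k-s}\eta_s^{(b)} .
\]
Summing over $k<m$ and using $\alpha\sum_{j<n}F_\alpha^j=H^{-1}(I-F_\alpha^n)$ (the matrix analogue of the computation in \cref{thm:cold-exact}), we obtain
\[
\bar u_{0,m}=\frac{\mathsf S_{\alpha,m}e_0}{m}-\frac{1}{m}\sum_{s=1}^{m}(I-F_\alpha^{m-s})H^{-1}\eta^{(b)}_s .
\]
The noise summands are independent and centered with covariance $\Sigma/b$. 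Taking $\E\|\cdot\|^2$ and reindexing $j=m-s$ yields \cref{eq:exact-sgd-new}. Because $I-F_\alpha^j$ commutes with $H$, each covariance term is $(I-F_\alpha^j)V_\star(I-F_\alpha^j)$, which gives \cref{eq:W-sgd-new}.

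For the contraction bound \cref{eq:contraction-sgd-new}, work in the eigenbasis of $H$. Since $0<\alpha\le1/L$, $F_\alpha$ has eigenvalues $1-\alpha\lambda\in[0,\rho_\mu]$.
\begin{itemize}
\item Bias: $\|\mathsf S_{\alpha,m}\|=\max_\lambda\sum_{j<m}(1-\alpha\lambda)^j$ is attained at $\lambda=\mu$, so it equals $(1-\rho_\mu^m)/(\alpha\mu)$. The bias is therefore at most $R_0^2\bigl((1-\rho_\mu^m)/(\alpha\mu m)\bigr)^2$.
\item Variance: $\tr\bigl[(I-F_\alpha^j)^2H^{-2}\Sigma\bigr]=\sum_\lambda(1-(1-\alpha\lambda)^j)^2\lambda^{-2}\,\sigma_\lambda^2$ in the eigenbasis. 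Each coefficient $(1-(1-\alpha\lambda)^j)^2/\lambda^2$ is maximized at $\lambda=\mu$. This is the one step that needs a short check: the map $\lambda\mapsto(1-(1-\alpha\lambda)^j)/\lambda$ is nonincreasing, since it equals $\alpha\sum_{i<j}(1-\alpha\lambda)^i$. Summing over $j$ gives $\sigma^2\omega_m(\rho_\mu)/(b\mu^2m^2)$.
\item The coarser form follows from $1-\rho_\mu^m\le1$ and $\omega_m\le m$.
\end{itemize}

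For the energy bound \cref{eq:energy-sgd-new}, which must also hold for general smooth strongly convex $f$, I will use the standard SGD descent inequality. Expanding $\|e_{k+1}\|^2$ and using $\alpha\le1/L$ with co-coercivity gives
\[
\E\|e_{k+1}\|^2\le\E\|e_k\|^2-2\alpha\,\E\bigl(f(x_k)-f_\star\bigr)+\alpha^2\sigma^2/b .
\]
From this I will pass to the average:
\begin{itemize}
\item Telescoping over $k<m$ gives $\frac1m\sum_{k<m}\E(f(x_k)-f_\star)\le R_0^2/(2\alpha m)+\alpha\sigma^2/(2b)$.
\item Jensen's inequality moves the average inside $f$.
\item Strong convexity, $\|\widehat x-x_\star\|^2\le\frac2\mu(f(\widehat x)-f_\star)$, converts function gap to distance.
\end{itemize}
The cross term vanishes in conditional expectation because the conditional noise is centered given $\cF_k$. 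This is the only place where the general (non-quadratic) setting matters, and it is the main obstacle: the inequality needs $2\alpha\langle\nabla f(x_k),e_k\rangle-\alpha^2\|\nabla f(x_k)\|^2\ge2\alpha(f(x_k)-f_\star)$. I would derive it from convexity combined with $\|\nabla f\|^2\le 2L(f-f_\star)$ and $\alpha L\le1$, which gives exactly the needed $f-f_\star$ coefficient with no slack lost.

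Finally, the confidence statement is Markov's inequality applied to $\|\widehat x-x_\star\|^2$ with threshold $U/\delta$.
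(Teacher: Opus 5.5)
Your derivation of the exact risk formula and of the contraction bound is the paper's argument, and the Markov step is the same. The energy bound is where you take a different route (function-value descent, telescoping, Jensen for $f$, then quadratic growth). There the key inequality does not come out of the justification you give.

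Convexity gives $\langle\nabla f(x_k),e_k\rangle\ge f(x_k)-f_\star$, and smoothness gives $\|\nabla f(x_k)\|^2\le2L(f(x_k)-f_\star)$. Used separately, these two bounds yield only
\[
 2\alpha\langle\nabla f(x_k),e_k\rangle-\alpha^2\|\nabla f(x_k)\|^2
 \ge2\alpha(1-\alpha L)\bigl(f(x_k)-f_\star\bigr).
\]
This is vacuous at $\alpha=1/L$, which the proposition allows and which is the SGD step used throughout the paper. For $\alpha<1/L$ it inflates both terms of $U_{\rm en}$ by $(1-\alpha L)^{-1}$. So ``no slack lost'' is false. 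Plain co-coercivity, $\|\nabla f(x)\|^2\le L\langle\nabla f(x),x-x_\star\rangle$, does not rescue this route either: it gives the coefficient $\alpha(2-\alpha L)$, a factor-two loss at $\alpha=1/L$.

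The inequality you need is true, but it requires the single interpolation inequality for convex $L$-smooth functions, $f(x_\star)\ge f(x)+\langle\nabla f(x),x_\star-x\rangle+\tfrac1{2L}\|\nabla f(x)\|^2$. This gives
\[
 2\alpha\langle\nabla f(x_k),e_k\rangle-\alpha^2\|\nabla f(x_k)\|^2
 \ge2\alpha\bigl(f(x_k)-f_\star\bigr)+\alpha\bigl(\tfrac1L-\alpha\bigr)\|\nabla f(x_k)\|^2
 \ge2\alpha\bigl(f(x_k)-f_\star\bigr).
\]
With that replacement, your telescoping, Jensen and quadratic-growth steps reproduce $U_{\rm en}$ exactly.

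The paper avoids function values altogether. Co-coercivity together with $\langle\nabla f(x),x-x_\star\rangle\ge\mu\|x-x_\star\|^2$ gives the one-step contraction $\|x-x_\star-\alpha\nabla f(x)\|^2\le(1-\alpha\mu)\|x-x_\star\|^2$. Hence $\alpha\mu p_k\le p_k-p_{k+1}+\alpha^2\sigma^2/b$ with $p_k=\E\|e_k\|^2$, and Jensen is applied to $\|\cdot\|^2$ directly.

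Each route buys something. The repaired version of yours also yields $\E f(\widehat x)-f_\star\le R_0^2/(2\alpha m)+\alpha\sigma^2/(2b)$ without strong convexity. The paper's route gives a per-iterate mean-square contraction.
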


\begingroup
\color{black}
\begin{proof}[Proof of \cref{prop:same-sgd-new}]
Unrolling SGD and summing its pre-update iterates gives
\[
 \widehat x-x_\star
 =\frac{\mathsf S_{\alpha,m}e_0}{m}
 -\frac1m\sum_{s=1}^m
       (I-F_\alpha^{m-s})H^{-1}\eta_s^{(b)}.
\]
The last summand is zero. Independence, centering, and
$\Cov(\eta_s^{(b)})=\Sigma/b$ prove
\cref{eq:exact-sgd-new,eq:W-sgd-new}.
For $\lambda\in[\mu,L]$,
\[
 \frac{1-(1-\alpha\lambda)^j}{\lambda}
 =\alpha\sum_{i=0}^{j-1}(1-\alpha\lambda)^i
 \le\frac{1-\rho_\mu^j}{\mu}.
\]
Thus the bias is at most
$R_0(1-\rho_\mu^m)/(\alpha\mu m)$ and the trace of each covariance
summand is at most $\sigma^2(1-\rho_\mu^j)^2/\mu^2$.
This proves both inequalities in \cref{eq:contraction-sgd-new}.

For the energy bound, the gradient of any smooth strongly convex $f$
satisfies
$\|\nabla f(x)\|^2\le L\langle x-x_\star,\nabla f(x)\rangle$
and $\langle x-x_\star,\nabla f(x)\rangle\ge\mu\|x-x_\star\|^2$.
For $0<\alpha\le1/L$ these imply
\[
 \|x-x_\star-\alpha\nabla f(x)\|^2
 \le(1-\alpha\mu)\|x-x_\star\|^2.
\]
Conditional centering of the oracle noise therefore yields, with
$p_k=\E\|e_k\|^2$,
\[
 \alpha\mu p_k\le p_k-p_{k+1}+\alpha^2\sigma^2/b.
\]
Sum over $0\le k<m$, discard $-p_m$, and use Jensen's inequality for
the average. This gives \cref{eq:energy-sgd-new}.
Finally, Markov's inequality gives the stated confidence radius for
either valid risk bound.
\end{proof}
\par
\endgroup

The energy bound $U_{\rm en}$ follows from the standard descent argument,
whereas $U_{\rm con}$ uses the linear recursion and has a noise term that
decays at the statistical rate $1/N$.

\begin{table}[htbp]
\caption{Comparable Guarantees for One Averaged SGD Output}
\label{tab:same-sgd-new}
\begin{center}
\small
\begin{tabularx}{\linewidth}{@{}P{0.17\linewidth}X X@{}}
\toprule
\textbf{Description} & \textbf{MSE and confidence radius} &
\textbf{Assumptions and required knowledge}\\
\midrule
Descent & $U_{\rm en}$; $\sqrt{U_{\rm en}/\delta}$ &
Smoothness, strong convexity, conditional variance; $\mu,L,R_0,\sigma^2$.\\
Contraction & $U_{\rm con}$; $\sqrt{U_{\rm con}/\delta}$ &
Additive quadratic model, finite variance; the same scalar knowledge.\\
moments & \cref{eq:exact-sgd-new}; MSE without an asymptotic remainder &
In addition $H,\Sigma,e_0$; for unknown $e_0$ the bound $R_0$ is used.\\
PR approximation & $\tr V_\star/N$; a Gaussian quantile only after
controlling the center and the covariance & Not a finite-time upper bound.\\
\bottomrule
\end{tabularx}
\end{center}
\end{table}

For a reproducible second-order comparison, take an orthonormal eigenbasis
$\varphi_i$ of $H$ and set $\lambda_i=\varphi_i^\trans H\varphi_i$,
$c_i=\varphi_i^\trans e_0$,
$s_i^2=\varphi_i^\trans\Sigma\varphi_i$ and $\rho_i=1-\alpha\lambda_i$.
Then, ,
\begin{equation}
 \mathcal R_{m,b}=\frac1{m^2}\sum_i
 \frac{c_i^2(1-\rho_i^m)^2}{\alpha^2\lambda_i^2}
 +\frac1{bm^2}\sum_i\frac{s_i^2}{\lambda_i^2}
 \left[m-\frac{2(1-\rho_i^m)}{\alpha\lambda_i}
 +\frac{1-\rho_i^{2m}}{\alpha\lambda_i(2-\alpha\lambda_i)}\right].
 \label{eq:modal-sgd-new}
\end{equation}
For fixed $H,\alpha$ the second-order approximation is
\begin{align}
 \mathcal R^{(2)}_{m,b}&:=\frac{\tr V_\star}{bm}
 +\frac1{m^2}\left\{\frac{\|H^{-1}e_0\|^2}{\alpha^2}
 +\frac1b\sum_i\frac{s_i^2}{\lambda_i^2}
 \left[\frac1{\alpha\lambda_i(2-\alpha\lambda_i)}
       -\frac2{\alpha\lambda_i}\right]\right\}.
 \label{eq:second-sgd-new}
\end{align}
A remainder bound, valid also for varying parameters, is
\begin{equation}
 |\mathcal R_{m,b}-\mathcal R^{(2)}_{m,b}|
 \le\frac{2\rho_\mu^m\|H^{-1}e_0\|^2}{\alpha^2m^2}
 +\frac{3\rho_\mu^m\tr V_\star}{\alpha\mu bm^2}.
 \label{eq:second-remainder-new}
\end{equation}
It is obtained by subtracting \cref{eq:second-sgd-new} from
\cref{eq:modal-sgd-new} and using $0\le\rho_i\le\rho_\mu<1$.

We begin by comparing descriptions of the same averaged SGD
output. Placing the same noise variance along the \(\mu\)-eigendirection instead of the \(L\)-eigendirection leaves the bias and both scalar bounds unchanged, but increases the variance term by a factor tending to \(\kappa^2\) for fixed \(\kappa\) as \(m\to\infty\). Thus, the risk depends on how the noise covariance is aligned with the eigendirections of \(H\).

\begin{proposition}[Same scalar parameters, different risks]
\label{prop:geometry-pair-new}
Let $0<\mu<L$, $H=\diag(\mu,L)$, $e_0=(r,0)^\trans$,
$\alpha=1/L$, and let the two covariances be
\[
 \Sigma_{\rm soft}=\sigma^2\diag(1,0),\qquad
 \Sigma_{\rm stiff}=\sigma^2\diag(0,1).
\]
The two problems share $\mu,L,\tr\Sigma,R_0=|r|$, the start, the procedure and
the budget. With $\rho_\mu=1-\mu/L$ they have the same squared bias
$\operatorname{Bias}_m^2=r^2[(1-\rho_\mu^m)/(\alpha\mu m)]^2$ and the risks
\[
 \mathcal R_{\rm soft}=\operatorname{Bias}_m^2
       +\frac{\sigma^2\omega_m(\rho_\mu)}{b\mu^2m^2},
 \qquad
 \mathcal R_{\rm stiff}=\operatorname{Bias}_m^2
       +\frac{\sigma^2(m-1)}{bL^2m^2}.
\]
For $m>1$ the ratio of the variance parts equals
$\kappa^2\omega_m(\rho_\mu)/(m-1)$ and tends to $\kappa^2$ for fixed
$\kappa$ and $m\to\infty$. The PR terms equal $\sigma^2/(\mu^2N)$ and
$\sigma^2/(L^2N)$, respectively, whereas both scalar bounds
\cref{eq:energy-sgd-new,eq:contraction-sgd-new} coincide.
\end{proposition}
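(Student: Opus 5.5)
Both problems are diagonal, so I will reduce everything to the modal formula \cref{eq:modal-sgd-new} (equivalently \cref{eq:exact-sgd-new,eq:W-sgd-new}) and evaluate it coordinate by coordinate.

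\textbf{Bias.} With $H=\diag(\mu,L)$ and $\alpha=1/L$, the modal contraction factors are $\rho_1=1-\mu/L=\rho_\mu$ and $\rho_2=1-\alpha L=0$. The start $e_0=(r,0)^\trans$ loads only the first coordinate. Hence the bias term of \cref{eq:modal-sgd-new} is $r^2(1-\rho_\mu^m)^2/(\alpha^2\mu^2m^2)$ in both problems, because the bias does not involve $\Sigma$. This is exactly $\operatorname{Bias}_m^2$.

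\textbf{Variance.} I will compute it from $\tr W_m=\frac1{bm^2}\sum_{j<m}\sum_i(1-\rho_i^j)^2s_i^2/\lambda_i^2$, which is cleaner than the bracketed closed form.
\begin{itemize}
\item In the soft problem only $i=1$ contributes, with $s_1^2=\sigma^2$ and $\lambda_1=\mu$. This gives $\sigma^2\omega_m(\rho_\mu)/(b\mu^2m^2)$.
\item In the stiff problem only $i=2$ contributes, with $\rho_2=0$. The factor $(1-0^j)^2$ equals $0$ at $j=0$ and $1$ for $j\ge1$, so the sum is $m-1$. This gives $\sigma^2(m-1)/(bL^2m^2)$.
\end{itemize}
The only care needed is the convention $0^0=1$ at $j=0$, consistent with $F_\alpha^0=I$.

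\textbf{Ratio.} The ratio of the variance parts is $(L^2/\mu^2)\,\omega_m(\rho_\mu)/(m-1)=\kappa^2\omega_m/(m-1)$. For fixed $\kappa$ we have $\rho_\mu<1$ (note $\mu<L$). Expanding $\omega_m=m-2\sum_{j<m}\rho_\mu^j+\sum_{j<m}\rho_\mu^{2j}$, both geometric sums are bounded. Therefore $\omega_m/(m-1)\to1$, and the ratio tends to $\kappa^2$.

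\textbf{PR terms.} These are $\tr V_\star/N=\tr(H^{-1}\Sigma H^{-1})/N$, which equals $\sigma^2/(\mu^2N)$ in the soft problem and $\sigma^2/(L^2N)$ in the stiff one.

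\textbf{Scalar bounds.} The bounds \cref{eq:energy-sgd-new,eq:contraction-sgd-new} depend only on $R_0=|r|$, $\mu$, $\alpha$, $\sigma^2=\tr\Sigma$, $m$ and $b$. All of these are shared by the two problems, so both bounds coincide.

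There is no real obstacle here. The step needing the most care is the degenerate stiff mode with $\rho_2=0$, where the $j=0$ term must be handled separately.
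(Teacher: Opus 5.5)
Your proposal is correct and follows essentially the same route as the paper. Both evaluate the exact risk formula coordinatewise in the eigenbasis, use $1-\alpha L=0$ (with $F_\alpha^0=I$) to get $m-1$ on the stiff mode and $\omega_m(\rho_\mu)$ on the soft mode, use $\omega_m(\rho_\mu)=m+O(1)$ for the limit of the ratio, and note that both scalar bounds depend only on parameters shared by the two problems.
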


\begingroup
\color{black}
\begin{proof}[Proof of \cref{prop:geometry-pair-new}]
Apply \cref{eq:exact-sgd-new,eq:W-sgd-new} in the eigenbasis of
$H=\diag(\mu,L)$. The starting vector lies on the first coordinate in
both problems, so their biases coincide. On the second coordinate,
$1-\alpha L=0$, and hence
$\sum_{j=0}^{m-1}(1-(1-\alpha L)^j)^2=m-1$.
On the first coordinate this sum is $\omega_m(\rho_\mu)$.
These identities give the two displayed risks and their variance ratio.
For fixed $\kappa$, $\omega_m(\rho_\mu)=m+O(1)$ as $m\to\infty$,
which proves the limit. Both scalar upper bounds depend only on
$\mu,L,R_0,\tr\Sigma,m,b,\alpha$, which are the same in the two models.
\end{proof}
\par
\endgroup

A nondegenerate variant is obtained by replacing the covariances with
$\sigma^2\diag(1-\varepsilon,\varepsilon)$ and
$\sigma^2\diag(\varepsilon,1-\varepsilon)$, $0<\varepsilon<1/2$;
this variant is used in the experiments.
The example proves the loss of information in the chosen scalar bounds.
Nonasymptotic matrix estimates, including the equality above, retain
this information. When the common bias dominates, the ratio of the total risks
can be close to one even when the PR covariances differ strongly.

Loss of problem-specific geometry is compatible with exact
worst-case sharpness over the class described by those scalar parameters.
The following result fixes both the algorithm and its output.

\label{sec:sharp-scalar-new}
\begin{proposition}[worst-case risk of fixed SGD]
\label{prop:sharp-scalar-new}
Let $d\ge2$, $0<\mu<L$, and let the class of additive quadratic problems
fix $\lambda_{\min}(H)=\mu$, $\lambda_{\max}(H)=L$,
$\|e_0\|\le R_0$ and $\tr\Sigma=\sigma^2$.
For fixed $\alpha\le1/L$, $m$ and $b$, the supremum of the risk of the
considered SGD average over this class equals $U_{\rm con}$ from
\cref{eq:contraction-sgd-new}. The equality holds at every finite budget and
therefore persists along any sequences of these parameters satisfying the
stated constraints.
\end{proposition}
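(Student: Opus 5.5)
The plan is to obtain the upper bound from \cref{prop:same-sgd-new} and then exhibit one problem in the class whose risk equals $U_{\rm con}$ exactly, so the supremum is attained.

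\emph{Upper bound.} For every problem in the class, the first inequality in \cref{eq:contraction-sgd-new} gives $\mathcal R_{m,b}\le U_{\rm con}$. Its proof uses only $\lambda\in[\mu,L]$, $\|e_0\|\le R_0$, $\tr\Sigma=\sigma^2$ and $\alpha\le1/L$. The key ingredient is the monotonicity of $\lambda\mapsto(1-(1-\alpha\lambda)^j)/\lambda=\alpha\sum_{i<j}(1-\alpha\lambda)^i$, which is largest at $\lambda=\mu$. I will recheck that this argument needs no further structure, in particular no commutation between $H$ and $\Sigma$. It does not: in \cref{eq:W-sgd-new} each summand has the form $\tr\bigl(M_j\Sigma M_j\bigr)$ with $M_j=(I-F_\alpha^j)H^{-1}$. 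Hence it is at most $\|M_j\|_{\rm op}^2\tr\Sigma\le\sigma^2(1-\rho_\mu^j)^2/\mu^2$.

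\emph{Attainment.} Since $d\ge2$, choose an orthonormal basis $\varphi_1,\dots,\varphi_d$. Set $H=\mu\varphi_1\varphi_1^\trans+L\sum_{i\ge2}\varphi_i\varphi_i^\trans$, so that $\lambda_{\min}=\mu$ and $\lambda_{\max}=L$. Take $e_0=R_0\varphi_1$ and $\Sigma=\sigma^2\varphi_1\varphi_1^\trans$. This problem lies in the class, and everything lives on the $\mu$-mode. The modal formula \cref{eq:modal-sgd-new}, or directly \cref{eq:exact-sgd-new}, then gives the following two terms.
\begin{itemize}
\item The bias term is $R_0^2(1-\rho_\mu^m)^2/(\alpha\mu m)^2$, because $\mathsf S_{\alpha,m}\varphi_1=\frac{1-\rho_\mu^m}{\alpha\mu}\varphi_1$.
\item The variance term is $\frac{\sigma^2}{b\mu^2m^2}\sum_{j<m}(1-\rho_\mu^j)^2=\frac{\sigma^2\omega_m(\rho_\mu)}{b\mu^2m^2}$.
\end{itemize}
Their sum is exactly $U_{\rm con}$, so the supremum is a maximum equal to $U_{\rm con}$.

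\emph{Potential obstacle: degenerate noise.} The only subtle point is whether the class requires $\Sigma\succ0$. If it does, I will use $\Sigma_\varepsilon=\sigma^2\bigl((1-\varepsilon)\varphi_1\varphi_1^\trans+\frac{\varepsilon}{d-1}\sum_{i\ge2}\varphi_i\varphi_i^\trans\bigr)$, which has trace $\sigma^2$. The risk is continuous in $\varepsilon$, so it converges to $U_{\rm con}$ as $\varepsilon\downarrow0$. The supremum still equals $U_{\rm con}$, though it is no longer attained.

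Since the equality holds for each fixed $(\alpha,m,b,\mu,L,R_0,\sigma^2)$, it holds along any sequence of these parameters satisfying the constraints.
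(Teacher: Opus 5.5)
Your proof is correct and follows the same route as the paper: the upper bound is the first expression for $U_{\rm con}$ in \cref{eq:contraction-sgd-new}, and the supremum is attained by putting $e_0$ and $\Sigma$ entirely on a $\mu$-eigenvector. The paper takes $H=\diag(\mu,L,\mu,\ldots,\mu)$ where you place $L$ on all remaining modes, which works equally well, and your $\Sigma_\varepsilon\succ0$ fallback is a harmless extra, since the stated class allows singular $\Sigma$ and the paper's own extremal example uses one.
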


\begingroup
\color{black}
\begin{proof}[Proof of \cref{prop:sharp-scalar-new}]
The first expression for $U_{\rm con}$ in
\cref{eq:contraction-sgd-new} is an upper bound for every member of
the stated class. It is attained by taking
$H=\diag(\mu,L,\mu,\ldots,\mu)$,
$e_0=R_0\mathbf e_1$, and
$\Sigma=\sigma^2\mathbf e_1\mathbf e_1^\trans$.
Indeed, the formula in \cref{eq:W-sgd-new} then has only the
$\mu$-mode contribution, and both its squared bias and its variance
equal the corresponding terms of $U_{\rm con}$.
The construction also covers $R_0=0$ or $\sigma^2=0$.
\end{proof}
\par
\endgroup

\subsection{Laws for a Specified Initial Point}
\label{sec:exact-cold-laws}
\begin{proposition}[matrix formula from a cold start]
\label{prop:matrix-cold}
Let $J=(I,I)^\trans$, $R_j=\mathsf O_\gamma A^jJ$, and
$\mathsf S_m=\sum_{j=0}^{m-1}R_j$.
For deterministic initialization $e_{-1}=e_0$,
\begin{equation}
 \E\bar u_{0,m}=m^{-1}\mathsf S_m e_0,\qquad
 \Cov(\bar u_{0,m})=\frac1{bm^2}\sum_{j=0}^{m-1}(I-R_j)V_\star(I-R_j)^\trans.
 \label{eq:matrix-cold-exact}
\end{equation}
For fixed stable parameters, put
\[
 \mathsf S_\infty:=\frac{1-\beta}{\alpha}H^{-1},\qquad
 \mathsf Q_\infty:=\sum_{j\ge0}R_jV_\star R_j^\trans.
\]
Then the cold-start second-order coefficient is
\[
 \Cov(\bar u_{0,m})=
 \frac{V_\star}{bm}+
 \frac{\mathsf Q_\infty-\mathsf S_\infty V_\star
       -V_\star\mathsf S_\infty^\trans}{bm^2}
 +\text{exponentially small remainder}.
\]
This coefficient differs from the stationary coefficient in
\cref{thm:cov-expansion}.
\end{proposition}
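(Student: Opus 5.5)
The plan is to proceed exactly as in the scalar proof of \cref{thm:cold-exact}, now at the matrix level using the state-space form \cref{eq:ss}. Superposition from the deterministic start $\mathbf z_0=Je_0$ gives
\[
 u_k=R_ke_0+\sum_{s=1}^k\mathsf O_\gamma A^{k-s}B_\alpha\eta^{(b)}_s .
\]
Since the blocks of $A$ are polynomials in $H$, every matrix $R_j$ and $\mathsf O_\gamma A^jB_\alpha$ commutes with $H$. I will use the block identity $(I-A)J=(\alpha H,0)^\trans$, which follows directly from the matrices in \cref{eq:ss}. Together with $B_\alpha=(-\alpha I,0)^\trans$, it gives $B_\alpha=-(I-A)JH^{-1}$. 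This is an algebraic identity, so no invertibility of $I-A$ and no stability is needed. The cumulative noise coefficient then telescopes:
\[
 \sum_{j=0}^{n-1}\mathsf O_\gamma A^jB_\alpha=-\mathsf O_\gamma(I-A^n)JH^{-1}=-(I-R_n)H^{-1},
\]
using $\mathsf O_\gamma J=I$.

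Next I sum $u_0,\dots,u_{m-1}$ and exchange the order of summation. This gives
\[
 \bar u_{0,m}=m^{-1}\mathsf S_me_0-\frac1m\sum_{s=1}^m(I-R_{m-s})H^{-1}\eta^{(b)}_s .
\]
Taking expectations yields the mean. For the covariance, the noise terms are independent, centered, and have covariance $\Sigma/b$. Hence the covariance is $\frac{1}{bm^2}\sum_{j}(I-R_j)H^{-1}\Sigma H^{-1}(I-R_j)^\trans$. Here I substitute $H^{-1}\Sigma H^{-1}=V_\star$. No commutation with $\Sigma$ is needed, because $H^{-1}$ sits directly against $\Sigma$; I only reindex $j=m-s$.

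For the expansion I assume stability $\rho(A)<1$, so $\|R_j\|\le K\rho^j$ for any $\rho\in(\rho(A),1)$. Expanding the covariance sum gives
\[
 \sum_{j<m}(I-R_j)V_\star(I-R_j)^\trans=mV_\star-\mathsf S_mV_\star-V_\star\mathsf S_m^\trans+\sum_{j<m}R_jV_\star R_j^\trans .
\]
To identify $\mathsf S_\infty$, note that the telescoping identity with $n\to\infty$ gives $\mathsf O_\gamma(I-A)^{-1}J=\sum_jR_j$. Summing the noiseless recursion, as in the proof of \cref{prop:cold-bias}, gives $\alpha H\sum_{k<n}u_k=e_0-e_n+\beta(e_{n-1}-e_{-1})$. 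With $e_{-1}=e_0$ and $e_n\to0$, the right side tends to $(1-\beta)e_0$, so $\sum_jR_j=\frac{1-\beta}{\alpha}H^{-1}=\mathsf S_\infty$. Finally, the tails $\mathsf S_\infty-\mathsf S_m$ and $\mathsf Q_\infty-\sum_{j<m}R_jV_\star R_j^\trans$ are $O(\rho^m)$. After division by $bm^2$, the remainder is exponentially small.

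The only real subtlety is the last remark about the stationary coefficient of \cref{thm:cov-expansion}. That theorem is not stated above, so I would just point out the source of the difference. The cold start contributes the terms $-\mathsf S_\infty V_\star-V_\star\mathsf S_\infty^\trans$ through the zero-velocity response $R_j=\mathsf O_\gamma A^jJ$. A stationary start would instead propagate the stationary state covariance through $\mathsf O_\gamma A^j$, which gives different cross terms. The main obstacle is bookkeeping the block identity $B_\alpha=-(I-A)JH^{-1}$ correctly; everything else is routine.
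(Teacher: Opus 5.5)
Your proposal is correct and follows the paper's proof essentially step for step: the block identity $(I-A)J=-B_\alpha H$, the telescoped cumulative noise coefficient $-(I-R_n)H^{-1}$, unrolling and summing from $\mathbf z_0=Je_0$, identifying $\mathsf S_\infty$ by summing the noiseless endpoint identity, and bounding the geometric tails of $\sum_{j\ge m}R_j$ and $\sum_{j\ge m}R_jV_\star R_j^{\top}$. Two minor points: your aside that the telescoping identity yields $\mathsf O_\gamma(I-A)^{-1}J=\sum_jR_j$ is really just the Neumann series and is not needed. Also, like the paper, you leave the final comparison with the stationary coefficient as an informal remark rather than a proved claim.
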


\begingroup
\color{black}
\begin{proof}[Proof of \cref{prop:matrix-cold}]
The state-space matrices satisfy
$(I-A)J=-B_\alpha H$, and $I-A$ is invertible since $H\succ0$ and
$\alpha>0$. Consequently, for every $j\ge0$,
\[
 \sum_{i=0}^{j-1}\mathsf O_\gamma A^iB_\alpha
 =-(I-R_j)H^{-1}.
\]
Unrolling the state recursion from $\mathbf z_0=Je_0$ and summing
the output therefore gives
\[
 \bar u_{0,m}=\frac{\mathsf S_m e_0}{m}
 -\frac1m\sum_{s=1}^m(I-R_{m-s})H^{-1}\eta_s^{(b)}.
\]
Taking its mean and covariance proves \cref{eq:matrix-cold-exact}.
For fixed stable parameters, summing the noiseless endpoint identity
over the infinite trajectory gives
$\mathsf S_\infty=(1-\beta)H^{-1}/\alpha$.
Expand the covariance as
\[
 \frac{mV_\star-\mathsf S_mV_\star
       -V_\star\mathsf S_m^\trans
       +\sum_{j<m}R_jV_\star R_j^\trans}{bm^2}.
\]
This proves the second-order coefficient. More explicitly, choose
$K_R<\infty$ and $q_R\in(0,1)$ with
$\|R_j\|\le K_Rq_R^j$, which is possible for fixed stable $A$.
The norm of the displayed remainder is at most
\[
 \frac{\|V_\star\|}{bm^2}
 \left\{\frac{2K_Rq_R^m}{1-q_R}
             +\frac{K_R^2q_R^{2m}}{1-q_R^2}\right\}.
\]
This also specifies the asserted exponential decay.
\end{proof}
\par
\endgroup

\begingroup
\color{black}
\begin{theorem}[NAG covariance in arbitrary directions]
\label{thm:nag-all-directions}
Use the NAG parameters and deterministic zero-velocity start of
\cref{thm:nag-spectrum}. Suppose there are a matrix
$Q_\Sigma\succ0$ commuting with $H$ and numbers
$0<\ell_\Sigma\le u_\Sigma$ such that
\[
 \ell_\Sigma Q_\Sigma\preceq\Sigma\preceq u_\Sigma Q_\Sigma,
 \qquad \chi_\Sigma:=u_\Sigma/\ell_\Sigma.
\]
Then, for every $m,b\ge1$, with $N=mb$,
\begin{equation}
 \sup_{c\ne0}
 \left|\frac{N\,\Var(c^\trans\widehat x_{0,m})}
                 {c^\trans V_\star c}-1\right|
 \le(18\chi_\Sigma+6\sqrt{\chi_\Sigma})\frac{\sqrt\kappa}{m}.
 \label{eq:nag-all-directions}
\end{equation}
Equivalently, the same bound holds for
$\|V_\star^{-1/2}\{N\Cov(\widehat x_{0,m})-V_\star\}
V_\star^{-1/2}\|_{\rm op}$.
For arbitrary $\Sigma\succ0$, one may take $Q_\Sigma=I_d$ and
$\chi_\Sigma=\lambda_{\max}(\Sigma)/\lambda_{\min}(\Sigma)$.
If $\Sigma$ commutes with $H$, one may instead take $Q_\Sigma=\Sigma$
and $\chi_\Sigma=1$.
\end{theorem}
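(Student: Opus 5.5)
The plan is to start from the exact cold-start covariance of \cref{prop:matrix-cold} and reduce everything to the scalar response bounds already obtained inside the proof of \cref{thm:nag-spectrum}. For NAG with zero-velocity start, $R_j=\mathsf O_\gamma A^jJ$ is a polynomial in $H$, namely $R_j=r_j(H)$, where $r_j^{(\lambda)}$ is the scalar query response on eigenvalue $\lambda$. Hence $R_j$ is symmetric and commutes with $H$, and therefore also with $Q_\Sigma$ and with $P:=H^{-1}Q_\Sigma H^{-1}$. Expanding \cref{eq:matrix-cold-exact} with $N=mb$ gives the exact identity
\[
 N\Cov(\widehat x_{0,m})-V_\star
 =\frac1m\Bigl\{-\mathsf S_mV_\star-V_\star\mathsf S_m+\sum_{j<m}R_jV_\star R_j\Bigr\},
 \qquad \mathsf S_m=\sum_{j<m}R_j .
\]
So for a fixed $c\ne0$ it suffices to bound the cross term $|c^\trans\mathsf S_mV_\star c|$ and the quadratic term $\sum_j c^\trans R_jV_\star R_jc$, each relative to $c^\trans V_\star c$.

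The sandwich $\ell_\Sigma Q_\Sigma\preceq\Sigma\preceq u_\Sigma Q_\Sigma$ transfers by congruence with $H^{-1}$ to $\ell_\Sigma P\preceq V_\star\preceq u_\Sigma P$.

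For the quadratic term, I will use $V_\star\preceq u_\Sigma P$ and commutation to write
\[
 \sum_jR_jV_\star R_j\preceq u_\Sigma P^{1/2}\Bigl(\sum_jR_j^2\Bigr)P^{1/2}.
\]
Then $\sum_jR_j^2\preceq\sup_\lambda\sum_j(r_j^{(\lambda)})^2\, I\preceq18\sqrt\kappa\, I$. Combined with $c^\trans Pc\le c^\trans V_\star c/\ell_\Sigma$, this yields the contribution $18\chi_\Sigma\sqrt\kappa$.

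For the cross term, I will apply Cauchy--Schwarz in the $V_\star$ inner product:
\[
 |c^\trans\mathsf S_mV_\star c|\le(c^\trans\mathsf S_mV_\star\mathsf S_mc)^{1/2}(c^\trans V_\star c)^{1/2}.
\]
Next, $c^\trans\mathsf S_mV_\star\mathsf S_mc\le u_\Sigma\, c^\trans P^{1/2}\mathsf S_m^2P^{1/2}c\le u_\Sigma\|\mathsf S_m\|_{\rm op}^2\,c^\trans Pc$, again using commutation. Here $\|\mathsf S_m\|_{\rm op}\le\sup_\lambda\sum_j|r_j^{(\lambda)}|\le3\sqrt\kappa$. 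This gives $|c^\trans\mathsf S_mV_\star c|\le3\sqrt{\chi_\Sigma}\sqrt\kappa\,c^\trans V_\star c$, and the two cross terms together contribute $6\sqrt{\chi_\Sigma}\sqrt\kappa$. Both signs of the deviation are covered because we bound absolute values of each piece, so dividing by $m$ gives \cref{eq:nag-all-directions}.

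The operator-norm reformulation follows by taking the supremum over $c$ of the symmetric quadratic form after substituting $c=V_\star^{-1/2}w$. The two special cases follow by choosing $Q_\Sigma=I$ (which commutes with $H$) or $Q_\Sigma=\Sigma$ with $\chi_\Sigma=1$.

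The step I expect to need the most care is the uniform scalar input: $\sum_j|r_j^{(\lambda)}|\le3\sqrt\kappa$ and $\sum_j(r_j^{(\lambda)})^2\le18\sqrt\kappa$ for every $\lambda\in[\mu,L]$, including the repeated-root case and $\lambda=L$. These bounds are already established in the proof of \cref{thm:nag-spectrum} (where they gave $24\sqrt\kappa$), so I will cite them rather than rederive them. Since $P$ and $\mathsf S_m$ need not be positive semidefinite individually, the only real subtlety is to keep every inequality in a form where just $P^{1/2}$ and commutation are used, and $\mathsf S_m$ enters only through its operator norm.
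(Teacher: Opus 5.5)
Your proposal is correct and follows the paper's proof. Both arguments use the same exact covariance identity from \cref{prop:matrix-cold}, the commutation of the $R_j$ with $Q_\Sigma$, and the uniform bounds $\|\mathsf S_m\|_{\rm op}\le3\sqrt\kappa$ and $\sum_{j<m}R_j^2\preceq18\sqrt\kappa\,I_d$ from \cref{thm:nag-spectrum}. The only cosmetic difference is that the paper conjugates by $H$, whitens with $\Sigma^{-1/2}$ and bounds $\|\Sigma^{-1/2}\mathsf S_m\Sigma^{1/2}\|_{\rm op}\le3\sqrt{\chi_\Sigma}\sqrt\kappa$. You instead work directly with $V_\star$ and $P=H^{-1}Q_\Sigma H^{-1}$ and use Cauchy--Schwarz in the $V_\star$ inner product, which gives the identical constant.
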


\begin{proof}
The response matrices $R_j$ of \cref{prop:matrix-cold} are symmetric
functions of $H$. The estimates proved in \cref{thm:nag-spectrum}
hold for every eigenvalue and imply
\[
 \left\|\sum_{j<m}R_j\right\|_{\rm op}\le3\sqrt\kappa,
 \qquad
 \sum_{j<m}R_j^2\preceq18\sqrt\kappa\,I_d.
\]
These statements include $\kappa=1$, where $R_0=I_d$ and $R_j=0$
for $j\ge1$. Put $\mathsf S_m=\sum_{j<m}R_j$ and
$\mathsf T_m=\sum_{j<m}R_j\Sigma R_j$.
The covariance formula gives
\[
 H\{N\Cov(\widehat x_{0,m})-V_\star\}H
 =\frac1m\{\mathsf T_m-\mathsf S_m\Sigma-\Sigma\mathsf S_m\}.
\]
Since $Q_\Sigma$ commutes with every $R_j$,
\[
 0\preceq\mathsf T_m
 \preceq u_\Sigma Q_\Sigma^{1/2}
            \left(\sum_{j<m}R_j^2\right)Q_\Sigma^{1/2}
 \preceq18\chi_\Sigma\sqrt\kappa\,\Sigma.
\]
Also, commuting $\mathsf S_m$ with $Q_\Sigma$ gives
\[
 \begin{split}
 \|\Sigma^{-1/2}\mathsf S_m\Sigma^{1/2}\|_{\rm op}
 &\le\|\Sigma^{-1/2}Q_\Sigma^{1/2}\|_{\rm op}
       \|\mathsf S_m\|_{\rm op}
       \|Q_\Sigma^{-1/2}\Sigma^{1/2}\|_{\rm op}\\
 &\le3\sqrt{\chi_\Sigma}\sqrt\kappa.
 \end{split}
\]
Whitening the covariance difference by $\Sigma^{-1/2}$
therefore gives the bound
$(18\chi_\Sigma+6\sqrt{\chi_\Sigma})\sqrt\kappa/m$ in operator norm.
For $d_c=H^{-1}c$, its Rayleigh quotient is the relative
variance error in \cref{eq:nag-all-directions}. Taking the supremum
over $c\ne0$ proves both equivalent formulations.
\end{proof}

\begin{remark}[Why noise geometry enters an all-direction bound]
\label{rem:nag-geometry-necessary}
A universal constant independent of $\Sigma$ is impossible.
Take $H=\diag(1,4)$,
$\Sigma=\mathbf 1\mathbf 1^\trans+\varepsilon I_2$,
$c=(1,-4)^\trans$, $b=1$, $m=2$, and zero initial error.
NAG has $\alpha=1/4$, $\beta=\gamma=1/3$, and its averaged query
error is $-\xi_1/6$. Thus
\[
 c^\trans V_\star c=2\varepsilon,\qquad
 \frac{N\,\Var(c^\trans\widehat x_{0,2})}{c^\trans V_\star c}
 =\frac{9+17\varepsilon}{36\varepsilon}.
\]
This diverges as $\varepsilon\downarrow0$ while $\kappa=4$ and
$m=2$ stay fixed. The order $O(\sqrt\kappa/m)$ in arbitrary
directions is consequently uniform on classes with controlled
noise geometry, not over all positive definite noise covariances
with one geometry-independent constant.
Here $\lambda_{\max}(\Sigma)/\lambda_{\min}(\Sigma)
=(2+\varepsilon)/\varepsilon$, so even the linear order of
dependence on the noise condition number is unavoidable.
\end{remark}
\par
\endgroup
\subsection{Stationary Initialization}
\label{sec:stationary-start}

Let the stable chain start from stationarity, and let $P_1$ solve $P_1=AP_1A^\trans+B_\alpha\Sigma B_\alpha^\trans.$
For batch $b$, its state covariance is $P_1/b$.  Define
\begin{equation}
 \Delta_{\rm stat}:=\mathsf O_\gamma A(I-A)^{-2}P_1\mathsf O_\gamma^\trans
       +\mathsf O_\gamma P_1A^\trans(I-A^\trans)^{-2}\mathsf O_\gamma^\trans.
 \label{eq:delta-stat}
\end{equation}

\begin{theorem}[covariance and second-order expansion]
\label{thm:cov-expansion}
With
$\Gamma_j^{\rm lag}=b^{-1}\mathsf O_\gamma P_1(A^\trans)^j
 \mathsf O_\gamma^\trans$,
\begin{align}
 \Cov(\bar u_{0,m})
 &=\frac1{m^2}\left[m\Gamma_0^{\rm lag}
 +\sum_{j=1}^{m-1}(m-j)
   (\Gamma_j^{\rm lag}+(\Gamma_j^{\rm lag})^\trans)\right],
 \label{eq:exact-cov}\\
 &=\frac{\Vstar}{bm}-\frac{\Delta_{\rm stat}}{bm^2}+R_m.      \label{eq:cov-expansion}
\end{align}
If $\norm{A^j}\le K\rho_A^j$, then
\[
 \norm{R_m}\le
 \frac{2K\norm{\mathsf O_\gamma}^2\norm{P_1}}{b m^2}
 \frac{\rho_A^{m+1}}{(1-\rho_A)^2}.
\]
\end{theorem}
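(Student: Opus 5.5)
Under stationarity the state sequence $\mathbf z_k$ is a stationary vector AR(1) process, and the plan is to work directly with its autocovariances. Since $\mathbf z_{k+j}=A^j\mathbf z_k+\sum_{i=1}^{j}A^{j-i}B_\alpha\eta^{(b)}_{k+i}$ and the future noises are independent of $\mathbf z_k$, we get $\Cov(\mathbf z_k,\mathbf z_{k+j})=\frac1b P_1(A^\trans)^j$. Applying the output map $u_k=\mathsf O_\gamma\mathbf z_k$ yields $\Cov(u_k,u_{k+j})=\Gamma_j^{\rm lag}$ and $\Cov(u_{k+j},u_k)=(\Gamma_j^{\rm lag})^\trans$. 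The mean is zero because the stationary law is centered.

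For \cref{eq:exact-cov}, expand $\Cov(\sum_{k<m}u_k)$ as a double sum and group the pairs by their lag $j=|k-k'|$. Each $j\ge1$ occurs $m-j$ times in each orientation, and dividing by $m^2$ gives \cref{eq:exact-cov}.

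For the expansion, rewrite $\sum_{j=1}^{m-1}(m-j)(A^\trans)^j$ in closed form. With $X=A^\trans$, we have $\sum_{j\ge1}(m-j)X^j=mX(I-X)^{-1}-X(I-X)^{-2}$ for the infinite-horizon part. Truncating at $j\le m-1$ leaves a tail $\sum_{j\ge m}(m-j)X^j$, which equals $-X^{m+1}(I-X)^{-2}$ by the standard computation; that tail is the remainder. The leading term then reads
\[
\frac1{bm}\mathsf O_\gamma\bigl[P_1+P_1X(I-X)^{-1}+(I-X^\trans)^{-1}X^\trans P_1\bigr]\mathsf O_\gamma^\trans .
\]
To identify this with $V_\star$, I use the long-run covariance identity. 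From the Lyapunov equation one has $P_1+P_1A^\trans(I-A^\trans)^{-1}+(I-A)^{-1}AP_1=(I-A)^{-1}B_\alpha\Sigma B_\alpha^\trans(I-A^\trans)^{-1}$. Verifying this is the one nontrivial algebraic step: multiply on the left by $I-A$ and on the right by $I-A^\trans$, and it reduces to $P_1-AP_1A^\trans=B_\alpha\Sigma B_\alpha^\trans$. Combining it with the DC-gain identity \cref{eq:dcgain}, $\mathsf O_\gamma(I-A)^{-1}B_\alpha=-H^{-1}$, the bracket becomes $H^{-1}\Sigma H^{-1}=V_\star$.

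The $1/m^2$ term is $-\frac1{bm^2}\mathsf O_\gamma[P_1X(I-X)^{-2}+(I-X^\trans)^{-2}X^\trans P_1]\mathsf O_\gamma^\trans$. Since $(I-A)^{-2}$ commutes with $A$, this matches $\Delta_{\rm stat}$ in \cref{eq:delta-stat}, including its ordering convention. I will double-check which factor is transposed there so that it aligns with how $\Gamma_j^{\rm lag}$ is defined.

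For the remainder, $R_m=\frac1{bm^2}\mathsf O_\gamma[P_1X^{m+1}(I-X)^{-2}+\text{transpose}]\mathsf O_\gamma^\trans$ up to sign. Rather than bounding $\|(I-X)^{-2}\|$ directly, I would bound the tail series itself: $\|\sum_{j\ge m}(j-m)X^j\|\le K\sum_{i\ge0}i\rho_A^{m+i}=K\rho_A^{m+1}/(1-\rho_A)^2$. Adding the transpose gives the factor $2$ and the claimed bound. The main obstacle is the bookkeeping in this decomposition: the infinite sums must converge (they do, since $\rho(A)<1$), and splitting the remainder as a tail series is what avoids needing a bound on $\|(I-A)^{-1}\|$ beyond $K/(1-\rho_A)$.
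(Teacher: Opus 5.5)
Your proposal is correct and is essentially the paper's argument. You use the same lag grouping for \cref{eq:exact-cov} and the same split of $\sum_{j=1}^{m-1}(m-j)(\cdot)$ into three pieces: an infinite-horizon part, a first-moment term $\sum_{j\ge1}jA^j=A(I-A)^{-2}$ that yields $\Delta_{\rm stat}$, and the tail $\sum_{j\ge m}(j-m)(A^\trans)^j$ bounded termwise by $K\rho_A^{m+1}/(1-\rho_A)^2$.

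The one difference is how you get the long-run covariance identity. You verify it algebraically from the Lyapunov equation, multiplying by $I-A$ and $I-A^\trans$. The paper instead obtains $\Gamma_0^{\rm lag}+\sum_{j\ge1}(\Gamma_j^{\rm lag}+(\Gamma_j^{\rm lag})^\trans)=\Vstar/b$ from the moving-average representation $u_k=\sum_{r\ge0}\mathsf O_\gamma A^rB_\alpha\eta^{(b)}_{k-r}$. Both routes then finish with \cref{eq:dcgain}.
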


\begin{proof}[Proof of \cref{thm:cov-expansion}]
At stationarity, $\Cov(\mathbf z_k)=P_1/b$, and for $j\ge0$,
\[
 \Gamma_j^{\rm lag}=\Cov(u_0,u_j)
 =b^{-1}\mathsf O_\gamma P_1(A^\trans)^j\mathsf O_\gamma^\trans.
\]
Expanding the covariance of $m^{-1}\sum_{k=0}^{m-1}u_k$ and grouping
pairs by lag gives \cref{eq:exact-cov}.

Let $\mathcal G_j^{\rm lag}=\Gamma_j^{\rm lag}+(\Gamma_j^{\rm lag})^\trans$ and
\[
 \Sigma_{\rm LR}=\Gamma_0^{\rm lag}+\sum_{j=1}^\infty \mathcal G_j^{\rm lag}.
\]
The moving-average representation
$u_k=\sum_{r=0}^\infty \mathsf O_\gamma A^rB_\alpha\eta_{k-r}^{(b)}$ implies
\begin{align*}
 \Sigma_{\rm LR}
 =\left(\sum_{r=0}^\infty \mathsf O_\gamma A^rB_\alpha\right)\frac\Sigma b
   \left(\sum_{r=0}^\infty \mathsf O_\gamma A^rB_\alpha\right)^\trans
 =\frac1b \mathsf O_\gamma(I-A)^{-1}B_\alpha\Sigma B_\alpha^\trans
      (I-A^\trans)^{-1}\mathsf O_\gamma^\trans
 =\frac{\Vstar}{b},
\end{align*}
where the last equality uses \cref{eq:dcgain}.  Hence
\begin{align}
 m^2\Cov(\bar u_{0,m})
 =m\Sigma_{\rm LR}-\sum_{j=1}^{m-1}j\mathcal G_j^{\rm lag}
       -m\sum_{j=m}^\infty \mathcal G_j^{\rm lag}
 =m\Sigma_{\rm LR}-\sum_{j=1}^\infty j\mathcal G_j^{\rm lag}
   +\sum_{j=m}^\infty(j-m)\mathcal G_j^{\rm lag}. \label{eq:cov-sum-rewrite}
\end{align}
Since $\sum_{j\ge1}jA^j=A(I-A)^{-2}$,
$\sum_{j=1}^\infty j\mathcal G_j^{\rm lag}=\Delta_{\rm stat}/b$. Dividing
\cref{eq:cov-sum-rewrite} by $m^2$ proves \cref{eq:cov-expansion}.  Finally,
\[
 \|R_m\|
 \le \frac{2K\|\mathsf O_\gamma\|^2\|P_1\|}{bm^2}\sum_{j=m}^\infty(j-m)\rho_A^j
 =\frac{2K\|\mathsf O_\gamma\|^2\|P_1\|}{bm^2}\frac{\rho_A^{m+1}}{(1-\rho_A)^2},
\]
which is the stated remainder bound.
\end{proof}

Thus the leading covariance is method-independent while the integrated
autocovariance moment $\Delta_{\rm stat}$ is not. In a finite sample, a method
can approach $\Vstar/(bm)$ from above, from below, or anisotropically.

\subsection{Meaning and Limits of the ERM Comparison}
\label{sec:erm-interpretation}

Let $\mathfrak b_{\rm out}=\E(\hat x-x_\star)$ and
$W=\Cov(\hat x)$ for any output with
finite second moments. Its two quadratic risks are 
\begin{align}
 \E\|\hat x-x_\star\|^2&=\|\mathfrak b_{\rm out}\|^2+\tr W,
 \label{eq:parameter-risk}\\
 \E[f(\hat x)-f(x_\star)]
 &=\tfrac12\mathfrak b_{\rm out}^\trans H\mathfrak b_{\rm out}
   +\tfrac12\tr(HW).
 \label{eq:function-risk}
\end{align}
For the ERM, these are respectively
$\tr(H^{-1}\Sigma H^{-1})/N$ and
$\tr(H^{-1}\Sigma)/(2N)$. Thus function risk and parameter risk can
weight the spectrum differently. For the online estimator the
moment formulas justify any
corresponding leading-risk statement.
In a regular correctly specified likelihood model, the additional
information identity $\Sigma=H$ gives covariance $H^{-1}/N$.

\begin{remark}[offline expansions and two confidence sets]
\label{rem:wilks-wald}
With $\bar\eta=\bar\eta^{(b)}_{t,m}$, the empirical risk of the retained-window
observations is
$L_N(x)=\tfrac12\inner{x-x_\star}{H(x-x_\star)}+\inner{\bar\eta}{x-x_\star}$.
Hence the Fisher expansion $\hat x^{\rm ERM}_{t,m,b}-x_\star=-H^{-1}\bar\eta$ and the
Wilks expansion
$2N\{L_N(x_\star)-L_N(\hat x^{\rm ERM}_{t,m,b})\}=N\bar\eta^\trans H^{-1}\bar\eta$
hold without remainder. For a cutoff $c_{\rm Wilks}\ge0$, the likelihood-type set
$\{x:N(x-\hat x^{\rm ERM}_{t,m,b})^\trans H
(x-\hat x^{\rm ERM}_{t,m,b})\le c_{\rm Wilks}\}$ has the
shape of $H$, and under Gaussian noise its quantile is that of a weighted
chi-square law with weights $\spec(H^{-1/2}\Sigma H^{-1/2})$. The PR ellipsoid has
the shape of $V_\star^{-1}=H\Sigma^{-1}H$. The two shapes coincide if and only if
$\Sigma$ is proportional to $H$, which includes the information identity
$\Sigma=H$; otherwise the likelihood-type quantile has to be estimated, for
instance by the bootstrap under misspecification of \citet{spokoiny2015bootstrap}.
By \cref{prop:observable-bridge}, every such offline statement transfers to the
online output up to the bridge correction $H^{-1}\bar g_{t,m}$, measured in the
corresponding metric.  Computing this correction requires known curvature or a
separately charged estimate of it.
\end{remark}

\begin{remark}[Information classes and a curvature correction]
\label{rem:info-classes}
The ERM uses the Hessian of the empirical risk, whereas the online methods
considered here use only first-order information and, at most, the scalar
parameters $\mu$ and $L$. For any symmetric $\widehat H\succ0$, the corrected
output $x_\star+\bar u_{t,m}-\widehat H^{-1}\bar g_{t,m}$ is computable from the
queried points and the observed gradients, and by \cref{eq:observable-bridge}
\[
 \bigl(x_\star+\bar u_{t,m}-\widehat H^{-1}\bar g_{t,m}\bigr)
 -\hat x^{\rm ERM}_{t,m,b}
 =(I-\widehat H^{-1}H)\,H^{-1}\bar g_{t,m}.
\]
The correction thus reduces the online--offline gap $H^{-1}\bar g_{t,m}$ by the
relative curvature error $\|I-\widehat H^{-1}H\|_{\rm op}$; oracle calls used to
estimate $H$, as in \cref{prop:calibration-new}, belong to the budget. With known
$H$, a single query point already attains the minimax risk
\cref{eq:minimax-location-new}: the estimator $x_0-H^{-1}\bar G(x_0)$, where
$\bar G(x_0)$ averages $N$ oracle answers at $x_0$, equals the ERM of these
observations. The comparison of online methods in this paper therefore concerns
procedures that do not know $H$.
\end{remark}

\begin{proposition}[Gaussian lower bound in a specified class]
\label{prop:gaussian-minimax-new}
For known $H\succ0$, $\Sigma\succ0$ and $N$ independent calls of the additive
Gaussian oracle, including calls at adaptively chosen points,
\begin{equation}
 \inf_{\widetilde x}\sup_{x_\star\in\mathbb R^d}
 \E_{x_\star}\|\widetilde x-x_\star\|^2
 =\frac{\tr(H^{-1}\Sigma H^{-1})}{N}.
 \label{eq:minimax-location-new}
\end{equation}
The infimum is taken over all estimators using these calls.
\end{proposition}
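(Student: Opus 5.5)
The plan is to prove the two inequalities separately. For the upper bound, an explicit non-adaptive estimator attains $\tr(H^{-1}\Sigma H^{-1})/N$ for every $x_\star$. For the lower bound, a Bayes-risk argument with a flat Gaussian prior, after first neutralising adaptivity by a likelihood factorisation.

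\emph{Upper bound.} As in \cref{rem:info-classes}, spend all $N$ calls at one fixed point $x_0$ and return $\widetilde x=x_0-H^{-1}\bar G(x_0)$. Since $G(x_0,\zeta_i)=H(x_0-x_\star)+\xi_i$, we get $\widetilde x-x_\star=-H^{-1}\bar\xi$. Hence $\E_{x_\star}\|\widetilde x-x_\star\|^2=\tr(H^{-1}\Sigma H^{-1})/N$ for every $x_\star$, which bounds the minimax risk from above.

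\emph{Reduction of adaptivity.} Let the $k$-th query be $x_k$, drawn from a kernel $\pi_k(\cdot\mid G_1,\dots,G_{k-1},U)$, where $U$ is internal randomisation independent of $x_\star$. Write the response as $G_k=H(x_k-x_\star)+\xi_k$ with $\xi_k\sim\cN(0,\Sigma)$ independent of the past. The joint density of $(U,x_1,G_1,\dots,x_N,G_N)$ is
\[
 p_U(U)\prod_{k=1}^N\pi_k(x_k\mid\cdot)\,\varphi_\Sigma\bigl(G_k-Hx_k+Hx_\star\bigr),
\]
where $\varphi_\Sigma$ is the $\cN(0,\Sigma)$ density. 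Only the Gaussian factors depend on $x_\star$. Put $y_k:=Hx_k-G_k$, which is observable because $H$ is known. Then the likelihood in $x_\star$ is $\prod_k\varphi_\Sigma(y_k-Hx_\star)$, exactly as for $N$ i.i.d.\ observations from $\cN(Hx_\star,\Sigma)$. Consequently the posterior of $x_\star$ under any prior depends on the data only through $\bar y=N^{-1}\sum_ky_k$. The posterior is the same as in the non-adaptive model in which $\bar y\sim\cN(Hx_\star,\Sigma/N)$.

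\emph{Bayes bound.} Take the prior $x_\star\sim\cN(0,\tau^2I_d)$. The posterior covariance is $P_\tau=(\tau^{-2}I+NH\Sigma^{-1}H)^{-1}$, and it is nonrandom. For any estimator $\widetilde x$, the posterior-mean decomposition gives Bayes risk at least $\tr P_\tau$. Since the maximum over $x_\star$ dominates the average over the prior,
\[
 \sup_{x_\star}\E_{x_\star}\|\widetilde x-x_\star\|^2\ge\tr P_\tau\xrightarrow[\tau\to\infty]{}\frac{\tr\bigl((H\Sigma^{-1}H)^{-1}\bigr)}N=\frac{\tr(H^{-1}\Sigma H^{-1})}N.
\]
Taking the infimum over $\widetilde x$ and combining with the upper bound proves \cref{eq:minimax-location-new}.

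\emph{Main obstacle.} The delicate step is the adaptivity reduction. One must check that the query kernels $\pi_k$ and the internal randomisation carry no information about $x_\star$, so that they cancel in the posterior and do not enter it. One must also check that conditioning on the whole history (queries and $U$) still leaves the Gaussian posterior with covariance $P_\tau$. Writing the joint law as the displayed product, and applying Bayes' formula conditionally on $U$, should settle both points. After that, the Gaussian conjugate computation is routine.
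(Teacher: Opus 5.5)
Your proof is correct and follows essentially the same route as the paper: you reduce the adaptive transcript to $N$ i.i.d.\ Gaussian location observations (your $y_k=Hx_k-G_k$ equals $HY_k$, where $Y_k=x_k-H^{-1}G_k$ is the paper's observation), bound the minimax risk below by the Bayes risk under $\mathcal N(0,\tau^2I_d)$ with $\tau\to\infty$, and attain the bound by averaging the Newton-corrected observations. The only difference is presentational: you neutralize adaptivity by factorizing the likelihood, whereas the paper notes that the transcript can be reconstructed from $(Y_k)$ and the independent seed, and both arguments establish the same reduction.
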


\begingroup
\color{black}
\begin{proof}[Proof of \cref{prop:gaussian-minimax-new}]
At any possibly adaptive query $x_i$, form the observable vector
\[
 Y_i=x_i-H^{-1}G(x_i)=x_\star-H^{-1}\xi_i.
\]
The $Y_i$ are independent $\cN(x_\star,V_\star)$ observations.
Conversely, given these observations and the procedure's random seed,
one reconstructs its successive queries and answers by
$G(x_i)=H(x_i-Y_i)$. Thus the adaptive transcript contains 
this Gaussian location information; independent algorithmic
randomization cannot decrease the optimal squared-error Bayes risk.

Give $x_\star$ the prior $\cN(0,\tau^2I_d)$. The posterior covariance
after $N$ observations is
$(\tau^{-2}I_d+NV_\star^{-1})^{-1}$, so every estimator has maximal
risk at least its trace. Letting $\tau\to\infty$ gives the lower
bound $\tr(V_\star)/N$. The estimator $N^{-1}\sum_iY_i$ has 
that risk for every $x_\star$, proving equality.
\end{proof}
\par
\endgroup

The class in \cref{eq:minimax-location-new} does not restrict $\|e_0\|$. For example, for
$e_0=0$ the cold average can have risk below the PR term. For the full average at
finite $m$, the supremum of the risk over unbounded $x_\star$ is even infinite
because of the bias. In the joint small-noise regime, the formula gives a lower
bound only for the stated Gaussian location problem at each $N$.

\FloatBarrier
\section{Further Calibration and Inference Results}
\label{app:further-calibration}
\dd{This section develops additional inference guarantees beyond the main
calibration results. It covers normal approximation, confidence-set
accuracy, the effects of burn-in and switching, and data-independent
selection among specified procedures.}

\subsection{Normal Approximation and the Actual Variance}
\label{sec:normal-approximation}

\subsubsection{A Boundary-Based Normal Approximation}
For a unit direction $a$, set $\sigma_a^2=a^\trans\Vstar a$ and
$\rho_a=\E\abs{a^\trans H^{-1}\xi}^{3}$.  The triangle inequality for Wasserstein distance and the classical Berry--Esseen bound applied to the first term in \cref{eq:exact-decomp} give the following finite-$m$ statement.

\begin{corollary}[Directional Gaussian approximation]
\label{cor:be}
If $\sigma_a^2>0$ and $\rho_a<\infty$, then for $N=mb$ there is a universal
constant $C_{\rm WBE}$ such that
\begin{equation}
 \Wass\!\left(\cL\!\left(\sqrt N\,a^\trans\bar u_{t,m}\right),
              \cN(0,\sigma_a^2)\right)
 \le \frac{C_{\rm WBE}\rho_a}{\sigma_a^2\sqrt N}
 +\frac{\sqrt{b/m}}{\alpha}\,
   \E\abs{a^\trans H^{-1}\Delta_{t,m}}.                         \label{eq:be-bound}
\end{equation}
\end{corollary}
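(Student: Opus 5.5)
The plan is to project the boundary decomposition of \cref{thm:exact} onto the direction $a$ and then compare the two pieces separately. Multiplying \cref{eq:exact-decomp} by $\sqrt N\,a^\trans$ with $N=mb$ gives
\[
 \sqrt N\,a^\trans\bar u_{t,m}
 =\underbrace{-\frac1{\sqrt N}\sum_{k=t}^{t+m-1}\sum_{j=1}^b a^\trans H^{-1}\xi_{k+1,j}}_{=:T_1}
 +\underbrace{\frac{\sqrt N}{\alpha m}\,a^\trans H^{-1}\Delta_{t,m}}_{=:T_2},
\]
and $\sqrt N/(\alpha m)=\sqrt{b/m}/\alpha$. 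The term $T_1$ is a normalized sum of $N$ i.i.d.\ centered scalars $X_i:=-a^\trans H^{-1}\xi_i$ with variance $a^\trans H^{-1}\Sigma H^{-1}a=\sigma_a^2$ and absolute third moment $\rho_a$. No independence between $T_1$ and $T_2$ is needed, because the argument uses only a coupling bound.

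The key step is the Wasserstein triangle inequality combined with the elementary coupling estimate $\Wass(\cL(X+Y),\cL(X))\le\E|Y|$, valid for any jointly defined $X,Y$. This gives
\[
 \Wass\bigl(\cL(T_1+T_2),\cN(0,\sigma_a^2)\bigr)\le \Wass\bigl(\cL(T_1),\cN(0,\sigma_a^2)\bigr)+\E|T_2|,
\]
and $\E|T_2|$ equals exactly the second term of \cref{eq:be-bound}. If $\E|a^\trans H^{-1}\Delta_{t,m}|=\infty$, the bound is trivial.

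For $T_1$ I will invoke the classical Wasserstein Berry--Esseen inequality for i.i.d.\ sums. Writing $T_1=\sigma_a\cdot N^{-1/2}\sum_i X_i/\sigma_a$, it gives $\Wass(\cL(N^{-1/2}\sum X_i/\sigma_a),\cN(0,1))\le C\,\E|X/\sigma_a|^3/\sqrt N=C\rho_a/(\sigma_a^3\sqrt N)$. Rescaling by $\sigma_a$ multiplies the Wasserstein distance by $\sigma_a$, which yields $C\rho_a/(\sigma_a^2\sqrt N)$. This matches the stated first term with $C_{\rm WBE}=C$.

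The only step requiring care is this normalization bookkeeping: checking that the third-moment ratio scales as $\rho_a/\sigma_a^2$ after undoing the standardization, and noting that the constant is universal, for example $C\le1$ by the known Wasserstein CLT bounds. Everything else is immediate from \cref{thm:exact}.
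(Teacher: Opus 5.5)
Your proposal is correct and follows the paper's proof essentially step for step. Like the paper, you project the identity of \cref{thm:exact} onto $a$ and apply the one-dimensional Wasserstein Berry--Esseen bound to the i.i.d.\ sum of $a^\trans H^{-1}\xi$, rescaling from $\rho_a/\sigma_a^3$ to $\rho_a/\sigma_a^2$; you then absorb the boundary term through the coupling bound $\Wass(\cL(X+Y),\cL(X))\le\E|Y|$ and the triangle inequality.
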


\begin{proof}[Proof of \cref{cor:be}]
Set $Y_a=a^\trans H^{-1}\xi$, so that
$\E Y_a=0$, $\Var(Y_a)=\sigma_a^2$, and $\E|Y_a|^3=\rho_a$.  The
one-dimensional Wasserstein Berry--Esseen inequality gives
\[
 \Wass\!\left(
   \cL\!\left(\frac1{\sqrt N}\sum_{i=1}^N Y_{a,i}\right),
   \cN(0,\sigma_a^2)\right)
 \le \frac{C_{\rm WBE}\rho_a}{\sigma_a^2\sqrt N}.
\]
The sign of the sum is immaterial.  By \cref{eq:exact-decomp}, the remaining
scalar perturbation is
\[
 R_{\rm BE}=\frac{\sqrt N}{\alpha m}a^\trans H^{-1}\Delta_{t,m}.
\]
For random variables $X,R_{\rm BE}$ on the same probability space,
$\Wass(\cL(X+R_{\rm BE}),\cL(X))\le\E|R_{\rm BE}|$.  The triangle inequality now yields
\cref{eq:be-bound}.
\end{proof}

Here $\Wass$ is the Wasserstein distance of order one. This general perturbation bound is valid but can be loose; \cref{sec:sharp-normal} instead uses the independent weights. The first term depends only on the total number of oracle samples.  The second depends on the number of \emph{sequential rounds} and on the optimization dynamics.  This split is the distributional analogue of ``statistical error plus optimization error,'' but now the statistical term has its covariance and a quantitative normal-approximation scale.

\begin{proposition}
\label{prop:polynomial-impulses}
Let $r_{\rm pole}\ge1$ be fixed and let
$d_i^{\rm imp}\ge0$, $d_0^{\rm imp}>0$, satisfy
\[
 i\longmapsto
 \frac{d_i^{\rm imp}}{\binom{i+r_{\rm pole}-1}{r_{\rm pole}-1}}
 \quad\text{is nonincreasing}.
\]
Set $w_j=\sum_{i=0}^{j-1}d_i^{\rm imp}$, $m\ge2$, and let
$\varepsilon_{j,k}$ be i.i.d. standardized noises.  The weighted statistic
\[
 T_{m,b}=\frac{\sum_{j=0}^{m-1}\sum_{k=1}^b w_j\varepsilon_{j,k}}
                   {\sqrt{b\sum_{j=0}^{m-1}w_j^2}}
\]
obeys
\[
 d_{\rm K}(\cL(T_{m,b}),\cN(0,1))
 \le\frac{C_{\rm BE}\sqrt{2r_{\rm pole}+1}\,\nu_3}{\sqrt{mb}},
 \qquad
 \Wass(\cL(T_{m,b}),\cN(0,1))
 \le\frac{3\sqrt{2r_{\rm pole}+1}\,\nu_3}{\sqrt{mb}}.
\]
With a common finite-variance noise law, $T_{m,b}\xrightarrow{\mathrm d}\cN(0,1)$
whenever $mb\to\infty$, even if the impulse sequence varies arbitrarily
subject to the displayed assumption.  This includes repeated-pole
impulses
$d_i^{\rm imp}=c_{\rm imp}
 \binom{i+r_{\rm pole}-1}{r_{\rm pole}-1}\rho_{\rm imp}^i$,
$c_{\rm imp}>0$, $\rho_{\rm imp}\in[0,1)$,
and their positive mixtures. SGD admits $r_{\rm pole}=1$, giving $\sqrt3$ in
place of the common $\sqrt5$ constant for the three methods.
\end{proposition}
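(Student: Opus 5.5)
The plan is to reduce everything to the independent-summand Berry--Esseen inequality, exactly as in the proof of \cref{thm:sharp-normal}. The only new ingredient is a lower bound on $\sum_j w_j^2$ in terms of $m\max_j w_j^2$ that uses only the monotonicity assumption on the normalized impulses.

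\textbf{Step 1: the weights are monotone.} Since $d_i^{\rm imp}\ge0$, the weights $w_j=\sum_{i<j}d_i^{\rm imp}$ are nondecreasing. Hence $\max_{j<m}w_j=w_{m-1}$. Moreover $w_{m-1}\ge d_0^{\rm imp}>0$ for $m\ge2$, so the denominator of $T_{m,b}$ is positive.

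\textbf{Step 2: comparison of weights.} Write $\pi_i:=\binom{i+r_{\rm pole}-1}{r_{\rm pole}-1}$ and $\bar d_i:=d_i^{\rm imp}/\pi_i$, which is nonincreasing by hypothesis. A nonincreasing sequence has a prefix weighted average at least its full-range weighted average:
\[
\frac{\sum_{i<j}\bar d_i\pi_i}{\sum_{i<j}\pi_i}\ge\frac{\sum_{i<m-1}\bar d_i\pi_i}{\sum_{i<m-1}\pi_i}.
\]
To see this, cross-multiply and use that $\bar d_i\ge\bar d_k$ whenever $i<j\le k$. By the hockey-stick identity, $\sum_{i<j}\pi_i=\binom{j+r_{\rm pole}-1}{r_{\rm pole}}$. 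Together these give
\[
\frac{w_j}{w_{m-1}}\ge\frac{\binom{j+r_{\rm pole}-1}{r_{\rm pole}}}{\binom{m+r_{\rm pole}-2}{r_{\rm pole}}}\ge\Bigl(\frac{j}{m-1}\Bigr)^{r_{\rm pole}}.
\]
The last inequality holds factor by factor, since $(j+i)/(m-1+i)\ge j/(m-1)$ for $i\ge0$.

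\textbf{Step 3: the key lower bound.} Summing the squares of Step 2 gives
\[
\sum_{j<m}w_j^2\ge w_{m-1}^2\sum_{j<m}\bigl(j/(m-1)\bigr)^{2r_{\rm pole}}\ge\frac{m}{2r_{\rm pole}+1}\,w_{m-1}^2.
\]
I expect this power-sum bound to be the only delicate point. It follows from the integral comparison $\sum_{j=0}^{m-1}(j/(m-1))^{p}\ge(m-1)\int_0^1x^p\,dx+\tfrac12\cdot$(endpoint term)$\ge m/(p+1)$. One must check the endpoint contribution carefully; the value $1$ at $j=m-1$ supplies the missing $1/(p+1)$.

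\textbf{Step 4: Berry--Esseen and the CLT.} $T_{m,b}$ is a normalized sum of $mb$ independent terms $w_j\varepsilon_{j,k}$ with total variance one. Lyapunov's ratio is
\[
\frac{b\sum_j w_j^3\,\nu_3}{(b\sum_j w_j^2)^{3/2}}\le\frac{\nu_3\max_j w_j}{\sqrt b\,(\sum_jw_j^2)^{1/2}}\le\frac{\sqrt{2r_{\rm pole}+1}\,\nu_3}{\sqrt{mb}},
\]
using Step 3. The Kolmogorov bound then follows from the Berry--Esseen theorem with constant $C_{\rm BE}$, and the Wasserstein bound from the Wasserstein Berry--Esseen inequality with constant $3$, as in \cref{thm:sharp-normal}. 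Under only a common finite-variance law, Step 3 shows that the maximal standardized coefficient is at most $\sqrt{(2r_{\rm pole}+1)/(mb)}\to0$. Lindeberg's condition then follows by uniform integrability of $\varepsilon^2$, as in \cref{cor:calibrated-clt}.

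\textbf{Step 5: examples.} For repeated-pole impulses, $\bar d_i=c_{\rm imp}\rho_{\rm imp}^i$ is nonincreasing. For positive mixtures with the same $r_{\rm pole}$, $\bar d_i$ is a positive combination of nonincreasing sequences and is therefore nonincreasing. For SGD, $d_i^{\rm imp}\propto\varrho^i$, so we may take $r_{\rm pole}=1$, which gives the constant $\sqrt3$.
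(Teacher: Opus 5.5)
Your proposal is correct and follows the paper's proof essentially step for step. You use the prefix weighted-average comparison plus the hockey-stick identity to get $w_j/w_{m-1}\ge (j/(m-1))^{r_{\rm pole}}$. You then use the trapezoidal bound for the convex function $x^{2r_{\rm pole}}$, which reads $\sum_{j<m}(j/(m-1))^{2r_{\rm pole}}\ge \frac{m-1}{2r_{\rm pole}+1}+\frac12\ge\frac{m}{2r_{\rm pole}+1}$; the endpoint contributes $\tfrac12\ge 1/(2r_{\rm pole}+1)$, not exactly $1/(2r_{\rm pole}+1)$. Berry--Esseen and Lindeberg then finish the argument, exactly as in the paper.

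The only real difference is how you prove the binomial-ratio bound. You compare factor by factor, which is valid for integer $r_{\rm pole}$. The paper instead shows that $D_j/j^{r_{\rm pole}}$ is nonincreasing, with $D_j=\binom{j+r_{\rm pole}-1}{r_{\rm pole}}$, using $D_{j+1}/D_j=1+r_{\rm pole}/j$ and Bernoulli's inequality; that argument also extends, via gamma functions, to non-integer $r_{\rm pole}$.
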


\begingroup
\color{black}
\begin{proof}[Proof of \cref{prop:polynomial-impulses}]
Write $r=r_{\rm pole}$ and
$D_j=\sum_{i=0}^{j-1}\binom{i+r-1}{r-1}
     =\binom{j+r-1}{r}$ for $j\ge1$, with $D_0=0$.
The nonincreasing normalized impulses imply that their weighted
average over any prefix is at least their weighted average over
$0,\ldots,m-2$. Hence
\[
 \frac{w_j}{w_{m-1}}\ge\frac{D_j}{D_{m-1}}
 \ge\left(\frac{j}{m-1}\right)^r,\qquad 0\le j<m.
\]
For completeness, the second inequality follows because
$D_{j+1}/D_j=1+r/j$ and
$(1+1/j)^r\ge1+r/j$ for $r\ge1$, so $D_j/j^r$ is nonincreasing.
This argument also applies if the binomial coefficients are
interpreted through the gamma function for noninteger $r$.
Convexity of $x^{2r}$ and the trapezoidal rule give
\[
 \sum_{j=0}^{m-1}\left(\frac{j}{m-1}\right)^{2r}
 \ge\frac{m-1}{2r+1}+\frac12
 \ge\frac m{2r+1}.
\]
It follows that $\sum_jw_j^2\ge m w_{m-1}^2/(2r+1)$.
The individual standardized coefficients
$a_{j,k}=w_j/\sqrt{b\sum_\ell w_\ell^2}$ thus satisfy
\[
 \sum_{j,k}a_{j,k}^2=1,\qquad
 \max_{j,k}|a_{j,k}|\le\sqrt{\frac{2r+1}{mb}},\qquad
 \sum_{j,k}|a_{j,k}|^3\le\sqrt{\frac{2r+1}{mb}}.
\]
The independent-summand Kolmogorov and Wasserstein
Berry--Esseen inequalities yield the two claimed bounds.
With only the common finite second moment, the Lindeberg sum is at most
\[
 \E\!\left[\varepsilon^2
   \mathbf 1_{\{|\varepsilon|>
          \epsilon\sqrt{mb/(2r+1)}\}}\right]\longrightarrow0
\]
for every $\epsilon>0$, which proves the triangular-array CLT.
Repeated-pole impulses and their positive mixtures satisfy the
assumed monotonicity directly.
\end{proof}
\par
\endgroup

\begin{proposition}[The covariance-calibration term is sharp]
\label{prop:sharp-normal-lower}
Suppose the noise is Gaussian.  Define
$Q_\infty=\sum_{j\ge0}r_j^2$ and
$\Delta_\infty^{\rm cov}=2S_\infty-Q_\infty$.
For fixed $\kappa$ and $m\to\infty$,
\begin{align*}
 \Wass(\cL(Z_m-h_m),\cN(0,1))
 &=\frac{\sqrt{2/\pi}}2\frac{\Delta_\infty^{\rm cov}}{m}+o(m^{-1}),\\
 d_{\rm K}(\cL(Z_m-h_m),\cN(0,1))
 &=\frac{\phi(1)}2\frac{\Delta_\infty^{\rm cov}}{m}+o(m^{-1}).
\end{align*}
Here $S_\infty\le\Delta_\infty^{\rm cov}\le2S_\infty$. If simultaneously
$\kappa\to\infty$ and $m/S_\infty\to\infty$, the same formulas hold with
$\Delta_\infty^{\rm cov}/m$ replaced by
$c_{\rm alg}S_\infty/m$ and remainder $o(S_\infty/m)$, where
\[
 c_{\rm SGD}=\frac32,\qquad c_{\rm HB}=c_{\rm NAG}=\frac{11}{8}.
\]
Therefore the term $S_\infty/m$ cannot be replaced uniformly by
$o(S_\infty/m)$ when the
target variance is the PR variance, even for Gaussian noise and after exact
centering.
\end{proposition}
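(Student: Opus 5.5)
Under Gaussian noise the centered statistic $Z_m-h_m$ is exactly Gaussian for every $m$. By \cref{eq:cold-exact-law} it is a linear combination of Gaussian inputs with weights $-(1-r_j)/\sqrt N$, so
\[
 \cL(Z_m-h_m)=\cN(0,v_m),\qquad v_m=1-\frac{2S_m-Q_m}{m}.
\]
Both distances therefore reduce to distances between two centered normals, which are explicit. Coupling through a common standard Gaussian, which is optimal in one dimension, gives
\[
 \Wass(\cN(0,v),\cN(0,1))=\sqrt{2/\pi}\,|1-\sqrt v|.
\]
For the Kolmogorov distance, $\sup_x|\Phi(x/\sqrt v)-\Phi(x)|$ is attained at the crossing point of the densities, which tends to $x=\pm1$ as $v\to1$. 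Expanding $\Phi(x/\sqrt v)$ in $v$ gives $\phi(1)\,|1-\sqrt v|\,(1+o(1))$. So the plan is to show $1-v_m\to0$, use $1-\sqrt{v_m}=\tfrac12(1-v_m)(1+O(1-v_m))$, and identify the leading order of $1-v_m$.

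For fixed $\kappa$ the response $r_j$ decays geometrically by \cref{prop:cold-bias}. Hence $2S_m-Q_m=\Delta_\infty^{\rm cov}+O(m\varrho^m)$, and
\[
 1-v_m=\frac{\Delta_\infty^{\rm cov}}{m}+o(m^{-1}).
\]
Combining this with the two exact normal-distance formulas gives the constants $\tfrac12\sqrt{2/\pi}$ and $\tfrac12\phi(1)$. The bounds $S_\infty\le\Delta_\infty^{\rm cov}\le2S_\infty$ follow termwise from $0\le r_j\le1$, exactly as in \cref{eq:proof-response-bounds}.

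In the joint regime $\kappa\to\infty$ with $m/S_\infty\to\infty$, the tail estimate in the proof of \cref{thm:uniform-calibration} gives $1-S_m/S_\infty\le(1+mg_\varrho)e^{-mg_\varrho}\to0$. The same argument, with $r_j^2\le r_j$, controls $Q_\infty-Q_m$. It then remains to compute $\Delta_\infty^{\rm cov}/S_\infty$ asymptotically from the closed form $r_j=(1+cj)\varrho^j$:
\begin{itemize}
\item For SGD, $Q_\infty=1/(1-\varrho^2)\sim\kappa/2$ and $S_\infty=\kappa$, so the ratio tends to $2-\tfrac12=\tfrac32$.
\item For NAG, the proof of \cref{thm:nag-spectrum} gives $S_\infty\sim2\varkappa$ and $Q_\infty\sim5\varkappa/4$, so the ratio tends to $(4-5/4)/2=11/8$.
\item For HB, $c=1-\varrho$ and $1-\varrho\sim2/\varkappa$. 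Using the series $\sum z^j$, $\sum jz^j$, $\sum j^2z^j$ at $z=\varrho^2$, I expect $Q_\infty\sim\tfrac58\varkappa$ and $S_\infty=\varkappa$, so the ratio tends to $2-\tfrac58=\tfrac{11}8$.
\end{itemize}

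The main obstacle is making the error terms uniform in the joint limit. Three pieces must hold uniformly while $\kappa$ varies:
\begin{itemize}
\item the Kolmogorov expansion, which needs only $1-v_m\to0$ and is therefore fine;
\item the tail errors $(S_\infty-S_m)/m$ and $(Q_\infty-Q_m)/m$, which must be $o(S_\infty/m)$ and are by the tail bound;
\item the replacement of $\Delta_\infty^{\rm cov}$ by $c_{\rm alg}S_\infty$, with error $o(S_\infty)$ coming from the explicit asymptotics above.
\end{itemize}
The final sharpness claim follows because the leading term is a positive constant times $S_\infty/m$.
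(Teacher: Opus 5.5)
Your proposal is correct and follows the paper's proof essentially step for step. Both arguments use exact Gaussianity of the centered statistic, the explicit Wasserstein and Kolmogorov distances between $\cN(0,v_m)$ and $\cN(0,1)$ expanded to first order in $1-v_m$, and geometric tails for fixed $\kappa$; in the joint regime they use the tail bound together with $r_j^2\le r_j$, and direct summation giving $Q_\infty/S_\infty\to 1/2$ for SGD and $5/8$ for HB and NAG. Your anticipated HB asymptotics $Q_\infty\sim\frac58\varkappa$ is indeed right, so the constants $3/2$ and $11/8$ follow as you state.
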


\begin{proof}
For Gaussian noise, the centered statistic is 
$\cN(0,v_m)$, where
\[
 1-v_m=\frac{A_m}{m},
 \qquad A_m:=2S_m-Q_m.
\]
For fixed $\kappa$, summability gives
$A_m\to\Delta_\infty^{\rm cov}$. Coupling two
centered normals with the same standard Gaussian and expanding the square
root yield
\[
 \Wass(\cN(0,v_m),\cN(0,1))
 =\sqrt{\frac2\pi}\,(1-\sqrt{v_m})
 =\frac{\sqrt{2/\pi}}2\frac{\Delta_\infty^{\rm cov}}{m}+o(m^{-1}).
\]
Similarly, uniformly in $x$, a first-order expansion of
$\Phi(x/\sqrt{v_m})-\Phi(x)$ gives
\[
 d_{\rm K}(\cN(0,v_m),\cN(0,1))
 =\frac{1}{2}\sup_x|x|\phi(x)\frac{\Delta_\infty^{\rm cov}}{m}+o(m^{-1})
 =\frac{\phi(1)}2\frac{\Delta_\infty^{\rm cov}}{m}+o(m^{-1}).
\]
Since $0\le r_j\le1$, one has $0\le Q_\infty\le S_\infty$, and hence
$S_\infty\le\Delta_\infty^{\rm cov}\le2S_\infty$.

In the joint regime $m/S_\infty\to\infty$, the response tails in
\cref{prop:cold-bias} give
$S_m=S_\infty+o(S_\infty)$ and
$Q_m=Q_\infty+o(S_\infty)$.
Direct summation of $(1+cj)^2\varrho^{2j}$ gives
\[
 \frac{Q_\infty}{S_\infty}\longrightarrow
 \begin{cases}
  1/2,&\mathrm{SGD},\\
  5/8,&\mathrm{HB\ or\ NAG},
 \end{cases}
\]
so $A_m/S_\infty\to3/2$ for SGD and
$A_m/S_\infty\to11/8$ for HB and NAG.
Substitution in the two normal-distance expansions proves the joint claims.
\end{proof}

\begin{proposition}[A continuum of finite-mixing Gaussian limits]
\label{prop:finite-mixing}
Fix one of the standard policies
$\mathrm{alg}\in\{\mathrm{SGD},\mathrm{HB},\mathrm{NAG}\}$ and let
$S_\infty=S_{\infty,{\rm alg}}$. Assume that the standardized noise inputs
$\xi/\sigma$
have a common iid law of mean zero and variance one across the sequence of
experiments. Allow $\kappa\to\infty$,
$m/S_\infty\to\tau_{\rm mix}\in(0,\infty)$, and any
integer batch sequence $b\ge1$.  Define the limiting response profiles by
\[
 R_{\rm SGD}(x)=e^{-x},\qquad
 R_{\rm HB}(x)=R_{\rm NAG}(x)=(1+2x)e^{-2x},\qquad x\ge0,
\]
and define
\[
 a_{\rm alg}(\tau_{\rm mix})
 =\frac1{\tau_{\rm mix}}\int_0^{\tau_{\rm mix}} R_{\rm alg}(x)\,dx,
 \qquad
 v_{\rm alg}(\tau_{\rm mix})
 =\frac1{\tau_{\rm mix}}\int_0^{\tau_{\rm mix}}
    (1-R_{\rm alg}(x))^2\,dx.
\]
Then $S_m/m\to a_{\rm alg}(\tau_{\rm mix})$,
$v_m\to v_{\rm alg}(\tau_{\rm mix})\in(0,1)$,
and
\[
 Z_m-h_m\xrightarrow{\mathrm d}\cN(0,v_{\rm alg}(\tau_{\rm mix})).
\]
If the start is local in the sense that
$\mu\sqrt N e_0/\sigma\to\vartheta\in\R$, the uncentered limit is
$\cN(\vartheta a_{\rm alg}(\tau_{\rm mix}),
       v_{\rm alg}(\tau_{\rm mix}))$.
HB and NAG have identical limiting profiles after time is measured in units
of their respective $S_{\infty,{\rm alg}}$.
\end{proposition}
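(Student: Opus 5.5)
The plan is to work entirely with the exact scalar law of \cref{thm:cold-exact} and the explicit responses of \cref{prop:cold-bias}. These give $S_m/m$, $v_m=m^{-1}\sum_{j<m}(1-r_j)^2$ and the centered statistic $Z_m-h_m$ as a weighted triangular array with weights $(1-r_{m-s})/\sqrt N$. The key observation is that, under the policy parameters, the response is $r_j=(1+cj)\varrho^j$, with the following asymptotics as $\kappa\to\infty$. For SGD, $1-\varrho=1/\kappa=1/S_\infty$ and $c=0$. For HB, $1-\varrho=2/(\sqrt\kappa+1)\sim 2/S_\infty$ and $c=1-\varrho$. For NAG, $1-\varrho=1/\sqrt\kappa\sim 2/S_\infty$ and $c\sim 1-\varrho$.

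Writing $j=xS_\infty$, I will show that $\sup_{j\ge0}|r_j-R_{\rm alg}(j/S_\infty)|\to0$. This uses $\varrho^{xS_\infty}\to e^{-x}$ for SGD and $e^{-2x}$ for HB and NAG, and $cj\to 2x$ in the accelerated cases. Uniformity on compacts comes from $\log\varrho=-(1-\varrho)+O((1-\varrho)^2)$ and $c S_\infty\to2$. On $[X,\infty)$ both $r_j$ (monotone, by \cref{prop:cold-bias}) and $R$ are small for large $X$, which closes the uniform bound.

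The averages are then Riemann sums. Since $m/S_\infty\to\tau_{\rm mix}$ and $S_\infty\to\infty$, I get $S_m/m=\frac{S_\infty}{m}\sum_{j<m}\frac1{S_\infty}r_j\to\tau_{\rm mix}^{-1}\int_0^{\tau_{\rm mix}}R$. The same argument applied to $(1-r_j)^2$ gives $v_m\to v_{\rm alg}$. The limit lies in $(0,1)$ because $0<1-R<1$ on $(0,\tau_{\rm mix}]$ for these profiles. The equality of the HB and NAG profiles is immediate from the computation above.

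For the distributional limit I will invoke \cref{cor:calibrated-clt}, which gives $(Z_m-h_m)/\sqrt{v_m}\Rightarrow\cN(0,1)$ for arbitrary $\kappa$ and $b$ under the common standardized law with $N\to\infty$. Here $N=mb\ge m\to\infty$. Slutsky with $v_m\to v_{\rm alg}$ then yields the centered limit. For the uncentered statement, $h_m=\theta\sqrt{b/m}\,S_m=(\mu\sqrt N e_0/\sigma)(S_m/m)\to\vartheta a_{\rm alg}$, and adding this deterministic shift gives the result.

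The main obstacle is the uniform convergence of $r_j$ to $R(j/S_\infty)$ in the accelerated cases. There the constants $c$ and $1-\varrho$ depend on $\kappa$ in slightly different ways, for example $2/(\sqrt\kappa+1)$ versus $2/S_\infty$ for NAG with $S_\infty=2\kappa/(\sqrt\kappa+1)$. I will check explicitly that $(1-\varrho)S_\infty\to2$ and $cS_\infty\to2$ for both HB and NAG before running the Riemann-sum argument.
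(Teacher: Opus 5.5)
Your proposal is correct and follows the paper's proof. Both use the limits $(1-\varrho)S_\infty\to1$ or $2$ and $cS_\infty\to0$ or $2$ to get $r_{\lfloor S_\infty x\rfloor}\to R_{\rm alg}(x)$, pass to the limit in $S_m/m$ and $v_m$ as Riemann sums, and handle the uncentered case via $h_m=(\mu\sqrt N e_0/\sigma)\,S_m/m$.

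The only difference is cosmetic: you get the centered Gaussian limit from \cref{cor:calibrated-clt} plus Slutsky, whereas the paper checks Lindeberg--Feller directly from $|1-r_j|/\sqrt N\le N^{-1/2}$ and $v_m\to v_{\rm alg}(\tau_{\rm mix})>0$.
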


\begingroup
\color{black}
\begin{proof}[Proof of \cref{prop:finite-mixing}]
Write $B=S_{\infty,{\rm alg}}$. By \cref{prop:cold-bias},
$r_j=(1+cj)\varrho^j$. As $\kappa\to\infty$, the constants in
\cref{tab:hard-mode-parameters} satisfy
\[
 (B\log\varrho,Bc)\longrightarrow
 \begin{cases}
  (-1,0),&\mathrm{SGD},\\
  (-2,2),&\mathrm{HB\ or\ NAG}.
 \end{cases}
\]
Consequently $r_{\lfloor Bx\rfloor}\to R_{\rm alg}(x)$ uniformly on
each compact interval of $[0,\infty)$. Since $m/B\to\tau_{\rm mix}$,
the corresponding Riemann sums give
\begin{align*}
 \frac{S_m}{m}
 &=\frac Bm\frac1B\sum_{j=0}^{m-1}r_j
   \longrightarrow a_{\rm alg}(\tau_{\rm mix}),\\
 v_m&=\frac Bm\frac1B\sum_{j=0}^{m-1}(1-r_j)^2
   \longrightarrow v_{\rm alg}(\tau_{\rm mix}).
\end{align*}
For each $x>0$, $0<R_{\rm alg}(x)<1$, so the limiting variance lies
strictly between zero and one.

Let $\varepsilon_{j,i}$ have the common law.
The law \cref{eq:cold-exact-law} writes the centered statistic as
\[
 Z_m-h_m=-\sum_{j=0}^{m-1}\sum_{i=1}^b
                a_{j,i}\varepsilon_{j,i},\qquad
 a_{j,i}=\frac{1-r_j}{\sqrt{mb}}.
\]
The total variance is $\sum_{j,i}a_{j,i}^2=v_m$, and
$\max_{j,i}|a_{j,i}|\le N^{-1/2}\to0$, because $B\to\infty$,
$m/B\to\tau_{\rm mix}>0$, and $N=mb\ge m$.
For every $\epsilon>0$, the Lindeberg sum is bounded by
\[
 \sum_{j,i}a_{j,i}^2
 \E\bigl[\varepsilon_{j,i}^2
       \mathbf 1_{\{|a_{j,i}\varepsilon_{j,i}|>\epsilon\}}\bigr]
 \le v_m\E\bigl[\varepsilon^2
       \mathbf 1_{\{|\varepsilon|>\epsilon\sqrt N\}}\bigr]
 \longrightarrow0.
\]
The last step uses only the common finite second moment. The
Lindeberg--Feller theorem proves the centered Gaussian limit. Finally,
\[
 h_m=\frac{\mu\sqrt N e_0}{\sigma}\frac{S_m}{m}
 \longrightarrow\vartheta a_{\rm alg}(\tau_{\rm mix}),
\]
so Slutsky's theorem gives the stated uncentered limit. The identical
HB and NAG profiles yield their asserted agreement in units of $B$.
\end{proof}
\par
\endgroup

\subsection{Confidence Sets and Coverage Accuracy}
\label{sec:confidence-sets}

\begingroup
The next guarantees use two distinct approximation targets.
The multivariate balls below concern the SGD output and matrix geometry
of \cref{prop:same-sgd-new}; the scalar interval bounds use the
least-curved-mode law of \cref{thm:sharp-normal}.
\par
\endgroup

\subsubsection{Multidimensional Confidence Balls}
\label{sec:ball-new}
\begin{proposition}[Finite-time Gaussian ball]
\label{prop:gaussian-ball-new}
Under independent Gaussian additive noise, set
$\mu_m^{\rm out}=\mathsf S_{\alpha,m}e_0/m$ and take $W_m$ from
\cref{eq:W-sgd-new}.
Then $\widehat x-x_\star\overset{\mathrm d}{=}\mu_m^{\rm out}+W_m^{1/2}Z$,
$Z\sim\mathcal N(0,I_d)$. If
\[
 \operatorname{Quant}_{\pi_{\rm cov}}(m_0,W)
 :=\inf\{r\ge0:\Pp(\|m_0+W^{1/2}Z\|\le r)\ge\pi_{\rm cov}\},
\]
then the ball of radius
$\operatorname{Quant}_{1-\delta}(\mu_m^{\rm out},W_m)$ has coverage at least
$1-\delta$ ($1-\delta$ when $W_m\succ0$).
When only the bound $\|e_0\|\le R_0$ on the start is known, the computable radius
\begin{equation}
 r_{\rm G}=R_0\|\mathsf S_{\alpha,m}/m\|_{\rm op}
       +\operatorname{Quant}_{1-\delta}(0,W_m)
 \label{eq:gaussian-ball-new}
\end{equation}
guarantees the same lower coverage. Without computing the quantile one can take
\begin{equation}
 r_{\rm G}^{\rm up}=R_0\|\mathsf S_{\alpha,m}/m\|_{\rm op}
 +\sqrt{\tr W_m+2\sqrt{\tr(W_m^2)\log(1/\delta)}
              +2\|W_m\|_{\rm op}\log(1/\delta)}.
 \label{eq:gaussian-ball-upper-new}
\end{equation}
\end{proposition}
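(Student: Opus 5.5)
The plan is to derive the Gaussian representation first, then obtain the two radii by separate arguments: the computable radius by a triangle inequality, and the quantile-free radius by a standard chi-square-type deviation bound.

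\textbf{Step 1: exact law.} From the proof of \cref{prop:same-sgd-new},
\[
 \widehat x-x_\star=\mu_m^{\rm out}-\frac1m\sum_{s=1}^m(I-F_\alpha^{m-s})H^{-1}\eta_s^{(b)} .
\]
Under independent Gaussian noise, the stochastic part is a linear image of independent centered Gaussians. It is therefore centered Gaussian, with covariance $W_m$ by \cref{eq:W-sgd-new}. Any centered Gaussian with covariance $W_m$ has the law of $W_m^{1/2}Z$, which gives the distributional identity.

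\textbf{Step 2: quantile ball.} By definition of the quantile and right-continuity of the distribution function of $\|\mu_m^{\rm out}+W_m^{1/2}Z\|$, the infimum is attained. Hence coverage is at least $1-\delta$. When $W_m\succ0$, the norm has a continuous law, since spheres are null sets. The distribution function then takes the value exactly $1-\delta$ at the quantile, giving equality.

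\textbf{Step 3: computable radius.} By the triangle inequality,
\[
 \|\widehat x-x_\star\|\le\|\mu_m^{\rm out}\|+\|W_m^{1/2}Z\|,
\]
and $\|\mu_m^{\rm out}\|\le R_0\|\mathsf S_{\alpha,m}/m\|_{\rm op}$. On the event $\{\|W_m^{1/2}Z\|\le\operatorname{Quant}_{1-\delta}(0,W_m)\}$, which has probability at least $1-\delta$, the ball of radius $r_{\rm G}$ contains $x_\star$.

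\textbf{Step 4: quantile-free radius.} Write $\|W_m^{1/2}Z\|^2=\sum_i\lambda_iZ_i^2$, where $\lambda_i$ are the eigenvalues of $W_m$. Apply the Laurent--Massart inequality
\[
 \Pp\Bigl\{\textstyle\sum_i\lambda_iZ_i^2\ge\tr W+2\sqrt{\tr(W^2)x}+2\|W\|_{\rm op}x\Bigr\}\le e^{-x}
\]
with $x=\log(1/\delta)$. This shows that the square root in \cref{eq:gaussian-ball-upper-new} is at least $\operatorname{Quant}_{1-\delta}(0,W_m)$, so $r_{\rm G}^{\rm up}\ge r_{\rm G}$ and coverage is inherited from Step 3.

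The main obstacle is only bookkeeping: the sign and index conventions in the unrolled sum (the $s=m$ term vanishes) and the attainment of the quantile. Neither is expected to be serious.
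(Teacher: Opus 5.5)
Your proposal is correct and follows the paper's proof essentially step for step: the Gaussian linear representation from \cref{prop:same-sgd-new}, the quantile definition (with continuity of the law of the norm giving exact coverage when $W_m\succ0$), the triangle inequality with $\|\mu_m^{\rm out}\|\le R_0\|\mathsf S_{\alpha,m}/m\|_{\rm op}$, and a weighted chi-square tail bound. The only difference is in that tail bound. The paper derives it directly, from a log-moment-generating-function estimate plus Chernoff, uses a strict inequality, and treats $W_m=0$ separately. You instead cite Laurent--Massart in its ``$\ge$'' form, which is false as written when $W_m=0$ (e.g.\ $m=1$); in that case both the quantile and the square root vanish, so your conclusion $r_{\rm G}^{\rm up}\ge r_{\rm G}$ still holds trivially.
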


\begingroup
\color{black}
\begin{proof}[Proof of \cref{prop:gaussian-ball-new}]
The linear representation in the proof of
\cref{prop:same-sgd-new} is Gaussian, with mean $\mu_m^{\rm out}$
and covariance $W_m$. This proves the asserted law.
The quantile definition gives coverage at least $1-\delta$.
If $W_m\succ0$, the Gaussian has a positive density, and the
distribution of its distance from the origin is continuous and
strictly increasing on positive radii; the coverage is then exact.
Also $\|\mu_m^{\rm out}\|\le R_0\|\mathsf S_{\alpha,m}/m\|_{\rm op}$,
so the triangle inequality proves \cref{eq:gaussian-ball-new}.

Here is a direct concentration proof of the upper radius.
First assume $W_m\ne0$.
If $\lambda_i$ are the eigenvalues of $W_m$, set
$Q=\sum_i\lambda_iZ_i^2$,
$a_W=\tr(W_m^2)$ and $c_W=2\|W_m\|_{\rm op}$.
For $0<s<c_W^{-1}$,
\[
 \log\E e^{s(Q-\tr W_m)}
 =\frac12\sum_i\{-\log(1-2s\lambda_i)-2s\lambda_i\}
 \le\frac{a_Ws^2}{1-c_Ws}.
\]
The inequality follows by the power series of $-\log(1-u)$.
For $a_W>0$, Chernoff's inequality with
$s=\sqrt{x}/(\sqrt{a_W}+c_W\sqrt{x})$ gives
\[
 \Pp\{Q>\tr W_m+2\sqrt{a_Wx}+c_Wx\}\le e^{-x}.
\]
If $W_m=0$ the same conclusion is immediate.
Take $x=\log(1/\delta)$, then a square root and the same triangle
inequality, to obtain \cref{eq:gaussian-ball-upper-new}.
\end{proof}
\par
\endgroup

The noncentral quantile uses $e_0=x_0-x_\star$ and is therefore usually a diagnostic benchmark. Known $H,\Sigma$ do not by themselves make $e_0$ known.
The radius \cref{eq:gaussian-ball-new} uses only the stated bound $R_0$.

\begin{proposition}[Approximation control on the class of balls]
\label{prop:ball-be-new}
Let the noise be independent with a finite third moment and $W_m\succ0$.
Denote the independent contributions of individual samples by
\[
 Y_{s i}=-\frac1{mb}(I-F_\alpha^{m-s})H^{-1}\xi_{s i},
 \qquad 1\le s\le m,\quad 1\le i\le b,
\]
and set
\[
 \varepsilon_m^{\rm ball}=\min\left\{1,(42d^{1/4}+16)
       \sum_{s,i}\E\|W_m^{-1/2}Y_{s i}\|^3\right\}.
\]
Then, uniformly in $r\ge0$,
\begin{equation}
 \left|\Pp\{\|\widehat x-x_\star\|\le r\}
       -\Pp\{\|\mu_m^{\rm out}+W_m^{1/2}Z\|\le r\}\right|
       \le\varepsilon_m^{\rm ball}.
 \label{eq:ball-be-new}
\end{equation}
If $\varepsilon_m^{\rm ball}<\delta$, then
$R_0\|\mathsf S_{\alpha,m}/m\|_{\rm op}
+\operatorname{Quant}_{1-\delta+\varepsilon_m^{\rm ball}}(0,W_m)$
is a valid radius of level $1-\delta$.
\end{proposition}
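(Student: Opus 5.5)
The plan is to apply a multivariate Berry--Esseen bound on the class of Euclidean balls (all convex sets) to the independent sum $\sum_{s,i}Y_{si}$, after whitening by $W_m^{-1/2}$, and then transfer the result to shifted balls.

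\textbf{Step 1 (representation).} By the proof of \cref{prop:same-sgd-new}, pathwise
\[
 \widehat x-x_\star=\mu_m^{\rm out}+\sum_{s=1}^m\sum_{i=1}^b Y_{si},
\]
where the $Y_{si}$ are independent and centered with $\sum_{s,i}\Cov(Y_{si})=W_m$. Setting $X_{si}:=W_m^{-1/2}Y_{si}$ gives a sum $T=\sum X_{si}$ of independent centered vectors with identity covariance, well defined since $W_m\succ0$. The terms with $s=m$ vanish, which is harmless.

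\textbf{Step 2 (convex-set Berry--Esseen).} I will invoke the multivariate Berry--Esseen inequality for independent summands with identity total covariance, uniformly over convex sets, in Raič's form with constant $42d^{1/4}+16$:
\[
 \sup_{C\ \text{convex}}|\Pp(T\in C)-\Pp(Z\in C)|\le(42d^{1/4}+16)\sum\E\|X_{si}\|^3 .
\]
Since Kolmogorov-type distances never exceed $1$, the bound can be capped at $1$, which gives $\varepsilon_m^{\rm ball}$.

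\textbf{Step 3 (transfer to balls).} The event $\{\|\widehat x-x_\star\|\le r\}$ equals $\{T\in C_r\}$ with
\[
 C_r:=\{z:\|\mu_m^{\rm out}+W_m^{1/2}z\|\le r\}.
\]
This set is the preimage of a ball under an affine map, hence convex. Likewise, $\Pp\{\|\mu_m^{\rm out}+W_m^{1/2}Z\|\le r\}=\Pp(Z\in C_r)$. Step 2 then yields \cref{eq:ball-be-new} uniformly in $r$.

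\textbf{Step 4 (valid radius).} Take $r=R_0\|\mathsf S_{\alpha,m}/m\|_{\rm op}+q$ with $q=\operatorname{Quant}_{1-\delta+\varepsilon}(0,W_m)$. Because $\|\mu_m^{\rm out}\|\le R_0\|\mathsf S_{\alpha,m}/m\|_{\rm op}$, the triangle inequality gives
\[
 \Pp\{\|\mu_m^{\rm out}+W_m^{1/2}Z\|\le r\}\ge\Pp\{\|W_m^{1/2}Z\|\le q\}\ge1-\delta+\varepsilon .
\]
Subtracting $\varepsilon$ via \cref{eq:ball-be-new} gives coverage at least $1-\delta$. The condition $\varepsilon<\delta$ ensures the quantile level lies in $(0,1)$.

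The main obstacle is Step 2: the proof rests on quoting the correct convex-set Berry--Esseen theorem for non-identically distributed summands with the stated explicit constant. Everything else is bookkeeping.
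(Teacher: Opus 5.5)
Your proposal is correct and follows the paper's proof essentially step for step. The paper also whitens by $W_m^{-1/2}$, applies Rai\v{c}'s convex-set Berry--Esseen bound capped at one to the convex affine preimage of the ball, and obtains the enlarged radius from the triangle inequality with $\|\mu_m^{\rm out}\|\le R_0\|\mathsf S_{\alpha,m}/m\|_{\rm op}$ before subtracting the approximation error.
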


\begingroup
\color{black}
\begin{proof}[Proof of \cref{prop:ball-be-new}]
Put $X_{si}=W_m^{-1/2}Y_{si}$. The variables are independent and
centered and satisfy $\sum_{s,i}\Cov(X_{si})=I_d$.
The convex-set Berry--Esseen inequality of
\href{https://doi.org/10.3150/18-BEJ1072}{Rai\v{c} (2019, Theorem~1.1)}
therefore bounds the Gaussian approximation error by
$(42d^{1/4}+16)\sum_{s,i}\E\|X_{si}\|^3$.
The set
\[
 \{z:\|\mu_m^{\rm out}+W_m^{1/2}z\|\le r\}
\]
is convex for every $r\ge0$. Applying the inequality to this set,
and also using the trivial bound one, proves \cref{eq:ball-be-new}.
For the enlarged radius in the proposition, the Gaussian
probability is at least $1-\delta+\varepsilon_m^{\rm ball}$ by the
triangle inequality and the bound on $\|\mu_m^{\rm out}\|$.
Subtracting the approximation error proves coverage at least
$1-\delta$.
\end{proof}
\par
\endgroup

\subsubsection{Scalar Intervals and High Confidence Levels}
\label{sec:confidence-level}
The scalar bounds below assume known curvature and noise scale. A small error
probability is kept separate from heavy tails of the oracle; the latter are
treated in \cref{sec:heavy-tails}.
For $m\ge4S_\infty$, \cref{thm:sharp-normal} gives the bound
\begin{equation}
 d_{\rm K}(\cL(Z_m),\cN(0,1))\le\epsilon_{N,m}:=
 \underbrace{\frac{\sqrt5 C_{\rm BE}\nu_3}{\sqrt N}}_{\text{shape}}
 +\underbrace{2\phi(1)\frac {S_\infty}m}_{\text{covariance}}
 +\underbrace{\frac{|\theta|S_m\sqrt{b/m}}{\sqrt{2\pi}}}_{\text{start}},
 \label{eq:three-errors}
\end{equation}
where $\theta=\mu e_0/\sigma$. For the interval with half-width
$\Phi^{-1}(1-\delta/2)\sigma/(\mu\sqrt N)$, its coverage differs from
$1-\delta$ by at most $2\epsilon_{N,m}$. This follows by subtracting the
cdf at the two endpoints (using the left limit when there is an atom).
For Gaussian noise inputs the shape term is zero and the law
$\cN(h_m,v_m)$ gives the coverage at every $\delta$.
Consequently, vanishing center and covariance errors together with the shape
bound imply nominal coverage at every fixed confidence level.

For a vanishing error probability $\delta_N$, a sufficient condition for
\emph{relative} miscoverage accuracy is
$\epsilon_{N,m}=o(\delta_N)$, which is stronger than convergence in law.
As a separate
concentration statement, if the standardized noise inputs satisfy
$\E e^{s\xi/\sigma}\le e^{s^2/2}$ for all real $s$, then for every
$0<\delta<1$ the deterministic weights give
\begin{equation}
 \Pp\!\left\{ |Z_m-h_m|>
       \sqrt{2v_m\log(2/\delta)}\right\}\le\delta.
 \label{eq:subgaussian-window}
\end{equation}
Indeed, independence bounds the mgf of $Z_m-h_m$ by
$\exp(s^2v_m/2)$, and Chernoff's inequality proves the result. Covering
$x_\star$ also requires a bound on $|h_m|$.

\begingroup
\color{black}
\begin{theorem}[Coverage of the entire PR interval family]
\label{thm:coverage-family}
Consider the least-curved direction, the three standard policies,
and a common iid standardized noise law of mean zero and variance
one as in \cref{cor:calibrated-clt}. Let $m\to\infty$; allow
$\kappa\ge1$, $b\ge1$, and $\theta$ to vary. With
$B=S_\infty$, the following assertions are equivalent:
\begin{enumerate}
\item For every fixed $\delta\in(0,1)$,
$\Pp\{|Z_m|\le z_{1-\delta/2}\}\to1-\delta$.
\item $h_m\to0$ and $v_m\to1$.
\item $B/m\to0$ and $|\theta|B\sqrt{b/m}\to0$.
\end{enumerate}
Equivalently in the first assertion, the supremum of the absolute
coverage error over $\delta\in(0,1)$ tends to zero.
\end{theorem}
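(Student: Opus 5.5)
The equivalence (2)$\Leftrightarrow$(3) is exactly \cref{eq:mean-covariance-calibration-iff} of \cref{thm:uniform-calibration}, so the work is to connect coverage with the pair $(h_m,v_m)$. The plan is to prove (2)$\Rightarrow$(1), including the uniform-in-$\delta$ form, and then (1)$\Rightarrow$(2) by contradiction along subsequences.

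\emph{Sufficiency.} Write $Z_m=h_m+\sqrt{v_m}\,T_m$, where $T_m=(Z_m-h_m)/\sqrt{v_m}$; this is well defined once $v_m>0$, which holds for $m\ge2$. By \cref{cor:calibrated-clt}, $T_m\Rightarrow\cN(0,1)$ along any sequence with $N\to\infty$, and here $N=mb\ge m\to\infty$. Since the limit is continuous, P\'olya's theorem upgrades this to $d_{\rm K}(\cL(T_m),\cN(0,1))\to0$. If $h_m\to0$ and $v_m\to1$, then $d_{\rm K}(\cL(Z_m),\cN(0,1))\to0$: the Kolmogorov distance between $\cN(h_m,v_m)$ and $\cN(0,1)$ is bounded by the shift and scale estimates already used in \cref{thm:sharp-normal}, and Kolmogorov distance is invariant under affine maps. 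Coverage is a difference of two cdf values (using a left limit for atoms), so
\[
\sup_{\delta}\bigl|\Pp\{|Z_m|\le z_{1-\delta/2}\}-(1-\delta)\bigr|\le 2\,d_{\rm K}(\cL(Z_m),\cN(0,1))\to0 .
\]
This gives (1) and its uniform version.

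\emph{Necessity.} Suppose (1) holds but (2) fails. Then some subsequence has $|h_m|\ge\epsilon$ or $|v_m-1|\ge\epsilon$. We always have $v_m\in(0,1]$, and $v_m\ge$ a positive constant when $m\ge2$, because \cref{eq:proof-weight-ratio} gives $\sum_j w_j^2\ge m\,w_{m-1}^2/5$ and $w_{m-1}\ge w_1>0$. That last point needs checking, since $w_1=1-r_1$ could degenerate. If it does, I would instead use the profile limits of \cref{prop:finite-mixing}, or bound $v_m\ge \tfrac1m\sum_{j\ge m/2}w_j^2$ directly from the explicit responses. Passing to a further subsequence, $v_m\to v\in(0,1]$ and $h_m\to h\in[-\infty,\infty]$. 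By \cref{cor:calibrated-clt}, $Z_m-h_m\Rightarrow\cN(0,v)$.

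We then split into two cases.
\begin{itemize}
\item If $|h|=\infty$, then $\Pp\{|Z_m|\le z\}\to0$ for every fixed $z$, which contradicts (1) at any $\delta$.
\item If $h$ is finite, then $Z_m\Rightarrow\cN(h,v)$. Condition (1) forces $\Pp\{|\cN(h,v)|\le z\}=\Pp\{|\cN(0,1)|\le z\}$ for every $z>0$. The left side is the distribution function of $|\cN(h,v)|$, so $|\cN(h,v)|\overset d=|\cN(0,1)|$. Matching second moments gives $h^2+v=1$. Matching the density of $|X|$ at $0$, namely $2\phi(h/\sqrt v)/\sqrt v=2\phi(0)$, gives $v=1$ and $h=0$ once combined with $h^2+v=1$. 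Alternatively, matching fourth moments, $h^4+6h^2v+3v^2=3$, forces $h=0$.
\end{itemize}
Either case contradicts the choice of the subsequence, so (2) holds.

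The main obstacle is the uniform positivity of $v_m$ needed to apply \cref{cor:calibrated-clt} along subsequences. The identification step is elementary once the folded-normal laws are compared at all levels, and that is exactly why (1) is required for every $\delta$ rather than for a single level, where bias and variance can compensate as noted in \cref{sec:confidence-level}.
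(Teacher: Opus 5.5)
Your outline is the paper's proof. The paper gets (2)$\Leftrightarrow$(3) from \cref{thm:uniform-calibration} and sufficiency from \cref{cor:calibrated-clt} (it uses Slutsky, then continuity of the law of $|G|$ for the uniform form). For necessity it extracts subsequential limits of $(h_m,v_m)$ and matches second and fourth moments of the folded laws. Your tightness argument for $|h|=\infty$ is equivalent to the paper's Chebyshev bound $\Pp\{|Z_m|\le z\}\le v_m/(|h_m|-z)^2$.

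The step you call the main obstacle is wrong as stated, and none of your fallbacks can rescue it, because $v_m$ is \emph{not} bounded below uniformly. The inequality $v_m\ge w_{m-1}^2/5\ge w_1^2/5$ is true, but the constant degenerates: $w_1=1-r_1$ equals $1/\kappa$ for SGD, $4/(\sqrt\kappa+1)^2$ for HB, and $2/(\sqrt\kappa(\sqrt\kappa+1))$ for NAG. Since $\kappa$ may grow with $m$, $v_m\to0$ genuinely occurs. For example, SGD with $m/\kappa\to0$ has $w_j\le j/\kappa$, hence $v_m\le m^2/\kappa^2\to0$.

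Positivity is never needed, though. \cref{cor:calibrated-clt} gives $T_m\Rightarrow\cN(0,1)$ along the sequence, and $v_m>0$ holds for each $m\ge2$. Along a subsequence with $h_m\to h$ finite and $v_m\to v\in[0,1]$, Slutsky gives $Z_m=h_m+\sqrt{v_m}\,T_m\Rightarrow h+\sqrt v\,G$, including the case $v=0$.

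Assertion (1) itself then excludes $v=0$. At continuity points $z\ne|h|$ the limiting coverage would be $\ind\{|h|\le z\}\in\{0,1\}$, not $2\Phi(z)-1$. Equivalently, the moment identities would read $h^2=1$ and $h^4=3$, which are incompatible. Your density-at-zero identification requires $v>0$, so use the fourth-moment identity instead, since it covers $v=0$ as well. With the range $v\in[0,1]$ and no a priori lower bound, the argument closes exactly as in the paper.
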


\begin{proof}
The equivalence of the second and third assertions is
\cref{thm:uniform-calibration}.
By \cref{cor:calibrated-clt},
$T_m=(Z_m-h_m)/\sqrt{v_m}\xrightarrow{\mathrm d}\cN(0,1)$.
Thus the second assertion implies the first by Slutsky's theorem.

Conversely, $0<v_m\le1$. If $|h_m|$ were unbounded along a
subsequence, then for every fixed $z>0$, Chebyshev's inequality
would give
\[
 \Pp\{|Z_m|\le z\}
 \le\frac{v_m}{(|h_m|-z)^2}\longrightarrow0,
\]
contradicting the first assertion. Hence $h_m$ is bounded.
From any subsequence extract a further one with
$h_m\to h$ and $v_m\to v\in[0,1]$. The Gaussian shape limit gives
$Z_m\xrightarrow{\mathrm d}h+\sqrt v\,G$ for $G\sim\cN(0,1)$.
The first assertion identifies the law of its absolute value as
that of $|G|$; this follows first at the continuity points and then
everywhere by right continuity. In particular, its second and
fourth moments equal those of $G$:
\[
 h^2+v=1,\qquad h^4+6h^2v+3v^2=3.
\]
Subtracting three times the square of the first equation from the
second yields $-2h^4=0$, so $h=0$ and $v=1$.
Every convergent subsequence has this limit, proving the second assertion.
Finally, convergence in distribution of $|Z_m|$ to the continuous
law of $|G|$ implies uniform convergence of their distribution
functions, which gives the asserted uniform absolute coverage
formulation.
\end{proof}

\begin{corollary}[One confidence level with a nonvanishing start]
\label{cor:coverage-fixed-start}
Under the assumptions of \cref{thm:coverage-family}, suppose in
addition that $\liminf|\theta|>0$; in particular, $\theta\ne0$
may be fixed. For any one fixed $\delta\in(0,1)$,
\[
 \Pp\{|Z_m|\le z_{1-\delta/2}\}\to1-\delta
 \quad\Longleftrightarrow\quad
 \frac Bm\to0,\qquad |\theta|B\sqrt{\frac bm}\to0.
\]
For a fixed nonzero $\theta$, the right-hand side is equivalently
$bB^2/m\to0$, or $b=o(\sqrt N/(|\theta|B))$ with $N=mb$.
If $\theta=0$ identically, the same one-level equivalence holds
with the sole condition $B/m\to0$.
\end{corollary}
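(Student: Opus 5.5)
The sufficiency direction is immediate from \cref{thm:coverage-family}: if $B/m\to0$ and $|\theta|B\sqrt{b/m}\to0$, then its third assertion holds, hence coverage at every level, in particular at the fixed $\delta$. For necessity I will assume coverage at the single fixed $\delta$ tends to $1-\delta$ and argue by subsequences. Set $z=z_{1-\delta/2}$ and write $Z_m=h_m+\sqrt{v_m}\,T_m$ with $T_m\Rightarrow\cN(0,1)$ by \cref{cor:calibrated-clt}.

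\emph{Step 1: boundedness.} By the Chebyshev argument in the proof of \cref{thm:coverage-family}, $|h_m|$ must stay bounded. Since $h_m=\theta\sqrt{b/m}\,S_m$, $|\theta|$ is bounded below, and $b\ge1$, this gives $S_m/\sqrt m$ bounded. I then need $S_m\ge c_{\min}\min\{S_\infty,m\}$, which is shown in the proof of \cref{thm:uniform-efficiency}. If $S_\infty/m\not\to0$ along some subsequence, this bound gives $S_m\gtrsim m$ there, hence $|h_m|\gtrsim|\theta|\sqrt{bm}\to\infty$, a contradiction. Therefore $B/m\to0$, and $v_m\to1$ by \cref{thm:uniform-calibration}. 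Moreover $S_m/S_\infty\to1$, so $h_m=\theta B\sqrt{b/m}(1+o(1))$.

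\emph{Step 2: forcing $h_m\to0$ at one level.} Take any subsequence with $h_m\to h$. Then $Z_m\Rightarrow\cN(h,1)$, and the limiting coverage is
\[
 g(h):=\Phi(z-h)-\Phi(-z-h).
\]
This function is even, equals $1-\delta$ at $h=0$, and is strictly decreasing in $|h|$, since
\[
 g'(h)=\phi(z+h)-\phi(z-h)<0\quad\text{for }h>0,\ z>0.
\]
Hence $g(h)=1-\delta$ forces $h=0$. As every subsequential limit is $0$, we get $h_m\to0$, i.e. $|\theta|B\sqrt{b/m}\to0$.

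The key point, and the only real obstacle, is Step 1. With just one level, bias and variance deficit could compensate each other (cf.\ \cref{sec:confidence-level}). The assumption $\liminf|\theta|>0$ is exactly what rules this out: it turns boundedness of $h_m$ into $B/m\to0$, after which $v_m\to1$ is automatic and only the monotone function $g$ remains.

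For fixed $\theta\ne0$, the condition $|\theta|B\sqrt{b/m}\to0$ is equivalent to $bB^2/m\to0$, and it already implies $B/m\le B^2/m\to0$. Substituting $m=N/b$ rewrites it as $b\,|\theta|B/\sqrt N\to0$, i.e. $b=o(\sqrt N/(|\theta|B))$.

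If $\theta\equiv0$, then $h_m=0$. Suppose $B/m\not\to0$ along some subsequence; extract a further subsequence with $v_m\to v<1$, which is possible by \cref{thm:uniform-calibration}. Along it the coverage tends to
\[
 2\Phi\bigl(z/\sqrt v\bigr)-1>1-\delta,
\]
a contradiction, so $B/m\to0$. Conversely, $B/m\to0$ gives $v_m\to1$ and hence nominal coverage.
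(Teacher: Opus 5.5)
Your proposal is correct and follows the paper's proof essentially step for step. The steps are sufficiency from \cref{thm:coverage-family}, Chebyshev boundedness of $h_m$, and the nonvanishing $|\theta|$ turning that boundedness into $B/m\to0$, followed by strict monotonicity of $\Phi(z-h)-\Phi(-z-h)$ in $|h|$ and a subsequence with $v_m\to v$ in the zero-start case. You cite the bound $S_m\ge c_{\min}\min\{S_\infty,m\}$ from the proof of \cref{thm:uniform-efficiency}, whereas the paper cites the subsequence argument in the proof of \cref{thm:uniform-calibration}; the difference is immaterial.

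The only detail to make explicit is that, when $\theta\equiv0$, the subsequential limit $v=0$ can occur, and the limiting coverage is then one. Your formula $2\Phi(z/\sqrt v)-1$ handles this only under the convention $z/\sqrt0=+\infty$.
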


\begin{proof}
Sufficiency follows from \cref{thm:coverage-family}.
For necessity, put $z=z_{1-\delta/2}$. The same Chebyshev
argument, using the positive limit $1-\delta$, shows that $h_m$
is bounded. If $B/m$ did not vanish, the argument in the proof
of \cref{thm:uniform-calibration} would give
$S_m/m\ge c>0$ along a subsequence. But then
\[
 |h_m|=|\theta|\sqrt{mb}\,\frac{S_m}{m}
 \ge c|\theta|\sqrt{mb}\longrightarrow\infty,
\]
a contradiction. Hence $v_m\to1$.
Along any subsequence where $h_m\to h$, the limiting coverage is
$\Phi(z-h)-\Phi(-z-h)$.
For $z>0$ this even function is strictly decreasing in $|h|>0$,
so it equals $2\Phi(z)-1$ only if $h=0$.
Therefore $h_m\to0$, and \cref{thm:uniform-calibration} proves
the required conditions.
For fixed nonzero $\theta$, $bB^2/m\to0$ also implies $B/m\to0$
because $b\ge1$ and $B\ge1$. This proves the last equivalences.
If $\theta=0$ identically, then $h_m=0$. Along a subsequence with
$v_m\to v\in[0,1]$, the limiting coverage equals
$2\Phi(z/\sqrt v)-1$ for $v>0$, and equals one for $v=0$.
It is nominal only for $v=1$, proving the zero-start case.
\end{proof}

The additional hypothesis in \cref{cor:coverage-fixed-start} is
essential for a one-level converse. If the start is allowed to
shrink, the finite-mixing limits of \cref{prop:finite-mixing}
permit any limiting variance $v\in(0,1)$ in the corresponding
range and an adjustable limiting mean $h$.
For a fixed $z>0$, continuity and strict monotonicity in $|h|$
give a nonzero $h$ such that
$\Pp\{|h+\sqrt v\,G|\le z\}=2\Phi(z)-1$ for $G\sim\cN(0,1)$.
Thus one-level cancellation is compatible with the general model
but excluded by a nonvanishing standardized start.
\par
\endgroup
\subsubsection{Unknown Curvature and Covariance: Independent Calibration}
The next result keeps the center of the ball equal to the original output
$\widehat x$ and explicitly pays for the additional oracle calls.
It requires the oracle $G(x)=H(x-x_\star)+\xi$ with independent
$\xi\sim\mathcal N(0,\Sigma)$, $\Sigma\succ0$.
\begin{proposition}[Finite-time certificate with curvature estimation]
\label{prop:calibration-new}
After training with $N_{\rm train}$ calls, fix a probe spacing
$\delta_H>0$ and $n>d$
and the probe points $y_0^{\rm probe}=\widehat x$ and
$y_j^{\rm probe}=\widehat x+\delta_H\mathbf e_j$, where $\mathbf e_j$ is the
$j$th coordinate vector, $1\le j\le d$.
At each point obtain $n$ new independent gradients, their mean
$\bar G_j$ and the unbiased sample covariance $\widehat\Sigma_j$.
All new noises are independent of the training; conditionally on the training
sample the calibration groups are independent. For $\delta_j=\delta/(d+1)$ set
\begin{align}
 c_j&=\frac{d(n-1)}{n(n-d)}F^{-1}_{d,n-d}(1-\delta_j),\qquad
 \epsilon_j=\sqrt{c_j\lambda_{\max}(\widehat\Sigma_j)},\qquad
 \widetilde H_{\cdot j}=\frac{\bar G_j-\bar G_0}{\delta_H},\nonumber\\
 \widehat H&=\tfrac12(\widetilde H+\widetilde H^\trans),\qquad
 \epsilon_H=\frac1{\delta_H}\sqrt{\sum_{j=1}^d(\epsilon_j+\epsilon_0)^2},
 \qquad \widehat\mu=\lambda_{\min}(\widehat H).
 \label{eq:calibration-data-new}
\end{align}
Here $F^{-1}_{d,n-d}$ is the quantile of the Fisher distribution.
When $\widehat\mu>\epsilon_H$, define
\begin{equation}
 r_{\rm cal}=
 \frac{\widehat\mu\|\widehat H^{-1}\bar G_0\|+\epsilon_0}
      {\widehat\mu-\epsilon_H};
 \qquad r_{\rm cal}=+\infty\quad\text{otherwise}.
 \label{eq:calibration-radius-new}
\end{equation}
Then $\Pp\{\|\widehat x-x_\star\|\le r_{\rm cal}\}\ge1-\delta$
at the total budget $N_{\rm train}+(d+1)n$.
If a bound $\lambda_{\min}(H)\ge\mu_0>0$ is known, one group at $\widehat x$
suffices: the radius $(\|\bar G_0\|+\epsilon_0)/\mu_0$ with $\delta_0=\delta$
has the same coverage and requires only $n$ additional calls.
\end{proposition}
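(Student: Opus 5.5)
The plan is to work conditionally on the training sample. Given the training sample, $\widehat x$ and therefore all probe points $y_j^{\rm probe}$ are deterministic. The calibration noises are independent of training and Gaussian, so within group $j$ we can write $\bar G_j=H(y_j^{\rm probe}-x_\star)+\bar\xi_j$. Here $\bar\xi_j\sim\cN(0,\Sigma/n)$ is independent of $\widehat\Sigma_j$, and $(n-1)\widehat\Sigma_j$ is Wishart with scale $\Sigma$ and $n-1$ degrees of freedom. I will build a single good event of conditional probability at least $1-\delta$ and show that $\|\widehat x-x_\star\|\le r_{\rm cal}$ holds deterministically on it. Integrating over the training sample then gives the unconditional statement at total budget $N_{\rm train}+(d+1)n$.

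\emph{Step 1 (noise control per group).} By Hotelling's $T^2$ theorem, $\frac{n(n-d)}{d(n-1)}\,\bar\xi_j^\trans\widehat\Sigma_j^{-1}\bar\xi_j\sim F_{d,n-d}$; since $n>d$, $\widehat\Sigma_j\succ0$ almost surely. Hence with probability at least $1-\delta_j$ we have $\bar\xi_j^\trans\widehat\Sigma_j^{-1}\bar\xi_j\le c_j$. Combined with $\|\bar\xi_j\|^2\le\lambda_{\max}(\widehat\Sigma_j)\,\bar\xi_j^\trans\widehat\Sigma_j^{-1}\bar\xi_j$, this gives $\|\bar\xi_j\|\le\epsilon_j$. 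A union bound over the $d+1$ groups with $\delta_j=\delta/(d+1)$ yields the event $\mathcal E=\{\|\bar\xi_j\|\le\epsilon_j\ \forall j\}$ with conditional probability at least $1-\delta$.

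\emph{Step 2 (curvature error on $\mathcal E$).} Since $y_j^{\rm probe}-y_0^{\rm probe}=\delta_H\mathbf e_j$, the $j$th column satisfies $\widetilde H_{\cdot j}-H\mathbf e_j=(\bar\xi_j-\bar\xi_0)/\delta_H$. Therefore $\|\widetilde H-H\|_F\le\epsilon_H$. Because $H$ is symmetric, $\widehat H-H=\tfrac12\{(\widetilde H-H)+(\widetilde H-H)^\trans\}$, so $\|\widehat H-H\|_{\rm op}\le\|\widehat H-H\|_F\le\epsilon_H$.

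\emph{Step 3 (radius).} Put $e=\widehat x-x_\star$, so $\bar G_0=He+\bar\xi_0$. Rearranging gives the identity $\widehat He=\bar G_0-\bar\xi_0+(\widehat H-H)e$. When $\widehat\mu>\epsilon_H\ge0$, the matrix $\widehat H$ is positive definite with $\|\widehat H^{-1}\|_{\rm op}=1/\widehat\mu$. On $\mathcal E$ this yields
\[
 \|e\|\le\|\widehat H^{-1}\bar G_0\|+\frac{\epsilon_0+\epsilon_H\|e\|}{\widehat\mu}.
\]
Solving for $\|e\|$ gives exactly $r_{\rm cal}$. When $\widehat\mu\le\epsilon_H$ the radius is $+\infty$ and coverage is trivial.

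\emph{Step 4 (known $\mu_0$).} For the single-group variant I use only the group at $\widehat x$ with $\delta_0=\delta$. Then $\mu_0\|e\|\le\|He\|=\|\bar G_0-\bar\xi_0\|\le\|\bar G_0\|+\epsilon_0$, which gives the stated radius with $n$ extra calls.

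The main obstacle I expect is Step 1: justifying the exact Hotelling law conditionally on the data-dependent probe points. This requires that, given training, each group consists of $n$ i.i.d.\ $\cN(\cdot,\Sigma)$ gradients with a deterministic mean, so that mean and sample covariance are independent. The Gaussian oracle assumption and the independence of calibration noises from training are exactly what make this step valid. The remaining steps are deterministic linear algebra; the only subtlety there is ensuring that symmetrization does not enlarge the error, which Step 2 handles via the Frobenius norm.
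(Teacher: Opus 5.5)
Your proposal is correct and matches the paper's proof step for step. Both condition on the training sample and use the Hotelling $F_{d,n-d}$ pivot, with $\|\bar\xi_j\|^2\le\lambda_{\max}(\widehat\Sigma_j)\,\bar\xi_j^\top\widehat\Sigma_j^{-1}\bar\xi_j$ and a union bound; both then bound $\|\widehat H-H\|_{\rm op}$ column by column through the Frobenius norm, solve the resulting linear inequality for $\|e\|$ on the good event, and treat the known-$\mu_0$ variant via $\mu_0\|e\|\le\|He\|\le\|\bar G_0\|+\epsilon_0$.
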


\begingroup
\color{black}
\begin{proof}[Proof of \cref{prop:calibration-new}]
Condition on the training data. The probe points are then fixed.
Write $g_j=H(y_j^{\rm probe}-x_\star)$ and
$r_j^{\rm probe}=\bar G_j-g_j$.
Gaussian sampling and $n>d$ give the Hotelling pivot
\[
 \frac{n(n-d)}{d(n-1)}
 (r_j^{\rm probe})^\trans\widehat\Sigma_j^{-1}r_j^{\rm probe}
 \ \overset{\mathrm d}{=}\ F_{d,n-d}.
\]
Consequently $\|r_j^{\rm probe}\|\le\epsilon_j$ with conditional
probability at least $1-\delta_j$.
A union bound gives all these inequalities simultaneously with
conditional probability at least $1-\delta$.
On that event,
\[
 \|\widetilde H_{\cdot j}-H_{\cdot j}\|
 \le\frac{\epsilon_j+\epsilon_0}{\delta_H},\qquad
 \|\widehat H-H\|_{\rm op}
 \le\|\widetilde H-H\|_{\rm F}\le\epsilon_H.
\]
Symmetrization cannot increase the operator norm error because $H$
is symmetric. Put $e=\widehat x-x_\star$.
If $\widehat\mu>\epsilon_H$, then $\widehat H\succ0$ and
$\bar G_0=He+r_0^{\rm probe}$ imply
\[
 \|e\|\le\|\widehat H^{-1}\bar G_0\|
       +\frac{\epsilon_H}{\widehat\mu}\|e\|
       +\frac{\epsilon_0}{\widehat\mu}.
\]
Solving this inequality gives \cref{eq:calibration-radius-new}.
Otherwise the infinite radius covers .
Taking expectations over the training data proves the unconditional
coverage. With known $\mu_0$, the one-group bound follows instead
from $\mu_0\|e\|\le\|He\|\le\|\bar G_0\|+\epsilon_0$.
The numbers of oracle calls are respectively $(d+1)n$ and $n$.
\end{proof}
\par
\endgroup

\FloatBarrier
\subsection{Full Averaging, Burn-In, and Switching}
\label{sec:outputs-erm}
We first fix the full budget and keep the randomness of the state after burn-in.
This allows a correct comparison between continuing the accelerated method and
switching to SGD.
The following finite-dimensional formulas keep the random state after
burn-in. They require only independent additive noise inputs with covariance
$\Sigma/b$.
Let $T=t+m$, $E_1=[I,0]$, $J=[I,I]^\trans$, and, for a fixed method, define
\[
 \mathsf L_m=\frac1m\sum_{k=0}^{m-1}\mathsf O_\gamma A^k,\qquad
 K_{m,s}=\frac1m\sum_{k=s}^{m-1}\mathsf O_\gamma A^{k-s}B_\alpha
 \quad(1\le s\le m),
\]
where an empty sum is zero. For a deterministic initial state $\mathbf z_0$,
write
\[
 a_t=A^t\mathbf z_0,\qquad
 P_t=\frac1b\sum_{i=0}^{t-1}A^iB_\alpha\Sigma
 B_\alpha^\trans(A^\trans)^i.
\]
Then the tail output has moments
\begin{equation}
 \E\bar u_{t,m}=\mathsf L_ma_t,\qquad
 \Cov(\bar u_{t,m})=\mathsf L_mP_t\mathsf L_m^\trans
 +\frac1b\sum_{s=1}^mK_{m,s}\Sigma K_{m,s}^\trans.
 \label{eq:tail-exact}
\end{equation}
Indeed, expansion of the recursion gives
$\bar u_{t,m}=\mathsf L_m\mathbf z_t
+\sum_{s=1}^mK_{m,s}\eta_{t+s}^{(b)}$;
the future noises are independent of $\mathbf z_t$. This proves both formulas.
On the scalar slow mode, with a zero-velocity initial state, the mean is also
$e_0m^{-1}\sum_{k=t}^{T-1}r_k$.

For an accelerated first stage followed by a second method, reset velocity
at the switch: the second state is $J e_t=JE_1z_t$. Put superscript
$(2)$ on $\mathsf L_m,K_{m,s}$ for the second method, and compute $a_t,P_t$
using the first method. The output then has
\begin{equation}
 \E\bar u^{(2)}=\mathsf L_m^{(2)}JE_1a_t,\qquad
 \Cov(\bar u^{(2)})=\mathsf L_m^{(2)}JE_1P_tE_1^\trans J^\trans(\mathsf L_m^{(2)})^\trans
 +\frac1b\sum_{s=1}^mK_{m,s}^{(2)}\Sigma(K_{m,s}^{(2)})^\trans.
 \label{eq:two-stage-exact}
\end{equation}
The first covariance term is the propagated uncertainty from warm-up;
discarding it would grant the second stage an artificial deterministic
start. These identities also show why there is no automatic ranking
between continuing accelerated tail averaging and switching to SGD.

All three protocols---full averaging, tail averaging, and two stages---are
compared at $T$ sequential rounds and $N=bT$ oracle samples. We report
both the retained-window reference $V_\star/(bm)$ and the all-sample ERM
reference $V_\star/(bT)$. A method calibrated to the former can still
lose efficiency relative to the latter by spending too many rounds on
warm-up. The paired all-sample discrepancy in the experiments is computed
by jointly propagating the output and the sum of all noise inputs, retaining
their cross-covariance. 

\paragraph{Link with the Scalar Remainder.}
Tail averaging can reduce deterministic bias.  In the same scalar model,
the mean of the average starting at round $t$ is 
$e_0m^{-1}\sum_{j=t}^{t+m-1}r_j$, so its normalized bias is
\[
 \frac{\mu e_0}{\sigma}\sqrt{b/m}
       \sum_{j=t}^{t+m-1}r_j.
\]
The infinite tail from \cref{eq:hard-mode-tail} provides a sufficient bound
with the repeated-root factor included.  If all burn-in and retained rounds
use the same batch, the full work is $(t+m)b$ samples and $t+m$ rounds;
$mb$ counts only retained-window work. 

\subsection{Choosing among Specified Procedures}
\label{sec:procedure-selection}
In the additive quadratic model, any linear procedure $\pi$ fixed in advance and
started with zero velocity can be written as
\[
 \widehat x_\pi-x_\star=F_\pi e_0+
       \sum_{s=1}^{T_\pi}W_{\pi,s}\eta_s^{(b_{\pi,s})},\qquad
 V_\pi=\sum_{s=1}^{T_\pi}W_{\pi,s}\Sigma W_{\pi,s}^\trans/b_{\pi,s}.
\]
The matrices are computed recursively, including the random state at the switching
time. Let $\Pi_N$ be a finite grid, specified in advance, of steps, methods,
windows, batches and switching times with $\sum_s b_{\pi,s}=N$ for every
candidate. All steps discarded by the averaging are paid for.
\begin{proposition}[Optimal choice within a given grid]
\label{prop:selection-new}
With known $H,\Sigma$ and the bound $\|e_0\|\le R_0$, put
\[
 U_\pi=R_0^2\|F_\pi\|_{\rm op}^2+\tr V_\pi,
 \qquad \pi_\star\in\arg\min_{\pi\in\Pi_N}U_\pi.
\]
Then
$\sup_{\|e_0\|\le R_0}\E\|\widehat x_{\pi_\star}-x_\star\|^2
=\min_{\pi\in\Pi_N}\sup_{\|e_0\|\le R_0}
\E\|\widehat x_\pi-x_\star\|^2=U_{\pi_\star}$.
Under Gaussian noise, the radius
$R_0\|F_\pi\|_{\rm op}
+\operatorname{Quant}_{1-\delta}(0,V_\pi)$ is valid for the
same procedure.
Minimizing this radius over $\Pi_N$ solves a separate selection
problem for the confidence ball.
\end{proposition}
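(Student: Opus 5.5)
The first claim is a decision-theoretic identity over a finite grid; the second is a direct application of the Gaussian law of a linear procedure. We treat them separately.

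\textbf{Worst-case risk.} For each fixed $\pi$ we compute the worst-case risk over the ball $\|e_0\|\le R_0$ exactly. The representation of $\widehat x_\pi-x_\star$ is affine in $e_0$. Its random part $\sum_s W_{\pi,s}\eta_s^{(b_{\pi,s})}$ is centered, does not depend on $e_0$, and has covariance $V_\pi$; the independence of the batches and $\Cov(\eta_s^{(b)})=\Sigma/b$ give this, exactly as in \cref{eq:tail-exact,eq:two-stage-exact}, with the random switching state already folded into the $W_{\pi,s}$. The bias--variance split \cref{eq:parameter-risk} then gives
\[
 \E\|\widehat x_\pi-x_\star\|^2=\|F_\pi e_0\|^2+\tr V_\pi .
\]
Maximizing over the ball gives $\sup_{\|e_0\|\le R_0}\|F_\pi e_0\|^2=R_0^2\|F_\pi\|_{\rm op}^2$, attained at $e_0=R_0w$ for a top right singular vector $w$ of $F_\pi$. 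Hence $\sup_{\|e_0\|\le R_0}\E\|\widehat x_\pi-x_\star\|^2=U_\pi$ for every $\pi$. Since $\Pi_N$ is finite and fixed in advance, $\min_\pi U_\pi$ is attained at $\pi_\star$. The minimum of the worst-case risks therefore equals $U_{\pi_\star}$, which is also the worst-case risk of $\pi_\star$. The only thing to check is that no selection randomness enters: $\pi_\star$ depends only on the known $H,\Sigma,R_0$ and the grid, not on the data, so its risk is that of a fixed procedure.

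\textbf{Confidence ball.} Under Gaussian noise the random part is exactly $\cN(0,V_\pi)$, so
\[
 \widehat x_\pi-x_\star\overset{\mathrm d}{=}F_\pi e_0+V_\pi^{1/2}Z,\qquad Z\sim\cN(0,I_d).
\]
The triangle inequality gives $\|\widehat x_\pi-x_\star\|\le R_0\|F_\pi\|_{\rm op}+\|V_\pi^{1/2}Z\|$. Coverage of at least $1-\delta$ then follows from the definition of $\operatorname{Quant}_{1-\delta}(0,V_\pi)$, as in \cref{prop:gaussian-ball-new}. The final sentence of the statement is a definition rather than a claim: minimizing the radius over the data-independent grid picks one fixed $\pi$, for which the bound just proved applies.

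There is no real obstacle here. The one delicate point is to make sure that $V_\pi$ includes the propagated burn-in covariance, so that the affine representation is exact. This is guaranteed by the recursive construction of the $W_{\pi,s}$ for procedures fixed in advance.
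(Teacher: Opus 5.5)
Your proposal is correct and follows the paper's proof essentially step for step: the identity $\E\|\widehat x_\pi-x_\star\|^2=\|F_\pi e_0\|^2+\tr V_\pi$, the worst case at a top right singular vector giving $U_\pi$ for every candidate, minimization over the finite grid, and the Gaussian triangle-inequality radius. The paper relies on the same observation you make to keep coverage valid: the selected procedure depends only on the known geometry and the prescribed grid, not on the data.
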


\begingroup
\color{black}
\begin{proof}[Proof of \cref{prop:selection-new}]
The representation preceding the proposition gives
\[
 \E\|\widehat x_\pi-x_\star\|^2
 =\|F_\pi e_0\|^2+\tr V_\pi.
\]
The supremum of the first term over $\|e_0\|\le R_0$ equals
$R_0^2\|F_\pi\|_{\rm op}^2$, by a maximizing right singular vector.
Thus the worst-case risk is $U_\pi$ for each candidate.
Minimizing over the prescribed finite grid proves the asserted
minimax identity within that grid.
Under Gaussian noise, the centered random term has law
$V_\pi^{1/2}Z$, so the triangle inequality gives the stated radius.
The choice of $\pi$ here depends only on the known geometry and the
prescribed grid, not on the training observations; hence this choice
does not change its coverage guarantee.
\end{proof}
\par
\endgroup

With known $e_0$ one can minimize the risk; this is an \emph{oracle} choice
for the analysis, since $x_\star$ is usually unknown. The choice via $R_0$ uses
only the stated bound on the start. Thus the burn-in length and the switching
time are justified relative to an explicit grid and target metric.

If the parameters are estimated from an independent pilot sample of size
$N_{\rm pilot}$, the main budget equals $N-N_{\rm pilot}$.
On the simultaneous event $\max_{\pi\in\Pi_N}|\widehat U_\pi-U_\pi|\le\epsilon$,
the minimizer of $\widehat U$ satisfies $U_{\widehat\pi}\le\min_\pi U_\pi+2\epsilon$.
This follows from two applications of the error bound and the definition of the
minimizer. 

\FloatBarrier
\section{Heavy Tails and Clipping}
\label{sec:heavy-tails}

For a robust online procedure, the link between the optimization error and the
distribution of the output has to be checked separately. Clipping changes the
oracle, so its effect on the optimization guarantee, on the bias and on the
distribution of the output should be analyzed jointly. We first note what heavy
tails do inside the stochastically linear class, and then how clipping leaves it.

\subsection{Finite Variance and High Confidence Levels}
\label{sec:finite-variance-tails}

\begin{remark}[Heavy tails inside the stochastically linear class]
\label{rem:tails-sls}
In the additive quadratic model
$\hat x^{\rm ERM}_{t,m,b}-x_\star=-H^{-1}\bar\eta^{(b)}_{t,m}$ , so every
deviation bound for the empirical noise mean---Gaussian, Fuk--Nagaev or
stable---applies verbatim to the offline estimator, and by
\cref{prop:observable-bridge} the online output adds only the bridge correction
$H^{-1}\bar g_{t,m}$. Within this class, heavy tails therefore degrade the
statistical term identically in both languages; the languages differ in the
robustifications they admit (\cref{sec:clipped-transfer}).
\end{remark}

\paragraph{A Map of Moment Regimes.}
The available guarantees depend on the moments of the noise. With a finite
third moment, \cref{thm:sharp-normal} gives a quantitative independent normal
approximation. With a finite variance but no third moment,
\cref{cor:calibrated-clt} still gives a qualitative CLT with centering and
variance; the Nagaev bound below additionally requires a moment of some order
$r_{\rm tail}>2$, which does not follow from a finite variance alone, and
uniformly over the class no sub-Gaussian radius is available for the ERM or the
averaged outputs (\cref{prop:catoni-lower}). With infinite variance, the stable
limit transfers under a domain-of-attraction assumption and the correct
normalization, strictly stable noise admits an stable law and calibration
criteria (\cref{prop:stable-calibration}), and the Gaussian covariance and the
MSE may not exist.

\subsubsection{Transfer of the Nagaev Bound}

Let $a\in\R^d$, $Y_a=a^{\trans}H^{-1}\xi$, and assume
$\E Y_a=0$, $\E Y_a^2=\sigma_a^2>0$, and
$\E|Y_a|^{r_{\rm tail}}<\infty$ for some $r_{\rm tail}>2$.
For independent copies $Y_{a,1},\ldots,Y_{a,N}$, a Fuk--Nagaev inequality gives
constants $c_{r_{\rm tail}},C_{r_{\rm tail}}>0$, depending only on
$r_{\rm tail}$, such that
\begin{equation}
 \Pp\!\left(\left|\frac1N\sum_{i=1}^NY_{a,i}\right|>x\right)
 \le 2\exp\!\left(-c_{r_{\rm tail}}\frac{Nx^2}{\sigma_a^2}\right)
 +\frac{C_{r_{\rm tail}}\E|Y_a|^{r_{\rm tail}}}
       {N^{r_{\rm tail}-1}x^{r_{\rm tail}}},\qquad x>0.
\label{eq:nagaev-mean}
\end{equation}
See \citet{fuk1971probability,nagaev1979large,nagaev1998probabilistic}.
The zero-variance case is trivial.

The decomposition \cref{eq:exact-decomp} and a union bound imply,
for every $x,r_{\partial}>0$ and $N=mb$,
\begin{equation}
 \Pp(|a^{\trans}\bar u_{t,m}|>x+r_{\partial})
 \le2\exp\!\left(-c_{r_{\rm tail}}\frac{Nx^2}{\sigma_a^2}\right)
 +\frac{C_{r_{\rm tail}}\E|Y_a|^{r_{\rm tail}}}{N^{r_{\rm tail}-1}x^{r_{\rm tail}}}
 +\Pp\!\left(\frac{|a^{\trans}H^{-1}\Delta_{t,m}|}{\alpha m}>r_{\partial}\right).
 \label{eq:nagaev-optimizer}
\end{equation}
A sufficient empirical-mean radius at confidence $1-\delta$ is
\begin{equation}
 C_{r_{\rm tail}}'\left[
 \sigma_a\sqrt{\frac{\log(4/\delta)}N}
 +\left(\frac{\E|Y_a|^{r_{\rm tail}}}
 {\delta N^{r_{\rm tail}-1}}\right)^{1/r_{\rm tail}}\right].
\label{eq:nagaev-radius}
\end{equation}
In particular, the boundary can be
correlated with and can partly cancel the empirical-noise term.
Robust nonlinear estimators under finite covariance can also have
better high-confidence behavior than an empirical mean; \citep{gorbunov2020stochastic,puchkin2024breaking}.

\subsubsection{The High-Confidence Limit of the Empirical Mean}
\label{sec:catoni-lower}

The Nagaev radius \cref{eq:nagaev-radius} contains a term polynomial in $1/\delta$.
The construction is the classical one of
\citet{catoni2012challenging}.

\begin{proposition}[High-confidence limit of the empirical mean and of averaged outputs]
\label{prop:catoni-lower}
Consider the scalar additive quadratic model with curvature $\mu>0$ and centered
i.i.d. noise of variance $\sigma^2$, and let $N=mb$.
\begin{enumerate}
\item For every $N\ge1$ and $\delta\in(0,1/8]$ there is a symmetric noise law with
variance $\sigma^2$ such that the ERM of the $N$ observations satisfies
\[
 \Pp\Bigl\{|\hat x^{\rm ERM}-x_\star|\ge\frac{\sigma}{\mu\sqrt{2N\delta}}\Bigr\}
 \ge\delta.
\]
\item Let $\bar u_{0,m}$ be the full average of SGD, HB or NAG with the standard
parameters of \cref{prop:cold-bias}, started at $e_0=0$, and let
$m\ge4S_\infty$. For
every $\delta\in(0,1/16]$ there is a symmetric noise law with variance $\sigma^2$
such that
\[
 \Pp\Bigl\{|\bar u_{0,m}|\ge\frac{\sigma}{4\mu\sqrt{N\delta}}\Bigr\}\ge\delta.
\]
\end{enumerate}
\end{proposition}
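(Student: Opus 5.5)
The plan is to adapt Catoni's construction of a sparse three-point noise law. For a parameter $p\in(0,1]$, let $\xi$ equal $0$ with probability $1-p$ and $\pm a$ with probability $p/2$ each, where $a=\sigma/\sqrt p$. This law is symmetric and centered with variance $\sigma^2$. On the event that exactly one of the relevant noise samples is nonzero, the estimator equals a deterministic weight times $\pm a$. That single large value produces a deviation of order $a/(\mu N)=\sigma/(\mu\sqrt{pN})$. We then choose $p\asymp\delta/N$ so that this event has probability at least $\delta$.

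\emph{Part 1.} In the scalar model $\hat x^{\rm ERM}-x_\star=-(\mu N)^{-1}\sum_{i=1}^N\xi_i$. Take $p=2\delta/N$. The probability that exactly one $\xi_i$ is nonzero is $Np(1-p)^{N-1}$. By Bernoulli's inequality, $(1-p)^{N-1}\ge1-Np=1-2\delta\ge3/4$ for $\delta\le1/8$, so this probability is at least $2\delta\cdot\tfrac34\ge\delta$. On this event $|\hat x^{\rm ERM}-x_\star|=a/(\mu N)=\sigma\sqrt{N/(2\delta)}/(\mu N)=\sigma/(\mu\sqrt{2N\delta})$.

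\emph{Part 2.} Since $e_0=0$, the exact law \cref{eq:cold-exact-law} of \cref{thm:cold-exact} gives
\[
\bar u_{0,m}=-\frac1{\mu N}\sum_{s=1}^m\sum_{\ell=1}^b w_{m-s}\,\xi_{s,\ell},\qquad w_j=1-r_j\in[0,1],
\]
where $w_j\in[0,1]$ holds for all three policies by \cref{prop:cold-bias}.

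The key step is to show that at least half of the $N$ samples carry weight $w\ge1/2$. By monotonicity of $r_j$ in \cref{prop:cold-bias}, let $J$ be the number of indices with $r_j>1/2$; these are exactly $j=0,\dots,J-1$. Then $S_\infty\ge\sum_{j<J}r_j\ge J/2$, so $J\le2S_\infty\le m/2$ by the hypothesis $m\ge4S_\infty$. Hence at least $m/2$ rounds, and therefore at least $N/2$ samples, have weight $w_{m-s}\ge1/2$. Call these samples good.

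Take $p=4\delta/N$, which is at most $1$. Consider the event that exactly one sample among all $N$ is nonzero and that it is good. Its probability is at least $(N/2)\,p\,(1-p)^{N-1}\ge2\delta(1-4\delta)\ge2\delta\cdot\tfrac34\ge\delta$, using $\delta\le1/16$. On this event
\[
|\bar u_{0,m}|\ge\frac{a}{2\mu N}=\frac{\sigma\sqrt{N/(4\delta)}}{2\mu N}=\frac{\sigma}{4\mu\sqrt{N\delta}}.
\]

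The main obstacle is the counting of good samples; the rest is the elementary Catoni computation. Note that the last batch $s=m$ has weight $w_0=0$ and is simply counted among the bad samples, so it does not affect the argument.
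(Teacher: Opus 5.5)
Your proposal is correct and follows the paper's own argument essentially step for step. The ingredients match: the same sparse symmetric three-point law with $p=2\delta/N$ and $p=4\delta/N$, the same exact weighted representation from \cref{thm:cold-exact}, and the same count showing that at least $N/2$ samples carry weight at least $1/2$ when $m\ge4S_\infty$.

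There are only two cosmetic differences. You use Bernoulli's inequality where the paper uses the bounds $e^{-4\delta}$ and $e^{-8\delta}$. You also invoke monotonicity of $r_j$ in the counting step, which is unnecessary: $r_j\ge0$ and $\sum_j r_j=S_\infty$ already bound the number of indices with $r_j>1/2$.
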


Indeed, let the noise take the values $\pm\omega_{\rm spike}$ with probability
$\pi_N/2$ each and the value $0$ otherwise, with
$\omega_{\rm spike}=\sigma/\sqrt{\pi_N}$. The event that one
of the $N$ noises is nonzero has probability
$N\pi_N(1-\pi_N)^{N-1}$, and on it the ERM
error equals $\omega_{\rm spike}/(\mu N)$ in absolute value. With
$\pi_N=2\delta/N$ this equals
$\sigma/(\mu\sqrt{2N\delta})$, while
$N\pi_N(1-\pi_N)^{N-1}
=2\delta(1-2\delta/N)^{N-1}\ge2\delta e^{-4\delta}\ge\delta$ for
$\delta\le1/8$. For the averaged outputs, \cref{eq:cold-exact-law} with $e_0=0$
gives $\bar u_{0,m}=-(\mu N)^{-1}\sum_{s,i}w_{m-s}\xi_{s,i}$ with
$w_j=1-r_j\in[0,1]$. Since $r_j\ge0$ and
$\sum_jr_j=S_\infty$, fewer than $2S_\infty$ indices satisfy $r_j>1/2$;
for $m\ge4S_\infty$ at least $N/2$ of the $N$ noises therefore carry a
weight of at least $1/2$. With $\pi_N=4\delta/N$, the event that one noise is
nonzero and that it carries such a weight has probability at least
$(N/2)\pi_N(1-\pi_N)^{N-1}\ge2\delta e^{-8\delta}\ge\delta$
for $\delta\le1/16$, and on it
$|\bar u_{0,m}|\ge\omega_{\rm spike}/(2\mu N)
=\sigma/(4\mu\sqrt{N\delta})$.

The Gaussian radius $\Phi^{-1}(1-\delta/2)\sigma/(\mu\sqrt N)$ is of order
$\sigma\sqrt{\log(1/\delta)}/(\mu\sqrt N)$, so the ratio of the two radii grows like
$(\delta\log(1/\delta))^{-1/2}$ as $\delta\to0$. No ball of sub-Gaussian radius
centered at the ERM or at the averaged output is therefore valid uniformly over the
finite-variance class, whatever Fisher or Wilks expansion is used to compute it.
A one-pass clipped estimator does not have this obstruction: the radius of
\cref{prop:robust-fixed-query} depends on $\delta$ only through
$\sqrt{\log(2d/\delta)}$ under a finite variance alone. This is the precise sense
in which heavy tails separate the offline empirical-risk description from an
online procedure that is allowed to clip. The lower bound concerns uniformity over
the class: for a fixed noise law and a fixed $\delta$, the normal approximations of
\cref{sec:sharp-normal} remain valid as $N\to\infty$.

\subsection{Infinite Variance}

A stable domain of attraction generally requires a slowly varying factor
in addition to the power of the sample size
\citep{samorodnitsky1994stable}. For SGD with infinite noise variance,
$\alpha$-stable limits of Polyak--Ruppert averages were obtained by
\citet{wang2021convergence}.

\begin{proposition}[Stable-law transfer with the correct normalizer]
\label{prop:stable-limit}
Suppose the centered i.i.d. single-sample noises satisfy, for positive
constants $c_N$,
\[
 c_N^{-1}\sum_{i=1}^N\xi_i\xrightarrow{\mathrm d} Z_{\alpha_0},
 \qquad 1<\alpha_0<2,
\]
where $Z_{\alpha_0}$ is an $\alpha_0$-stable vector. In a stable domain
of attraction one may take
$c_N=N^{1/\alpha_0}\mathcal L_{\rm sv}(N)$ with
$\mathcal L_{\rm sv}$ slowly
varying.  For $N=mb\to\infty$, if
\begin{equation}
 \frac{N}{\alpha m c_N}\Delta_{t,m}\xrightarrow{\Pp}0,
                                                    \label{eq:stable-boundary}
\end{equation}
then the additive quadratic algorithm satisfies
\begin{equation}
 \frac{N}{c_N}\bar u_{t,m}\xrightarrow{\mathrm d}-H^{-1}Z_{\alpha_0}.
                                                    \label{eq:stable-pr}
\end{equation}
\end{proposition}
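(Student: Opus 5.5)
The plan is to mirror the proof of \cref{thm:exact} almost verbatim, with the stable normalizer replacing $\sqrt{mb}$. The first step is the pathwise identity \cref{eq:exact-decomp},
\[
 \bar u_{t,m}=-H^{-1}\bar\eta^{(b)}_{t,m}+\frac{1}{\alpha m}H^{-1}\Delta_{t,m}.
\]
It is purely algebraic: it comes from summing \cref{eq:two-step} over the window and telescoping. It therefore holds without any moment assumption, and in particular under infinite variance. Multiplying by $N/c_N$ gives
\[
 \frac{N}{c_N}\bar u_{t,m}=-H^{-1}\,\frac{1}{c_N}\sum_{k=t}^{t+m-1}\sum_{j=1}^b\xi_{k+1,j}+\frac{N}{\alpha m c_N}H^{-1}\Delta_{t,m}.
\]
This uses $N\bar\eta^{(b)}_{t,m}=\sum_{k,j}\xi_{k+1,j}$, since $\bar\eta$ averages $m$ batch means of $b$ samples each.

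Next, the double sum in the first term is a sum of $N$ i.i.d.\ single-sample noises. The window starts at round $t+1$ rather than round $1$, but because the noises are i.i.d.\ its law coincides with that of $\sum_{i=1}^N\xi_i$. By hypothesis, $c_N^{-1}$ times this sum converges in distribution to $Z_{\alpha_0}$ along $N=mb\to\infty$. The hypothesis is stated along the full integer sequence, so it holds along any subsequence $N=m b$, including when $t$ and $b$ vary. Applying the continuous linear map $x\mapsto -H^{-1}x$ gives convergence of the first term to $-H^{-1}Z_{\alpha_0}$.

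For the second term, \cref{eq:stable-boundary} says exactly that $\frac{N}{\alpha m c_N}\Delta_{t,m}\to0$ in probability, and multiplying by the fixed matrix $H^{-1}$ preserves this. Slutsky's theorem then yields \cref{eq:stable-pr}.

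I see no genuine obstacle; the only point needing care is the bookkeeping in the second step. One must check that $N$ counts exactly the samples entering $\bar\eta^{(b)}_{t,m}$, namely the $mb$ noises $\xi_{k+1,j}$ with $t\le k\le t+m-1$, and that the normalizer $c_N$ is evaluated at that same $N$. Under the alternative convention $N'=(t+m)b$ the indexing of $c_N$ would be different. The boundary term must be stated with $N/(\alpha m c_N)$, which equals $b/(\alpha c_N)$, matching the $\sqrt{b/m}$ scaling of \cref{eq:boundary-condition} when $c_N=\sqrt N$.
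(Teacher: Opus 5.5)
Your proposal is correct and is essentially the paper's proof. Both multiply the pathwise decomposition \cref{eq:exact-decomp} by $N/c_N$, identify the first term as $-H^{-1}c_N^{-1}$ times a sum of $N$ i.i.d.\ single-sample noises with the assumed stable limit, use \cref{eq:stable-boundary} to make the boundary term vanish in probability, and conclude by Slutsky's theorem.
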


\begingroup
\color{black}
\begin{proof}[Proof of \cref{prop:stable-limit}]
Enumerate the $N=mb$ individual noises in the retained window.
Multiplying \cref{eq:exact-decomp} by $N/c_N$ gives
\[
 \frac N{c_N}\bar u_{t,m}
 =-H^{-1}\frac1{c_N}\sum_{i=1}^N\xi_i
   +H^{-1}\frac N{\alpha m c_N}\Delta_{t,m}.
\]
The first term has the stable limit by the domain-of-attraction
assumption. With the fixed Hessian of the additive model, the second
term converges to zero in probability by \cref{eq:stable-boundary}.
Slutsky's theorem proves \cref{eq:stable-pr}.
\end{proof}
\par
\endgroup

The normalization $c_N=N^{1/\alpha_0}$ is the special case of
normal attraction, not every stable domain of attraction.  In that case,
for fixed algorithmic gain $\alpha$, the boundary condition becomes
$b^{1-1/\alpha_0}m^{-1/\alpha_0}\Delta_{t,m}\xrightarrow{\Pp}0$.
If the driving noise is strictly $\alpha_0$-stable and the linear
chain is stationary, each state scales in distribution as
$b^{1/\alpha_0-1}$ times its single-sample counterpart.  Indeed,
independent batching has this scaling and the stationary state
is the convergent stable linear filter
$\sum_{j\ge0}A^jB_\alpha\eta_{-j}^{(b)}$; stability makes its coefficient
$\alpha_0$-powers summable.  Tightness of its endpoint marginals then
makes the normalized stochastic boundary $O_{\Pp}(m^{-1/\alpha_0})$.

\paragraph{Calibration under Strictly Stable Noise.}
When the noise is strictly stable, the cold-start law of
\cref{thm:cold-exact} remains exact, and the calibration analysis of
\cref{sec:batching} carries over with the exponent $2$ replaced by $\alpha_0$.

\begin{proposition}[Stable calibration of the averaged output]
\label{prop:stable-calibration}
In the scalar model of \cref{thm:cold-exact}, let the single-sample noises be
i.i.d. symmetric $\alpha_0$-stable with $1<\alpha_0<2$ and scale $\sigma_{\alpha_0}$,
that is, $\E e^{\mathrm{i}s\xi}=\exp(-\sigma_{\alpha_0}^{\alpha_0}|s|^{\alpha_0})$, and
let $Z_{\alpha_0}^{\rm st}$ be standard symmetric $\alpha_0$-stable.
Put $N=mb$, $w_j=1-r_j$ and
\[
 \theta_{\alpha_0}=\frac{\mu e_0}{\sigma_{\alpha_0}},\qquad
 h_m^{(\alpha_0)}=\theta_{\alpha_0}S_m\,b^{1-1/\alpha_0}m^{-1/\alpha_0},\qquad
 v_m^{(\alpha_0)}=\frac1m\sum_{j=0}^{m-1}|w_j|^{\alpha_0}.
\]
Then, for all $m,b\ge1$ and any two-step parameters,
\begin{equation}
 \frac{\mu N^{1-1/\alpha_0}}{\sigma_{\alpha_0}}\,\bar u_{0,m}
 \overset{\mathrm d}{=} h_m^{(\alpha_0)}
 +\bigl(v_m^{(\alpha_0)}\bigr)^{1/\alpha_0}Z_{\alpha_0}^{\rm st},
 \label{eq:stable-exact-law}
\end{equation}
whereas the ERM of the same $N$ observations satisfies
$\mu N^{1-1/\alpha_0}(\hat x^{\rm ERM}-x_\star)/\sigma_{\alpha_0}
 \overset{\mathrm d}{=}Z_{\alpha_0}^{\rm st}$.
For the standard SGD, HB and NAG parameters of \cref{prop:cold-bias}, along any
sequence with $m\to\infty$, integer $b\ge1$, arbitrary $\kappa\ge1$ and arbitrary
signed $\theta_{\alpha_0}$,
\begin{align}
 v_m^{(\alpha_0)}\to1&\quad\Longleftrightarrow\quad S_\infty/m\to0,
 \label{eq:stable-scale-iff}\\
 \bigl(h_m^{(\alpha_0)},v_m^{(\alpha_0)}\bigr)\to(0,1)&\quad\Longleftrightarrow\quad
 S_\infty/m\to0\ \text{and}\
 |\theta_{\alpha_0}|S_\infty b^{1-1/\alpha_0}m^{-1/\alpha_0}\to0.
 \label{eq:stable-calibration-iff}
\end{align}
\end{proposition}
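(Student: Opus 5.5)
The plan is to start from the exact cold-start representation \cref{eq:cold-exact-law} of \cref{thm:cold-exact}. It holds for any two-step parameters without a stability assumption and uses only linearity, so it does not need finite variance. Multiplying by $\mu N^{1-1/\alpha_0}/\sigma_{\alpha_0}$ and using $N=mb$ turns the deterministic part into $\theta_{\alpha_0}S_m\,b^{1-1/\alpha_0}m^{-1/\alpha_0}=h_m^{(\alpha_0)}$. The random part becomes
\[
-\frac{1}{(mb)^{1/\alpha_0}}\sum_{s=1}^m\sum_{\ell=1}^b w_{m-s}\,\frac{\xi_{s,\ell}}{\sigma_{\alpha_0}} .
\]
Next I use strict stability and symmetry. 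For independent standard symmetric $\alpha_0$-stable variables, a weighted sum $\sum_i c_i\varepsilon_i$ has characteristic function $\exp(-\sum_i|c_i|^{\alpha_0}|s|^{\alpha_0})$, so it equals $(\sum_i|c_i|^{\alpha_0})^{1/\alpha_0}Z^{\rm st}_{\alpha_0}$ in law. The sign in front is irrelevant by symmetry. Here the coefficients are $c_{s,\ell}=w_{m-s}(mb)^{-1/\alpha_0}$, so $\sum|c|^{\alpha_0}=m^{-1}\sum_{j<m}|w_j|^{\alpha_0}=v_m^{(\alpha_0)}$, which gives \cref{eq:stable-exact-law}. The ERM statement is the special case of unit weights on all $N$ noises, since $\hat x^{\rm ERM}-x_\star=-\bar\eta/\mu$.

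For the calibration equivalences, I restrict to the three standard policies, where \cref{prop:cold-bias} gives $0\le r_j\le1$ and hence $w_j\in[0,1]$. I then sandwich $v_m^{(\alpha_0)}$. Since $w_j^{\alpha_0}\ge w_j^2$ and $w_j^{\alpha_0}\le w_j$, I get
\[
1-\frac{S_m}{m}\le v_m^{(\alpha_0)}\le 1,\qquad
1-v_m^{(\alpha_0)}=\frac1m\sum_{j<m}\bigl(1-(1-r_j)^{\alpha_0}\bigr)\ge\frac1m\sum_{j<m}r_j=\frac{S_m}{m}\cdot c,
\]
where the last step uses $1-(1-r)^{\alpha_0}\ge r$ for $r\in[0,1]$ and $\alpha_0>1$, so $c=1$. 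Thus $1-v_m^{(\alpha_0)}$ lies between $S_m/m$ and $\alpha_0 S_m/m$, using $1-(1-r)^{\alpha_0}\le\alpha_0 r$. So $v_m^{(\alpha_0)}\to1$ if and only if $S_m/m\to0$.

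The proof of \cref{thm:uniform-calibration} already shows $S_m/m\to0\iff S_\infty/m\to0$. That argument relies only on $1\le(1-\varrho)S_\infty\le2$ and $r_j\ge\varrho^j$. This proves \cref{eq:stable-scale-iff}. For \cref{eq:stable-calibration-iff}, when $S_\infty/m\to0$ the tail bound \cref{eq:hard-mode-tail} gives $S_m/S_\infty\to1$, as in that proof. Hence $h_m^{(\alpha_0)}=\theta_{\alpha_0}S_\infty b^{1-1/\alpha_0}m^{-1/\alpha_0}(1+o(1))$, and the joint criterion follows. The necessity of $S_\infty/m\to0$ for joint calibration comes from the scale part alone.

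The main obstacle is the step that justifies the closed-form stable law. I need to check that the representation in \cref{thm:cold-exact} really is a finite deterministic linear combination of the independent single-sample noises, with no hidden dependence through the batching or the last batch (which has weight $w_0=0$, harmless). Once that is in place, the rest is monotonicity of $r\mapsto 1-(1-r)^{\alpha_0}$ and a reuse of the earlier response-sum lemmas.
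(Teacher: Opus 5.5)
Your proposal is correct and follows the paper's own argument essentially step for step: the cold-start representation \cref{eq:cold-exact-law}, the strict-stability scaling of a deterministic weighted sum, the sandwich $r\le 1-(1-r)^{\alpha_0}\le\alpha_0 r$ on $[0,1]$, and reuse of the $S_m/m\to0\iff S_\infty/m\to0$ argument from \cref{thm:uniform-calibration}. One slip: your first displayed inequality is reversed, since $w_j^{\alpha_0}\le w_j$ gives $v_m^{(\alpha_0)}\le 1-S_m/m$ rather than $\ge$; the sandwich $S_m/m\le 1-v_m^{(\alpha_0)}\le\alpha_0 S_m/m$ that you actually use afterwards is correct.
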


Indeed, for symmetric strictly stable noise and deterministic coefficients
$\omega_i^{\rm st}$,
$\sum_i\omega_i^{\rm st}\xi_i\overset{\mathrm d}{=}
\sigma_{\alpha_0}
(\sum_i|\omega_i^{\rm st}|^{\alpha_0})^{1/\alpha_0}
Z_{\alpha_0}^{\rm st}$.
Applied to \cref{eq:cold-exact-law}, where each weight $w_{m-s}/(\mu N)$ appears
$b$ times, this gives the scale
\[
 \frac{\sigma_{\alpha_0}(bm\,v_m^{(\alpha_0)})^{1/\alpha_0}}{\mu N}
 =\frac{\sigma_{\alpha_0}}{\mu}\bigl(v_m^{(\alpha_0)}\bigr)^{1/\alpha_0}N^{1/\alpha_0-1},
\]
and the deterministic part $e_0S_m/m$ becomes $h_m^{(\alpha_0)}$ after normalization; the
ERM is the same computation with unit weights. For the standard parameters
$0\le r_j\le1$, and for $r\in[0,1]$ one has $r\le1-(1-r)^{\alpha_0}\le\alpha_0r$.
Hence $S_m/m\le1-v_m^{(\alpha_0)}\le\alpha_0S_m/m$, which gives
\cref{eq:stable-scale-iff} as for \cref{eq:covariance-calibration-iff},
while \cref{eq:stable-calibration-iff} follows because
$S_m=S_\infty(1+o(1))$ when $S_\infty/m\to0$.

Consequently, at $b=1$ and fixed $\theta_{\alpha_0}\ne0$, calibrated inference from
the full average requires
$m\gg(|\theta_{\alpha_0}|S_\infty)^{\alpha_0}$, that is,
$m\gg\kappa^{\alpha_0}$ for SGD and $m\gg\kappa^{\alpha_0/2}$ for HB and NAG, up to
the factor $|\theta_{\alpha_0}|^{\alpha_0}$. At a fixed budget $N$ and with
$bS_\infty/N\to0$, the start-induced critical batch becomes
\begin{equation}
 b_{\rm crit}^{(\alpha_0)}(N)
 \asymp\frac{N^{1/\alpha_0}}{|\theta_{\alpha_0}|S_\infty},
 \label{eq:stable-bcrit}
\end{equation}
which reduces to \cref{eq:bcrit} at $\alpha_0=2$; momentum again enlarges it by a
factor of order $\sqrt\kappa$. Heavier tails lower the onset and raise the critical
batch: the statistical scale $N^{1/\alpha_0-1}$ decays more slowly, so a fixed start
is hidden sooner, although the accuracy is worse. The optimization
description is silent in this regime: the mean squared error is infinite, so
guarantees in mean square carry no information, whereas \cref{eq:stable-exact-law}
yields calibrated confidence intervals from stable quantiles. These intervals
require the index $\alpha_0$ and the scale $\sigma_{\alpha_0}$, whose estimation is
a separate problem; for noise in a domain of attraction, \cref{prop:stable-limit}
gives the corresponding asymptotic statement.

\begin{conjecture}[Stiff mode of HB under stable noise]
\label{rem:stable-stiff}
On the stiffest direction, the relative scale defect of optimally tuned HB is
$v_m^{(\alpha_0)}-1=m^{-1}\sum_{j<m}(|1-r_j|^{\alpha_0}-1)$ with the residual
$r_j=(-\varrho)^j\{1+(1+\varrho)j\}$ of \cref{thm:stiff}. The sum is dominated by the indices
with $|r_j|\gg1$, and replacing it by the corresponding integral suggests
\[
 v_m^{(\alpha_0)}-1\sim C_{\alpha_0}\frac{\kappa^{(1+\alpha_0)/2}}{m},\qquad
 C_{\alpha_0}=\frac{2^{\alpha_0}\Gamma(\alpha_0+1)}{(2\alpha_0)^{\alpha_0+1}},
\]
when $\kappa\to\infty$ and $m/\sqrt\kappa\to\infty$. At $\alpha_0=2$ this reproduces
the coefficient $\kappa^{3/2}/(8m)$ of \cref{eq:stiff-asymptotic}. The conjecture predicts persistence of the obstruction under infinite
variance, with the exponent $3/2$ replaced by
$(1+\alpha_0)/2$; a complete proof requires control of the oscillating lower-order
terms.
\end{conjecture}

A nonlinear stable PR theorem would additionally require control of its
remainders at scale $N/c_N$.

\subsection{A Fully Specified Finite-Variance Robust Benchmark}
\label{sec:robust-benchmark}

\begingroup
For a threshold $\tau_{\rm clip}>0$, write
$\operatorname{clip}_{\tau_{\rm clip}}(v)
=v\min\{1,\tau_{\rm clip}/\|v\|\}$ for $v\ne0$ and
$\operatorname{clip}_{\tau_{\rm clip}}(0)=0$. 
\par
\endgroup

We first give a fully specified clipping guarantee. Consider the additive quadratic oracle with known $H\succ0$. Fix a query
$x_0$ before observing the samples, and suppose known bounds satisfy
$\|x_0-x_\star\|\le R_{\rm init}$, $\|H\|\le L$, and
$\E\|\xi\|^2\le \nu_2$. Let
$M_{\rm clip}=L^2R_{\rm init}^2+\nu_2>0$.
For $N=mb$ independent observations, clip each full gradient before
averaging within a batch:
\[
 Y_{k,j}=G(x_0,\zeta_{k,j}),\qquad
 \widetilde Y_{k,j}^{\rm cl}
 =\operatorname{clip}_{\tau_{\rm clip}}(Y_{k,j}),\qquad
 \widehat x_{\rm clip}=x_0-H^{-1}\frac1{mb}
    \sum_{k=1}^m\sum_{j=1}^b \widetilde Y_{k,j}^{\rm cl}.
\]
This is a streaming estimator with one running vector sum; an offline
implementation on the same observations returns the same output.
It uses $N$ oracle calls and one final linear solve.

\begin{proposition}[Finite-budget coverage after full-gradient clipping]
\label{prop:robust-fixed-query}
Assume only independent centered oracle noise with the finite second
moment above. For $0<\delta<1$, let
$\Lambda_\delta=\log(2d/\delta)$. For any $\tau_{\rm clip}>0$,
with probability at least $1-\delta$,
\begin{equation}
 \|H(\widehat x_{\rm clip}-x_\star)\|
 \le \frac{M_{\rm clip}}{\tau_{\rm clip}}
 +\sqrt d\left\{\sqrt{\frac{2\nu_2\Lambda_\delta}{N}}+
        \frac{4\tau_{\rm clip}\Lambda_\delta}{3N}\right\}.
 \label{eq:robust-fixed-radius}
\end{equation}
In particular, the choice
$\tau_{\rm clip}=\sqrt{M_{\rm clip}N/\Lambda_\delta}$ gives
\begin{equation}
 \Pp\left\{\|\widehat x_{\rm clip}-x_\star\|
       \le\frac4\mu\sqrt{\frac{dM_{\rm clip}\log(2d/\delta)}{N}}\right\}
 \ge1-\delta,\qquad \mu=\lambda_{\min}(H).
 \label{eq:robust-fixed-ball}
\end{equation}
For $M_{\rm clip}=0$ the oracle is deterministic and $x_0=x_\star$, so the
zero-radius statement is immediate.
\end{proposition}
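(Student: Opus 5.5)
We would prove the statement by splitting the error of $\widehat x_{\rm clip}$ into a deterministic clipping bias and a centered fluctuation of bounded vectors. Since $\E Y_{k,j}=H(x_0-x_\star)$, the definition of $\widehat x_{\rm clip}$ gives
\[
 H(\widehat x_{\rm clip}-x_\star)
 =H(x_0-x_\star)-\frac1N\sum_{k,j}\widetilde Y_{k,j}^{\rm cl}
 =\bigl(\E Y-\E\widetilde Y^{\rm cl}\bigr)
 -\frac1N\sum_{k,j}\bigl(\widetilde Y_{k,j}^{\rm cl}-\E\widetilde Y^{\rm cl}\bigr),
\]
where $Y$ denotes a generic copy of $Y_{k,j}$. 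The samples are i.i.d. because $x_0$ is fixed before sampling. The first term is the bias and the second is an average of independent centered vectors, each bounded by $2\tau_{\rm clip}$ in norm.

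For the bias, we use the pointwise bound $\|v-\operatorname{clip}_{\tau}(v)\|=(\|v\|-\tau)_+\le\|v\|^2/\tau$. This yields $\|\E Y-\E\widetilde Y^{\rm cl}\|\le\E\|Y\|^2/\tau_{\rm clip}$. Expanding $\E\|Y\|^2=\|H(x_0-x_\star)\|^2+\E\|\xi\|^2\le L^2R_{\rm init}^2+\nu_2=M_{\rm clip}$ gives the first term $M_{\rm clip}/\tau_{\rm clip}$; the cross term vanishes because $\xi$ is centered. If the clean bound were off by a factor, we would fall back on $(\|v\|-\tau)_+\le\|v\|^2/(4\tau)$ or a similar estimate, since only the order matters.

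For the fluctuation, we would apply the scalar Bernstein inequality coordinatewise, which is where $\sqrt d$ and $\log(2d/\delta)$ enter. Fix a coordinate $i$. The summands $\mathbf e_i^\trans(\widetilde Y^{\rm cl}-\E\widetilde Y^{\rm cl})$ are bounded by $2\tau_{\rm clip}$. Their variance is at most $\E(\mathbf e_i^\trans\widetilde Y^{\rm cl})^2\le\E\|Y\|^2\le M_{\rm clip}$, and the per-coordinate variances sum to at most $M_{\rm clip}$.

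The main obstacle is the constant in front of $\nu_2$: the stated bound contains $\nu_2$ rather than $M_{\rm clip}$ inside the square root. We would first try to bound the variance of the clipped centered vector by $\nu_2$, using that clipping is a contraction toward $0$ and that the variance is at most $\E\|Y-c\|^2$ for any constant $c$. With $c=H(x_0-x_\star)$ this gives exactly $\nu_2$ for the unclipped vector. However, clipping is a $1$-Lipschitz map only after a suitable choice of $c$, and $\operatorname{clip}(c)$ may differ from $c$. Concretely, $\E\|\operatorname{clip}(Y)-\operatorname{clip}(c)\|^2\le\E\|Y-c\|^2=\nu_2$ holds because the projection onto the ball is $1$-Lipschitz, and centering around the mean rather than around $\operatorname{clip}(c)$ only decreases the variance. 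This settles the $\nu_2$ term. Bernstein with range bound $2\tau_{\rm clip}$ and confidence $\delta/d$ per coordinate gives $\sqrt{2\nu_2\Lambda_\delta/N}+4\tau_{\rm clip}\Lambda_\delta/(3N)$, after bounding each coordinate variance by the total. A union bound over the $d$ coordinates, together with $\|z\|\le\sqrt d\max_i|z_i|$, then gives \eqref{eq:robust-fixed-radius}.

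Finally, we substitute $\tau_{\rm clip}=\sqrt{M_{\rm clip}N/\Lambda_\delta}$. The bias becomes $\sqrt{M_{\rm clip}\Lambda_\delta/N}$, and the Bernstein terms become at most $\sqrt d(\sqrt2+4/3)\sqrt{M_{\rm clip}\Lambda_\delta/N}$, using $\nu_2\le M_{\rm clip}$. With $d\ge1$, the total is at most $4\sqrt{dM_{\rm clip}\Lambda_\delta/N}$. Dividing by $\mu$ via $\|H^{-1}\|\le1/\mu$ gives \eqref{eq:robust-fixed-ball}. When $M_{\rm clip}=0$, the radius is zero and $Y\equiv0$, so the estimator returns $x_0=x_\star$ and the statement holds trivially.
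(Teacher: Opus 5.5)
Your proof is correct and follows the paper's argument. It uses the same split into clipping bias plus centered fluctuation, the bias bound $\E\|Y-\operatorname{clip}_{\tau_{\rm clip}}(Y)\|\le M_{\rm clip}/\tau_{\rm clip}$, coordinatewise Bernstein with range $2\tau_{\rm clip}$ and a union bound over both tails and $d$ coordinates, and the same constant $1+\sqrt2+4/3<4$. The only difference is the variance step: you bound $\E\|\operatorname{clip}_{\tau_{\rm clip}}(Y)-\operatorname{clip}_{\tau_{\rm clip}}(g)\|^2\le\E\|Y-g\|^2\le\nu_2$ with $g=H(x_0-x_\star)$, whereas the paper symmetrizes with an independent copy $Y'$; both rest on the projection onto the ball being $1$-Lipschitz everywhere, so your hedge that it is Lipschitz ``only after a suitable choice of $c$'' is unnecessary.
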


\begingroup
\color{black}
\begin{proof}[Proof of \cref{prop:robust-fixed-query}]
Write $g=H(x_0-x_\star)$, $\tau=\tau_{\rm clip}$, and
$T_\tau(y)=\operatorname{clip}_\tau(y)$.
For an individual answer $Y=g+\xi$,
$\E\|Y\|^2=\|g\|^2+\E\|\xi\|^2\le M_{\rm clip}$, and
\[
 \|\E T_\tau(Y)-g\|
 \le\E\|Y-T_\tau(Y)\|
 \le\E[\|Y\|\mathbf 1_{\{\|Y\|>\tau\}}]
 \le M_{\rm clip}/\tau.
\]
The clipping map is Euclidean projection onto the closed ball of
radius $\tau$, hence it is nonexpansive.
If $Y'$ is an independent copy of $Y$, then for each coordinate $i$,
\[
 \Var((T_\tau(Y))_i)
 =\tfrac12\E[(T_\tau(Y))_i-(T_\tau(Y'))_i]^2
 \le\tfrac12\E\|Y-Y'\|^2
 =\E\|\xi\|^2\le\nu_2.
\]
The centered clipped coordinate is bounded in absolute value by
$2\tau$. Bernstein's inequality and a union bound over both tails
and all $d$ coordinates imply, with probability at least $1-\delta$,
\[
 \left\|\frac1N\sum_{\ell=1}^N
     \{T_\tau(Y_\ell)-\E T_\tau(Y_\ell)\}\right\|
 \le\sqrt d\left\{\sqrt{\frac{2\nu_2\Lambda_\delta}{N}}
                     +\frac{4\tau\Lambda_\delta}{3N}\right\}.
\]
Since $H(\widehat x_{\rm clip}-x_\star)
=g-N^{-1}\sum_\ell T_\tau(Y_\ell)$, adding the bias bound proves
\cref{eq:robust-fixed-radius}. Substituting the stated threshold,
using $\nu_2\le M_{\rm clip}$ and $d\ge1$, bounds its right-hand side by
\[
 (1+\sqrt2+4/3)\sqrt{dM_{\rm clip}\Lambda_\delta/N}
 <4\sqrt{dM_{\rm clip}\Lambda_\delta/N}.
\]
Finally $\|H^{-1}\|=1/\mu$ proves \cref{eq:robust-fixed-ball}.
The case $M_{\rm clip}=0$ is deterministic as stated.
\end{proof}
\par
\endgroup

The benchmark closes a finite-variance, finite-budget comparison for the
same target and data stream. 

\subsection{Clipping along a Moving Trajectory}
\label{sec:clipped-transfer}

A finite third absolute
moment permits the quantitative independent-weight bounds in
\cref{sec:sharp-normal}.  Finite variance without a third moment can
still permit a qualitative CLT, but not that Berry--Esseen estimate.
With infinite variance, the original $\Vstar$ and MSE may not exist.

For the moving-query recursion, specify $b=1$ and clip the \emph{full gradient}, not an
unobservable isolated noise vector. For a threshold $\tau_{\rm cl}>0$, define
$\operatorname{clip}_{\tau_{\rm cl}}(v)
=v\min\{1,\tau_{\rm cl}/\|v\|\}$ when $v\ne0$, and zero
when $v=0$. Let $\tau_{k,\rm cl}>0$ be predictable and set
\begin{equation}
 g_k^{\rm cl}=\operatorname{clip}_{\tau_{k,\rm cl}}(Hu_k+\xi_{k+1}),\qquad
 \mathfrak b_k^{\rm cl}=\E[g_k^{\rm cl}\mid\cF_k]-Hu_k,\qquad
 \varepsilon_{k+1}^{\rm cl}=g_k^{\rm cl}-\E[g_k^{\rm cl}\mid\cF_k].
 \label{eq:clipping-martingale}
\end{equation}
Thus $\mathfrak b_k^{\rm cl}$ is predictable and
$\E[\varepsilon_{k+1}^{\rm cl}\mid\cF_k]=0$.
The recursion is $e_{k+1}=e_k+\beta(e_k-e_{k-1})-\alpha g_k^{\rm cl}$,
with the same query definition as before.  Summing it yields the exact
pathwise identity
\begin{equation}
 \sqrt m\bar u_{0,m}
 =-\frac{H^{-1}}{\sqrt m}\sum_{k=0}^{m-1}\varepsilon_{k+1}^{\rm cl}
  -\frac{H^{-1}}{\sqrt m}\sum_{k=0}^{m-1}\mathfrak b_k^{\rm cl}
  +\frac{H^{-1}\Delta_{0,m}}{\alpha\sqrt m}.
 \label{eq:clipped-boundary}
\end{equation}
It exposes the accumulated clipping bias as well as the
algorithmic boundary.

\begin{theorem}[Conditional clipped-gradient CLT]
\label{thm:clipped-transfer}
Fix $H\succ0$ and the parameters $\alpha,\beta,\gamma$.  Consider
clipped recursions of length $m$, allowing thresholds, initial states,
and hence trajectories to depend on $m$; the row index is suppressed.
Assume the clipped gradients are integrable and the martingale
differences in \cref{eq:clipping-martingale} are square-integrable, and
\begin{align}
 \frac1{\sqrt m}\sum_{k=0}^{m-1}\mathfrak b_k^{\rm cl}&\xrightarrow{\Pp}0,
 &\frac{\Delta_{0,m}}{\alpha\sqrt m}&\xrightarrow{\Pp}0,
 \label{eq:clipped-negligible}\\
 \frac1m\sum_{k=0}^{m-1}
 \E[\varepsilon_{k+1}^{\rm cl}(\varepsilon_{k+1}^{\rm cl})^{\trans}\mid\cF_k]
 &\xrightarrow{\Pp}\Sigma_{\rm cl},
 \label{eq:clipped-qv}
\end{align}
where $\Sigma_{\rm cl}$ is deterministic and positive semidefinite.
For every $\varepsilon_{\rm Lin}>0$, assume conditional Lindeberg:
\begin{equation}
 \frac1m\sum_{k=0}^{m-1}
 \E\!\left[\|\varepsilon_{k+1}^{\rm cl}\|^2
 \ind_{\{\|\varepsilon_{k+1}^{\rm cl}\|>\varepsilon_{\rm Lin}\sqrt m\}}\mid\cF_k\right]
 \xrightarrow{\Pp}0.
 \label{eq:clipped-lindeberg}
\end{equation}
Then
\[
 \sqrt m\bar u_{0,m}\xrightarrow{\mathrm d}
 \cN(0,H^{-1}\Sigma_{\rm cl}H^{-1}).
\]
The original PR covariance is recovered only if
$\Sigma_{\rm cl}=\Sigma$ with finite $\Sigma$.
\end{theorem}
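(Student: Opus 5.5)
The plan is to start from the exact pathwise identity \cref{eq:clipped-boundary}, which decomposes $\sqrt m\,\bar u_{0,m}$ into three pieces: a normalized martingale sum, the accumulated clipping bias, and the algorithmic boundary term. I would derive that identity as in the proof of \cref{thm:exact}. Rearranging the clipped recursion gives
\[
 \alpha g_k^{\rm cl}=e_k-e_{k+1}+\beta(e_k-e_{k-1}).
\]
Next substitute
\[
 g_k^{\rm cl}=Hu_k+\mathfrak b_k^{\rm cl}+\varepsilon_{k+1}^{\rm cl}
\]
from \cref{eq:clipping-martingale}. Summing over $0\le k\le m-1$ then telescopes to $\Delta_{0,m}$. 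Dividing by $\alpha\sqrt m$ and applying $H^{-1}$ yields \cref{eq:clipped-boundary}. Only algebra is used here, so this holds for every row of the triangular array.

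By \cref{eq:clipped-negligible}, the second and third terms of \cref{eq:clipped-boundary} converge to zero in probability once multiplied by the fixed matrix $H^{-1}$. By Slutsky's theorem it therefore suffices to show
\[
 M_m:=\frac1{\sqrt m}\sum_{k=0}^{m-1}\varepsilon_{k+1}^{\rm cl}\xrightarrow{\mathrm d}\cN(0,\Sigma_{\rm cl}).
\]
The conclusion then follows from the continuous mapping $x\mapsto-H^{-1}x$, which sends $\cN(0,\Sigma_{\rm cl})$ to $\cN(0,H^{-1}\Sigma_{\rm cl}H^{-1})$.

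For $M_m$ I would use the Cramér--Wold device. Fix $c\in\R^d$ and consider the scalar martingale difference array
\[
 X_{m,k}=c^\trans\varepsilon_{k+1}^{\rm cl}/\sqrt m
\]
with respect to the filtration $\cF_{k+1}$. Its differences are square integrable and conditionally centered by construction. The conditional variance sum is
\[
 c^\trans\Bigl(\frac1m\sum_k\E[\varepsilon_{k+1}^{\rm cl}(\varepsilon_{k+1}^{\rm cl})^\trans\mid\cF_k]\Bigr)c ,
\]
which converges in probability to $c^\trans\Sigma_{\rm cl}c$ by \cref{eq:clipped-qv}. The conditional Lindeberg sum for $X_{m,k}$ is bounded via $|X_{m,k}|\le\|c\|\,\|\varepsilon^{\rm cl}_{k+1}\|/\sqrt m$: for $\epsilon>0$ it is at most $\|c\|^2$ times the left side of \cref{eq:clipped-lindeberg} with $\varepsilon_{\rm Lin}=\epsilon/\|c\|$, so it vanishes in probability. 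The martingale central limit theorem for triangular arrays with a deterministic limiting conditional variance (e.g. Hall--Heyde, Cor.~3.1) then gives $c^\trans M_m\Rightarrow\cN(0,c^\trans\Sigma_{\rm cl}c)$. The degenerate case $c^\trans\Sigma_{\rm cl}c=0$ is covered because then $\E$-Chebyshev-type control via the conditional variance gives convergence to $0$. The final remark follows by comparing covariances: the limit is the PR covariance $\Vstar$ iff $H^{-1}\Sigma_{\rm cl}H^{-1}=H^{-1}\Sigma H^{-1}$, i.e.\ $\Sigma_{\rm cl}=\Sigma$.

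The main obstacle I expect is verifying the martingale CLT hypotheses in the triangular-array setting with row-dependent filtrations. In particular, the nesting condition of Hall--Heyde is not automatically available when initial states and thresholds depend on $m$. I would handle this with the version of the theorem that needs no nesting when the limiting variance is deterministic. That version requires only conditional Lindeberg and convergence in probability of the conditional variances, which is exactly what \cref{eq:clipped-qv} and \cref{eq:clipped-lindeberg} supply.
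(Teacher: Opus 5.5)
Your proposal is correct and matches the paper's proof step for step. Both start from the pathwise identity \cref{eq:clipped-boundary}, dispose of the bias and boundary terms by Slutsky's theorem, and handle the martingale sum by Cram\'er--Wold and the scalar martingale CLT, with the conditional-variance hypothesis taken from \cref{eq:clipped-qv} and the Lindeberg hypothesis from \cref{eq:clipped-lindeberg} via $|c^\trans\varepsilon|\le\|c\|\,\|\varepsilon\|$.

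Your extra remarks go slightly beyond the paper's proof and are sound. Nesting is indeed unnecessary because $\Sigma_{\rm cl}$ is deterministic. In the degenerate case $c^\trans\Sigma_{\rm cl}c=0$, however, plain Chebyshev does not suffice, since the conditional variance converges only in probability. Appeal instead to the martingale CLT with zero limiting variance, or to Lenglart's inequality.
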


\begingroup
\color{black}
\begin{proof}[Proof of \cref{thm:clipped-transfer}]
For any fixed vector $a$, the variables
$a^\trans\varepsilon_{k+1}^{\rm cl}/\sqrt m$ form a martingale
triangular array. Its predictable quadratic variation converges to
$a^\trans\Sigma_{\rm cl}a$ by \cref{eq:clipped-qv}.
The scalar conditional Lindeberg condition follows from
\cref{eq:clipped-lindeberg} and
$|a^\trans\varepsilon|\le\|a\|\|\varepsilon\|$.
The martingale CLT and Cram\'er--Wold therefore give
\[
 m^{-1/2}\sum_{k=0}^{m-1}\varepsilon_{k+1}^{\rm cl}
 \xrightarrow{\mathrm d}\cN(0,\Sigma_{\rm cl}).
\]
Both other terms in \cref{eq:clipped-boundary} vanish in probability
by \cref{eq:clipped-negligible}, since $H$ and the method parameters
are fixed. Multiplication by $-H^{-1}$ and Slutsky's theorem prove
the conclusion, including the possibly singular Gaussian case.
\end{proof}
\par
\endgroup

The conditions are checkable targets.  Two elementary bounds illustrate their distinct
roles. With centered raw oracle noise and moment order $r_{\rm mom}>1$,
\begin{equation}
 \|\mathfrak b_k^{\rm cl}\|\le
 \E[\|Hu_k+\xi_{k+1}\|
       \ind_{\{\|Hu_k+\xi_{k+1}\|>\tau_{k,\rm cl}\}}\mid\cF_k]
 \le\frac{\E[\|Hu_k+\xi_{k+1}\|^{r_{\rm mom}}\mid\cF_k]}
              {\tau_{k,\rm cl}^{r_{\rm mom}-1}}.
 \label{eq:clipping-bias-bound}
\end{equation}
Indeed, clipping changes a vector by norm $(\|v\|-\tau_{\rm cl})_+$.
Also $\|\varepsilon_{k+1}^{\rm cl}\|\le2\tau_{k,\rm cl}$; hence the envelope
$\tau_{\max,m}:=\max_{k<m}\tau_{k,\rm cl}=o(\sqrt m)$ ensures Lindeberg.
For example, if the raw gradients have uniformly bounded
$r_{\rm mom}$th moments for $r_{\rm mom}>2$, using
$\tau_{k,\rm cl}=m^{\omega_{\rm cl}}$ with
$1/[2(r_{\rm mom}-1)]<\omega_{\rm cl}<1/2$ makes the accumulated bias
vanish in $L_1$ and ensures Lindeberg.  

Full-gradient clipping depends on the current state, so the independent
deterministic-weight representation from \cref{sec:sharp-normal} cannot
be imported into this theorem.  For $b>1$ one distinguishes
\[
\frac1b\sum_{j=1}^b\operatorname{clip}_{\tau_{\rm cl}}(G_{k,j})
 \quad\hbox{and}\quad
 \operatorname{clip}_{\tau_{\rm cl}}\!\left(\frac1b\sum_{j=1}^bG_{k,j}\right).
\]
They have different predictable biases and conditional covariances.
At a fixed threshold, asymmetric noise can give $\mathfrak b_k^{\rm cl}\ne0$ even at
$u_k=0$, changing the target.  In the scalar location model, clipping
an individual quadratic-loss gradient is the derivative of a Huber
loss, so an offline robust estimating equation provides a natural
comparison.  \Cref{rem:clipping-sls} makes precise when such a potential
exists in general. Finite-time coverage, a verified accelerated boundary
bound, and comparison of online and offline procedures at the same
target and observation budget require further analysis.

\begin{remark}
\label{rem:clipping-sls}
The clipped martingale difference $\varepsilon^{\rm cl}_{k+1}$ in
\cref{eq:clipping-martingale} depends on the
query point whenever clipping is active with positive probability. A potential exists in two
important cases. With the Huber function
$\mathcal H_{\tau_{\rm H}}(r)=r^2/2$ for $|r|\le \tau_{\rm H}$ and
$\mathcal H_{\tau_{\rm H}}(r)=\tau_{\rm H}|r|-\tau_{\rm H}^2/2$ otherwise,
the vector location loss
$\ell(x,z)=\frac12\|x-z\|^2$ gives
$\operatorname{clip}_{\tau_{\rm H}}(x-z)
=\nabla_x\mathcal H_{\tau_{\rm H}}(\|x-z\|)$, and least squares,
$\ell(x;a_{\rm feat},y)=\frac12(a_{\rm feat}^\trans x-y)^2$, gives
$\operatorname{clip}_{\tau_{\rm H}}\{a_{\rm feat}(a_{\rm feat}^\trans x-y)\}
=\nabla_x\mathcal H_{\tau_{\rm H}/\|a_{\rm feat}\|}
  (a_{\rm feat}^\trans x-y)$,
a Huber loss with a design-dependent threshold. In both cases, the offline
counterpart is a robust $M$-estimator. For the adaptive Huber estimator, whose
threshold grows with the sample size, sub-Gaussian deviation bounds and a
nonasymptotic Bahadur representation hold under finite second moments
\citep{zhou2018new}; regression versions are available under weaker moment
assumptions \citep{sun2020adaptive}. In general, for $g=\nabla_x\ell$,
$r=\|g\|$ and
$\omega_{\tau_{\rm H}}(r)=\min\{1,\tau_{\rm H}/r\}$, the clipped field
$\omega_{\tau_{\rm H}}(r)g$ has Jacobian
$\omega_{\tau_{\rm H}}\nabla^2_x\ell
+\omega_{\tau_{\rm H}}'(r)r^{-1}gg^\trans\nabla^2_x\ell$;
where clipping is active,
this matrix is symmetric if and only if $g$ is an eigenvector of
$\nabla^2_x\ell$. Otherwise the clipped field has no potential, and the natural
offline counterpart is the estimating equation
$N^{-1}\sum_{i=1}^N
\operatorname{clip}_{\tau_{\rm H}}(\nabla_x\ell(x,Z_i))=0$, i.e., a
$Z$-estimator \citep{vandervaart1998asymptotic}, rather than a clipped
empirical risk.
\end{remark}

\FloatBarrier
\section{Additional Model Extensions}
\label{app:additional-extensions}

\dd{This section delineates the scope of the main finite-budget theory under
broader stochastic dynamics. It treats multiplicative noise, decreasing
steps and changing momentum, and Richardson--Romberg extrapolation, while
stating explicitly which conclusions are unconditional and which require
additional moment or stability assumptions.}

\subsection{Multiplicative Noise}
\label{sec:multiplicative-extension}

\begin{remark}[Multiplicative linear noise]
\label{rem:multiplicative-linear}
For general LSA, random matrices multiply the current state.  Then a fixed step can contribute a step-dependent covariance correction, as in \citet{mou2020linear}.  Our universality theorem is for additive gradient noise and for decreasing gains that localize the multiplicative component.  This boundary is important: it prevents the false conclusion that momentum is covariance-neutral in every linear stochastic approximation problem.
\end{remark}

\paragraph{The Oracle and Covariance Correction.}

The additive model has the same oracle covariance at the minimizer and
at every query point.  To separate these quantities, consider $b=1$ and
\begin{equation}
 G(x_\star+u,\zeta_{k+1})
 =Hu+\xi_{k+1}+\mathcal M_{k+1}u,\qquad
 \E\xi_{k+1}=0,\quad \E \mathcal M_{k+1}=0,
 \label{eq:multiplicative-oracle}
\end{equation}
where the pairs $(\xi_{k+1},\mathcal M_{k+1})$ are i.i.d. and independent of the
past $\cF_k$, and $u_k$ is $\cF_k$-measurable.  Within a pair $\xi$ and
$\mathcal M$ need not be independent.  Assume their second moments are finite.
For the two-step recursion driven by this oracle, summation gives
\begin{equation}
 \sqrt m\bar u_{0,m}
 =-\frac{H^{-1}}{\sqrt m}\sum_{k=0}^{m-1}\xi_{k+1}
  -\frac{H^{-1}}{\sqrt m}\sum_{k=0}^{m-1}\mathcal M_{k+1}u_k
  +\frac{H^{-1}\Delta_{0,m}}{\alpha\sqrt m}.
 \label{eq:multiplicative-boundary}
\end{equation}
The martingale difference is
$\xi_{k+1}+\mathcal M_{k+1}u_k$, whose conditional covariance is
\begin{align}
 \Sigma_{\rm traj}(u_k)={}&\Sigma
 +\E[\xi u_k^{\trans}\mathcal M^{\trans}
       +\mathcal M u_k\xi^{\trans}\mid\cF_k]
 +\E[\mathcal M u_ku_k^{\trans}\mathcal M^{\trans}\mid\cF_k].
 \label{eq:multiplicative-covariance}
\end{align}
Here the conditional expectations integrate a pair $(\xi,\mathcal M)$.
If the average of
$\Sigma_{\rm traj}(u_k)$ converges in probability to a deterministic
$\Sigma_{\rm eff}$,
conditional Lindeberg holds, and the scaled boundary vanishes, the
martingale CLT gives covariance $H^{-1}\Sigma_{\rm eff}H^{-1}$.

Writing $\|\mathcal M\|$ for the operator norm, the sufficient
localization condition
\[
 \frac1m\sum_{k=0}^{m-1}\E\|u_k\|^2\longrightarrow0
\]
makes the extra term in \cref{eq:multiplicative-boundary} negligible in
$L^2$: martingale orthogonality bounds its squared norm before the
factor $H^{-1}$ by
$\E\|\mathcal M\|^2\,m^{-1}\sum_k\E\|u_k\|^2$.
At a fixed step with nonzero additive noise, such localization need not
hold.  Consequently there is no general fixed-step covariance neutrality
for momentum in this model.

\begin{proposition}[A scalar fixed-step covariance correction]
\label{prop:multiplicative-scalar}
Consider stationary SGD
\[
 e_{k+1}=(1-\alpha(\lambda_0+\mathcal M_{k+1}))e_k-\alpha\xi_{k+1},
 \qquad \lambda_0>0,
\]
where the i.i.d. centered pairs have independent components, with
$\E \mathcal M^2=\sigma_{\mathcal M}^2$ and
$\E\xi^2=\sigma^2>0$.  Suppose
\[
 (1-\alpha\lambda_0)^2+\alpha^2\sigma_{\mathcal M}^2<1.
\]
There is a causal stationary solution with finite second moment
\[
 P=\E e_k^2
 =\frac{\alpha\sigma^2}
 {2\lambda_0-\alpha(\lambda_0^2+\sigma_{\mathcal M}^2)}.
\]
For this solution,
\begin{equation}
 \sqrt m\bar e_{0,m}\xrightarrow{\mathrm d}
 \cN\!\left(0,\frac{\sigma^2+\sigma_{\mathcal M}^2P}{\lambda_0^2}\right).
 \label{eq:multiplicative-scalar-clt}
\end{equation}
In particular the variance exceeds $\sigma^2/\lambda_0^2$ when
$\sigma_{\mathcal M}^2>0$.
\end{proposition}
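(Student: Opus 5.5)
The plan has three parts: build the stationary solution, compute its second moment, and obtain the CLT from the pathwise summation identity \cref{eq:multiplicative-boundary} with $\gamma=\beta=0$.

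\emph{Stationary solution.} Write $a_{k+1}:=1-\alpha(\lambda_0+\mathcal M_{k+1})$. Since the pairs are i.i.d., the coefficients $a_{k}$ are i.i.d. with $\E a^2=(1-\alpha\lambda_0)^2+\alpha^2\sigma_{\mathcal M}^2=:q<1$. Define the causal series
\[
 e_k=-\alpha\sum_{j\ge0}\Bigl(\prod_{i=0}^{j-1}a_{k-i}\Bigr)\xi_{k-j}.
\]
Its summands are mutually orthogonal in $L^2$: for $j<j'$, the noise $\xi_{k-j}$ is centered and independent of every other factor in the product. The $j$th summand has second moment $\alpha^2q^j\sigma^2$, so the series converges in $L^2$, is stationary, and solves the recursion. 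Squaring the recursion and using independence of $(\mathcal M_{k+1},\xi_{k+1})$ from $e_k$, together with the centering of $\xi$ and the independence of $\xi$ and $\mathcal M$, gives $P=qP+\alpha^2\sigma^2$. Hence $P=\alpha^2\sigma^2/(1-q)$, and since $1-q=\alpha\{2\lambda_0-\alpha(\lambda_0^2+\sigma_{\mathcal M}^2)\}$, this is the stated formula.

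\emph{CLT.} Apply \cref{eq:multiplicative-boundary} with $H=\lambda_0$ and $u_k=e_k$:
\[
 \sqrt m\,\bar e_{0,m}=-\frac{1}{\lambda_0\sqrt m}\sum_{k<m}\bigl(\xi_{k+1}+\mathcal M_{k+1}e_k\bigr)+\frac{e_0-e_m}{\alpha\lambda_0\sqrt m}.
\]
The boundary term is $O_{\Pp}(m^{-1/2})$ because stationarity gives $\E e_m^2=P$. The increments $D_{k+1}:=\xi_{k+1}+\mathcal M_{k+1}e_k$ are martingale differences, and by \cref{eq:multiplicative-covariance} with independent components their conditional variance is $\sigma^2+\sigma_{\mathcal M}^2e_k^2$. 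I then argue as follows.
\begin{itemize}
\item \emph{Quadratic variation.} The averaged conditional variance $m^{-1}\sum_k(\sigma^2+\sigma_{\mathcal M}^2e_k^2)$ converges in probability to $\sigma^2+\sigma_{\mathcal M}^2P$. This needs an ergodic theorem for $e_k^2$. The process $(e_k)$ is a measurable function of the i.i.d. sequence of pairs, hence a Bernoulli shift and ergodic, so Birkhoff's theorem with $\E e_k^2=P<\infty$ gives the limit in $L^1$.
\item \emph{Conditional Lindeberg.} The $D_{k+1}$ form a stationary ergodic square-integrable martingale difference sequence. By the Billingsley--Ibragimov CLT, $m^{-1/2}\sum_kD_{k+1}\Rightarrow\cN(0,\E D^2)$ with $\E D^2=\sigma^2+\sigma_{\mathcal M}^2P$, so Lindeberg does not have to be verified by hand.
\end{itemize}
Dividing by $\lambda_0^2$ and applying Slutsky's theorem yields \cref{eq:multiplicative-scalar-clt}. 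The inequality on the variance holds because $P>0$ whenever $\sigma^2>0$.

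The main obstacle is the ergodicity and integrability step. If $\sigma_{\mathcal M}^2$ is large, $e_k$ may have no fourth moment, so a direct variance computation for $m^{-1}\sum_k e_k^2$ is unavailable. That is why I rely on the Bernoulli-shift ergodic theorem, which needs only $L^1$ control of $e_k^2$, and on the stationary martingale CLT, which avoids any higher-moment assumption.
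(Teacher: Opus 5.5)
Your proposal is correct and follows the paper's proof essentially step by step: the same causal series, the same fixed-point equation $P=\E a^2\,P+\alpha^2\sigma^2$, the same summation identity with an $L^2$-negligible endpoint, and ergodicity of a factor of the i.i.d.\ shift. The only differences are cosmetic. You obtain $L^2$ convergence from orthogonality of the summands and invoke the Billingsley--Ibragimov CLT for stationary ergodic martingale differences. The paper instead uses the geometric bound $\alpha\sigma q_{\rm ms}^j$ on the $L^2$ norms of the terms, and it checks conditional Lindeberg directly via stationarity before applying the general martingale CLT.
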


\begingroup
\color{black}
\begin{proof}[Proof of \cref{prop:multiplicative-scalar}]
Put $a_k=1-\alpha(\lambda_0+\mathcal M_k)$ and
$q_{\rm ms}^2=\E a_k^2<1$. Extend the iid pairs to all integer times.
The series
\[
 e_k=-\alpha\sum_{j=0}^\infty
       \left(\prod_{r=0}^{j-1}a_{k-r}\right)\xi_{k-j}
\]
converges in $L^2$: the $L^2$ norm of its $j$th term is
$\alpha\sigma q_{\rm ms}^j$. Empty products equal one.
It defines a causal stationary solution. As a measurable function
of the shifted iid sequence, this solution is ergodic.
pairs are independent of $e_k$ and have independent centered
components; taking first and second moments of the recursion gives
$\E e_k=0$ and
\[
 P=q_{\rm ms}^2P+\alpha^2\sigma^2,\qquad
 P=\frac{\alpha\sigma^2}
         {2\lambda_0-\alpha(\lambda_0^2+\sigma_{\mathcal M}^2)}.
\]
Define the effective martingale difference
$\zeta_{k+1}^{\rm eff}=\xi_{k+1}+\mathcal M_{k+1}e_k$.
Its conditional variance is
$\sigma^2+\sigma_{\mathcal M}^2e_k^2$, whose average converges
almost surely to $\sigma^2+\sigma_{\mathcal M}^2P$ by ergodicity.
For every $\epsilon>0$, stationarity and its finite second moment give
\[
 \E\left[\frac1m\sum_{k=0}^{m-1}
  \E\!\left[(\zeta_{k+1}^{\rm eff})^2
    \mathbf 1_{\{|\zeta_{k+1}^{\rm eff}|>\epsilon\sqrt m\}}
       \mid\cF_k\right]\right]
 =\E\!\left[(\zeta_1^{\rm eff})^2
    \mathbf 1_{\{|\zeta_1^{\rm eff}|>\epsilon\sqrt m\}}\right]\to0.
\]
Thus conditional Lindeberg holds in probability.
Summing the scalar recursion gives
\[
 \lambda_0\sqrt m\,\bar e_{0,m}
 =-\frac1{\sqrt m}\sum_{k=0}^{m-1}\zeta_{k+1}^{\rm eff}
       +\frac{e_0-e_m}{\alpha\sqrt m}.
\]
The endpoint tends to zero in $L^2$ by stationarity.
The martingale CLT and Slutsky's theorem prove
\cref{eq:multiplicative-scalar-clt}. Since $\sigma^2>0$, one has
$P>0$, so the asserted variance inflation is strict when
$\sigma_{\mathcal M}^2>0$.
\end{proof}
\par
\endgroup

\paragraph{Entry Cost and Limits of Local Information.}
\begin{proposition}[Second moments of the accelerated multiplicative oracle]
\label{prop:ms-operator-new}
In the scalar model \cref{eq:multiplicative-oracle}, let
$\mathcal M,\xi$ be independent within a pair,
$\E \mathcal M^2=\sigma_{\mathcal M}^2$ and $\E\xi^2=\sigma^2$.
For any fixed parameters of the two-step procedure, set
$\mathsf M_\alpha=B_\alpha \mathsf O_\gamma$ and
$\mathcal K(P)=APA^\trans
+\sigma_{\mathcal M}^2\mathsf M_\alpha P\mathsf M_\alpha^\trans$.
For a deterministic $\mathbf z_0$ the uncentered second moment equals
\begin{equation}
 U_k=\E \mathbf z_k\mathbf z_k^\trans
 =\mathcal K^k(\mathbf z_0\mathbf z_0^\trans)
       +\sigma^2\sum_{j=0}^{k-1}\mathcal K^j
          (B_\alpha B_\alpha^\trans).
 \label{eq:ms-operator-new}
\end{equation}
A sufficient condition for the stability of second moments is
$\rho(A\otimes A
+\sigma_{\mathcal M}^2\mathsf M_\alpha\otimes\mathsf M_\alpha)<1$.
For $E_1=[1,0]$, the condition $E_1U_tE_1^\trans\le\varepsilon r^2$
guarantees $\Pp\{|e_t|>r\}\le\varepsilon$.
For staying in the region over the whole finite interval $t\le k\le T$
it suffices that $\sum_{k=t}^T E_1U_kE_1^\trans\le\varepsilon r^2$.
\end{proposition}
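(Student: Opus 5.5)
We reduce the statement to the linear state-space recursion of \cref{eq:ss}, now driven by the multiplicative oracle \cref{eq:multiplicative-oracle}. In the scalar case the query is $u_k=\mathsf O_\gamma\mathbf z_k$ and the oracle returns $\mu u_k+\xi_{k+1}+\mathcal M_{k+1}u_k$. The state therefore evolves as
\[
 \mathbf z_{k+1}=A\mathbf z_k+\mathcal M_{k+1}\mathsf M_\alpha\mathbf z_k+B_\alpha\xi_{k+1},
\]
with $\mathsf M_\alpha=B_\alpha\mathsf O_\gamma$. The first step is to verify this identity directly from \cref{eq:two-step}: the multiplicative term enters exactly like the additive gradient noise, through $B_\alpha$, with input $\mathcal M_{k+1}u_k$.

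Next we compute the second moment $U_{k+1}=\E\mathbf z_{k+1}\mathbf z_{k+1}^\trans$. Condition on $\cF_k$, which fixes $\mathbf z_k$. The pair $(\xi_{k+1},\mathcal M_{k+1})$ is independent of $\cF_k$, and its components are centered and mutually independent. Hence all cross terms vanish: those linear in $\mathcal M_{k+1}$ or in $\xi_{k+1}$ have zero conditional mean, and $\E[\mathcal M\xi]=0$. What remains is
\[
 U_{k+1}=AU_kA^\trans+\sigma_{\mathcal M}^2\mathsf M_\alpha U_k\mathsf M_\alpha^\trans+\sigma^2B_\alpha B_\alpha^\trans=\mathcal K(U_k)+\sigma^2B_\alpha B_\alpha^\trans.
\]
Since $\mathcal K$ is linear, induction from $U_0=\mathbf z_0\mathbf z_0^\trans$ gives \cref{eq:ms-operator-new}. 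Finiteness of all second moments follows inductively from the finite second moments of the pair.

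For stability we vectorize: $\mathrm{vec}\,\mathcal K(P)=(A\otimes A+\sigma_{\mathcal M}^2\mathsf M_\alpha\otimes\mathsf M_\alpha)\mathrm{vec}\,P$. If this matrix has spectral radius below one, the powers $\mathcal K^j$ decay geometrically. Then the first term of \cref{eq:ms-operator-new} tends to zero and the series converges, so $U_k$ stays bounded and converges to the fixed point $U_\infty=\sum_{j\ge0}\mathcal K^j(\sigma^2 B_\alpha B_\alpha^\trans)$. The probability statements follow from Chebyshev's inequality: $E_1U_tE_1^\trans=\E e_t^2$, so $\Pp\{|e_t|>r\}\le \E e_t^2/r^2\le\varepsilon$. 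For the whole interval $t\le k\le T$, the union bound gives $\Pp\{\max_{t\le k\le T}|e_k|>r\}\le\sum_{k=t}^T\E e_k^2/r^2\le\varepsilon$.

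The only delicate point is the cross-term bookkeeping in the second-moment step. Independence within the pair is exactly what removes the $\xi$–$\mathcal M$ covariance. Without it, one would pick up an additional term $B_\alpha\,\E[\xi\mathcal M]\,\mathsf O_\gamma\,\E\mathbf z_k$ plus its transpose, and the closed form \cref{eq:ms-operator-new} would need the first moment as well.
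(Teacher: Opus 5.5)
Your proposal is correct and follows the paper's proof essentially step for step: the same state recursion $\mathbf z_{k+1}=(A+\mathcal M_{k+1}\mathsf M_\alpha)\mathbf z_k+B_\alpha\xi_{k+1}$, the same conditional cross-term cancellation giving $U_{k+1}=\mathcal K(U_k)+\sigma^2B_\alpha B_\alpha^\trans$ followed by iteration, vectorization of $\mathcal K$ for stability, and Markov's inequality plus a union bound for the two certificates. One small slip, which does not affect the proof: in your closing aside the extra term under within-pair dependence should be the $2\times2$ matrix $2\,\E[\xi\mathcal M]\,\E[u_k]\,B_\alpha B_\alpha^\trans$, since as written it is missing the factor $B_\alpha^\trans$.
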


\begingroup
\color{black}
\begin{proof}[Proof of \cref{prop:ms-operator-new}]
In the scalar multiplicative model the state recursion is
\[
 \mathbf z_{k+1}
 =(A+\mathcal M_{k+1}\mathsf M_\alpha)\mathbf z_k
       +B_\alpha\xi_{k+1}.
\]
Condition on $\cF_k$. Centering removes the cross terms involving
$A$ and one noise, and independence within a pair removes
the term involving $\mathcal M_{k+1}\xi_{k+1}$.
Taking expectations therefore yields
$U_{k+1}=\mathcal K(U_k)+\sigma^2B_\alpha B_\alpha^\trans$.
Iteration proves \cref{eq:ms-operator-new}.
Under column vectorization, $\mathcal K$ is represented by
$A\otimes A+\sigma_{\mathcal M}^2\mathsf M_\alpha\otimes\mathsf M_\alpha$.
Spectral radius below one makes its powers decay geometrically
and its geometric series converge, proving bounded second moments.
Finally, $\E e_k^2=E_1U_kE_1^\trans$; Markov's inequality proves
the single-time certificate, and a union bound proves the
certificate for the whole finite interval.
\end{proof}
\par
\endgroup

This is a finite-dimensional check for SGD, HB and NAG; it accounts for the
acceleration parameters and for the global intensity $\sigma_{\mathcal M}$.
The smallest index
$t$ satisfying the chosen inequality is a certified burn-in in this class and
enters the budget $N=b(t+m)$. At a high stationary noise level such an index may
not exist. For independent averaging of $b$ pairs,
$\sigma_{\mathcal M}^2$ and $\sigma^2$ are replaced by
$\sigma_{\mathcal M}^2/b$ and $\sigma^2/b$ in the formulas.

\begin{proposition}[Local covariance does not bound the risk of this family]
\label{prop:local-insufficient-new}
Fix $\lambda_0>0$, $\sigma^2$, $e_{-1}=e_0\ne0$, $\alpha>0$,
$\beta\in[0,1)$ and $\gamma\ge0$. Over the class of centered independent pairs
$(\mathcal M,\xi)$ with arbitrary finite $\sigma_{\mathcal M}^2$, the risk of the two-point average
$(u_0+u_1)/2$ is not bounded by any finite function of
$(\lambda_0,\sigma^2,e_0,\alpha,\beta,\gamma)$.
\end{proposition}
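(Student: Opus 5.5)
The plan is a direct computation: the two-point average involves only the first oracle call, so its risk can be written in closed form as a function of $\sigma_{\mathcal M}^2$. With the multiplicative oracle \cref{eq:multiplicative-oracle} at $b=1$ and the zero-velocity start $e_{-1}=e_0$, the query definition \cref{eq:query} gives $u_0=e_0+\gamma(e_0-e_{-1})=e_0$, which is deterministic. The first update \cref{eq:two-step}, with the oracle answer $\lambda_0u_0+\xi_1+\mathcal M_1u_0$, reads
\[
 e_1=e_0+\beta\cdot0-\alpha\bigl(\lambda_0e_0+\xi_1+\mathcal M_1e_0\bigr).
\]
Consequently
\[
 u_1=(1+\gamma)e_1-\gamma e_0=e_0-\alpha(1+\gamma)\bigl(\lambda_0e_0+\xi_1+\mathcal M_1e_0\bigr),
\]
and the average is
\[
 \frac{u_0+u_1}2=e_0-\frac{\alpha(1+\gamma)}2\bigl(\lambda_0e_0+\xi_1+\mathcal M_1e_0\bigr).
\]

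Next I take the mean and variance. Because $\xi_1$ and $\mathcal M_1$ are centered and independent within the pair, the cross term vanishes. This gives
\[
 \E\Bigl(\frac{u_0+u_1}2\Bigr)^2
 =e_0^2\Bigl(1-\frac{\alpha(1+\gamma)\lambda_0}2\Bigr)^2
 +\frac{\alpha^2(1+\gamma)^2}4\bigl(\sigma^2+\sigma_{\mathcal M}^2e_0^2\bigr).
\]
The only step needing care is the bookkeeping of $u_0$ and $u_1$ under the convention $e_{-1}=e_0$. Independence within the pair is not even essential, since the risk is at least the variance, and the variance can be bounded below directly by the $\mathcal M_1$ contribution whenever the pair is uncorrelated.

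To conclude, I fix all the other parameters and take the class member $\mathcal M=\pm s$ with probability $1/2$ each, so that $\sigma_{\mathcal M}^2=s^2$; $\xi$ is any centered law with variance $\sigma^2$, independent of $\mathcal M$. Since $\alpha>0$, $1+\gamma\ge1$ and $e_0\ne0$, the coefficient $\alpha^2(1+\gamma)^2e_0^2/4$ is strictly positive. Hence the risk is at least $\alpha^2(1+\gamma)^2e_0^2s^2/4$, which tends to infinity as $s\to\infty$. No finite function of $(\lambda_0,\sigma^2,e_0,\alpha,\beta,\gamma)$ can dominate it.

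Finally, I would remark that the local quantities at $x_\star$ are unchanged by $s$: the multiplicative term vanishes at $u=0$, so the oracle covariance at the minimizer is $\sigma^2$ for every $s$. This is exactly why local information alone cannot bound the risk.
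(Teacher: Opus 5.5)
Your proposal is correct and matches the paper's proof essentially verbatim. The zero-velocity start makes $u_0=e_0$ deterministic, and the two-point average is affine in the single pair $(\mathcal M_1,\xi_1)$. Its variance $\alpha^2(1+\gamma)^2(\sigma_{\mathcal M}^2e_0^2+\sigma^2)/4$ diverges as $\sigma_{\mathcal M}^2\to\infty$ while the specified parameters, and hence the squared bias, stay fixed. Your explicit choice $\mathcal M=\pm s$ simply names a concrete member of the class.
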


\begingroup
\color{black}
\begin{proof}[Proof of \cref{prop:local-insufficient-new}]
The zero initial velocity gives $u_0=e_0$ and
$e_1=e_0-\alpha(\lambda_0e_0+\mathcal M_1e_0+\xi_1)$.
Hence
\[
 \frac{u_0+u_1}{2}
 =\left(1-\frac{\alpha(1+\gamma)\lambda_0}{2}\right)e_0
    -\frac{\alpha(1+\gamma)}2(\mathcal M_1e_0+\xi_1).
\]
For the independent centered components allowed in the class, its
variance equals
$\alpha^2(1+\gamma)^2
 (\sigma_{\mathcal M}^2e_0^2+\sigma^2)/4$.
It diverges as $\sigma_{\mathcal M}^2\to\infty$, whereas the
specified scalar parameters and the squared bias stay fixed.
The risk is at least this variance, which proves the claim.
\end{proof}
\par
\endgroup

\subsection{Decreasing Steps and Changing Momentum}
\label{sec:step-policies}

\subsubsection{Fixed Momentum: A Conditional Transfer}
Consider the normalized recursion
\begin{align}
 \Delta x_{k+1}&=\beta_k\Delta x_k
                  -(1-\beta_k)\alpha_kG(s_k,\zeta_{k+1}),
                                                    \label{eq:nonlinear-increment}\\
 x_{k+1}&=x_k+\Delta x_{k+1},\qquad
 s_k=x_k+\lambda_k^{\rm qry}\Delta x_k,            \label{eq:nonlinear-x}
\end{align}
where $\lambda_k^{\rm qry}=0$ for HB and
$\lambda_k^{\rm qry}=\beta_k$ for NAG. We average
$\bar x_n=n^{-1}\sum_{k=1}^n x_k$.  Write $e_k=x_k-x_\star$ and
$\xi_{k+1}=G(s_k,\zeta_{k+1})-F(s_k)$, with $F=\nabla f$.
These results are conditional transfer theorems: the stability and moment
bounds below are hypotheses, not conclusions.  Asymptotic equivalence of
averaged momentum and SGD has prior results, including
\citet{tang2023acceleration}; increasing-momentum averaging is also studied
by \citet{an2026polyakruppert}. 

\begin{theorem}[Fixed momentum with a H\"older linearization remainder]
\label{thm:nonlinear-pr}
Let $\beta_k=\beta\in[0,1)$ be fixed and
$\alpha_k=a_{\rm gain}(k+k_0)^{-\gamma_{\rm step}}$ with
$a_{\rm gain}>0$. Assume the following.
\begin{enumerate}
\item $F(x_\star)=0$, $H=\nabla F(x_\star)\succ0$, and, for some
$\nu\in(0,1]$, the queried points satisfy almost surely
\[
 \norm{F(s_k)-H(s_k-x_\star)}
 \le M_\nu\norm{s_k-x_\star}^{1+\nu}.
\]
\item $\xi_{k+1}$ is a martingale difference, its conditional covariance
converges in probability to the deterministic matrix $\Sigma_\star$, and
$\sup_k\E\norm{\xi_{k+1}}^{2+\varepsilon_{\rm mom}}<\infty$
for some $\varepsilon_{\rm mom}>0$.
\item For a constant $C_{\rm mom}$ and all $k$, including the finite starting indices,
\[
 \E\norm{e_k}^2\le C_{\rm mom}\alpha_k,\qquad
 \E\norm{\Delta x_k}^2\le C_{\rm mom}\alpha_k^2.
\]
\end{enumerate}
If $1/(1+\nu)<\gamma_{\rm step}<1$, then
\begin{equation}
 \sqrt n(\bar x_n-x_\star)
 =-H^{-1}\frac1{\sqrt n}\sum_{k=1}^n\xi_{k+1}+R_n,
                                                    \label{eq:nonlinear-expansion}
\end{equation}
where
\begin{equation}
 R_n=\Op\!\left(n^{-(1-\gamma_{\rm step})/2}
       +n^{-[\gamma_{\rm step}(1+\nu)-1]/2}\right).
                                                    \label{eq:nonlinear-rem}
\end{equation}
Consequently,
\begin{equation}
 \sqrt n(\bar x_n-x_\star)
 \xrightarrow{\mathrm d}\cN(0,H^{-1}\Sigma_\star H^{-1}).
                                                    \label{eq:nonlinear-clt}
\end{equation}
If $F(x)=H(x-x_\star)$ at every queried point, the curvature term vanishes
and the conclusion holds for every $0<\gamma_{\rm step}<1$ under the same moment
assumptions.
\end{theorem}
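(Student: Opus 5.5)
The plan is to sum the normalized recursion to get a telescoping identity, as in \cref{thm:exact}. The momentum increment will be absorbed into the corrected state $\tilde x_k:=x_k+\frac{\beta}{1-\beta}\Delta x_k$. Because $\beta_k=\beta$ is fixed, \cref{eq:nonlinear-increment} gives
\[
 \tilde x_{k+1}-\tilde x_k=\Delta x_{k+1}+\frac{\beta}{1-\beta}(\Delta x_{k+1}-\Delta x_k)=-\alpha_k G(s_k,\zeta_{k+1}).
\]
Hence $G(s_k,\zeta_{k+1})=(\tilde x_k-\tilde x_{k+1})/\alpha_k$. Next I will write $G=H(s_k-x_\star)+\rho_k+\xi_{k+1}$, where $\rho_k:=F(s_k)-H(s_k-x_\star)$, and use $s_k-x_\star=e_k+\lambda^{\rm qry}\Delta x_k$. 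Solving for $He_k$, summing over $k=1,\dots,n$, and dividing by $\sqrt n$ yields \cref{eq:nonlinear-expansion} with
\[
 R_n=-\frac{H^{-1}}{\sqrt n}\sum_{k=1}^n\Bigl[\frac{\tilde x_k-\tilde x_{k+1}}{\alpha_k}\Bigr]_{\rm part}+\ldots
\]
More precisely, $R_n$ splits into three pieces: (i) the step-weighted telescoping term $n^{-1/2}\sum_k\alpha_k^{-1}(\tilde e_k-\tilde e_{k+1})$, with $\tilde e_k=\tilde x_k-x_\star$; (ii) the query-offset term $n^{-1/2}\lambda^{\rm qry}H\sum_k\Delta x_k$, which does not appear for HB; and (iii) the curvature term $n^{-1/2}\sum_k\rho_k$.

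For (i) I will use Abel summation,
\[
 \sum_{k=1}^n\frac{\tilde e_k-\tilde e_{k+1}}{\alpha_k}=\frac{\tilde e_1}{\alpha_1}-\frac{\tilde e_{n+1}}{\alpha_n}+\sum_{k=2}^n\tilde e_k\Bigl(\frac1{\alpha_k}-\frac1{\alpha_{k-1}}\Bigr).
\]
Assumption 3 gives $\E\|\tilde e_k\|\lesssim\sqrt{\alpha_k}$ (the $\Delta x_k$ part is $O(\alpha_k)$). Since $\alpha_k^{-1}-\alpha_{k-1}^{-1}\asymp k^{\gamma_{\rm step}-1}$, the sum is $O(\sum_k k^{\gamma_{\rm step}/2-1}\cdot k^{0})\cdot$ … concretely $\sum_k k^{-\gamma_{\rm step}/2}k^{\gamma_{\rm step}-1}=O(n^{\gamma_{\rm step}/2})$. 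The endpoint $\tilde e_{n+1}/\alpha_n$ is also $O_{\Pp}(n^{\gamma_{\rm step}/2})$, and $\alpha_{n+1}/\alpha_n\to1$. Dividing by $\sqrt n$ gives $n^{-(1-\gamma_{\rm step})/2}$. For (ii), $\E\|\Delta x_k\|\le\sqrt{C_{\rm mom}}\alpha_k$, so $n^{-1/2}\sum_k\alpha_k=O(n^{1/2-\gamma_{\rm step}})$, which is dominated by the first rate because $\gamma_{\rm step}>1/2$ (this follows from $\gamma_{\rm step}>1/(1+\nu)\ge1/2$). For (iii), assumption 1 and Jensen (the exponent $(1+\nu)/2\le1$) give
\[
 \E\|\rho_k\|\le M_\nu\E\|s_k-x_\star\|^{1+\nu}\le M_\nu(\E\|s_k-x_\star\|^2)^{(1+\nu)/2}\lesssim\alpha_k^{(1+\nu)/2}.
\]
Summing, $n^{-1/2}\sum_kk^{-\gamma_{\rm step}(1+\nu)/2}=O(n^{(1-\gamma_{\rm step}(1+\nu))/2})$ when $\gamma_{\rm step}(1+\nu)/2<1$, which is exactly the second rate and vanishes iff $\gamma_{\rm step}>1/(1+\nu)$. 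Markov's inequality converts these $L^1$ bounds to $O_{\Pp}$, which proves \cref{eq:nonlinear-rem}.

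The CLT \cref{eq:nonlinear-clt} will follow from the martingale CLT applied to $n^{-1/2}\sum_k\xi_{k+1}$. The conditional covariance converges by assumption 2. Conditional Lindeberg follows from the uniform $2+\varepsilon_{\rm mom}$ moment: the expected Lindeberg sum is at most $\sup_k\E\|\xi\|^{2+\varepsilon_{\rm mom}}/(\epsilon\sqrt n)^{\varepsilon_{\rm mom}}\to0$. I then conclude with Cramér–Wold and Slutsky. In the exactly linear case $\rho_k\equiv0$, only (i) and (ii) remain. Term (ii) is $O(n^{1/2-\gamma_{\rm step}})$, and if $\gamma_{\rm step}\le1/2$ it must be handled differently. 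I would rewrite $\sum_k\Delta x_k=x_{n}-x_0$ (telescoping) instead, which is $O_{\Pp}(1)$ by assumption 3, so it vanishes for every $\gamma_{\rm step}\in(0,1)$. This telescoping is also the cleaner choice in general.

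The step I expect to be the main obstacle is the bookkeeping in (i): getting the exact telescoping with index-shifted $\alpha_k$ and the query offset right, so that no residual term of order $\sum_k\alpha_k^{-1}\|\Delta x_k\|$ survives. Such a term would be only $O(\sqrt n)$ and would not vanish. The corrected-state identity above is designed to avoid this, and I would verify it first.
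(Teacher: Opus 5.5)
Your proposal is correct and follows the paper's proof essentially step for step. The shared steps are:
\begin{itemize}
\item the corrected state $\tilde e_k=e_k+\frac{\beta}{1-\beta}\Delta x_k$;
\item the identity $H\sum_{k=1}^n e_k=T_n-\sum_{k=1}^n\xi_{k+1}-\sum_{k=1}^n\rho_k-H\lambda^{\rm qry}(e_n-e_0)$, with $T_n$ Abel-summed to $O(n^{\gamma_{\rm step}/2})$ in $L^2$;
\item the Jensen bound $\E\|\rho_k\|\lesssim\alpha_k^{(1+\nu)/2}$;
\item the martingale CLT, with Lindeberg obtained from the $(2+\varepsilon_{\rm mom})$ moment.
\end{itemize}

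There are two small slips. First, your crude bound $O(n^{1/2-\gamma_{\rm step}})$ on the query term is not dominated by the first rate when $\gamma_{\rm step}<2/3$. It is dominated by the second rate, and the point is moot once you telescope $\sum_k\Delta x_k=x_n-x_0$, as the paper does. Second, passing from $C_k=\E[\xi_{k+1}\xi_{k+1}^\trans\mid\cF_k]\to\Sigma_\star$ in probability to $n^{-1}\sum_{k\le n}C_k\to\Sigma_\star$ needs uniform integrability. The $(2+\varepsilon_{\rm mom})$ moment supplies it, and the paper makes this step explicit.
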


The usual locally Lipschitz Hessian case is $\nu=1$, giving the original
$\gamma_{\rm step}\in(1/2,1)$ range. Balancing the two displayed remainder
powers gives $\gamma_{\rm step}=2/(2+\nu)$ and rate
$n^{-\nu/[2(2+\nu)]}$.

For fixed momentum, the familiar corrected state is
\begin{equation}
 x_k^{\rm corr}=x_k+\frac{\beta}{1-\beta}\Delta x_k,\qquad
 x_{k+1}^{\rm corr}=x_k^{\rm corr}-\alpha_kG(s_k,\zeta_{k+1}).
                                                    \label{eq:corrected-state}
\end{equation}
The important improvement over bounding velocities separately is
\[
 \sum_{k=1}^n \Delta x_k=x_n-x_0.
\]
Thus the linear query mismatch is an endpoint

\begingroup
\color{black}

\begin{proof}[Proof of \cref{thm:nonlinear-pr}]
For this proof put $\gamma=\gamma_{\rm step}$ and
$\|Y\|_{L^2}=(\E\|Y\|^2)^{1/2}$. All constants below may depend on
the fixed parameters in the assumptions but not on $n$.
Define
\[
 \widetilde e_k=x_k^{\rm corr}-x_\star
    =e_k+\frac{\beta}{1-\beta}\Delta x_k,
 \qquad
 \mathfrak r_k=F(s_k)-H(s_k-x_\star).
\]
The moment assumptions give
$\|\widetilde e_k\|_{L^2}=O(\alpha_k^{1/2})$ and
$\E\|s_k-x_\star\|^2=O(\alpha_k)$. Here
$\lambda^{\rm qry}=0$ for HB and $\lambda^{\rm qry}=\beta$ for NAG
is constant. The corrected-state recursion yields the identity
\begin{equation}
 H\sum_{k=1}^n e_k
 =T_n-\sum_{k=1}^n\xi_{k+1}-\sum_{k=1}^n\mathfrak r_k
      -H\lambda^{\rm qry}(e_n-e_0),
 \qquad
 T_n=\sum_{k=1}^n\frac{\widetilde e_k-\widetilde e_{k+1}}{\alpha_k}.
 \label{eq:fixed-momentum-proof-identity}
\end{equation}
In particular, no sum of the norms of the increments is needed for
the linear query mismatch.

Summation by parts gives
\begin{equation}
 T_n=\frac{\widetilde e_1}{\alpha_1}
       -\frac{\widetilde e_{n+1}}{\alpha_n}
       +\sum_{k=2}^n
          (\alpha_k^{-1}-\alpha_{k-1}^{-1})\widetilde e_k.
 \label{eq:fixed-momentum-abel-sum}
\end{equation}
Since $\alpha_k^{-1}-\alpha_{k-1}^{-1}
=O((k+k_0)^{\gamma-1})$, Minkowski's inequality shows
\[
 \|T_n\|_{L^2}
 \le C\left(1+n^{\gamma/2}
               +\sum_{k=2}^n(k+k_0)^{\gamma/2-1}\right)
 =O(n^{\gamma/2}).
\]
Also $\|e_n-e_0\|_{L^2}=O(1)$. By the Taylor bound and
$1+\nu\le2$,
\[
 \E\|\mathfrak r_k\|
 \le M_\nu(\E\|s_k-x_\star\|^2)^{(1+\nu)/2}
 \le C\alpha_k^{(1+\nu)/2}.
\]
For every $0<\gamma<1$ and $0<\nu\le1$ the exponent
$\gamma(1+\nu)/2$ is strictly below one. Hence
\[
 \frac1{\sqrt n}\sum_{k=1}^n\E\|\mathfrak r_k\|
 =O\bigl(n^{[1-\gamma(1+\nu)]/2}\bigr).
\]
Multiplying \cref{eq:fixed-momentum-proof-identity} by
$H^{-1}/\sqrt n$, and using Markov's inequality for the curvature
sum and the $L^2$ bounds for the other terms, proves
\cref{eq:nonlinear-expansion,eq:nonlinear-rem}.

For completeness, let $C_k=\E[\xi_{k+1}\xi_{k+1}^\trans\mid
\mathcal F_k]$, with $\mathcal F_k$ the history before that noise is
observed. The uniform $(2+\varepsilon_{\rm mom})$ moment implies
uniform integrability of $C_k$. Its assumed convergence in probability
therefore implies $L^1$ convergence and
$n^{-1}\sum_{k=1}^n C_k\to\Sigma_\star$ in probability.
For every $\epsilon>0$,
\[
 \frac1n\sum_{k=1}^n
 \E\bigl[\|\xi_{k+1}\|^2
       \mathbf 1_{\{\|\xi_{k+1}\|>\epsilon\sqrt n\}}\bigr]
 \le C\epsilon^{-\varepsilon_{\rm mom}}
              n^{-\varepsilon_{\rm mom}/2}\longrightarrow0.
\]
Thus the conditional Lindeberg sum tends to zero in probability.
The martingale CLT, applied to every fixed scalar projection, gives
$n^{-1/2}\sum_k\xi_{k+1}\xrightarrow{\mathrm d}
\cN(0,\Sigma_\star)$. When $\gamma(1+\nu)>1$, both remainder
powers vanish, and Slutsky's theorem proves \cref{eq:nonlinear-clt}.
If $F$ is linear, $\mathfrak r_k=0$ identically, so every
$0<\gamma<1$ is allowed under the same moment assumptions.
\end{proof}
\par
\endgroup

\subsubsection{Batching and the Cost of Localization}
\begin{corollary}[Batch-dependent moments remove an artificial batch cap]
\label{cor:nonlinear-batch}
For each horizon $n$, run the fixed-momentum recursion with a batch
$b=b_n\ge1$.  Assume the Taylor bound in \cref{thm:nonlinear-pr} holds
uniformly, and that for a constant $C_{\rm mom}$ independent of $n$, $b$, and
$k$,
\begin{equation}
 \E\norm{e_{k,b}}^2\le C_{\rm mom}\alpha_k/b,\qquad
 \E\norm{\Delta x_{k,b}}^2\le C_{\rm mom}\alpha_k^2/b
                                                    \label{eq:batch-scaled-moments}
\end{equation}
for all starting and subsequent indices, uniformly in $n$.
The individual oracle noises in each batch are conditionally independent
martingale differences with uniformly bounded
$(2+\varepsilon_{\rm mom})$ moments;
their averaged conditional covariance converges in probability to
$\Sigma_\star$.  Then, for $0<\gamma_{\rm step}<1$,
\begin{align}
 \sqrt{nb}(\bar x_{n,b}-x_\star)
 &=-\frac{H^{-1}}{\sqrt{nb}}
   \sum_{k=1}^n\sum_{j=1}^b\xi_{k+1,j}+R_{n,b},\notag\\
 R_{n,b}
 &=\Op\!\left(n^{-(1-\gamma_{\rm step})/2}
       +b^{-\nu/2}n^{[1-\gamma_{\rm step}(1+\nu)]/2}\right).
                                                    \label{eq:nonlinear-batch}
\end{align}
Hence the same sandwich CLT holds whenever
$b^{-\nu/2}n^{[1-\gamma_{\rm step}(1+\nu)]/2}\to0$.
\end{corollary}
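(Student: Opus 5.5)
I will follow the proof of \cref{thm:nonlinear-pr} and track the batch size $b$ explicitly in every remainder term. With batch-averaged noise $\bar\xi_{k+1}:=b^{-1}\sum_{j=1}^b\xi_{k+1,j}$, the corrected-state recursion \cref{eq:corrected-state} is unchanged. Summing it reproduces the identity \cref{eq:fixed-momentum-proof-identity} with $\xi_{k+1}$ replaced by $\bar\xi_{k+1}$:
\[
 H\sum_{k=1}^n e_k=T_n-\sum_{k=1}^n\bar\xi_{k+1}-\sum_{k=1}^n\mathfrak r_k-H\lambda^{\rm qry}(e_n-e_0).
\]
I multiply by $\sqrt{nb}\,H^{-1}/n$. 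The noise term then becomes exactly $-H^{-1}(nb)^{-1/2}\sum_{k,j}\xi_{k+1,j}$. The proof then reduces to bounding the three remaining terms, each multiplied by $\sqrt{b/n}$.

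\textbf{Boundary terms.} Under \cref{eq:batch-scaled-moments}, $\|\widetilde e_k\|_{L^2}=O((\alpha_k/b)^{1/2})$. The Abel summation \cref{eq:fixed-momentum-abel-sum} and Minkowski's inequality then give $\|T_n\|_{L^2}=O(n^{\gamma_{\rm step}/2}b^{-1/2})$. Multiplying by $\sqrt{b/n}$ yields $O(n^{-(1-\gamma_{\rm step})/2})$, with no dependence on $b$.

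The query-mismatch endpoint $H\lambda^{\rm qry}(e_n-e_0)$ needs care. The deterministic start $e_0$ does not shrink with $b$, so it contributes $\sqrt{b/n}\,\|e_0\|$. I will handle this by reading \cref{eq:batch-scaled-moments} as holding at the starting index too, as the hypothesis says "for all starting and subsequent indices". This forces $\|e_0\|^2\le C_{\rm mom}\alpha_0/b$, so the endpoint contributes $O(n^{-1/2}\alpha_0^{1/2})$, which is negligible. I expect this is exactly why the corollary imposes the moment bound at the start: it removes an artificial cap on $b$.

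\textbf{Curvature term (the main step).} The Taylor bound gives
\[
 \E\|\mathfrak r_k\|\le M_\nu\bigl(\E\|s_k-x_\star\|^2\bigr)^{(1+\nu)/2}\le C(\alpha_k/b)^{(1+\nu)/2}.
\]
Here $\E\|s_k-x_\star\|^2=O(\alpha_k/b)$ follows from the two moment bounds because $\alpha_k^2\le C\alpha_k$. Summing over $k$ and multiplying by $\sqrt{b/n}$ gives
\[
 \sqrt{b/n}\;b^{-(1+\nu)/2}\,O\bigl(n^{1-\gamma_{\rm step}(1+\nu)/2}\bigr)=O\bigl(b^{-\nu/2}n^{[1-\gamma_{\rm step}(1+\nu)]/2}\bigr).
\]
Markov's inequality turns this into an $O_{\Pp}$ statement, which together with the boundary bounds yields \cref{eq:nonlinear-batch}.

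The exponent bookkeeping here is the main point: the factor $b^{-\nu/2}$ is what relaxes the constraint $\gamma_{\rm step}(1+\nu)>1$ of \cref{thm:nonlinear-pr} into a condition on $b$.

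\textbf{CLT for the noise term.} I apply the martingale CLT to the triangular array $\{(nb)^{-1/2}\xi_{k+1,j}\}$, ordered within each round. Conditional independence within a batch makes the predictable quadratic variation equal to the averaged conditional covariance, which converges to $\Sigma_\star$ by assumption; uniform integrability, as in the preceding proof, upgrades this appropriately. The $(2+\varepsilon_{\rm mom})$ moments give Lindeberg with bound $C\epsilon^{-\varepsilon_{\rm mom}}(nb)^{-\varepsilon_{\rm mom}/2}\to0$.

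Slutsky's theorem then gives the sandwich limit whenever the displayed remainder vanishes. The term $n^{-(1-\gamma_{\rm step})/2}$ always vanishes as $n\to\infty$ because $\gamma_{\rm step}<1$.
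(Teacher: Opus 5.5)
Your proposal is correct and follows the paper's proof step for step: the corrected-state identity with batch-averaged noise, the factor $b^{-1/2}$ in the Abel-summed term $T_n$ and in the query endpoint (where, as in the paper, the moment bound at the starting index controls $e_0$), the factor $b^{-(1+\nu)/2}$ in the curvature sum, and the martingale CLT for the individually revealed noises. One small interpretive correction: the ``artificial cap'' in the title refers mainly to the restriction $b=o(n^{1-\gamma_{\rm step}})$ (e.g.\ $b=o(n^{1/3})$ at $\gamma_{\rm step}=2/3$) that unscaled moment bounds would impose through $T_n$; the $1/b$ scaling of all the moments removes it, so it is not just the starting-index condition.
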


\begingroup
\color{black}
In the triangular-array assumption of \cref{cor:nonlinear-batch},
\emph{averaged conditional covariance} means the row-wise condition
\[
 \frac1{nb_n}\sum_{k=1}^n\sum_{j=1}^{b_n}
 \E\!\left[\xi^{(n)}_{k+1,j}
       (\xi^{(n)}_{k+1,j})^\trans\mid\mathcal F_{k,n}\right]
 \xrightarrow{\Pp}\Sigma_\star,
\]
where $\mathcal F_{k,n}$ is the history before the $k$th batch
in row $n$. Conditional independence within a batch preserves
these conditional moments when its individual noises are revealed
successively.
\par
\endgroup

\begingroup
\color{black}
\begin{proof}[Proof of \cref{cor:nonlinear-batch}]
Put $\gamma=\gamma_{\rm step}$ and
$\bar\xi_{k+1,b}=b^{-1}\sum_{j=1}^b\xi_{k+1,j}$.
The pathwise identity \cref{eq:fixed-momentum-proof-identity} applies
with this batch noise. Under \cref{eq:batch-scaled-moments}, its
corrected errors and query errors satisfy, uniformly in the array,
\[
 \|\widetilde e_{k,b}\|_{L^2}\le C\sqrt{\alpha_k/b},\qquad
 \E\|s_{k,b}-x_\star\|^2\le C\alpha_k/b.
\]
The same Abel sum and endpoint cancellation consequently give
\[
 \sqrt{\frac bn}\,\|T_{n,b}\|_{L^2}
 =O\bigl(n^{-(1-\gamma)/2}\bigr),\qquad
 \sqrt{\frac bn}\,\|e_{n,b}-e_{0,b}\|_{L^2}=O(n^{-1/2}).
\]
The uniform moment bound at the starting indices is used in both
estimates. For the curvature remainder,
\[
 \sqrt{\frac bn}\sum_{k=1}^n\E\|\mathfrak r_{k,b}\|
 \le C\sqrt{\frac bn}\sum_{k=1}^n
                    (\alpha_k/b)^{(1+\nu)/2}
 =O\bigl(b^{-\nu/2}n^{[1-\gamma(1+\nu)]/2}\bigr).
\]
Multiplication of the pathwise identity by $H^{-1}\sqrt{b/n}$
now proves the expansion and remainder in \cref{eq:nonlinear-batch}.

To obtain the CLT, reveal the conditionally independent individual
noises within each batch one at a time. They form a martingale
triangular array whose normalized sum is
$(nb)^{-1/2}\sum_{k,j}\xi_{k+1,j}$. The assumed averaged
conditional-covariance limit gives its predictable covariance
convergence to $\Sigma_\star$. The uniform individual moment bound
implies, for every $\epsilon>0$,
\[
 \frac1{nb}\sum_{k=1}^n\sum_{j=1}^b
 \E\bigl[\|\xi_{k+1,j}\|^2
    \mathbf 1_{\{\|\xi_{k+1,j}\|>\epsilon\sqrt{nb}\}}\bigr]
 \le C\epsilon^{-\varepsilon_{\rm mom}}
              (nb)^{-\varepsilon_{\rm mom}/2}\longrightarrow0.
\]
This verifies conditional Lindeberg in probability. The martingale
CLT and Slutsky's theorem prove the sandwich limit whenever the
displayed curvature term vanishes; the endpoint term already tends
to zero for $0<\gamma<1$.
\end{proof}
\par
\endgroup

For $\nu=1$ and $\gamma_{\rm step}>1/2$, this condition imposes no upper bound on
batch growth. The condition $b=o(n^{1/3})$ at $\gamma_{\rm step}=2/3$, which follows from
unscaled uniform moment bounds, is a sufficient condition.
The stronger \cref{eq:batch-scaled-moments} excludes a fixed nonzero cold
start as $b\to\infty$.  A warm start or a separately proved burn-in should
supply those bounds.  The statistical gain does not pay for that
initialization for free.

\paragraph{Limits of the Statement.}
A local Taylor bound can replace the bound at all query points for a
single infinite trajectory if an almost surely finite time exists after
which every query stays in the neighborhood; the finite-prefix remainder
is then $O_{\Pp}(n^{-1/2})$. Triangular batch arrays need
a corresponding uniform argument. At $\alpha_k\asymp1/k$, arithmetic
averaging is generally inefficient even in one dimension: a factor-two variance can already occur in a scalar linear example.
For a fixed positive gain on a nonlinear problem, stationary averaging
generally targets a biased stationary mean. 

\subsubsection{Verified Moment Bounds for SGD}
\begin{proposition}[Verification of the moment assumptions for SGD]
\label{prop:global-sgd-new}
Let $f\in C^2$, $\mu I\preceq\nabla^2 f(x)\preceq LI$ for all $x$, let the
Hessian be globally Lipschitz, and let the noise be independent, additive,
centered, with a finite moment of order $2+\varepsilon_{\rm mom}$ and
covariance $\Sigma$.
Let $\sigma^2=\tr\Sigma$, $b\ge1$, $\beta=0$, $\Delta x_0=0$,
and let the nonincreasing steps satisfy
\begin{equation}
 0<\alpha_k\le 1/L,\qquad
 \alpha_{k+1}\ge \alpha_k-\tfrac\mu2 \alpha_k^2.
 \label{eq:step-slow-new}
\end{equation}
Then, for $C_{\rm sgd}\ge2\sigma^2/\mu$ and any deterministic start,
\begin{equation}
 \E\|e_k\|^2\le\|e_0\|^2\prod_{i=0}^{k-1}(1-\mu\alpha_i)^2
                     +C_{\rm sgd}\alpha_k/b.\label{eq:global-moment-new}
\end{equation}
If $\alpha_{\rm first}:=\alpha_k|_{k=0}$ and
$\E\|e_0\|^2\le C_{\rm sgd}\alpha_{\rm first}/b$, the first term can be dropped.
With such a start, $\E\|\Delta x_{k+1}\|^2
\le \alpha_k^2(L^2C_{\rm sgd}\alpha_k+\sigma^2)/b$.
For $\alpha_k=a_{\rm sgd}(k+k_0)^{-\gamma_{\rm step}}$ with $a_{\rm sgd}>0$,
$1/2<\gamma_{\rm step}<1$, the conditions
\cref{eq:step-slow-new} are ensured by a sufficiently large $k_0$;
hence the assumptions of \cref{thm:nonlinear-pr} with $\nu=1$
hold for every fixed $b$, including a cold start.
For an array of growing $b$, the warm-start condition gives the uniform moment
bounds \cref{eq:batch-scaled-moments}.
\end{proposition}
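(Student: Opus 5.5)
The plan is to prove a one-step mean-square contraction and then close an induction against the step-size condition \cref{eq:step-slow-new}. Write $p_k:=\E\|e_k\|^2$ and $\bar H_k:=\int_0^1\nabla^2 f(x_\star+s e_k)\,ds$. Then $\nabla f(x_k)=\bar H_k e_k$, and the SGD update reads
\[
e_{k+1}=(I-\alpha_k\bar H_k)e_k-\alpha_k\eta^{(b)}_{k+1}.
\]
Since $\mu I\preceq\bar H_k\preceq LI$ and $\alpha_k\le1/L$, the spectrum of $I-\alpha_k\bar H_k$ lies in $[0,1-\alpha_k\mu]$. The fresh batch noise is centered, independent of $\cF_k$, and has $\E\|\eta^{(b)}_{k+1}\|^2=\sigma^2/b$. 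Conditioning on $\cF_k$ therefore gives
\[
p_{k+1}\le(1-\mu\alpha_k)^2p_k+\alpha_k^2\sigma^2/b .
\]

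Next I will prove \cref{eq:global-moment-new} by induction, with $D_k:=\|e_0\|^2\prod_{i<k}(1-\mu\alpha_i)^2$. The base case $k=0$ holds trivially. The $D$-part propagates exactly, since $D_{k+1}=(1-\mu\alpha_k)^2D_k$. For the noise part it suffices to show
\[
(1-\mu\alpha_k)^2C\alpha_k+\sigma^2\alpha_k^2\le C\alpha_{k+1}.
\]
Using $(1-\mu\alpha_k)^2\le1-\mu\alpha_k$ (valid because $\mu\alpha_k\le1$) and $C\ge2\sigma^2/\mu$, the left side is at most $C\alpha_k-\tfrac{C\mu}{2}\alpha_k^2$. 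By \cref{eq:step-slow-new} this is at most $C\alpha_{k+1}$.

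The same induction started from $D_0=0$ handles the warm-start case. There the hypothesis $p_0\le C\alpha_{\rm first}/b$ plays the role of the base case, so the first term can be dropped.

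For the increments, use $\Delta x_{k+1}=-\alpha_k(\nabla f(x_k)+\eta^{(b)}_{k+1})$, orthogonality of the noise, and $\|\nabla f(x_k)\|\le L\|e_k\|$. This gives
\[
\E\|\Delta x_{k+1}\|^2\le\alpha_k^2\bigl(L^2p_k+\sigma^2/b\bigr)\le\alpha_k^2\bigl(L^2C\alpha_k+\sigma^2\bigr)/b .
\]

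For the polynomial steps $\alpha_k=a_{\rm sgd}(k+k_0)^{-\gamma_{\rm step}}$, I will check \cref{eq:step-slow-new} directly. First, $\alpha_k\le1/L$ holds once $k_0^{\gamma_{\rm step}}\ge a_{\rm sgd}L$. Second, by the mean value theorem,
\[
\alpha_k-\alpha_{k+1}\le\gamma_{\rm step}a_{\rm sgd}(k+k_0)^{-\gamma_{\rm step}-1}.
\]
This is at most $\tfrac\mu2a_{\rm sgd}^2(k+k_0)^{-2\gamma_{\rm step}}$ for large $k_0$, because $\gamma_{\rm step}+1>2\gamma_{\rm step}$ when $\gamma_{\rm step}<1$.

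Then I transfer to \cref{thm:nonlinear-pr} with $\nu=1$, checking its assumptions one by one.
\begin{itemize}
\item Taylor bound: the global Lipschitz Hessian gives $\|F(s)-H(s-x_\star)\|\le\tfrac{M_2}2\|s-x_\star\|^2$ everywhere, not only near $x_\star$.
\item Noise: it is additive, iid and centered, so its conditional covariance is the constant $\Sigma/b$, and the finite $2+\varepsilon_{\rm mom}$ moment is assumed.
\item Moment bounds: for a cold start with fixed $b$, the product term is at most $\exp\bigl(-2\mu a_{\rm sgd}\sum_{i<k}(i+k_0)^{-\gamma_{\rm step}}\bigr)=\exp(-c\,k^{1-\gamma_{\rm step}})$. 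This is stretched-exponential and hence at most a constant times $\alpha_k$. So $\E\|e_k\|^2\le C_{\rm mom}\alpha_k$, and the increment bound gives $\E\|\Delta x_k\|^2\le C_{\rm mom}\alpha_k^2$.
\end{itemize}

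For arrays with growing $b$, the warm-start version yields $p_k\le C\alpha_k/b$ and $\E\|\Delta x_{k+1}\|^2\le C'\alpha_k^2/b$ with constants independent of $b$. These are exactly \cref{eq:batch-scaled-moments}.

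The main obstacle is the induction step: the constant $C_{\rm sgd}$ must absorb the noise injection using only the slow-decay condition \cref{eq:step-slow-new}. The inequality $(1-\mu\alpha)^2\le1-\mu\alpha$ together with $C\ge2\sigma^2/\mu$ is what I expect to make this step go through. A secondary point to write carefully is that the cold-start product term is dominated by $\alpha_k$ uniformly in $k$, which fixes the constant $C_{\rm mom}$.
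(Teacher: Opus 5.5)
Your proposal is correct and follows the paper's proof essentially step for step: the integrated-Hessian contraction, the induction against the slow-step condition, the bound $\|\nabla f(x_k)\|\le L\|e_k\|$ for the increments, and the stretched-exponential absorption of the cold-start product. Your use of $(1-\mu\alpha_k)^2\le 1-\mu\alpha_k$ is a slightly shorter route to the paper's inequality $C\alpha_{k+1}-C\alpha_k(1-\mu\alpha_k)^2\ge\sigma^2\alpha_k^2$. One detail should be stated explicitly: you need the index shift from $\E\|\Delta x_{k+1}\|^2\lesssim\alpha_k^2$ to $\E\|\Delta x_{k+1}\|^2\lesssim\alpha_{k+1}^2$, which follows from $\alpha_{k+1}\ge\alpha_k(1-\mu\alpha_k/2)\ge\alpha_k/2$; these are the bounded adjacent step ratios the paper invokes.
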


\begingroup
\color{black}
\begin{proof}[Proof of \cref{prop:global-sgd-new}]
Write
$\nabla f(x_k)=H_k e_k$ with
$H_k=\int_0^1\nabla^2f(x_\star+s e_k)\,ds$.
Its eigenvalues lie in $[\mu,L]$, and $0<\alpha_k\le1/L$ gives
$\|I-\alpha_kH_k\|\le1-\mu\alpha_k$.
Thus, for $p_k=\E\|e_k\|^2$,
\[
 p_{k+1}\le(1-\mu\alpha_k)^2p_k+\alpha_k^2\sigma^2/b.
\]
The slow-step condition and $\mu\alpha_k\le1$ imply
\[
 \begin{split}
 C_{\rm sgd}\alpha_{k+1}
   -C_{\rm sgd}\alpha_k(1-\mu\alpha_k)^2
 &\ge C_{\rm sgd}\mu\alpha_k^2(3/2-\mu\alpha_k)\\
 &\ge C_{\rm sgd}\mu\alpha_k^2/2
 \ge\sigma^2\alpha_k^2.
 \end{split}
\]
Induction, separating the homogeneous recursion from its noise
term, proves \cref{eq:global-moment-new}. If
$p_0\le C_{\rm sgd}\alpha_{\rm first}/b$, the same induction starts
directly with this envelope and removes the homogeneous term.
Independence and centering of the noise also give
\[
 \E\|\Delta x_{k+1}\|^2
 =\alpha_k^2\{\E\|\nabla f(x_k)\|^2+\sigma^2/b\}
 \le\alpha_k^2(L^2p_k+\sigma^2/b),
\]
which is the claimed velocity bound.

For $\alpha_k=a_{\rm sgd}(k+k_0)^{-\gamma_{\rm step}}$,
the mean value theorem gives
\[
 \frac{\alpha_k-\alpha_{k+1}}{\alpha_k^2}
 \le\frac{\gamma_{\rm step}}{a_{\rm sgd}}
          (k+k_0)^{\gamma_{\rm step}-1}.
\]
A sufficiently large $k_0$ ensures both conditions in
\cref{eq:step-slow-new}. Moreover,
$\prod_{i<k}(1-\mu\alpha_i)^2
 \le\exp(-2\mu\sum_{i<k}\alpha_i)$ decays faster than any inverse
power of $k$ when $\gamma_{\rm step}<1$.
For fixed $b$, it can therefore be absorbed into an
$O(\alpha_k)$ moment bound even from a cold start.
The velocity bound, the bounded adjacent step ratios, and the
globally Lipschitz Hessian verify the remaining moment and Taylor
assumptions of \cref{thm:nonlinear-pr} with $\nu=1$.
For growing batches, the warm-start envelope gives the same
bounds uniformly with factor $1/b$, as required
by \cref{eq:batch-scaled-moments}.
\end{proof}
\par
\endgroup

With a fixed cold start and a growing batch, the remainder in
\cref{eq:global-moment-new} has to be paid for by burn-in. It suffices to ensure
$2\mu\sum_{i<t}\alpha_i\ge
\log(\|e_0\|^2b/(C_{\rm sgd}\alpha_t))$ in order to suppress the initial
contribution to $C_{\rm sgd}\alpha_t/b$; the constant of the final moment bound
can then be doubled.
For example, the gradient of the nonlinear experiment
$F(x)=x+0.4x^2/(1+x^2)$ satisfies
$1-3\sqrt3/20\le F'(x)\le1+3\sqrt3/20$ and $|F''(x)|\le0.8$.
For $\alpha_k=1.2(k+40)^{-2/3}$ both conditions
\cref{eq:step-slow-new} hold: the maximal step is below $1/L$, and
$(\alpha_k-\alpha_{k+1})/\alpha_k^2\le(5/9)40^{-1/3}<\mu/2$.
Thus the moment assumptions of the nonlinear experiment for SGD with this
policy are verified globally.

\subsubsection{Increasing Momentum: A Conditional Transfer}
We use the assumptions of \cref{thm:nonlinear-pr}.
The exponent $\vartheta_{\rm vel}$ below controls the velocity moment and is
unrelated to the initial signal-to-noise ratio $\theta=\mu e_0/\sigma$.
\begin{theorem}[Direct Abel transfer for increasing momentum]
\label{thm:varying-momentum}
Retain assumptions (i)--(ii) of \cref{thm:nonlinear-pr} and
$\E\norm{e_k}^2\le C_{\rm mom}\alpha_k$. Let
\[
 \alpha_k=a_{\rm mom}(k+k_0)^{-\gamma_{\rm step}},\qquad
 \varpi_k=1-\beta_k=\varpi_0(k+k_0)^{-\rho_{\rm mom}}\in(0,1],
\]
with $a_{\rm mom},\varpi_0>0$, $0<\gamma_{\rm step}<1$, and
$\rho_{\rm mom}\ge0$, and suppose, for some
$\vartheta_{\rm vel}\in[0,1]$,
\begin{equation}
 \E\norm{\Delta x_k}^2
 \le C_{\rm mom}\alpha_k^2\varpi_k^{\vartheta_{\rm vel}}.
                                                    \label{eq:velocity-scale}
\end{equation}
For HB or NAG, \cref{eq:nonlinear-expansion} holds with
\begin{equation}
 R_n=\Op\!\left(
 n^{-(1-\gamma_{\rm step})/2}
 +n^{-[\gamma_{\rm step}(1+\nu)-1]/2}
 +n^{\rho_{\rm mom}(1-\vartheta_{\rm vel}/2)-1/2}\right).
                                                    \label{eq:varying-remainder}
\end{equation}
In particular, the CLT \cref{eq:nonlinear-clt} holds if
\[
 \frac1{1+\nu}<\gamma_{\rm step}<1,\qquad
 \rho_{\rm mom}<\frac1{2-\vartheta_{\rm vel}}.
\]
The curvature term and its restriction on $\gamma_{\rm step}$ are omitted for an
linear $F$.
\end{theorem}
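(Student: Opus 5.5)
The plan is to redo the proof of \cref{thm:nonlinear-pr}, but with the corrected state adapted to time-varying momentum, and to bound the one new term this produces by an Abel summation. From \cref{eq:nonlinear-increment}, the natural corrected state is $\widetilde e_k=e_k+\frac{\beta_k}{1-\beta_k}\Delta x_k$. For constant $\beta$ it satisfies $\widetilde e_{k+1}=\widetilde e_k-\alpha_kG(s_k,\zeta_{k+1})$. With varying $\beta_k$, the same recursion holds up to the extra increment
\[
 \Bigl(\frac{\beta_{k+1}}{\varpi_{k+1}}-\frac{\beta_k}{\varpi_k}\Bigr)\Delta x_{k+1}
 =\Bigl(\frac1{\varpi_{k+1}}-\frac1{\varpi_k}\Bigr)\Delta x_{k+1}.
\]
Writing $G(s_k,\cdot)=H(s_k-x_\star)+\mathfrak r_k+\xi_{k+1}$ and $s_k=x_k+\lambda_k^{\rm qry}\Delta x_k$, I would then sum over $k$ exactly as in \cref{eq:fixed-momentum-proof-identity}. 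This gives
\[
 H\sum_k e_k=T_n-\sum_k\xi_{k+1}-\sum_k\mathfrak r_k-H\sum_k\lambda_k^{\rm qry}\Delta x_k+D_n,
\]
where $T_n$ is the telescoping Abel sum in $\widetilde e_k/\alpha_k$ and $D_n=\sum_k\alpha_k^{-1}(\varpi_{k+1}^{-1}-\varpi_k^{-1})\Delta x_{k+1}$.

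The terms are then handled one at a time.

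\textbf{$T_n$.} Summation by parts as in \cref{eq:fixed-momentum-abel-sum} applies. The difference from the fixed-momentum case is that $\|\widetilde e_k\|_{L^2}\le \|e_k\|_{L^2}+\varpi_k^{-1}\|\Delta x_k\|_{L^2}\lesssim\alpha_k^{1/2}+\alpha_k\varpi_k^{\vartheta_{\rm vel}/2-1}$, using \cref{eq:velocity-scale}. The first piece reproduces the $n^{-(1-\gamma_{\rm step})/2}$ term after dividing by $\sqrt n$. The second piece contributes $\alpha_n^{-1}\cdot\alpha_n\varpi_n^{\vartheta_{\rm vel}/2-1}\asymp n^{\rho_{\rm mom}(1-\vartheta_{\rm vel}/2)}$, and so does its Abel sum, since $\sum_k k^{\gamma_{\rm step}-1}k^{-\gamma_{\rm step}}k^{\rho(1-\vartheta/2)}$ has the same order up to logarithms, or exactly if one is careful. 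This is the third remainder term.

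\textbf{Curvature remainder.} The sum $\sum_k\mathfrak r_k$ is bounded exactly as before, since the query moment satisfies $\E\|s_k-x_\star\|^2\lesssim\alpha_k$. This relies on $\alpha_k\varpi_k^{\vartheta}\le1$, i.e.\ $\E\|\Delta x_k\|^2\lesssim\alpha_k$.

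\textbf{Query-mismatch term.} For NAG, $\lambda_k^{\rm qry}=\beta_k$. I would again use summation by parts, writing $\sum_k\beta_k\Delta x_k=\beta_n(x_n-x_0)-\sum_k(\beta_{k+1}-\beta_k)(x_k-x_0)$. Because $\beta_k$ is monotone and bounded, this is $O_{L^2}(1)$ times $\sup_k\|e_k\|_{L^2}$, hence $O(1)$. This preserves the endpoint cancellation.

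\textbf{$D_n$.} Since $\varpi_{k+1}^{-1}-\varpi_k^{-1}\asymp k^{\rho_{\rm mom}-1}$, Minkowski's inequality gives $\|D_n\|_{L^2}\lesssim\sum_k k^{\gamma_{\rm step}}k^{\rho-1}k^{-\gamma_{\rm step}}k^{-\rho\vartheta/2}=\sum_k k^{\rho(1-\vartheta/2)-1}\asymp n^{\rho(1-\vartheta/2)}$. This is the same order as the $T_n$ contribution.

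Finally, the martingale term is handled exactly as in \cref{thm:nonlinear-pr}. The conditions $\gamma_{\rm step}(1+\nu)>1$ and $\rho_{\rm mom}(1-\vartheta_{\rm vel}/2)<1/2$ then make all remainders vanish, and Slutsky's theorem gives \cref{eq:nonlinear-clt}.

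The main obstacle is the bookkeeping of the time-varying correction. One has to check that the corrected-state identity is exact with the $\varpi$-dependent defect $D_n$, and that $D_n$ and the velocity part of $T_n$ really do not exceed $n^{\rho(1-\vartheta/2)}$; in particular, no stray $\alpha_k^{-1}$ factor may survive without being matched by $\alpha_k$ from \cref{eq:velocity-scale}. Should a logarithm appear at the boundary case, it is harmless, since the CLT condition is a strict inequality.
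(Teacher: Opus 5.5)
Your proof is correct. It is the paper's summation-by-parts argument, but organized around a time-varying corrected state instead of the paper's direct representation.

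\textbf{The paper's route.} For varying momentum the paper drops the corrected state. It rewrites the increment recursion as $F(s_k)+\xi_{k+1}=-\alpha_k^{-1}\Delta x_k-d_k(\Delta x_{k+1}-\Delta x_k)$ with $d_k=(\alpha_k\varpi_k)^{-1}$. It then Abel-sums the first piece against the errors $e_k$, which gives $n^{\gamma_{\rm step}/2}$. It Abel-sums the second piece against the increments $\Delta x_k$, which gives $n^{\rho_{\rm mom}(1-\vartheta_{\rm vel}/2)}$.

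\textbf{How your identity relates to it.} Your identity $G_k=\alpha_k^{-1}(\widetilde e_k-\widetilde e_{k+1})+\alpha_k^{-1}(\varpi_{k+1}^{-1}-\varpi_k^{-1})\Delta x_{k+1}$ is an algebraic regrouping of the same formula. The $e_k$-part of your Abel sum for $T_n$ plays the role of the paper's first sum. Up to lower-order terms, the $\Delta x_k$-part of $T_n$ and your defect $D_n$ carry the two pieces $(\alpha_k^{-1}-\alpha_{k-1}^{-1})\varpi_k^{-1}$ and $\alpha_{k-1}^{-1}(\varpi_k^{-1}-\varpi_{k-1}^{-1})$ of $d_k-d_{k-1}$ in the paper's second sum.

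\textbf{What each buys.} Your version shows that the only new ingredient relative to the fixed-momentum proof is the defect, and that the defect needs only Minkowski's inequality, with no summation by parts. The paper's version keeps the error-moment and velocity-moment contributions in separate sums, so each remainder exponent comes from a single term.

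\textbf{Remaining steps.} The NAG query-mismatch telescoping, the curvature estimate and the martingale step coincide with the paper's. At $\rho_{\rm mom}=0$ the $\log n$ you mention is absorbed by $n^{-(1-\gamma_{\rm step})/2}$, since $\gamma_{\rm step}>0$. The paper simply defers to the fixed-momentum proof in that case.
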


\begingroup
\color{black}
\begin{proof}[Proof of \cref{thm:varying-momentum}]
Use the local abbreviations $\gamma=\gamma_{\rm step}$,
$r=\rho_{\rm mom}$, and $\vartheta=\vartheta_{\rm vel}$.
If $r=0$, the momentum is fixed and the estimates in the proof of
\cref{thm:nonlinear-pr} apply; those remainder estimates hold for
every $0<\gamma<1$, before requiring their decay. Suppose now that
$r>0$. Set
\[
 c_k=\alpha_k^{-1},\qquad
 d_k=(\alpha_k\varpi_k)^{-1},\qquad
 \mathfrak r_k=F(s_k)-H(s_k-x_\star).
\]
Rearrangement of \cref{eq:nonlinear-increment} gives, ,
\[
 F(s_k)+\xi_{k+1}
 =-c_k\Delta x_k-d_k(\Delta x_{k+1}-\Delta x_k).
\]
Consequently
\begin{equation}
\begin{aligned}
 H\sum_{k=1}^n e_k
 &=-\sum_{k=1}^n\xi_{k+1}-A_n-D_n
       -\sum_{k=1}^n\mathfrak r_k-HQ_n,\\
 A_n&=\sum_{k=1}^n c_k\Delta x_k,\qquad
 D_n=\sum_{k=1}^n d_k(\Delta x_{k+1}-\Delta x_k),\qquad
 Q_n=\sum_{k=1}^n\lambda_k^{\rm qry}\Delta x_k.
\end{aligned}
\label{eq:varying-momentum-proof-identity}
\end{equation}
Apply summation by parts first to the errors and then to the
increments:
\begin{align*}
 A_n&=c_ne_n-c_1e_0
       -\sum_{k=1}^{n-1}(c_{k+1}-c_k)e_k,\\
 D_n&=d_n\Delta x_{n+1}-d_1\Delta x_1
       -\sum_{k=2}^n(d_k-d_{k-1})\Delta x_k.
\end{align*}
Since $c_{k+1}-c_k=O((k+k_0)^{\gamma-1})$ and
$\|e_k\|_{L^2}=O((k+k_0)^{-\gamma/2})$, the first line yields
$\|A_n\|_{L^2}=O(n^{\gamma/2})$.
For the second line use
\[
 d_k-d_{k-1}=O((k+k_0)^{\gamma+r-1}),\qquad
 \|\Delta x_k\|_{L^2}
       =O((k+k_0)^{-\gamma-r\vartheta/2}).
\]
Writing $a=r(1-\vartheta/2)>0$, this gives
\[
 \|D_n\|_{L^2}
 \le C\left(1+n^a+\sum_{k=2}^n(k+k_0)^{a-1}\right)
 =O(n^a).
\]

For HB, $Q_n=0$. For NAG, $\lambda_k^{\rm qry}=1-\varpi_k$,
so a further telescoping identity gives
\[
 Q_n=e_n-e_0-
 \left\{\varpi_ne_n-\varpi_1e_0
       +\sum_{k=1}^{n-1}(\varpi_k-\varpi_{k+1})e_k\right\}.
\]
The sequence $\varpi_k$ is nonincreasing and bounded, its successive
differences have a bounded sum, and $\sup_k\|e_k\|_{L^2}<\infty$.
Thus $\|Q_n\|_{L^2}=O(1)$, retaining the cancellation of the
linear query mismatch even when momentum varies.

Finally $\varpi_k^{\vartheta}\le1$ implies
$\E\|s_k-x_\star\|^2=O(\alpha_k)$. The same Taylor-remainder
estimate as in the fixed-momentum proof gives
\[
 \frac1{\sqrt n}\sum_{k=1}^n\E\|\mathfrak r_k\|
 =O\bigl(n^{[1-\gamma(1+\nu)]/2}\bigr).
\]
Multiply \cref{eq:varying-momentum-proof-identity} by $H^{-1}/\sqrt n$.
The bounds for $A_n$, $D_n$, $Q_n$, and the curvature sum prove
\cref{eq:varying-remainder}; the $O_{\Pp}(n^{-1/2})$ query endpoint
is absorbed by the first term. The noise assumptions give the same
martingale CLT as in \cref{thm:nonlinear-pr}. All remainders vanish
when $\gamma(1+\nu)>1$ and
$r(1-\vartheta/2)<1/2$, which is the stated range.
For an linear $F$, $\mathfrak r_k=0$ and its restriction on
$\gamma$ disappears.
\end{proof}
\par
\endgroup

With the basic velocity estimate ($\vartheta_{\rm vel}=0$), this gives the
conditional range $\rho_{\rm mom}<1/2$, with no additional restriction coupling
$\rho_{\rm mom}$ and
$\gamma_{\rm step}$.
With the stronger estimate
$\E\norm{\Delta x_k}^2=O(\alpha_k^2\varpi_k)$, the conditional range is
$\rho_{\rm mom}<1$.
The latter estimate is a substantial assumption when momentum tends to
one.  This theorem does not establish it for general nonlinear dynamics,
and it does not prove that the boundary $\rho_{\rm mom}=1$ is necessary.

\subsection{Richardson--Romberg: A Different Correction Mechanism}
\label{sec:rr-context}
\citet[Theorem~6, Corollary~7, equations~(36)--(41)]{sheshukova2025richardson}
analyze two coupled constant-step SGD trajectories with tail averaging.
With step $\alpha_{\rm RR}=n^{-1/2}$, their bound on
$\{\E\|H_\star(\widehat x^{\rm RR}-x_\star)\|^2\}^{1/2}$ has
a leading $n^{-1/2}$ term, an $O(n^{-3/4})$ remainder, and an
initialization term; its hypotheses include sixth-moment controls and
fourth-order smoothness. The exponent $3/4$ is described as best known,
not as a universal lower bound. Squaring an upper bound
$K_1 n^{-1/2}+K_2 n^{-3/4}$ also produces
$2K_1K_2 n^{-5/4}$, so it does not
identify an $n^{-3/2}$ first correction to the MSE.
Their shared data stream preserves the leading noise term in
$2\widehat x_{\alpha_{\rm RR}}-\widehat x_{2\alpha_{\rm RR}}$.
For independent copies with leading covariance $\Sigma_{\rm lead}/n$,
elementary covariance algebra instead gives $5\Sigma_{\rm lead}/n$.
Their convention uses $2n$ observations including burn-in and
two gradient evaluations per observation. This correction targets
nonlinear stationary bias; that bias is absent in our additive quadratic
model, whose initialization and covariance transients remain relevant.

\section{Details of the Census Experiments}
\label{app:experiments-census}

\dd{This section gives the complete protocol for the Census experiments
summarized in the main text. It specifies the data processing, stochastic
oracles, algorithmic parameters, interval construction, and additional
numerical results needed for reproduction.}

\subsection{Data, Oracle, and Parameters}
\label{app:census-setup}

\paragraph{Data.}
The file \texttt{adult\_reconstruction.csv} of \citet{ding2021retiring}
contains the $49{,}531$ persons of the 1994 Current Population Survey from
which the UCI Adult data were drawn, with their total annual income (reported
up to $\$99{,}999$). Age, usual weekly hours, and years of education are
standardized; class of worker, marital status, relationship, race, sex,
occupation, and region of birth enter as indicators, with levels rarer than
$0.1\%$ merged. Write $(z_i,y_i)$ for the encoded covariate--response pair of
record $i$, with $z_i\in\R^{44}$ including the intercept. The linear model
uses log income as response; the logistic model uses the indicator that income
exceeds $\$50{,}000$. All coefficients are penalized with a ridge level
$\lambda$; besides the cross-validated level we use two larger levels that
reduce the condition number by factors $10$ and $100$
(\cref{tab:census-constants}).
The reconstruction is governed by the
\href{https://ipums.org/about/terms}{IPUMS terms of use}; the accompanying
Folktables code is released under the MIT license. The supplementary code downloads the data from its cited source and does not redistribute it. The file contains demographic attributes, including race and sex. We use them only as covariates in aggregate algorithmic simulations, report no individual-level results.

\paragraph{Prediction-Powered Oracles.}
A pilot sample of $10^4$ persons, drawn once and independently of all
replicates, is used to fit gradient-boosting predictors $\hat f$ of log income
and $\hat p$ of the income indicator (\texttt{scikit-learn}, default
hyperparameters). Let $\Gamma_z:=\E[zz^\trans]$. With
$r=y-\hat f(z)$ and the identity
$(\Gamma_z+\lambda I)x_\star=\E[zy]$, the linear oracle of
\cref{sec:numerical-summary} equals
\[
 g_i(x)=(\Gamma_z+\lambda I)(x-x_\star)
 -\bigl(z_ir_i-\E[zr]\bigr),
\]
so its noise has covariance $\Sigma=\Cov(zr)$ and does not depend on $x$; a
batch averages $b$ independent draws with replacement, and
$\Vstar=H^{-1}\Sigma H^{-1}$ with
$H=\Gamma_z+\lambda I$. Define
$\operatorname{sigm}(t):=(1+e^{-t})^{-1}$. The logistic oracle is
$\E[z\operatorname{sigm}(z^\trans x)]+\lambda x
-\E[z\hat p(z)]-z_i\{y_i-\hat p(z_i)\}$; its global constants in
\cref{thm:nonlinear-nag-closed} are $\mu=\lambda$ and
$L=\lambda+\lambda_{\max}(\Gamma_z)/4$. All expectations are finite-population
averages over the file.

\begin{table}[H]
\caption{Constants of the Census Experiments}
\label{tab:census-constants}
\begin{center}
\small
\begin{tabular}{@{}lccc@{}}
\toprule
\textbf{Quantity} & \textbf{Cross-validated $\boldsymbol{\lambda}$} &
\textbf{Second $\boldsymbol{\lambda}$} & \textbf{Third $\boldsymbol{\lambda}$}\\
\midrule
$\kappa$ of $H$, linear model & $1.5\times10^4$ & $1532$ & $153$\\
$\kappa$ of $H$ at $x_\star$, logistic model & $4.1\times10^4$ & $209$ & $23$\\
$L/\mu$ with global constants, logistic model & $1.3\times10^5$ & $425$ & $39$\\
Variance of log income explained by $\hat f$ & \multicolumn{3}{c}{$66\%$}\\
Reduction of $\tr\Sigma$ by the control variate & \multicolumn{3}{c}{factor $168$}\\
\bottomrule
\end{tabular}
\end{center}
\end{table}

\subsection{Methods, Intervals, and Protocol}
\label{app:census-protocol}

\paragraph{Methods.}
SGD, HB, and NAG use the policies of \cref{tab:hard-mode-parameters} with
$\mu=\lambda_{\min}(H)$ and $L=\lambda_{\max}(H)$, zero initial velocity, and
the start $x_0$ with intercept equal to the mean pilot response and all other
coefficients zero. The output is the full average of the query points. In the
logistic experiment, which uses the third ridge level and $x_0=0$, NAG uses the
burn-in \cref{eq:nag-closed-burnin} with $\mathcal V_0$ computed on the file and
$\sigma^2=\tr\Sigma$, followed by tail averaging; constant-step SGD uses the
analogous burn-in with $\kappa^{-1/2}$ replaced by $\kappa^{-1}$. The batch
schedules for NAG are $b=m^2$ ($m=4,\dots,128$), $b=m$ ($m=8,\dots,1024$), and
$b=64$ ($m=16,\dots,1024$), and SGD uses $b=m^2$ ($m=4,\dots,64$); every
acquired response, including those spent in burn-in,
is counted in $N$ and used by the ERM.

\paragraph{Intervals.}
All intervals have level $95\%$ and are centered at the output. The
oracle interval uses $\Vstar$. The plug-in interval replaces $\Sigma$ by the
empirical covariance of the acquired terms $z_ir_i$ (the matrix $H$ is known in
this setting). The batch-means interval splits the $m$ query points into
$\lfloor\sqrt m\rfloor$ consecutive blocks of equal length and uses Student
quantiles; equal blocks are the constant-step counterpart of the construction
of \citet{chen2020statistical}. Random scaling \citep{lee2022fast} uses the
partial averages $\bar u_k:=k^{-1}\sum_{\ell=0}^{k-1}u_\ell$ through
$\hat V=m^{-2}\sum_{k\le m}k^2(\bar u_k-\bar u_m)(\bar u_k-\bar u_m)^\trans$ and
the critical value $6.747$. Batch means and random scaling rely on a functional
central limit theorem for the partial sums of the query points. Applying
\cref{thm:exact} separately to each prefix gives only pointwise decompositions;
a functional result also requires uniform endpoint control and tightness. We
therefore include these two intervals as empirical comparators and do not claim
their asymptotic validity from \cref{thm:exact} alone.

\paragraph{Gaussian Coverage Predictions from Finite-Sample Moments.}
The mean of the output is obtained by running the noiseless recursion from
$x_0$. For its variance, the recursion decouples in the eigenbasis
$(\lambda_i,\varphi_i)$ of $H$; with $r_n(\lambda)$ the response of
\cref{thm:cold-exact}, for a fixed contrast $c\in\R^d$,
\[
 \Var(c^\trans\bar u_{0,m})=\frac{1}{bm^2}\sum_{s=1}^{m}w_s^\trans\Sigma w_s,
 \qquad
 w_s=\sum_{i=1}^d\frac{1-r_{m-s}(\lambda_i)}{\lambda_i}
       (\varphi_i^\trans c)\,\varphi_i,
\]
which reduces to \cref{eq:cold-exact-risk} in one dimension. The predicted
coverage is $\Phi((1.96-h_c)/\sqrt{v_c})-\Phi((-1.96-h_c)/\sqrt{v_c})$, where
\[
 h_c:=\frac{\E[c^\trans\bar u_{0,m}]}{\sqrt{c^\trans V_\star c/N}},
 \qquad
 v_c:=\frac{\Var(c^\trans\bar u_{0,m})}{c^\trans V_\star c/N}.
\]
We call a batch size the
breakdown batch if coverage stays below $90\%$ from it on. This coverage
formula is under Gaussian oracle noise; for Census resampling it is a
Gaussian approximation using the finite-sample mean and variance.

\paragraph{Protocol.}
Each configuration of the linear experiments is repeated $500$ times and each
configuration of the logistic experiment $150$ times. Within a replicate of
the linear experiment, all methods acquire the same responses, and for a fixed
batch size the outputs at smaller budgets are prefixes of the same runs. The
logistic schedules are run independently across methods and values of $m$.

\subsection{Additional Results}
\label{app:census-additional}
\paragraph{Further Results.}
The normalized mean squared error has the same thresholds
(\cref{fig:census-mse}). Variance estimates do not correct a shifted center:
beyond the breakdown of SGD, plug-in intervals cover $0\%$, and trajectory-based
intervals require five to eight times the oracle width.

\paragraph{Accuracy of the Gaussian-Moment Predictions.}
\Cref{fig:census-diagnostics}(a) compares Gaussian-moment predictions with
observed coverage over all
$1026$ configurations of the linear experiment: the median difference is $0.7$
percentage points, the $90\%$ quantile $1.9$, and the maximum $5.0$, while the
Monte Carlo standard error is about one point near the nominal level and two
points near $50\%$. Along the two eigendirections of
\cref{fig:census-coverage} the differences do not exceed $0.04$. The predicted
breakdown batch matches the observed grid point in $85\%$ of the
$81$ combinations of method, ridge level, functional, and budget, and is
within one grid step in $98\%$.
\Cref{fig:census-diagnostics}(b) shows that it grows with the budget, whereas the
gradient noise scale $\tr\Sigma/\norm{\nabla f(x_0)}^2$, which governs the
optimization speedup from batching \citep{mccandlish2018empirical}, does not
depend on $N$ (\cref{rem:critical-batch}).

\paragraph{The Spectrum and the Start.}
\Cref{fig:census-diagnostics}(c) shows the relative variance excess $v-1$ along
every eigenvector of $H$ at $N=2^{16}$ and $b=4$. The observations follow the
values up to Monte Carlo fluctuations of order $0.1$. The only large
excess is that of HB along the top eigenvector, $16$ against the value $14$ and the value
$\kappa^{3/2}/(8m)=14.5$ of \textcolor{black}{Statement \ref{thm:main-hb-stiff}}; the largest excess of NAG
is $0.17$, within the bound $30\sqrt\kappa/m=0.23$ of \textcolor{black}{Statement \ref{thm:main-nag}}.
\Cref{fig:census-diagnostics}(d) repeats \cref{fig:census-coverage}(c) from
$x_0=0$. The larger start lowers the breakdown batch of SGD from $2^{10}$ to
$2^6$ and that of NAG from $2^{12}$ to $2^{11}$, as the factor $\abs\theta$ in
\textcolor{black}{Statement \ref{cor:main-critical-batch}} predicts, while that of HB stays at $2^3$: HB is
limited by its covariance, not by its center.

\begin{figure}[H]
\centering
\includegraphics[width=0.7\textwidth]{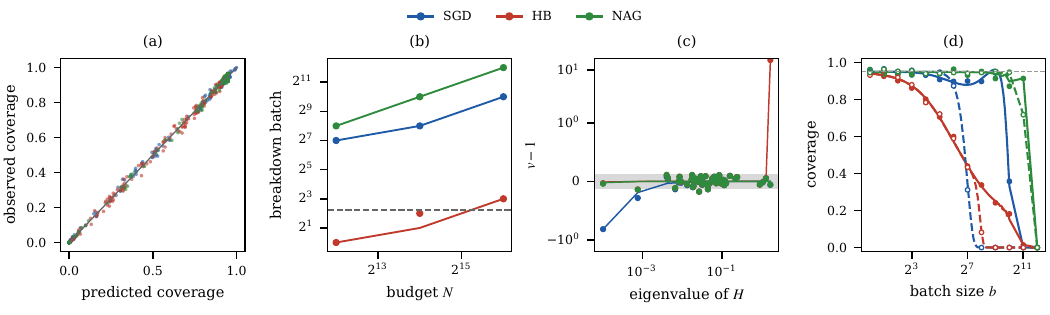}
\caption{\small (a) Predicted against observed coverage over all budgets, batch
sizes, ridge levels, functionals, and methods. (b) Breakdown batch for the age
coefficient against the budget: observed (markers) and predicted (lines); the
dashed line is the gradient noise scale. (c) Relative variance excess along the
eigenvectors of $H$ at $N=2^{16}$ and $b=4$ (markers: Monte Carlo; lines: exact;
grey band: $\pm2$ Monte Carlo standard errors). (d) Coverage for the age
coefficient starting from the pilot mean (solid lines, filled markers) and from
$x_0=0$ (dashed lines, open markers); lines are predictions and markers
Monte Carlo estimates.}
\label{fig:census-diagnostics}
\end{figure}

\paragraph{Mean Squared Error.}
\Cref{fig:census-mse} reports the same runs in the optimization metric
\[
 R_c:=\frac{N\,\E\bigl[(c^\trans(\hat x-x_\star))^2\bigr]}
            {c^\trans\Vstar c}=h_c^2+v_c,
\]
which tends to one in the PR regime. The batch at which this ratio first
exceeds two closely tracks the coverage breakdown in
\cref{fig:census-coverage} and gives the same method ordering. Along the
slowest eigendirection it occurs at
$b=4$ for SGD, $b=128$ for NAG, and $b=256$ for HB. Along the stiffest
eigendirection the HB ratio is already $4.2$ at $b=1$, whereas NAG remains
near one throughout the tested range. For the age coefficient, HB exceeds
two from $b=16$, SGD from $b=2^{10}$, and NAG only at $b=2^{12}$. The exact
finite-moment predictions track the observed curves. Across the displayed
points, the Monte Carlo estimates differ from the values by at most
$14\%$, on the scale of the shown $\pm2$-standard-error bands.

\begin{figure}[H]
\centering
\includegraphics[width=0.7\textwidth]{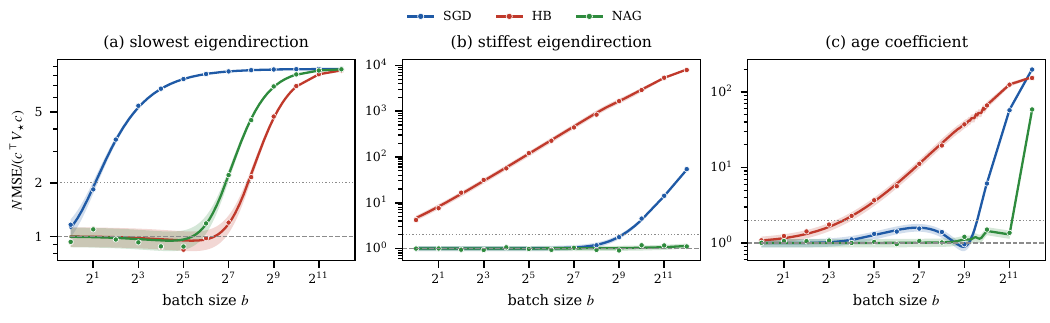}
\caption{Normalized mean squared error against the batch size at
$N=2^{16}$ and $\kappa=1.5\times10^4$, for the functionals of
\cref{fig:census-coverage}. Lines show the values $h_c^2+v_c$, shaded
bands $\pm2$ Monte Carlo standard errors for $500$ replicates around them, and
markers the Monte Carlo estimates. Gray horizontal dashed and dotted lines mark
the PR level and twice that level.}
\label{fig:census-mse}
\end{figure}

\paragraph{Gaussian Shape.}
To probe the shape of the output under skewed, bounded non-Gaussian noise, we
repeat the linear
experiment with untransformed income (in units of $\$10{,}000$) and $4000$
replicates; the standardized third moment of the influence values of the age
coefficient is $3.7$. After centering and scaling, the Kolmogorov distance
to the standard normal law never exceeds $0.018$, below the $5\%$ one-sample
KS critical value $0.022$, for all methods, $b\in\{1,16,256\}$, and
$N\in\{2^8,\dots,2^{14}\}$, although the variance of these outputs ranges from
$0.25$ to $14$ times the PR variance (\cref{fig:census-shape}a). Normalization by
the PR variance gives distances up to $0.96$ (\cref{fig:census-shape}b).
Across this tested grid, Gaussian shape is close after normalization
even when PR calibration is poor, consistent with the separation in
Statement \ref{thm:main-shape}.
A distance below the one-sample KS critical value
indicates non-rejection in that test, not a bound on the distance between
the underlying distributions. Exact standardization requires $m=N/b\ge2$;
when $N=b=256$, the output is the deterministic $x_0$ and its variance is zero.

\paragraph{Logistic Regression.}
Here $N=(t+m)b$ includes burn-in observations,
$\widehat x_N^{\rm ERM}$ is computed from all $N$ observations,
and $\Vstar=H^{-1}\Sigma H^{-1}$ uses $H=\nabla^2f(x_\star)$.
\Cref{fig:census-shape}(c) shows the mean of
$\sqrt N\norm{\Vstar^{-1/2}(\hat x-\widehat x_N^{\rm ERM})}$, the normalized gap between the
online output and the ERM on the same $N$ responses. With $b=m^2$, as in
\cref{cor:nonlinear-nag-erm}, the gap of NAG decreases from $5.7$ to $3.4$ at
$N\approx2.8\times10^6$ after $168$ rounds; with $b=m$ it reaches $2.6$ after
$1047$ rounds. Both are well below $6.6$, the normalized error of the ERM, so most of the sampling error of the online output is shared with the
ERM. With a constant batch the gap grows from $3.6$ to $9.5$, the signature of
the stationary bias of constant-gain averaging. Constant-step SGD with $b=m^2$
needs a three to four times longer burn-in than NAG and stays near $6.6$, because
its boundary term scales with $\kappa$ rather than $\sqrt\kappa$.

\begin{figure}[H]
\centering
\includegraphics[width=0.7\textwidth]{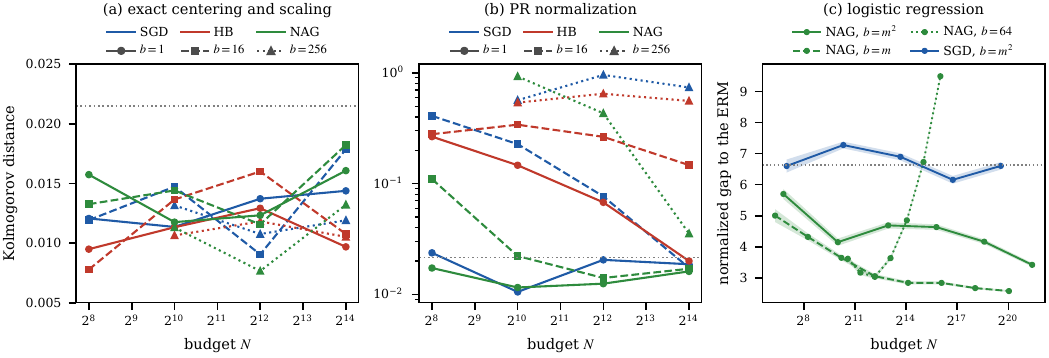}
\caption{\small (a, b) Kolmogorov distance to the standard normal law for
untransformed income after (a) and PR (b) normalization; colors mark the
methods, line styles the batch sizes, and the dotted line is the $5\%$ critical
value for $4000$ draws. (c) Normalized gap to the ERM in the
logistic model; the dotted line is the normalized error of the ERM, and bands are
$\pm2$ Monte Carlo standard errors.}
\label{fig:census-shape}
\end{figure}

\paragraph{Practical Intervals.}
\Cref{tab:census-practical} reports, for the age coefficient at $N=2^{16}$ and
$\kappa=1.5\times10^4$, the coverage of nominal $95\%$ intervals and, in
parentheses, their median width relative to the oracle interval, at a small
batch and past the breakdown of SGD. When the center is calibrated, the plug-in interval matches
the oracle interval. When it is not, no variance estimate restores coverage at
a comparable width: the plug-in interval undercovers, and the trajectory-based
intervals cover only by becoming several times wider. HB inflates them already
at $b=16$, because its variance is not the PR variance.

\begin{table}[H]
\caption{Coverage and Relative Width of Practical Intervals}
\label{tab:census-practical}
\begin{center}
\small
\begin{tabular}{@{}llcccc@{}}
\toprule
\textbf{Batch} & \textbf{Method} & \textbf{Oracle} & \textbf{Plug-in} &
\textbf{Batch means} & \textbf{Random scaling}\\
\midrule
$b=16$ & SGD & $0.93$ & $0.93$ $(1.0)$ & $0.91$ $(0.95)$ & $0.92$ $(1.1)$\\
 & HB & $0.80$ & $0.81$ $(1.0)$ & $1.00$ $(6.9)$ & $1.00$ $(4.4)$\\
 & NAG & $0.94$ & $0.94$ $(1.0)$ & $0.96$ $(1.1)$ & $0.95$ $(1.2)$\\
\midrule
$b=2^{11}$ & SGD & $0.00$ & $0.00$ $(1.0)$ & $0.91$ $(4.8)$ & $1.00$ $(7.8)$\\
 & HB & $0.01$ & $0.01$ $(1.0)$ & $0.95$ $(9.6)$ & $1.00$ $(33)$\\
 & NAG & $0.91$ & $0.91$ $(1.0)$ & $1.00$ $(12)$ & $1.00$ $(7.6)$\\
\bottomrule
\end{tabular}
\end{center}
\end{table}

\end{document}